\newcounter{dummy}  
\renewcommand{\epsilon}{\varepsilon}
\newtheorem{theorem}{Theorem}[section]
\newtheorem{fact}[theorem]{Fact}
\newtheorem{assumption}{Assumption}
\newtheorem{example}[theorem]{Example}
\newenvironment{myexample}[1][]{\begin{example}\normalfont%
\ifx\newenvironment#1\newenvironment\else{\bfseries {\bf (#1)}}\fi}{\hfill\BlackBox\\[2mm]\end{example}}
\DeclareMathOperator*{\E}{{\bf E}}
\DeclareMathOperator*{\Var}{{\bf Var}}
\DeclareMathOperator*{\argmin}{arg\,min}
\DeclareMathOperator*{\lowinf}{\vphantom{p}inf}    
\DeclareMathOperator{\co}{co}         
\DeclareMathOperator*{\convhull}{\co}  
\newcommand{\half}{\frac{1}{2}}
\newcommand{\thalf}{\tfrac{1}{2}}
\newcommand{\der}{\mathrm{d}}
\newcommand{\reals}{\mathbb{R}}
\newcommand{\nonnegreals}{\ensuremath{[0,\infty)}}
\newcommand{\cX}{\mathcal{X}}
\newcommand{\cZ}{\mathcal{Z}}
\newcommand{\cS}{\mathcal{S}}
\newcommand{\cP}{\mathcal{P}}
\newcommand{\closedcP}{\bar{\cP}}
\newcommand{\cY}{\mathcal{Y}}
\newcommand{\cA}{\mathcal{A}}
\newcommand{\cF}{\mathcal{F}}
\newcommand{\model}{\mathcal{F}}
\newcommand{\normaldist}{\mathcal{N}}   
\newcommand{\dpfour}{\ensuremath{(\ell,\cP,\cF,\cF_\D)}}
\newcommand{\dpfoursame}{\ensuremath{(\ell,\cP,\cF,\cF)}} 
\newcommand{\dpthree}{\ensuremath{(\ell,\cP,\cF)}}
\newcommand{\gBernstein}{u}
\newcommand{\gComparator}{v}
\newcommand{\ie}{\emph{i.e.}}
\newcommand{\cG}{\mathcal{G}}
\newcommand{\actions}{\mathcal{A}}
\newcommand{\loss}{\ell}
\newcommand{\logloss}{\loss^\textnormal{log}}
\newcommand{\sqloss}{\loss^\textnormal{sq}}
\newcommand{\brierloss}{\loss^\textnormal{Brier}}
\newcommand{\regloss}{\loss^\textnormal{reg}}
\newcommand{\classloss}{\loss^\textnormal{class}}
\newcommand{\zoloss}{\loss^\textnormal{01}}
\newcommand{\absloss}{\loss^\textnormal{abs}}
\newcommand{\commentout}[1]{}
\newcommand{\comp}{\phi}
\newcommand{\pred}{\psi}
\let\Exp\E
\newcommand{\pseudoset}{\cP_\model(\eta)}  
\newcommand{\D}{\textnormal{d}}  
\newcommand{\decisionset}{\cF_\D}
\newcommand{\discdecisionset}{\ddot{\cF}_\D}   
\newcommand{\discf}{\ddot{f}}   
\newcommand{\ind}[1]{\mathop{\llbracket #1 \rrbracket}} 
\newcommand{\union}{\cup}
\newcommand{\Prob}{\operatorname{\mathit{P}}}
\newcommand{\Probn}{\operatorname{\mathit{P_n}}}
\DeclareMathOperator*{\interior}{\mathrm{int}}
\DeclareMathOperator*{\bmin}{\wedge}  
\DeclareMathOperator*{\opwedge}{\wedge}
\newcommand{\nat}{\mathbb{N}}
\newcommand{\F}{\mathcal{F}}
\newcommand{\G}{\mathcal{G}}
\newcommand{\lossatof}[2]{\loss \bigl( #1, #2 \bigr)}
\newcommand{\lossof}[1]{\loss(\cdot, #1)}
\newcommand{\lossclass}{\loss \circ \F}
\renewcommand{\Pr}{\mathbf{Pr}}
\newcommand{\erm}{\hat{f}_{\mathbf{Z}}}
\newcommand{\N}{\mathcal{N}}
\newcommand{\capacity}{\mathcal{C}}
\newcommand{\bound}{V}
\newcommand{\xslossat}[1]{(\loss_f - \loss_{f^*})(#1)}
\definecolor{dark-green}{rgb}{0,0.6,0}
\begin{document}

\title{Fast Rates in Statistical and Online Learning}

\author{\name Tim van Erven\thanks{Authors listed
    alphabetically. Preliminary versions of some parts of this work
    were presented at NIPS 2012 and at NIPS 2014 (see acknowledgments on page~\pageref{sec:acks}).}
	\email tim@timvanerven.nl \\
       \addr Mathematisch Instituut,
       Universiteit Leiden \\
       Leiden, 2300 RA, The Netherlands
       \AND
	\name Peter D. Gr\"unwald
        \email Peter.Grunwald@cwi.nl 
\\
	       \addr Centrum voor Wiskunde en Informatica and
               MI, Universiteit Leiden\\
	       Amsterdam, NL-1090 GB, The Netherlands
       \AND
       Nishant A. Mehta\thanks{Work performed while at ANU and NICTA.} \email mehta@cwi.nl\\
	       \addr Centrum voor Wiskunde en Informatica \\
	       Amsterdam, NL-1090 GB, The Netherlands
	\AND
	Mark D. Reid \email Mark.Reid@anu.edu.au\\
	\addr Australian National University and NICTA\\ Canberra,
	ACT 2601, Australia
	\AND
	Robert C. Williamson \email Bob.Williamson@anu.edu.au\\
	\addr Australian National University and NICTA\\
	Canberra, ACT 2601 Australia.
}
\editor{Vladimir N. Vapnik, Alexander J. Gammerman and Vladimir G. Vovk}

\maketitle

\begin{abstract}
  The speed with which a learning algorithm converges as it is
  presented with more data is a central problem in machine learning
  --- a fast rate of convergence means less data is needed for the same
  level of performance.  The pursuit of fast rates in online and
  statistical learning has led to the discovery of many conditions in
  learning theory under which fast learning is possible.  We show that
  most of these conditions are special cases of a single, unifying
  condition, that comes in two forms: the {\em central condition\/}
  for `proper' learning algorithms that always output a hypothesis in
  the given model, and {\em stochastic mixability\/} for online
  algorithms that may make predictions outside of the model. We show
  that under surprisingly weak assumptions both conditions are, in a
  certain sense, equivalent.  The central condition has a
  re-interpretation in terms of convexity of a set of
  pseudoprobabilities, linking it to density estimation under
  misspecification. For bounded losses, we show how the central
  condition enables a direct proof of fast rates and we prove its
  equivalence to the {\em Bernstein\/} condition, itself a
  generalization of the {\em Tsybakov margin condition}, both of which
  have played a central role in obtaining fast rates in statistical
  learning. Yet, while the Bernstein condition is two-sided, the
  central condition is one-sided, making it more suitable to deal with
  unbounded losses. In its stochastic mixability form, our condition
  generalizes both a {\em stochastic exp-concavity\/} condition
  identified by Juditsky, Rigollet and Tsybakov and Vovk's notion of
  {\em mixability}. Our unifying conditions thus provide a substantial
  step towards a characterization of fast rates in statistical
  learning, similar to how classical mixability characterizes constant
  regret in the sequential prediction with expert advice setting.
\end{abstract}

\begin{keywords}
statistical learning theory, fast rates, Tsybakov margin condition, mixability, exp-concavity
\end{keywords}

\section{Introduction}
\label{sec:intro}
Alexey Chervonenkis jointly achieved several significant milestones in
the theory of machine learning: the characterization of uniform
convergence of relative frequencies of events to their probabilities
\citep{Vapnik:1971aa}, the uniform convergence of means to their
expectations \citep{Vapnik:1981aa}, and the `key theorem in learning
theory' showing the relationship between the consistency of empirical
risk minimization (ERM) and the uniform one-sided convergence of means to
expectations \citep{Vapnik:1991aa}; \citep[Chapter
3]{Vapnik:1998aa}. Two outstanding features of these contributions are
that they \emph{characterized} the phenomenon in question, and the
quantitative results are \emph{parametrization independent} in the
sense that they do not depend upon how elements of the hypothesis
class $\cF$ are parameterized, only on global (effectively geometric)
properties of $\cF$. With his co-author Vladimir Vapnik, Alexey
Chervonenkis also presented quantitative bounds on the deviation
between the empirical and expected risk as a function of the sample
size $n$. These are used for the theoretical analysis of the
statistical convergence of ERM algorithms, which are central to machine 
learning.  
According to \citet[p. 695]{Vapnik:1998aa}, in his 1974 book co-authored 
by Chervonenkis
\citep{Vapnik:1974aa} they presented `slow' and `fast' bounds for
ERM when used with 0-1 loss.  They showed that in the realizable or 
`optimistic' case (where there is an $f \in \cF$ that almost surely predicts 
correctly, so that the minimum achievable risk is zero) one can achieve 
fast $O(1/n)$ convergence as opposed to the `pessimistic' case where one 
does not have such an
$f$ in the hypothesis class and the best \emph{uniform} bound is
$O(1/\sqrt{n})$ \citep[page 127]{Vapnik:1998aa}. 
This difference is important because if one is in such a `fast rate' regime, one can achieve good performance with less data.

The present paper makes several further contributions along this path
first delineated by Vapnik and Chervonenkis. We focus upon the
distinction between slow and fast learning. As shown in the special
case of squared loss by \citet{lee1998importance} and log loss by
\citet{li1999estimation}, if the hypothesis class is \emph{convex},
one can still attain fast $O(1/n)$ convergence even in the agnostic
(pessimistic) setting.\footnote{Throughout this work, implicit in our statements about rates is that the function class is not too large; we assume classes with at most logarithmic universal metric entropy, which includes finite classes, VC classes, and VC-type classes.}
Such convergence results, like those of Vapnik and Chervonenkis, are uniform --- they hold for all possible target distributions. 
When the hypothesis class is not convex, one cannot attain a uniform
fast bound for ERM \citep{mendelson2008lower}, and it is not known
whether fast rates are possible for any algorithm at all; however, one can obtain a \emph{non-uniform} bound \citep{mendelson2002agnostic,mendelson2008obtaining}.
Such bounds are necessarily dependent upon the relationships between 
the components $(\ell,\cP,\cF)$ of a statistical decision problem or learning
task. Here $\ell$ is the loss, $\cF$ the hypothesis
class, and $\cP$ the (possibly singleton) class of distributions which, 
by assumption, contains the unknown data-generating distribution.
Often one can assume large classes of $\cP$ and still obtain bounds
that are {\em relatively\/}
uniform, i.e. uniform over all $P \in \cP$.  We identify a {\em central
  condition\/} on decision problems $(\ell,\cP,\cF)$ --- where $\ell$ may be 
unbounded --- that, in its strongest form, allows $O(1/n)$ rates for so-called 
`proper' learning algorithms that always output a member of $\cF$. 
In weaker forms, it allows rates in between $O(1/\sqrt{n})$ and $O(1/n)$.

As a second contribution, we connect the above line of work (within
the traditional stochastic setting) to a parallel development in the
worst-case online sequence prediction setting. There, one makes no
probabilistic assumptions at all, and one measures convergence of the
regret, that is, the difference between the cumulative loss attained by a given
algorithm on a particular sequence with the best possible loss
attainable on that sequence \citep{CesaBianchiLugosi2006}. This work,
due in large part to
\cite{vovk1990aggregating,vovk1998game,vovk2001competitive}, shares
one aspect of Vapnik and Chervonenkis' approach --- it achieves a
\emph{characterization} of when fast learning is possible in the
online individual sequence-setting. Since there is no $\cP$ in this
setting, the characterization depends only upon the loss $\ell$, and
in particular whether the loss is \emph{mixable}. As shown in
Section~\ref{sec:four-conditions}, our second key condition, {\em
  stochastic mixability}, is  a generalization of Vovk's earlier
notion. Briefly, when $\cP$ is the
set of all distributions on a domain, stochastic mixability is equivalent to
Vovk's classical mixability.  Stochastic mixability of
$(\ell,\cP,\cF)$ for general $\cP$ then indicates that fast rates are possible 
in a stochastic on-line setting, in the worst-case over all $P \in \cP$.

The main contribution in this paper is to show, first, that a range of
existing conditions for fast rates (such as the Bernstein condition,
itself a generalization of the Tsybakov condition) are either special
cases of our central condition, or special cases of stochastic
mixability (such as original mixability and (stochastic)
exp-concavity); and second, to show that under surprisingly weak
conditions the central condition and stochastic mixability are in fact
equivalent --- thus there emerges essentially a {\em single\/}
condition that implies fast rates in a wide variety of situations.
Our central and stochastic mixability condition improve in several
ways on the existing conditions that they generalize and unify. For
example, like the uniform convergence condition in Vapnik and
Chervonenkis' original `key theorem of learning theory'
\citep{Vapnik:1991aa}, but unlike the Bernstein fast rate condition,
our conditions are {\em one-sided} which, as forcefully argued by
\cite{mendelson2014learning}, seems as it should be;
Example~\ref{ex:comparator-vs-bernstein} explains and illustrates the
difference between the two- and one-sided conditions.  Like Vapnik and
Chervonenkis' uniform convergence condition and Vovk's classical
mixability, but unlike the stochastic and individual-sequence
exp-concavity conditions, our conditions are \emph{parametrization
  independent} (Section~\ref{sec:expconcave}). Finally, unlike the
assumptions for classical mixability \citep{vovk1998game}, we do not
require compactness of the loss function's domain. We hasten to add
though that for unbounded losses, several important issues are still
unresolved --- for example, if under some $P \in \cP$ and with some $f
\in \cF$ the distribution of the loss has polynomial tails, then some
of our equivalences break down (Section~\ref{sec:comparator-vs-bernstein}).

One final historical precursor deserves mention.  Statistical
convergence bounds rely on bounds on the tails of certain random
variables. In Section~\ref{sec:fast-rates} we show how, for bounded
losses, the central condition (\ref{eqn:basiccomparatorconditionpre})
directly controls the behaviour of the cumulant generating function of
the excess loss random variable. The geometric insight behind this
result, Figure \ref{fig:cgf}, previously was used, unbeknownst to us when
carrying out the work originally \citep{mehta2014stochastic}, by
Claude Shannon (\citeyear{Shannon:1956aa}). It is fitting that our
tribute to Alexey Chervonenkis can trace its history to another such
giant of the theory of information processing.

\subsection{Why Read This Paper? Our Most Important Results}
Below, we highlight the core contributions of this work. A more
comprehensive overview is in Section~\ref{sec:setup}  and the diagram on
page~\pageref{fig:map-of-paper}, which summarizes all results from the
paper. 
\begin{itemize}
\item We introduce the {\em $v$-stochastic mixability\/} condition on decision
  problems (Equation (\ref{eq:stochmixpre}),
  Definition~\ref{def:stochastic-mixability}
  and~\ref{def:g-stoch-mixb}), a strict generalization of Vovk's {\em
  classical mixability\/}
\citep{vovk1990aggregating,vovk1998game,vovk2001competitive,vanerven2012mixability}, {\em exp-concavity\/}
\citep{Kivinen:1999aa,CesaBianchiLugosi2006}
and {\em stochastic exp-concavity}, a condition identified implicitly
by \citet{juditsky2008learning} and used by e.g. \cite{DalalyanT12}.
Here $v: \reals^+_0 \rightarrow \reals^+_0$ is a nondecreasing nonnegative
function. In the important special case that $v \equiv \eta$ is
constant, we say that {\em (strong) stochastic mixability
  holds}. Proposition~\ref{prop:stoch-mix-AA} shows that in that case,
with finite $\cF$, Vovk's aggregating
algorithm for on-line prediction in combination with an
online-to-batch conversion achieves a learning rate of $O(1/n)$; if the
$v$-condition holds for sublinear $v$ with $v(0) = 0$, intermediate rates
between $O(1/\sqrt{n})$ and $O(1/n)$ are obtained. These results hold
under no further conditions at all, in particular for unbounded
losses. {\em Interest:\/} the condition being a strict generalization
of earlier ones, it shows that we can get fast rates for some
situations for which this was was hitherto unknown.
\item We introduce the {\em $v$-central condition\/} (Equations
  (\ref{eqn:basiccomparatorconditionpre}),
  (\ref{eqn:basiccomparatorconditionb}), (\ref{eqn:affinity}),
  (\ref{eqn:pre-g-condition}), Definition~\ref{def:comparator}
  and~\ref{def:g-stoch-mix}). As we show in
  Theorem~\ref{thm:BernsteinComparator}, for bounded losses and $v$ of
  the form $v(x) = C x^{\alpha}$, it generalizes the {\em Bernstein
    condition\/} \citep{bartlett2006empirical}, itself a
  generalization of the {\em Tsybakov margin condition\/}
  \citep{tsybakov2004optimal}. If $v
  \equiv \eta$ is constant, we just say that the (strong) {\em central
    condition holds}. In that case, with (unbounded) log-loss, it
  generalizes a (typically nameless) condition used to obtain fast
  rates in Bayesian and \emph{minimum description length} (MDL)
  density estimation in misspecification contexts
  \citep{li1999estimation,zhang2006,zhang2006information,kleijn2006misspecification,grunwald2011safe,GrunwaldVanOmmen14}.
  These are all conditions that allow for fast rates for {\em
    proper\/} learning, in which the learning algorithm always outputs
  an element of $\cF$.  

  \subitem(i) For convex $\cF$, we prove 
  that the strong $\eta$-central condition  and the strong $\eta$-stochastic mixability
 are equivalent, under weak conditions (Theorem~\ref{thm:secondmain} in conjunction with
  Proposition~\ref{prop:frompredictortoconvexface} and
  Theorem~\ref{thm:convexfacetocomparator} in conjunction with
  Proposition~\ref{prop:fromsmtoppcc}).
  {\em Interest: \/} This shows that existing fast rate conditions for
  $O(1/n)$ rates in online learning  are related to fast rate conditions for $O(1/n)$
  rates for proper learning algorithms such as ERM --- even though
  such conditions superficially look very different and have very
  different interpretations: existence of a `substitution function'
  (mixability) vs. the exponential moment of a loss difference
  constituting a supermartingale (central condition).

\subitem(ii) We prove (a) that for bounded losses, the strong
central condition always implies fast $O(1/n)$ rates for ERM and the
$v$-central condition implies intermediate rates
(\cref{thm:finite-fast-rates}). The equivalence between
$\eta$-mixability and the central condition and Proposition~\ref{prop:stoch-mix-AA} mentioned above imply
that, (b), the central condition implies fast rates in many more conditions,
even with unbounded losses. We also show (c) that there exist decision
problems with unbounded losses in which the central condition holds, the Bernstein condition
does not hold, and we do get fast rates. {\em Interest:\/} first, while fast and
  intermediate
  rates under the $v$-central condition with bounded loss can also be
  derived from existing results, our proof is directly in terms of the
  central condition and yields better
  constants.  Second, results (a)-(c) above lead us to {\em
    conjecture} that there exist some very weak condition (much weaker
  than bounded loss) such that for sublinear $v$, the $v$-central condition together with
  this extra condition {\em always\/} implies sublinear
  rates. Establishing such a result is a major goal for future work. 

\item Under mild conditions, the $v$-central condition is equivalent
  to a third condition, the {\em pseudoprobability convexity
    (PPC) condition\/}  --- (\ref{eqn:basicconvexfacecondition}) and
  Definition~\ref{def:convex-face} and~\ref{def:g-stoch-mix}. {\em
    Interest:\/} for the constant $v \equiv \eta$ case ($O(1/n)$
  rates), the PPC condition provides a clear {\em geometric\/} and a
  {\em data-compression\/}
  interpretation of the $v$-central condition. For bounded losses and
  general $v$, it 
implies that a problem must have unique minimizers in a certain
sense (Proposition \ref{prop:nonunique}), giving further insight into
the fast rates phenomenon. 
\item In some cases with nonconvex $\cF$, ERM and other proper
  learning algorithms achieve a suboptimal $O(1/\sqrt{n})$ rate,
  whereas online methods combined with an online-to-batch convergence
  get $O(1/n)$ rates in expectation \citep{audibert2007progressive}. Now the
  implication `strong stochastic mixability $\Rightarrow $ strong
  central condition ' (Theorem~\ref{thm:convexfacetocomparator} in
  conjunction with Proposition~\ref{prop:fromsmtoppcc}, already
  mentioned under 2(i)) holds {\em whenever the risk minimizer within
    $\cF$ coincides with the risk minimizer within the convex hull of
    $\cF$}. Thus, as long as this is the case, there is no inherent
  rate advantage in improper learning --- if $\eta$-stochastic
  mixability holds so that (improper) online methods achieve an
  $O(1/n)$-rate, so will the (proper) ERM method.
  \cref{thm:finite-fast-rates} implies this for bounded losses; we
  conjecture that the same holds for unbounded losses. {\em
    Interest:\/} This insight helps understand when improper learning
  can and cannot be helpful for general losses, something that was
  hitherto only well-understood for the squared loss on a bounded
  domain \citep{lecue2011interplay}.
\end{itemize}

\newcommand{\assA}{\hyperref[ass:simple]{\scriptsize {A}}}
\newcommand{\assB}{\hyperref[ass:TODO]{\scriptsize {B}}}
\newcommand{\assC}{\hyperref[ass:minimax]{\scriptsize C}}
\newcommand{\assD}{\hyperref[ass:convexityContinuity]{\scriptsize D}}
\newcommand{\CC}{\hyperref[def:comparator]{\large\bf CC}}
\newcommand{\PC}{\hyperref[def:martingale-mixability]{\small PC}}
\newcommand{\PPCC}{\hyperref[def:convex-face]{\large\bf PPC}}
\newcommand{\vPPCC}{\hyperref[def:g-stoch-mix]{\small $v$-PPC}}
\newcommand{\MC}{\hyperref[sec:overview]{\small MC}}
\newcommand{\vCC}{\hyperref[def:g-stoch-mix]{\small $v$-CC}}
\newcommand{\uBC}{\hyperref[def:gen-bernstein]{\small $u$-BC}}
\newcommand{\JRT}{\hyperref[def:jrt-cond]{\small JRT}}
\newcommand{\SEC}{\hyperref[sec:expconcave]{\small SEC}}
\newcommand{\EC}{\hyperref[sec:expconcave]{\small EC}}
\newcommand{\SC}{\hyperref[sec:expconcave]{\small SC}}
\newcommand{\VM}{\hyperref[sec:classical-mixability]{\small VM}}
\newcommand{\AC}{\hyperref[sec:audibert]{\small AC}}
\newcommand{\CGC}{\hyperref[sec:fast-rates]{\small CGC}}
\newcommand{\FR}{\hyperref[sec:fast-rates]{\large\bf FR}}
\newcommand{\UM}{\hyperref[sec:nonunique]{\small UM}}
\newcommand{\SM}{\hyperref[def:stochastic-mixability]{\large\bf SM}}
\newcommand{\wSM}{\hyperref[def:stochastic-mixability]{\small wSM}}
\newcommand{\BMDL}{\hyperref[ex:log-loss-continued]{\small BMDL}}
\newcommand{\TM}{\hyperref[sec:intro]{\small TM}}
\newcommand{\VC}{\hyperref[sec:intro]{\small VC}}
\newcommand{\CON}{\hyperref[sec:intro]{\small CON}}
\begin{figure}[htp]
	\centering
	\vspace*{-0.7cm}\hspace*{-1.7cm}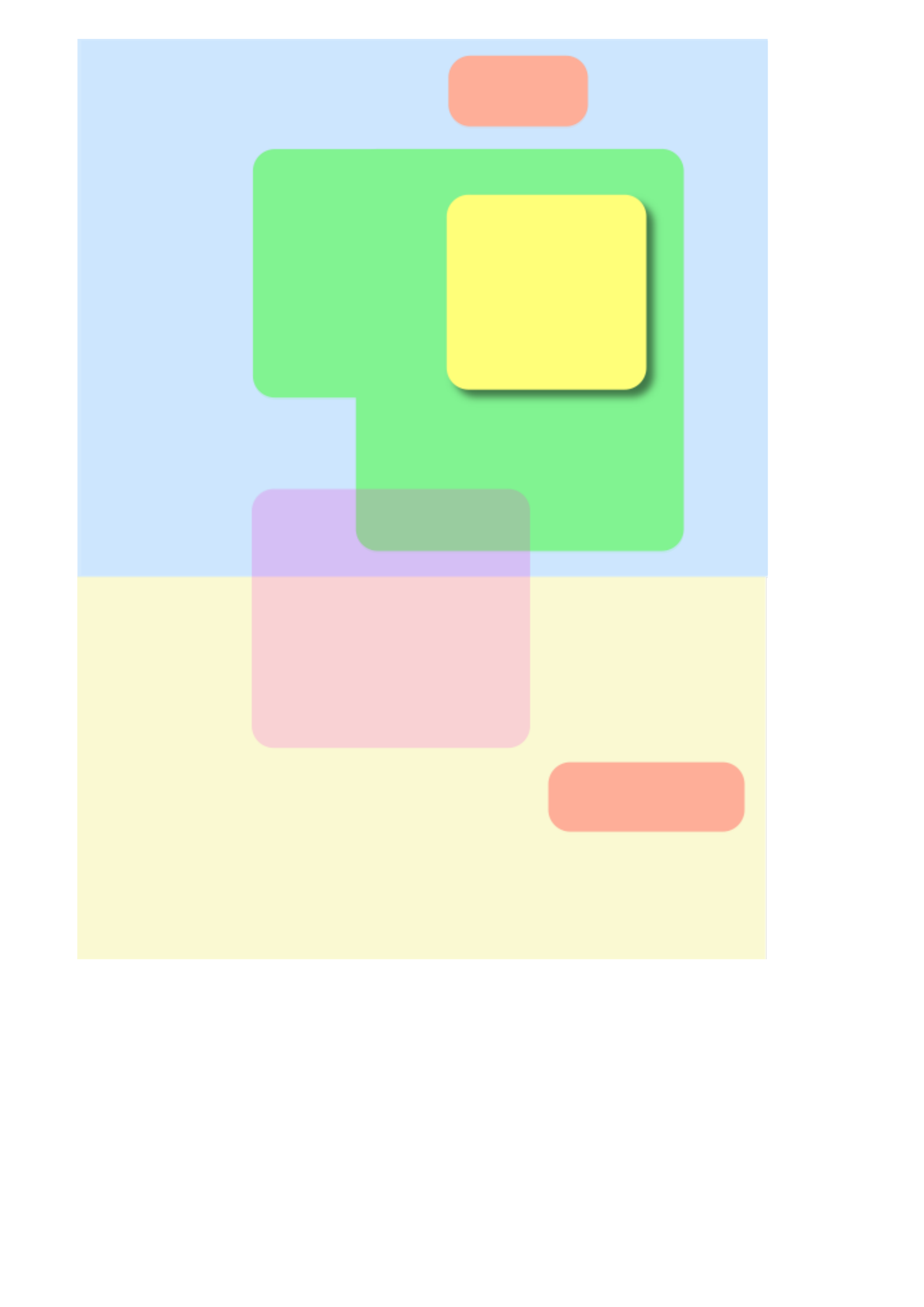
\caption{
\label{fig:map-of-paper}}
\end{figure}
\section{Introduction to and Overview of Results}
\label{sec:setup}
To facilitate reading of this long paper, we provide
an introductory summary of all our results. By reading this section
alongside the `map' of conditions and their relationships on page~\pageref{fig:map-of-paper}, the
reader should get a good overview of our results. We start below
with some notational and conceptual preliminaries, and continue in
Section~\ref{sec:overview} with a discussion of the central condition,
followed by a section-by-section description of the paper.

\subsection{Decision problems and Risk}\label{sec:dp}
We consider decision problems which, in their most general form, can be
specified as a four-tuple $\dpfour$ where $\cP$ is a set of
distributions on a sample space $\cZ$, and the goal is to make
decisions that are essentially as good as the best decision in the
\emph{model} $\cF$ ($\cF$ is often called an `hypothesis space' in
machine learning).  We will allow the decision maker to make decisions
in a \emph{decision set} $\decisionset$ which is usually taken equal to,
or a superset of, $\cF$ but for mathematical convenience is also
allowed to be a subset of $\cF$.  The quality of decisions will be
measured by a \emph{loss} function $\ell \colon \cF_{\ell}
\times \cZ \rightarrow [-B,\infty]$ for arbitrary $B \geq 0 $ where a smaller 
loss means better predictions, and $\cF_{\ell} \supseteq \cF \cup \decisionset$ is  the {\em domain\/} of the loss.  As further notation we introduce the component
functions $\ell_f(z) = \ell(f,z)$ and for any set $\cG$ we let
$\Delta(\cG)$ denote the set of distributions on $\cG$ (implicitly
assuming that $\cG$ is a measurable set, equipped with an appropriate
$\sigma$-algebra).
A loss function $\ell$ is called {\em bounded\/} if for some $B \geq 0$, for all $f \in \cF_{\ell}$ and all $P \in \cP$, we have $|\ell_f(Z)| \leq B$ almost surely when $Z \sim P$.
When $\cF_{\ell}$ is a set for which this is well-defined, for any $\cF \subset \cF_\ell$ we denote by $\convhull(\cF) \subseteq \cF_{\ell}$ the convex hull of $\cF$.

Now fix some decision problem $\dpfour$. The {\em risk\/} of a predictor
$f \in \cF_{\ell}$ with respect to $P \in \cP$ is defined, as usual, 
as
\begin{equation}
\label{eq:risk}
R(P,f) = \Exp_{Z \sim P}[\loss_f(Z)] , 
\end{equation}
where $Z$ is a random variable mapping to outcomes in $\cZ$ and, in
general, $R(P,f)$ may be infinite.  However, for the remainder of the paper
we will only consider tuples $\dpfour$ such that for all $P\in\cP$, there
exists\footnote{We allow the loss itself to be infinite which
  makes random variables and their expectations undefined when they
  evaluate to $\infty - \infty$ with positive probability. The
  requirement that $f^{\circ}$ exists for all $P$ ensures that we
  never encounter this situation in any of our formulas.  
} at least one $f^\circ \in \cF$ with $R(P,f^{\circ}) < \infty$ and
hence $P(\ell_{f^\circ}(Z) = \infty) = 0$. A {\em learning
  algorithm\/} or {\em estimator\/} is a (computable) function from
$\cup_{n \geq 0} \cZ^n$ to $\decisionset$ that, upon observing data
$Z_1, \ldots, Z_n$, outputs some $\hat{f}_n \in \decisionset$. 
Following standard terminology, we call a learning algorithm {\em proper\/}
\citep{lee1996efficient,alekhnovich2004learnability,UrnerBenDavid2014}
if its outputs are restricted to the set $\cF$, i.e. $\cF = \decisionset$.
Examples of this setting, which has also been called {\em in-model estimation\/}
\citep{GrunwaldVanOmmen14}, include ERM and Bayesian \emph{maximum a
  posteriori} (MAP) density estimation. For notational convenience, in
such cases we identify  a decision problem with the triple
$\dpthree$. We only consider $\cF \neq \decisionset$ in
Section~\ref{sec:four-conditions} and~\ref{sec:xyz} on on-line
learning, where $\decisionset$ is often taken to be $\convhull(\cF)$;
for example, $\cF$ may be a set of probability densities (Example~\ref{ex:log-loss}) and the
algorithm may be Bayesian prediction, which predicts with the Bayes
predictive distribution (Section~\ref{sec:convexityinterpretation}), a mixture of elements of $\cF$ which is hence
in $\convhull(\cF)$.  One of our main insights, discussed in
Section~\ref{sec:smppc},  is understanding when the \emph{weaker}
conditions that allow fast rates for improper learning transfer to
the proper learning setting.  In the stochastic setting, the {\em
  rate\/} (in expectation) of a learning algorithm is the quantity
\begin{equation}\label{eq:riskrate}
\sup_{P \in \cP} \ \ \left\{\ \Exp_{\mathbf{Z} \sim P} \left[R(P,\hat{f}_n)\right] -
\inf_{f \in \cF} R(P,f) \ \right\},
\end{equation}
where $\mathbf{Z} = (Z_1, \ldots,Z_n)$ are $n$ i.i.d.\ copies of $Z$.  The
rate of a learning algorithm can usually be bounded, up to $\log n$
factors, as $(\text{\sc comp}_n(\cF)/n)^{\alpha}$ for some $\alpha$
  between $1/2$ and $1$. Here $\text{\sc comp}_n(\cF)$ is some measure
  of the complexity of $\cF$ which may or may not depend on $n$, such
  as its codelength, its VC-dimension in classification, an upper bound on the KL-divergence between prior and posterior in PAC-Bayesian approaches, or the
  logarithm of the number of elements of an $\epsilon$-net, with
  $\epsilon$ determined by sample size, and so on. In the simplest
  case, with $\cF$ finite, complexity is invariably bounded
  independently of $n$ (usually as $\log | \cF|$), and whenever for a
  decision problem $\dpfour$ with finite $\cF$ there exists a learning algorithm achieving the rate $O(1/n)$,
  we say that the problem {\em allows for fast rates}.

In the remainder of this section we make the following simplifying assumption.
\begin{assumption}\label{ass:simple}{\bf (Minimal Risk Achieved)}
  For all $P \in \cP$, the minimal risk $R(P,f)$ over $\cF$ is
  achieved by some $f^* \in \cF$ depending on $P$, i.e.
\begin{equation}\label{eq:bayesact}
  R(P,f^*) = \inf_{f \in \model}
  R(P,f).\end{equation} 
\end{assumption}
Assumption \ref{ass:simple} is essentially a closure property that
holds in many cases of interest. We will call such $f^*$ {\em $\cF$-optimal for $P$\/} or simply {\em $\cF$-optimal}. When $P \in \cP$ and $\cF$ are clear from context, we will also simply say that $f^*$ is the {\em best predictor}. 
\begin{myexample}[Regression, Classification, (Relatively) Well-Specified and Misspecified Models]\label{ex:introductoryexample} In the standard
  statistical learning problems of {\em classification\/} and {\em
    regression}, we have $\cZ = \cX \times \cY$ for some `feature' or
  `covariate' space $\cX$ and $\cF$ is a set of functions from $\cX$
  to $\cY$. In classification, $\cY = \{0,1\}$ and one usually takes
  the standard classification loss $\classloss_f((x,y)) = |y- f(x)|$;
  in regression, one takes $\cY = \reals$ and the squared error loss
  $\regloss_f((x,y)) = \half (y - f(x))^2$. In
  Example~\ref{ex:log-loss} we show that density estimation also fits
  in our setting. For losses with bounded range $[0,B]$, if the
  optimal $f^*$ that exists by Assumption~\ref{ass:simple} has 0
  risk, we are in what \cite{Vapnik:1974aa} call the `optimistic'
  setting, more commonly known as the `deterministic' or `realizable' case (VC in Figure \ref{fig:map-of-paper} on page~\pageref{fig:map-of-paper}). We never make this
  strong an assumption and are thus always in the
  `agnostic' case. A strictly weaker assumption would be to assume
  that $f^*$ is the Bayes decision rule,  minimizing the risk $R(P,f^*)$ over the loss function's
  full domain $\cF_\ell$; in classification this means that $f^*$ is
  the {\em Bayes classifier\/} (minimizing risk over all functions
  from $\cX$ to $\cY$), in regression it implies that $f^*$ is the {\em
    true regression function}, i.e.\ $f^*(x) = \E_{(X,Y) \sim P} [Y \mid X=x]$, in
  density estimation (see below) that $f^*$ is the density of the
  `true' $P$. Borrowing terminology from statistics, we then say that the
  model $\cF$ is {\em well-specified}, or simply {\em
    correct}. Although this assumption is often made in statistics and
  sometimes in statistical learning (e.g.\ in the original Tsybakov condition
  \citep{tsybakov2004optimal} and in the analysis of strictly convex
  surrogate loss functions for $0/1$-loss 
  \citep{bartlett2006convexity}), all of our results are applicable to
  incorrect, {\em misspecified\/} $\cF$ as well. We will, however, in
  some cases make the much weaker Assumption~\ref{ass:preciseBayes}
  (page~\pageref{ass:preciseBayes}) that $\cF$ is well-specified {\em
    relative to $\decisionset$}, or equivalently $\cF$ is {\em as good as\/} $\decisionset$, meaning that for all $P \in \cP$,
  $\min_{f \in \decisionset} R(P,f) = \min_{f \in \cF} R(P,f)$. In all
  our examples, if $\cF \neq \decisionset$ we can take, without loss
  of generality, $\decisionset =
  \convhull(\cF)$, and then a sufficient (but by no means necessary) condition
  for relative well-specification is that $\cF$ is either convex or correct.
\end{myexample}
We now turn to an overview of the main results and concepts of this
paper, which are also highlighted in Figure~\ref{fig:map-of-paper} on page~\pageref{fig:map-of-paper}.
\subsection{Main Concept: The Central Condition}
\label{sec:overview}
We focus on decision problems $\dpthree$ satisfying the
simplifying Assumption~\ref{ass:simple} by fixing any
such decision problem and letting $P \in \cP$ and $f^*$ be $\cF$-optimal 
for $P$.  We may now ask this $f^*$ to
satisfy a stronger, supermartingale-type
property where for some $\eta > 0$ we require
\begin{equation}\label{eqn:basiccomparatorconditionpre}
  \E_{Z \sim P} 
\left[e^{\eta \left( \loss_{f^*}(Z)- \loss_{f}(Z) \right) } \right] \leq 1 
  \qquad \text{for all $f \in \model$}.
\end{equation}
This type of property plays a fundamental role in the
study of fast rates because it controls the higher moments of 
the negated excess loss $\loss_{f^*}(Z)- \loss_{f}(Z)$. 
Note that by our conventions regarding infinities
(Section~\ref{sec:dp}) this implies that $P(\ell_{f^*}(Z) = \infty) =
0$.  

There are several motivations for studying the requirement in 
\eqref{eqn:basiccomparatorconditionpre}.
In the case of classification loss, it can be seen to be a special, extreme 
case of the {\em Bernstein condition\/} (see below). 
In the case of log loss, the requirement becomes a
standard (but usually unnamed) condition which we call the {\em
  Bayes-MDL Condition} which is used in proving convergence rates of Bayesian
and MDL density estimation (Example~\ref{ex:log-loss}).
Finally, under a bounded loss assumption the condition
\eqref{eqn:basiccomparatorconditionpre} implies one our main results,
\cref{thm:finite-fast-rates}, a fast rates result for statistical
learning over finite classes (the situation for unbounded losses is
more complicated and is discussed after Example~\ref{ex:log-loss}).

Note that to satisfy \cref{ass:simple} it is sufficient to require
that the property \eqref{eqn:basiccomparatorconditionpre} holds for
{\em some\/} $f^* \in \cF$ since, by Jensen's inequality, this $f^*$
must then automatically be $\cF$-optimal as in
(\ref{eq:bayesact}). We will require
(\ref{eqn:basiccomparatorconditionpre}) to hold for all $P \in \cP$
(where $f^*$ may depend on $P$). This is the simplest form of our
central condition, which we call the {\em the $\eta$-central
  condition}. We note that if
(\ref{eqn:basiccomparatorconditionpre}) holds for all $f \in \cF$ then it
must also hold in expectation for all {\em distributions\/} on $\cF$.
Thus, the $\eta$-central condition can be restated as follows:
\begin{equation}\label{eqn:basiccomparatorconditionb}
\forall P \in \cP \; \exists f^* \in \cF \;  \forall \Pi \in \Delta(\cF):\;
\E_{Z \sim P} \E_{f \sim \Pi}
    \left[e^{\eta \left( \loss_{f^*}(Z)- \loss_{f}(Z) \right) } \right] \leq 1.
\end{equation}
This rephrasing of the central condition will be useful when comparing it to 
conditions introduced later in the paper.

The central condition is easiest to interpret for density estimation
with the logarithmic loss. In this case the condition for $\eta = 1$ is
implied by $\model$ being either well-specified or convex, as the following
example shows.
\begin{myexample}[Density estimation under well-specified or convex models]
\label{ex:log-loss} 
Let $\model$ be a set of probability densities on $\cZ$ and take $\loss$ to be log loss, so that $\loss_f(z) = -\log f(z)$.

For log loss, statistical learning becomes equivalent to density
estimation. 
Satisfying the central condition then becomes equivalent to, for all $P \in \cP$, 
finding an $f^* \in \cF$ such that
\begin{align}\label{eqn:affinity}
\E_{Z \sim P} \left(\frac{f(Z)}{f^*(Z)}\right)^{\eta} \leq 1
\end{align}
for all $f \in \model$.
If the model $\model$ is correct, it trivially holds that $\dpthree$
satisfies the 1-central condition as we choose $f^*$ to be the density
of $P$, so that the densities in the expectation and the denominator cancel. Even when the model is misspecified, \cite{li1999estimation}
showed that \eqref{eqn:affinity} holds for $\eta = 1$ provided the
model is convex. We will recover this result in
Example~\ref{ex:log-loss-continued} in Section~\ref{sec:convex-face},
where we review the central role that (\ref{eqn:affinity}) plays in
convergence proofs of MDL and Bayesian estimation. Even if the set of densities is neither correct nor convex, the central condition often still holds for some $\eta \neq 1$. In Example~\ref{ex:dunno} we explore this for the set of normal densities with variance $\tau^2$ when the true distribution is either Gaussian with a different variance, or subgaussian.
\end{myexample}
We show in Section~\ref{sec:fast-rates} that for bounded losses the $\eta$-central
condition implies fast $O(1/n)$ rates for finite $\cF$. But what about
unbounded losses such as log loss? In the log loss/density estimation
case, as shown by
\cite{barron1991minimum,zhang2006,grunwald2007minimum} and others, 
fast rates can be obtained in a weaker sense. Specifically, in the worst-case 
over $P \in \cP$, the squared Hellinger distance or R\'enyi divergences between
$\hat{f}_n$ and the optimal $f^*$ converge as $O(1/n)$ for ERM 
when $\cF$ is finite, and like $O(\text{\sc comp}_n/n)$ for general $\cF$
and for 2-part MDL and Bayes MAP-style algorithms.  If the goal is to
obtain fast rates in the stronger sense (\ref{eq:riskrate}) for general
unbounded loss functions some additional assumptions are needed.
\cite{zhang2006,zhang2006information} provides such results for
penalized ERM and randomized estimators (see also the discussion in
Section~\ref{sec:discussion}). Importantly, as explained by
\cite{grunwald2012safe}, the proofs for fast rates in all the works
mentioned here crucially, though sometimes implicitly, employ the
$\eta$-central condition at some point.

\subsection{Overview of the Paper}

\subsubsection*{\rm {\em Section~\ref{sec:convex-face} ---Fast Rates for
  Proper Learning:\/} PPC Condition, Bayesian Interpretation,
  Relation to Bayes-MDL Condition.} In
Section~\ref{sec:convex-face}, we give a second condition, the {\em
  pseudoprobability convexity (PPC) condition}, a variation of
(\ref{eqn:basiccomparatorconditionb}) stating that:
  \begin{equation}\label{eqn:basicconvexfacecondition}
\forall P \in \cP \;  \forall \Pi \in \Delta(\cF) \; \exists f^* \in \cF  :\;
\E_{Z \sim P }[\loss_{f^*}(Z)] \leq \E_{Z \sim P }\left[- \frac{1}{\eta}\log \E_{f \sim \Pi} e^{-\eta \loss_{f}(Z)}\right].
\end{equation}
Clearly, if the condition holds, then it
will hold by choosing, for every $P \in \cP$, $f^*$ to be
$\cF$-optimal relative to $P$.  The name `pseudoprobability'  stems
from the interpretation of $p_f(Z) := e^{-\ell_f(Z)}$ as
`pseudo-probability associated with $f$, similar to the
`entropification' of $f$ introduced by \cite{Grunwald99a}. The full
`pseudoprobability convexity' stems from the  interpretation
illustrated by and explained around Figure~\ref{fig:convexitycartoon}
on page~\pageref{fig:convexitycartoon}. We show that, under
simplifying Assumption~\ref{ass:simple}, the central and PPC
conditions are equivalent. One direction of this equivalence is 
trivial, while the other direction is our first main result,
Theorem~\ref{thm:convexfacetocomparator}. We also explain how the
rightmost expression in (\ref{eqn:basicconvexfacecondition}) strongly
resembles the expected log-loss of a Bayes predictive distribution,
and how this leads to a `pseudo-Bayesian' or
`pseudo-data compression' interpretation of the pseudoprobability
convexity condition, and hence of the central condition. Versions of
this interpretation were highlighted earlier by
\citet{grunwald2012safe,GrunwaldVanOmmen14}. Thus, we can think of both
conditions as a single condition with dual interpretations: a
frequentist one in terms of exponentially small deviation
probabilities (which follow by applying Markov's inequality to $\Exp_{Z \sim P} [ e^{\eta (\ell_{f^*(Z)} - \ell_f(Z))} ]$), and a pseudo-Bayesian one in terms of convexity
properties of $\cF$. Further, we give a few more examples of the central/PPC condition in this section, and we discuss in detail its special case, the  Bayes-MDL
condition (Example~\ref{ex:log-loss}).

Crucially, all algorithms that we are aware of for which fast rates have
been proven by means of the $\eta$-central condition are `proper' in
that they always output a (possibly randomized) element of $\cF$
itself. This includes ERM, two-part MDL, Bayes MAP and randomized
Bayes algorithms 
\citep{barron1991minimum,zhang2006,zhang2006information,grunwald2007minimum}
and PAC-Bayesian methods
\citep{audibert2004pac,catoni2007pac}. Thus, the central condition is
appropriate for {\em proper learning}. This is in contrast to the
stochastic mixability condition which is defined and studied in
Section~\ref{sec:four-conditions}.

\subsubsection*{{\em Section~\ref{sec:four-conditions} --- Fast Rates for Online
  Learning:\/}  {\rm (Stochastic) Mixability and Exp-Concavity}.}
In online learning with bounded losses, \emph{strong convexity} of the loss is an
oft-used condition to obtain fast rates because it is 
naturally related to gradient and mirror descent methods
\citep{HazanAgarwalKale2007,HazanRakhlinBartlett2007,ShalevShwartzSinger2007}.
If we allow more general algorithms, however, then fast rates are also
possible under the condition of \emph{exp-concavity} which is weaker
than strong convexity \citep{HazanAgarwalKale2007}. Exp-concavity in
turn is a special case of Vovk's classical mixability condition
\citep{vovk2001competitive}, the main difference being that the
definition of exp-concavity depends on the choice of parametrization
of the loss function whereas the definition of classical mixability
does not.  Whether classical mixability can really be strictly weaker
than exp-concavity in an `optimal' parametrization is an open
question \citep{WilliamsonZhangParameswaran2015,vanerven2012blog}.
Strong convexity, exp-concavity and classical mixability are all
individual sequence notions, allowing for fast rates in the sense
that, if $\cF$ is finite, then there exist (improper) learning
algorithms for which the worst-case cumulative regret over all
sequences, that is $\sup_{z_1, \ldots, z_n \in \cZ^n} \; \left\{ \sum_{i=1}^n
  \left( \ell_{\hat{f}_{i-1}}(z_i) \right) - \inf_{f \in \cF}
  \sum_{i=1}^n \ell_f(z_i) \; \right\}$, is bounded by a constant. This implies
that the worst-case cumulative regret per outcome at time $n$ is
$O(1/n)$.

One may obtain learning algorithms for statistical learning by converting algorithms
for online learning using a process called \emph{online-to-batch
conversion}
\citep{CesaBianchiConconiGentile2004,Barron1987,YangBarron1999}. This
process preserves rates, in the sense that 
if the worst-case regret per outcome at time $n$ of a method is $r_n$  then the
rate of the resulting learning algorithm in the sense of
(\ref{eq:riskrate}) will also be $r_n$. However, for  
this purpose, it suffices to use a much weaker stochastic analogue of
mixability that only holds in expectation instead of holding for all outcomes. 
This analogue is \emph{$\eta$-stochastic mixability},
which we define (note the similarity to (\ref{eqn:basicconvexfacecondition})) as
\begin{align}\label{eq:stochmixpre}
\forall \Pi \in \Delta(\cF) \; \exists f^* \in \decisionset  \; \forall P \in \cP  :\;
\E_{Z \sim P }[\loss_{f^*}(Z)] \leq \E_{Z \sim P }\left[- \frac{1}{\eta} \log \E_{f \sim \Pi} e^{-\eta \loss_{f}(Z)}\right].
\end{align}
Under this condition, Vovk's Aggregating Algorithm (AA) achieves fast
rates in expectation under any $P \in \cP$ in sequential on-line
prediction, without any further conditions on $\dpfour$; in
particular there are no boundedness restrictions on the loss. If we
take $\cP$ to be the set of all distributions on $\cZ$, we recover
Vovk's original individual-sequence $\eta$-mixability. Note that,
based on data $Z_1, \ldots, Z_n$, the AA outputs $f$ that are not
necessarily in $\cF$ but can be in some different set $\decisionset$
(in all applications we are aware of, $\decisionset =
\convhull(\cF)$, the convex hull of $\cF$).  Online-to-batch
conversion has been used, amongst others, by
\citet{juditsky2008learning,DalalyanT12} and \citet{audibert2009fast}
to obtain fast rates in model selection aggregation. In
Sections~\ref{sec:jrt} and \ref{sec:audibert} we relate their
conditions to stochastic mixability. We show that results by
\citet{juditsky2008learning} employ a \emph{stochastic
  exp-concavity} condition, a special case of our
stochastic mixability condition, in a manner similar to the way  exp-concavity
is a special case of classical mixability. Given these
applications to statistical learning, it is not surprising that
stochastic mixability is closely related to the conditions for
statistical learning discussed above. We will show in
Proposition~\ref{prop:fromsmtoppcc} that under certain assumptions
it is equivalent to our central condition
\eqref{eqn:basiccomparatorconditionb} and hence also the PPC
condition~\eqref{eqn:basicconvexfacecondition}.  The proposition
shows that this holds unconditionally in the proper learning 
setting:  stochastic mixability implies the pseudoprobability
convexity condition which, in turn, implies the central condition
under some weak restrictions.  The proposition also gives a condition
under which these relationships continue to hold in the more
challenging case when $\cF \neq \decisionset$. In general, making
predictions in $\decisionset$ gives more power, and the central
condition can only be used to infer fast rates for proper learning
algorithms which always play in $\cF$. Thus, if $\eta$-stochastic
mixability for $\dpfour$ implies $\eta$-PPC for $\dpthree$ then
there is no rate improvement for learning algorithms that are
allowed to predict in $\decisionset$ instead of
$\cF$. Proposition~\ref{prop:fromsmtoppcc} gives a central insight of
this paper by showing that this implication holds under
Assumption~\ref{ass:preciseBayes}: {\em $\eta$-stochastic mixability
  for $\dpfour$ implies the $\eta$-PPC and $\eta$-central conditions
  for $\dpthree$ whenever $\cF$ is well-specified {\em relative\/} to
  $\decisionset$} --- relative well-specification was defined in
Example~\ref{ex:introductoryexample}, where we indicated that this a
much weaker condition than mere correctness of $\cF$; in all cases we
are aware of, a sufficient condition is that $\cF$ is convex.  In
Example~\ref{ex:audibertrate} we explore the implications of
Proposition~\ref{prop:fromsmtoppcc} for the question whether fast
rates can be obtained both in expectation and in probability --- as is
the case under the central condition --- or only in expectation --- as
is sometimes the case under stochastic mixability.

For the implication from the central condition to stochastic
mixability, we first define an intermediate, slightly stronger generalization of
classical mixability that we call the \emph{$\eta$-predictor
  condition}, which looks like the central condition, but with its
universal quantifiers interchanged:
\begin{align}
\forall \Pi \in \Delta(\cF) \; \exists f^* \in \decisionset  \; \forall P \in \cP  :\;
\E_{Z \sim P} \E_{f \sim \Pi}
\left[e^{\eta \left(\loss_{f^*}(Z)- \loss_{f}(Z) \right) } \right]
\leq 1. 
\end{align}
In our second main result, Theorem~\ref{thm:secondmain}, we show that
the central condition implies the predictor condition whenever the
decision problem satisfies a certain minimax identity, which holds under Assumption~\ref{ass:minimax} or its weakening Assumption~\ref{ass:convexityContinuity}. And since (by a trivial
application of Jensen's inequality) the predictor condition in turn
implies stochastic mixability, we come full circle and see that, under
some restrictions, all four of our conditions in the 
`central quadrangle' of Figure~\ref{fig:map-of-paper} (page~\pageref{fig:map-of-paper}) are really equivalent.

\subsubsection*{{\em Section~\ref{sec:comp-mix-margin} --- Intermediate Rates:\/}\ {\rm  
Weakening to $\gComparator$-central condition, connection to Bernstein and Tsybakov Conditions --- can be read independently from Section~\ref{sec:four-conditions}.}} In
Section~\ref{sec:comp-mix-margin}, we weaken the $\eta$-central condition to a condition which we
call the $\gComparator$-central condition: rather than requiring that a fixed
$\eta$ exists such that (\ref{eqn:basiccomparatorconditionpre}) holds, we
only require that it holds (for all $P \in\cP$) up to some `slack' $\epsilon$, where we
require that the slack must go to $0$ as $\eta \downarrow
0$. Specifically, we require that there is some increasing nonnegative function
$\gComparator$ such that
\begin{equation}\label{eqn:pre-g-condition}
  \E_{Z \sim P} 
\left[e^{\eta \left( \loss_{f^*}(Z)- \loss_{f}(Z) \right) } \right] \leq e^{\eta \epsilon}
  \qquad \text{for all $f \in \model$, all $\epsilon > 0$, with $\eta := \gComparator(\epsilon)$}.
\end{equation}
As shown in this section
(Example~\ref{ex:tsybakov}), the $\gComparator$-central condition is
associated with rates of order $w(C/n)$ where $C > 0$ is some
constant, and $w$ is the inverse of $x \mapsto x \gComparator(x)$ --- taking
constant $\gComparator(x) = \eta$ we see that this generalizes the
situation for the $\eta$-central condition which for fixed $\eta$ allows
rates of order $O(1/n)$. In our third main result,
Theorem~\ref{thm:BernsteinComparator}, we then show that, for bounded
loss functions, this condition is equivalent to a 
\emph{generalized Bernstein condition} (see Definition \ref{def:gen-bernstein}),
which itself is a generalization of the Tsybakov margin condition \citep{tsybakov2004optimal} 
to classification settings in which $\cF$ may be misspecified, and to
loss functions different from $0/1$-loss \citep{bartlett2006empirical}. Specifically, for given
function $\gComparator$, a decision problem satisfies the
$\gComparator$-central condition if and only if it satisfies the
$\gBernstein$-generalized Bernstein condition for a function
\begin{equation}\label{eq:bernie}
\gBernstein(x) \asymp \frac{x}{\gComparator(x)} ,
\end{equation}
where for functions $a, b$ from $[0,\infty)$ to $[0,\infty)$, $a(x)
\asymp b(x)$ denotes that there exist constants $c, C > 0$ such that,
for all $x \geq 0$, $c a(x) \leq b(x) \leq C a(x)$.
\begin{myexample}[Classification]\label{ex:classification}
  Let $\dpthree$ represent a classification problem with $\ell$ the
  $0/1$-loss that satisfies the $\gComparator$-central condition 
for $\gComparator(x)
  \asymp x^{1- \beta}$, $0 \leq \beta \leq 1$. Then (\ref{eq:bernie})
  holds with $\gBernstein$ of form $\gBernstein(x) = B x^{\beta}$.
  This is equivalent to the standard $(\beta,B)$-Bernstein condition
  (which, if $\cF$ is well-specified, corresponds to the Tsybakov margin condition with exponent
  $\beta/(1-\beta)$), 
which is known to guarantee rates of
  $O\left(n^{-1/(2-\beta)}\right)$. This is consistent with the rate
  $w(C/n)$ above, since if $\gComparator(x) \asymp x^{1 - \beta}$, then its inverse $w$ satisfies
$w(x) \asymp x^{1/(2-\beta)}$.
\end{myexample}
For the case of unbounded losses, the generalized Bernstein and
central conditions are not equivalent.
Example~\ref{ex:comparator-vs-bernstein} gives a simple case in which
the Bernstein condition does not hold whereas, due to its
one-sidedness, the central condition does hold and fast rates for ERM
are easy to verify; Example~\ref{ex:lastminute} shows that the
opposite can happen as well.

In this section we also extend $\eta$-stochastic mixability to
$\gComparator$-stochastic-mixability and show that another fast-rate
condition identified by \cite{juditsky2008learning} is a special case.
For unbounded losses, the $v$-stochastic mixability and the $v$-central
condition become quite different, and it may be that the $u$-Bernstein condition does imply $v$-mixability;
whether this is so is an open problem. Finally, using
Theorem~\ref{thm:BernsteinComparator}, we characterize the
relationship between the $\eta$-central condition and the existence of
unique risk minimizers for bounded losses.

\subsubsection*{{\em Section~\ref{sec:xyz} ---From Actions to Predictors.}}\label{page:unconditional}
The classical mixability literature usually considers the {\em
  unconditional\/} setting where observations and actions are points from $\cZ$
  and $\cA$, respectively.
  For example, one may
  consider the squared loss with $\ell_a(y) = (y-a)^2$ for $y,a \in [0,1]$. 
  It is often easy to establish stochastic mixability for a decision problem
in this unconditional setting. An interesting question is whether this
automatically implies that stochastic mixability (and hence, under
further conditions, also the central condition) holds in the
corresponding \emph{conditional} setting where $\cZ = \cX \times \cY$ and
the decision set contains predictors $f : \cX \to \cA$ that map features $x\in\cX$ to
actions.
Here, an example loss function might be $\regloss_f((x,y)) = \half (y - f(x))^2$ as considered
in Example~\ref{ex:introductoryexample}. In this section, we show that
the answer is a qualified `yes' --- in general, the set $\decisionset$
may need to be a large set such as $\cA^\cX$, but with some additional assumptions it remains manageable.

\subsubsection*{{\em Section~\ref{sec:fast-rates} --- Fast Rate Theorem.}}
In \cref{sec:fast-rates}, we show how for bounded losses the central
condition enables a direct proof of fast rates in statistical learning
over finite classes.  The path to our fast rates result,
\cref{thm:finite-fast-rates}, involves showing that, for each function
$f \in \cF$, the central condition implies that the empirical excess
loss of $f$ exhibits one-sided concentration at a scale related to the
excess loss of $f$. This one-sided concentration result is achieved by
way of the Cram\'er-Chernoff method \citep{boucheron2013concentration}
combined with an upper bound on the \emph{cumulant generating function} (CGF) of
the negative excess loss of $f$ evaluated at a specific point.
The upper bound on the CGF is given in
\cref{thm:stochastic-mixability-concentration} which shows that if the
absolute value of the excess loss random variable is bounded by 1, its CGF evaluated at some $-\eta < 0$ takes the
value $0$, and its mean $\mu$ is positive, then the central condition
implies that the CGF evaluated at $-\eta/2$ is upper bounded by a
universal constant times $-\eta \mu$. By way of a careful localization
argument, the fast rates result for finite classes also extends to 
VC-type classes, as presented in \cref{thm:vc-type-fast-rates}.

\subsubsection*{{\em Final Section --- Discussion.}}
The paper ends with a discussion of what has been achieved and a list
of open problems.

\section{The Central Condition in General and a Bayesian Interpretation via the PPC Condition}
\label{sec:convex-face}

In this section we first generalize the definitions of the central and
pseudoprobability convexity (PPC) conditions beyond the case of the
simplifying Assumption~\ref{ass:simple}. We give a few examples and list some 
of their basic properties. We then show that the central
condition trivially implies the PPC condition, under no conditions on
the decision problem at all. Additionally, in our first main theorem, we show
that if Assumption~\ref{ass:simple} holds or the loss is bounded, then
the converse result is also true.  Importantly, this equivalence between
the central condition and the PPC condition
allows us to interpret the PPC condition as the
requirement that a particular set of \emph{pseudoprobabilities} is
convex on the side that `faces' the data-generating distribution $P$
(Figure~\ref{fig:convexitycartoon}). This leads to a
(pseudo)-Bayesian interpretation, which says that the
(pseudo)-Bayesian predictive distribution is not allowed to be better
than the best element of the model.

\subsection{The Central and Pseudoprobability Convexity Conditions in General}
We now extend the definition (\ref{eqn:basiccomparatorconditionpre})
of the central condition to the case that our simplifying
Assumption~\ref{ass:simple} may not hold. In such cases, it may be
that there is no fixed comparator that satisfies
(\ref{eqn:basiccomparatorconditionpre}), but there does exist a
sequence of comparators $f^*_1, f^*_2, \ldots$ that satisfies
(\ref{eqn:basiccomparatorconditionb}) in the limit. By introducing a
function $\comp$ that maps $P$ to $f^*$ this leads to the following
definition of the general $\eta$-central condition:
\begin{definition}[Central Condition]
\label{def:comparator} 
Let $\eta > 0$ and $\epsilon \geq 0$. We say that $\dpthree$ satisfies 
the \emph{$\eta$-central condition up to $\epsilon$} if
there exists a
\emph{comparator selection} function $\comp\colon \cP \rightarrow
\cF$ such that
\begin{align}\label{eqn:comparator}
\E_{Z \sim P} \E_{f \sim \Pi}
\left[e^{\eta \left( \ell_{\comp(P)}(Z)- \ell_{f}(Z) \right) } \right] \leq 
 e^{\eta \epsilon}
 \qquad
\text{for all $P \in \cP$ and distributions $\Pi \in \Delta(\model)$.}
\end{align}
If it satisfies the $\eta$-central condition up to $0$, we say that
the \emph{strong $\eta$-central condition} or simply the
\emph{$\eta$-central condition} holds. If it satisfies the $\eta$-central
condition up to $\epsilon$ for all $\epsilon > 0$, we say that the
\emph{weak $\eta$-central condition} holds; this is equivalent to
\begin{align}\label{eqn:supinfsup-comparator}
\sup_{P \in \cP} \lowinf_{f^* \in \cF}  \sup_{\Pi \in \Delta(\cF)} 
\E_{Z \sim P} \E_{f \sim \Pi}
\left[e^{\eta \left( \loss_{f^*}(Z)- \loss_{f}(Z) \right) } \right] \leq 1.
\end{align}
\end{definition}
Note that we explicitly identify the situation in which the condition does not
actually hold in the strong sense but will if some slack $\epsilon > 0$ is introduced.
We will do
the same for the other fast rate conditions identified in this paper,
and we will also establish relations between the `up to $\epsilon >
0$' versions. This will become useful throughout
Section~\ref{sec:comp-mix-margin} and, in particular, Section~\ref{sec:JRTother}.

The PPC condition generalizes analogously to
the central condition and features
\begin{equation}\label{eqn:mixloss}
  m^\eta_\Pi(z) = -\frac{1}{\eta} \log \E_{f \sim \Pi}\left[e^{-\eta
  \loss_f(z)}\right] ,
\end{equation}
a quantity that plays a crucial role in the analysis of online learning algorithms
\citep{vovk1998game,vovk2001competitive},
\citep[Theorem~2.2]{CesaBianchiLugosi2006} and has been called the
\emph{mix loss} in that context by \Citet{rooij2014follow}.
\begin{definition}[Pseudoprobability convexity condition]
\label{def:convex-face} 
Let $\eta > 0$ and $\epsilon \geq 0$. We say that $\dpthree$ satisfies
the  \emph{$\eta$-pseudoprobability convexity condition up to $\epsilon$} if there
exists a function $\comp\colon \cP \rightarrow \cF$ such that
\begin{align}\label{eqn:strong-convexface}
  \E_{Z \sim P}
  \left[\ell_{\comp(P)}(Z) \right] \leq \E_{Z \sim P}
  \left[m^\eta_\Pi(Z)\right] + \epsilon
  \qquad\text{for all $P \in \cP$ and $\Pi \in \Delta(\cF)$.}
\end{align}
If it satisfies the $\eta$-pseudoprobability convexity condition up to $0$, we say that
the \emph{strong $\eta$-pseudoprobability convexity condition} or simply
the \emph{$\eta$-pseudoprobability convexity condition} holds. If it satisfies the
$\eta$-pseudoprobability convexity condition up to $\epsilon$ for all $\epsilon > 0$, we
say that the \emph{weak $\eta$-pseudoprobability convexity condition} holds; this is
equivalent to
\begin{align}\label{eqn:supsupinf-convexface}
\sup_{\Pi \in \Delta(\cF)}  \ \sup_{P \in \cP} \lowinf_{f \in \cF} 
\ \E_{Z \sim P} \left[ \ell_{f}(Z)- m^{\eta}_{\Pi}(Z) \right] \leq 0.
\end{align}
\end{definition}
Under Assumption~\ref{ass:simple} this condition simplifies and
implies the essential uniqueness of optimal predictors 
(cf. Section~\ref{sec:convexityinterpretation}).
\begin{proposition}{ \bf (PPC condition implies uniqueness of risk minimizers)\
  }\label{prop:unique}
  Suppose that Assumption~\ref{ass:simple} holds, and that $\dpthree$
  satisfies the weak $\eta$-pseudoprobability convexity condition.
  Then it also satisfies the strong $\eta$-pseudoprobability convexity
  condition, and for all $P \in \cP$, the $\cF$-optimal $f^*$
  satisfying (\ref{eq:bayesact}) is essentially unique, in the sense
  that, for any $g^* \in \cF$ with $R(P,g^*) = R(P,f^*)$, we have that
  $\ell_{g^*}(Z) = \ell_{f^*}(Z)$ holds $P$-almost surely.
\end{proposition}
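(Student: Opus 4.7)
The plan is to establish the two claims (strong PPC and essential uniqueness) one at a time, where the second follows from the first by choosing the distribution $\Pi$ cleverly.

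\textbf{Step 1: Weak PPC plus Assumption~\ref{ass:simple} implies strong PPC.} Fix any $P \in \cP$ and $\Pi \in \Delta(\cF)$, and let $f^* \in \cF$ be the $\cF$-optimal predictor for $P$ guaranteed by Assumption~\ref{ass:simple}. For any $\epsilon > 0$, the weak $\eta$-PPC condition provides some $\phi_\epsilon(P) \in \cF$ with $\E_{Z \sim P}[\ell_{\phi_\epsilon(P)}(Z)] \leq \E_{Z \sim P}[m^\eta_\Pi(Z)] + \epsilon$. Since $f^*$ minimizes the risk, $\E_{Z \sim P}[\ell_{f^*}(Z)] \leq \E_{Z \sim P}[\ell_{\phi_\epsilon(P)}(Z)]$. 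Letting $\epsilon \downarrow 0$ gives strong $\eta$-PPC with selector $\phi(P) = f^*$.

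\textbf{Step 2: Essential uniqueness via a two-point mixture.} Suppose $f^*, g^* \in \cF$ both attain the minimal risk under $P$. Take $\Pi = \frac{1}{2}\delta_{f^*} + \frac{1}{2}\delta_{g^*}$, so that
\begin{equation*}
m^\eta_\Pi(z) = -\tfrac{1}{\eta}\log\bigl(\tfrac{1}{2} e^{-\eta \ell_{f^*}(z)} + \tfrac{1}{2} e^{-\eta \ell_{g^*}(z)}\bigr).
\end{equation*}
The function $t \mapsto -\log t$ is strictly convex on $(0,\infty)$, so pointwise
\begin{equation*}
m^\eta_\Pi(z) \leq \tfrac{1}{2}\ell_{f^*}(z) + \tfrac{1}{2}\ell_{g^*}(z),
\end{equation*}
with equality if and only if $\ell_{f^*}(z) = \ell_{g^*}(z)$. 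Taking $P$-expectations and using $R(P,f^*) = R(P,g^*)$ yields $\E_{Z \sim P}[m^\eta_\Pi(Z)] \leq \E_{Z \sim P}[\ell_{f^*}(Z)]$. On the other hand, strong $\eta$-PPC (established in Step~1) applied to this $\Pi$ gives the reverse inequality $\E_{Z \sim P}[\ell_{f^*}(Z)] \leq \E_{Z \sim P}[m^\eta_\Pi(Z)]$. Thus the pointwise inequality is an equality in $P$-expectation, hence is an equality $P$-almost surely by strict convexity, which forces $\ell_{f^*}(Z) = \ell_{g^*}(Z)$ $P$-a.s.

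\textbf{Main obstacle.} The argument is quite clean; the only subtlety is ensuring that the strict-convexity-of-$-\log$ step is rigorous when the losses can take the value $+\infty$. This is handled by the standing convention in Section~\ref{sec:dp} that $f^\circ$ with $R(P,f^\circ) < \infty$ exists, combined with the fact that $\ell_{f^*}$ and $\ell_{g^*}$ both have finite $P$-expectation (since each achieves the infimum, which is at most $R(P,f^\circ) < \infty$), so both are $P$-a.s.\ finite and the log-mean inequality applies $P$-a.s. No minimax or topological assumptions are needed.
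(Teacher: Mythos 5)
Your proof is correct and follows essentially the same route as the paper's: Step 1 matches the paper's argument that Assumption A lets one replace $\phi_\epsilon(P)$ by the fixed risk minimizer $f^*$ and then pass to $\epsilon = 0$, and Step 2 uses the same two-point mixture $\Pi = \tfrac12\delta_{f^*} + \tfrac12\delta_{g^*}$ with Jensen and strict convexity of $-\log$ to force almost-sure equality of losses. Your remark about the $+\infty$ case is a sound observation on a point the paper leaves implicit, but it does not change the substance of the argument.
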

\begin{proof}
  Assumption~\ref{ass:simple} implies that if
  (\ref{eqn:strong-convexface}) holds at all, then it also holds with
  $\comp(P)$ equal to any $\cF$-risk minimizer $f^*$ as in
  (\ref{eq:bayesact}). Thus, if it holds for all $\epsilon >0$, it
  holds for all $\epsilon > 0$ with the fixed choice $f^*$, and hence
  it must also hold for $\epsilon = 0$ with the same $f^*$.

As to the second part, consider a distribution $\Pi$ that puts mass
$1/2$ on $f^*$ and $1/2$ on $g^*$. Then the strong
$\eta$-pseudoprobability condition implies that 
  \begin{align*}
    \min_{f \in \cF} \E_{Z \sim P} [\loss_f(Z)]
      &\leq \E_{Z \sim P} \left[-\frac{1}{\eta} \log\big(\half
          e^{-\eta \loss_{f^*}(Z)} + \half e^{-\eta
          \loss_{g^*}(Z)}\big)\right]\\
      &\leq \E_{Z \sim P} \left[\half
          \loss_{f^*}(Z) + \half
          \loss_{g^*}(Z)\right]
      = \min_{f \in \cF} \E_{Z \sim P} [\loss_f(Z)],
  \end{align*}
  where we used convexity of $- \log$ and Jensen's inequality.  Hence
  both inequalities must hold with equality. By strict convexity of
  $-\log$, we know that for the second inequality this can only be the
  case if $\loss_{f^*} = \loss_{g^*}$ almost surely, which was
  to be shown.
\end{proof}
Finally, we will often make use of the following trivial but important
fact.
\begin{fact}\label{fact:ccppcc} Fix $\eta > 0, \epsilon \geq 0$ and let $\dpthree$ be an arbitrary decision problem that
  satisfies the $\eta$-central condition up to $\epsilon$. Then for
  any $0 < \eta' \leq \eta$ and any $\epsilon' \geq \epsilon$ and for any $\cP' \subseteq \cP$,
  $(\ell,\cP', \cF)$ satisfies the $\eta'$-central condition up to $\epsilon'$. The same
  holds with `central' replaced by `PPC'.
\end{fact}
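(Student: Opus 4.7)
The three weakenings are essentially independent and can be handled one at a time; I plan to reduce to the nontrivial step, which is decreasing $\eta$.

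\textbf{The trivial steps.} Restricting the distribution class from $\cP$ to any subset $\cP' \subseteq \cP$ is immediate: the universal quantifiers ``for all $P \in \cP$'' in \eqref{eqn:comparator} and \eqref{eqn:strong-convexface} become the weaker quantifiers ``for all $P \in \cP'$,'' and the same comparator selection function $\comp$ restricted to $\cP'$ continues to witness the condition. Similarly, enlarging the slack from $\epsilon$ to any $\epsilon' \geq \epsilon$ only relaxes the right-hand side ($e^{\eta\epsilon} \leq e^{\eta\epsilon'}$ for the central condition, and $\epsilon \leq \epsilon'$ for PPC), so the same $\comp$ still works. It therefore suffices to prove that for fixed $\cP$ and $\epsilon$, the $\eta$-version of each condition implies the $\eta'$-version for any $0 < \eta' \leq \eta$.

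\textbf{Decreasing $\eta$ for the central condition.} Since $\eta'/\eta \in (0,1]$, the map $x \mapsto x^{\eta'/\eta}$ is concave on $[0,\infty)$. Applying Jensen's inequality to the nonnegative random variable $e^{\eta(\ell_{\comp(P)}(Z) - \ell_f(Z))}$, with $(Z,f)$ jointly drawn from $P \otimes \Pi$, yields
\begin{align*}
  \E_{Z \sim P}\E_{f \sim \Pi}\!\left[e^{\eta'(\ell_{\comp(P)}(Z) - \ell_f(Z))}\right]
  &= \E\!\left[\bigl(e^{\eta(\ell_{\comp(P)}(Z) - \ell_f(Z))}\bigr)^{\eta'/\eta}\right] \\
  &\leq \left(\E\!\left[e^{\eta(\ell_{\comp(P)}(Z) - \ell_f(Z))}\right]\right)^{\eta'/\eta}
   \leq \bigl(e^{\eta \epsilon}\bigr)^{\eta'/\eta}
   = e^{\eta' \epsilon},
\end{align*}
where the second inequality uses the assumed $\eta$-central condition up to $\epsilon$. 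This gives the $\eta'$-central condition up to $\epsilon$, and combined with the trivial slack step, up to any $\epsilon' \geq \epsilon$.

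\textbf{Decreasing $\eta$ for the PPC condition.} The key observation is that the mix loss $m^\eta_\Pi(z)$ defined in \eqref{eqn:mixloss} is nonincreasing in $\eta$. Applying the same Jensen inequality with concave $x \mapsto x^{\eta'/\eta}$, now to $e^{-\eta \ell_f(z)}$ under $f \sim \Pi$, gives
\[
  \E_{f \sim \Pi}\!\left[e^{-\eta' \ell_f(z)}\right]
  \leq \left(\E_{f \sim \Pi}\!\left[e^{-\eta \ell_f(z)}\right]\right)^{\eta'/\eta}.
\]
Taking logarithms and dividing by the negative quantity $-\eta'$ flips the inequality, yielding $m^{\eta'}_\Pi(z) \geq m^\eta_\Pi(z)$ pointwise in $z$. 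Hence
\[
  \E_{Z \sim P}\bigl[\ell_{\comp(P)}(Z)\bigr]
  \leq \E_{Z \sim P}\bigl[m^\eta_\Pi(Z)\bigr] + \epsilon
  \leq \E_{Z \sim P}\bigl[m^{\eta'}_\Pi(Z)\bigr] + \epsilon',
\]
which is exactly the $\eta'$-PPC condition up to $\epsilon'$. There is no real obstacle here; the only care needed is the sign check when dividing by $-\eta'$ and noting that both applications of Jensen use the same concavity of $x \mapsto x^{\eta'/\eta}$.
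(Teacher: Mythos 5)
Your proof is correct and complete. The paper labels this a ``trivial but important fact'' and supplies no proof of its own, so there is nothing to compare against; your argument is the natural one. The reduction to the single nontrivial step (decreasing $\eta$) is clean, the application of Jensen's inequality with the concave map $x \mapsto x^{\eta'/\eta}$ is exactly the right tool in both cases, and the sign check when dividing by $-\eta'$ to obtain monotonicity of the mix loss in $\eta$ is handled carefully. In particular, the chain $e^{\eta'\epsilon} \leq e^{\eta'\epsilon'}$ at the end correctly combines the $\eta$-step with the slack step, so the final bound $e^{\eta'\epsilon'}$ required by the $\eta'$-central condition up to $\epsilon'$ is indeed obtained.
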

We proceed to give some examples.
\begin{myexample}[Squared Loss, Unrestricted Domain]\label{ex:unbounded-sq-cc}
  Consider  squared loss $\sqloss_f(z) = \half(z-f)^2$ with $\cZ = \cF =
  \reals$, and let $\cP = \{\normaldist(\mu,1) : \mu
  \in \reals\}$ be the set of normal distributions with unit
  variance and arbitrary means $\mu$. Estimating
  the mean of a normal model is a standard inference problem for which a
  squared error risk of order $O(1/n)$ is obtained by the sample mean.
  We would therefore expect the central condition to be satisfied
  and, indeed, this is the case for $\eta \leq 1$ via a reduction to
  Example~\ref{ex:log-loss}. To see this, consider the well-specified
  setting for the log loss $\logloss_{f'}$ with densities $f' \in \cF' =
  \cP$, and note that the squared loss for $f$ equals the log loss for
  $f'$ up to a constant when $f$ is the mean of $f'$:
  \begin{equation*}
    \sqloss_f(z) = -\log e^{-(z-f)^2/2} = \logloss_{f'}(z) -
    \log\sqrt{2\pi}.
  \end{equation*}
  Since the log loss satisfies the $1$-central condition in the
  well-specified case (see Example~\ref{ex:log-loss}), the squared loss
  must also satisfy the  $1$-central condition.
\end{myexample}
Not surprisingly, the central condition still  holds if we replace the
Gaussian assumption by a subgaussian assumption. 
\begin{myexample}\label{ex:dunno}
For $\sigma^2 > 0$ let  $\cP_{\sigma^2}$ be an arbitrary  subgaussian
collection of distributions over $\reals$. That is, for all $t \in \reals$
and $P \in \cP_{\sigma^2}$
\begin{equation}\label{eq:moment-bound} 
	\E_{Z\sim P}\left[e^{t(Z-\mu_P)}\right]
	\le e^{\sigma^2 t^2/2} ,
\end{equation}
where $\mu_{P} = \E_{Z \sim P} [Z]$ is the mean of $Z$. 
Now consider the squared loss $\sqloss_f(z) = \half(z-f)^2$ again, with
$\model = \cZ = \reals$. Then
\begin{equation}\label{eq:hurry}
  \sqloss_f(z) - \sqloss_{f'}(z) = \frac{1}{2}\delta(2(z-f) - \delta),
  \qquad \text{where $\delta = f' - f$.}
\end{equation}
Taking $f = \mu_P$ gives
\begin{equation}\label{eq:deadline}
  \E_{Z\sim P}\left[e^{\eta\left(\sqloss_f(Z)-\sqloss_{f'}(Z)\right)}\right]
    = e^{-\eta \delta^2/2} \E_{Z \sim P}\left[e^{\eta \delta (Z-\mu_P)}\right]
    \leq e^{-\eta \delta^2/2} e^{\sigma^2 \eta^2 \delta^2/2}.
\end{equation}
The right-hand side is at most $1$ if $\eta \leq 1/\sigma^2$, and
hence to satisfy the strong $\eta$-central condition with
substitution function $\comp(P) = \mu_P$, it suffices to take $\eta
\leq 1/\sigma^2$. Note that $\comp$ maps $P$ to the $\cF$-optimal
predictor for $\cP$ --- a fact which holds generally, as shown in
Proposition~\ref{prop:unique} above. Note also that, just like
Example~\ref{ex:unbounded-sq-cc}, the example can be reduced to the
log-loss setting in which the densities are all normal densities with
means in $\reals$ and variance equal to $1$. In
Example~\ref{ex:lastminute} we shall see that if $\cP$ contains $P$
with polynomially large tails, then the $\eta$-central condition may fail.
\end{myexample}

\begin{myexample}[Subgaussian Regression] \label{ex:regression}
  Examples~\ref{ex:log-loss}, \ref{ex:unbounded-sq-cc} and
  \ref{ex:dunno} all deal with the unconditional setting
  (cf. page~\pageref{page:unconditional}) of estimating a mean without
  covariate information. The corresponding conditional setting is
  regression, in which $\cF$ is a set of functions $f: \cX \rightarrow
  \cY$, $\cZ = \cX \times \cY$, $\cY = \reals$ and $\regloss_f( (x,y))
  := \sqloss_{f(x)}(y)$. Analogously to Example~\ref{ex:dunno}, fix
  $\sigma^2 > 0$ and let $\cP$ be a set of distributions on $\cX
  \times \cY$ such that for each $P \in \cP$ and $x \in \cX$, $P(Y
  \mid X=x)$ is subgaussian in the sense of (\ref{eq:moment-bound}).
  Now consider a decision problem $(\regloss,\cP,\cF)$.
  Example~\ref{ex:dunno} applies to this regression setting, provided that, 
  for each $P \in \cP$, the model $\cF$ contains the true
  regression function $f^*_P(x) := \E_{(X,Y) \sim P}[Y \mid X=x]$. To
  see this, note that then for all $P \in \cP$, all $f' \in \cF$,
\begin{align*}
 \E_{(X,Y) \sim P} \left[e^{\eta\left(\regloss_{f^*_P}(X,Y)-\regloss_{f'}(X,Y)\right)}\right]
& = \E_{P(X)} \E_{P(Y\mid X)} \left[e^{\eta\left(\sqloss_{f^*_P(X)}(Y)-\sqloss_{f'(X)}(Y)\right)}\right] \\
&    \leq \E_{P(X)} \left[e^{-\eta \delta^2/2} e^{\sigma^2 \eta^2 \delta^2/2} \right] \leq 1,  
\end{align*}
where the final inequality holds as long as $\eta \leq
1/\sigma^2$. Thus the $1/\sigma^2$-central condition holds. Although
it is often made, the assumption that $\cF$ contains the Bayes decision
rule (\ie, the true regression function) is quite strong. In
Section~\ref{sec:xyz} we will encounter Example~\ref{ex:misspecified} where,
under a compactness restriction on $\cP$, the central condition still
holds even though $\cF$ may be misspecified.
\end{myexample}
\begin{myexample}[Bernoulli, $0/1$-loss and the margin condition] \label{ex:prototsybakov} 
  Let $\cZ = \cF = \{0,1\}$, for any $0 \leq \delta
  \leq 1/2$ let $\cP_{\delta}$ be the set of distributions $P$ on $\cZ$ with $|P(Z= 1) - 1/2| \geq  \delta$,
  and let $\zoloss$ be the $0/1$-loss with  $\zoloss(y,f) = |y - f|$.
  For every $\delta > 0$, there is an $\eta >  0$ such that
  the $\eta$-central condition holds for
  $(\zoloss,\cP_{\delta},\cF)$. To see this, let $f^*$ be the Bayes act
  for $P$, \ie,  $f^* =1$ if and only if $P(Z=1) > 1/2$, and, for $f \neq f^*$,
  define $A(\eta)  = \E_{Z \sim P} \left[e^{\eta (\zoloss_{f^*}(Z)-
      \zoloss_{f}(Z))}\right]$. Then $A(0) = 1$  and the derivative
  $A'(0)$ is easily seen to be negative, which  implies the
  result. However, as $\delta \downarrow 0$, so does  the largest
  $\eta$ for which the central condition holds. For $\delta = 0$, the
  central condition does not hold any  more. Since the central
  condition and the PPC condition are  equivalent, this also follows
  from Proposition~\ref{prop:unique}: if  $\delta = 0$, then there
  exist $P \in \cP$ with $P(Z=1 ) = 1/2$, and  for this $P$ both $f
  \in \cF = \{0,1\}$ have equal risk so there is no unique  minimum.  For each
  $\delta > 0$, the restriction to $\cP_{\delta}$ may also be
  understood as saying that a {\em Tsybakov margin condition\/}
  \citep{tsybakov2004optimal} holds with noise exponent $\infty$, the
  most stringent case of this condition that has long been known to
  ensure fast rates. As will be seen in Example~\ref{ex:tsybakov}
  the Tsybakov margin condition
  can also be thought of as a Bernstein condition with $\beta = 0$
  and $B \uparrow \infty$ as $\delta \downarrow 0$ (in practice, however, this
  condition is usually applied in the conditional setting with
  covariates $X$).  Finally, just like the squared loss
  examples, this example can be recast in terms of log-loss as well.
  Fix $\beta > 0$ and let $\cF_{\beta} $ be the subset of the
  Bernoulli model containing two symmetric probability mass functions,
  $p_1$ and $p_0$, where $p_1(1) = p_0(0) = e^{\beta}/(1+ e^{\beta}) >
  1/2$.  Then the log loss Bayes act for $P$ is $p_1$ if and only if $P(Z=1) >
  1/2$. For $P \in \cP_{\delta}$ and $f' \neq f^*$, $\E_{Z \sim P}
  \left[ e^{\eta (\logloss_{f^*}(Z)- \logloss_f(Z))}\right] = A (\beta
  \eta)$, which by the same argument as above can be made $< 1$ if
  $\eta> 0$ is chosen small enough (provided $\delta > 0$).
\end{myexample}

\subsection{Equivalence of Central and Pseudoprobability Convexity Conditions}
\label{sec:equivalenceCCandPPC}

The following result shows that no additional assumptions are required for
the central condition to imply the pseudoprobability convexity
condition.
\begin{proposition}\label{prop:comparatortoconvexface}
Fix an arbitrary decision problem $\dpthree$ and
$\epsilon \geq 0$. If the
$\eta$-central condition holds up to $\epsilon$ then the  $\eta$-pseudoprobability convexity
condition holds up to $\epsilon$.  In particular the (strong)
$\eta$-central condition implies the (strong) $\eta$-pseudoprobability convexity
condition.
\end{proposition}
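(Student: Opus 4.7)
The plan is to use the same comparator function $\comp$ from the central condition and derive the pseudoprobability inequality by a direct manipulation followed by a single application of Jensen's inequality.

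First, I would rewrite the left-hand side of the $\eta$-central condition by pulling $e^{\eta \ell_{\comp(P)}(Z)}$ out of the inner expectation over $f \sim \Pi$, and then recognizing the remaining inner expectation as $e^{-\eta m^\eta_\Pi(Z)}$ via the definition (\ref{eqn:mixloss}). This yields
\begin{equation*}
\E_{Z\sim P}\left[e^{\eta\left(\ell_{\comp(P)}(Z) - m^\eta_\Pi(Z)\right)}\right]
  \;=\; \E_{Z \sim P} \E_{f \sim \Pi}\left[e^{\eta\left(\ell_{\comp(P)}(Z) - \ell_f(Z)\right)}\right]
  \;\leq\; e^{\eta \epsilon},
\end{equation*}
where the inequality is precisely the $\eta$-central condition up to $\epsilon$.

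Next, since $x \mapsto e^{\eta x}$ is convex (as $\eta > 0$), Jensen's inequality gives
\begin{equation*}
e^{\eta \, \E_{Z \sim P}\left[\ell_{\comp(P)}(Z) - m^\eta_\Pi(Z)\right]}
  \;\leq\; \E_{Z\sim P}\left[e^{\eta\left(\ell_{\comp(P)}(Z) - m^\eta_\Pi(Z)\right)}\right]
  \;\leq\; e^{\eta \epsilon}.
\end{equation*}
Taking logarithms and dividing by $\eta > 0$ yields exactly (\ref{eqn:strong-convexface}) with the same $\comp$, which is the desired $\eta$-PPC condition up to $\epsilon$.

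There is essentially no obstacle here: the only subtlety is to check that the manipulations are valid when the loss can take the value $+\infty$, but the standing convention in Section~\ref{sec:dp} ensures $P(\ell_{\comp(P)}(Z) = \infty) = 0$, so $e^{-\eta \ell_f(Z)}$ and $e^{\eta \ell_{\comp(P)}(Z)}$ are well-defined as nonnegative (possibly zero) random variables and all the expectations are well-defined in $[0,\infty]$. The special case $\epsilon = 0$ gives the strong-to-strong implication.
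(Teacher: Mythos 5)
Your proof is correct and is essentially the same as the paper's: you both identify $\E_{f\sim\Pi}\bigl[e^{\eta(\ell_{\comp(P)}(Z)-\ell_f(Z))}\bigr]=e^{\eta(\ell_{\comp(P)}(Z)-m^\eta_\Pi(Z))}$ and then apply Jensen once to move the $Z$-expectation through. The only (cosmetic) difference is that you phrase Jensen as convexity of $x\mapsto e^{\eta x}$ whereas the paper phrases the same step as concavity of $\log$; taking logarithms of your inequality recovers the paper's display verbatim.
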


\begin{proof} Let $P \in \cP$ and $\Pi \in \Delta(\cF)$ be
  arbitrary. Assume the $\eta$-central condition holds up to $\epsilon$. Then
\begin{align*}
\E_{Z \sim P} \left[ \ell_{\comp(P)}(Z) - m^\eta_\Pi(Z) \right]
& = 
\frac{1}{\eta} \E_{Z \sim P} \log  \E_{f \sim \Pi}
\left[e^{\eta \left( \ell_{\comp(P)}(Z)- \ell_{f}(Z) \right) } \right] \\ & \leq  
\frac{1}{\eta} \log \E_{Z \sim P} \E_{f \sim \Pi}
\left[e^{\eta \left( \ell_{\comp(P)}(Z)- \ell_{f}(Z) \right) } \right] \leq 
 {\epsilon}.
\end{align*}
where the first inequality is Jensen's and the second inequality
follows from the central condition \eqref{eqn:comparator}.
\end{proof}
To obtain the reverse implication we require either Assumption~\ref{ass:simple}
(\ie, that minimum risk within $\cF$ is achieved) or, if
Assumption~\ref{ass:simple} does not hold, the boundedness of the 
loss\footnote{%
  We suspect this latter requirement can be weakened, at the cost of
  considerably complicating the proof.}. Below we use the term
`essentially unique' in the sense of Proposition~\ref{prop:unique}
and call any $g^*$ such that $\ell_{g^*}(Z)
= \ell_{f^*}(Z)$ occurs $P$-almost-surely a {\em version\/} of $f^*$.

\begin{theorem}\label{thm:convexfacetocomparator}
Let $\dpthree$ be a decision problem. Then the following statements both hold:
	\begin{enumerate}
\item  If $\ell$ is bounded,
then the weak $\eta$-pseudoprobability convexity
  condition implies the weak $\eta$-central condition. 
\item Moreover, if Assumption~\ref{ass:simple} holds, then (irrespective of
  whether the loss is bounded) the weak $\eta$-pseudoprobability
  convexity condition implies the strong $\eta$-central condition
  with comparator function $\comp(P) := f^*$ for $\cF$-optimal $f^*$. 
  That is, $f^*$ can be any version of the essentially
  unique element of $\cF$ that satisfies (\ref{eq:bayesact}).
\end{enumerate}
\end{theorem}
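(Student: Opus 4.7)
The plan is to apply the (weak or strong) PPC inequality to a two-point mixture distribution and then extract the central condition by taking a limit. For any $f \in \cF$ and $\alpha \in (0,1)$, consider the mixture $\Pi_\alpha \in \Delta(\cF)$ that places mass $1-\alpha$ on $f^*$ and mass $\alpha$ on $f$. A direct computation gives
\[
  \ell_{f^*}(z) - m^\eta_{\Pi_\alpha}(z) \;=\; \frac{1}{\eta}\log\!\bigl[1 + \alpha\bigl(e^{\eta D(z)} - 1\bigr)\bigr], \qquad D(z) := \ell_{f^*}(z) - \ell_f(z),
\]
so a PPC-style bound $\E_{Z \sim P}[\ell_{f^*} - m^\eta_{\Pi_\alpha}] \leq \epsilon'$ translates to $\E\log[1 + \alpha(e^{\eta D(Z)} - 1)] \leq \eta\epsilon'$. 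Because point masses exhaust the supremum over $\Pi \in \Delta(\cF)$ in the central condition, it suffices to recover $\E[e^{\eta D(Z)}] \leq e^{\eta\epsilon}$ for each such $f$.

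For Part 2, Assumption~\ref{ass:simple} and weak $\eta$-PPC allow me to invoke Proposition~\ref{prop:unique}, which upgrades weak PPC to strong PPC with any version of the essentially unique risk minimizer $f^*$. Specializing to $\Pi_\alpha$ with $\epsilon' = 0$ and dividing by $\alpha > 0$ yields
\[
  \E\!\left[\tfrac{1}{\alpha}\log\bigl(1 + \alpha(e^{\eta D(Z)} - 1)\bigr)\right] \leq 0 \quad \text{for every } \alpha \in (0,1).
\]
A routine calculus check shows that $\tfrac{1}{\alpha}\log(1 + \alpha x)$ is monotonically increasing as $\alpha \downarrow 0$ with pointwise limit $x$. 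Since $e^{\eta D} - 1 \geq -1$, the integrand is bounded below by the constant $\tfrac{1}{\alpha_0}\log(1 - \alpha_0)$ for any fixed $\alpha_0 < 1$, so the monotone convergence theorem produces $\E[e^{\eta D(Z)} - 1] \leq 0$. This is the strong $\eta$-central condition with $\comp(P) := f^*$.

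For Part 1, Assumption~\ref{ass:simple} is absent, so fix $\epsilon > 0$ and a tolerance $\tau > 0$ to be chosen, and take $f^* = f^*_\tau$ to be a $\tau$-approximate risk minimizer in $\cF$, which exists since the bounded loss makes $\inf_g R(P,g)$ finite. The weak PPC in its $\sup\sup\inf$ form gives $\inf_{g \in \cF} R(P,g) \leq \E[m^\eta_{\Pi_\alpha}]$ for every $\Pi_\alpha$, which combined with $R(P, f^*_\tau) \leq \inf_g R(P,g) + \tau$ yields $\E\log[1 + \alpha(e^{\eta D} - 1)] \leq \eta\tau$. Boundedness of $\ell$ by $B$ gives $|e^{\eta D} - 1| \leq M := e^{2\eta B} - 1$; for $\alpha \leq 1/(2M)$ the Taylor bound $\log(1+y) \geq y - \tfrac{3}{2}y^2$ on $|y| \leq \tfrac{1}{2}$ gives
\[
  \alpha\,\E[e^{\eta D(Z)} - 1] - \tfrac{3}{2}\alpha^2 M^2 \leq \eta\tau.
\]
Choosing $\alpha \asymp \sqrt{\tau}$ yields $\E[e^{\eta D}] \leq 1 + M\sqrt{6\eta\tau}$, and picking $\tau$ small enough (in terms of $\epsilon$, $\eta$, $B$) makes this at most $e^{\eta\epsilon}$. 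Crucially, the bound does not depend on $f$, so the same $f^*_\tau$ works uniformly, establishing the $\eta$-central condition up to $\epsilon$ for every $\epsilon > 0$, and hence the weak $\eta$-central condition.

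The subtle part of the plan is the simultaneous choice of $\alpha$ and $\tau$ in Part 1: $\alpha$ must tend to zero to access the first-order limit $\log(1+\alpha x)/\alpha \to x$, but not so fast that the slack $\eta\tau/\alpha$ dominates. Boundedness of the loss is essential there because it controls the quadratic-remainder term uniformly in $f$; without it, one would need higher-moment control on $e^{\eta D}$, which is precisely the phenomenon that separates the central and Bernstein-type conditions in the unbounded regime (cf.\ Example~\ref{ex:comparator-vs-bernstein}). Only the weak — not the strong — central condition is obtained in Part 1, since each $\epsilon$ requires its own comparator $f^*_\tau$; collapsing these into one would require an exact $\cF$-minimizer, i.e.\ Assumption~\ref{ass:simple}.
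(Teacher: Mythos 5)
Your proof is correct and follows essentially the same route as the paper's: both use the two-point mixture $\Pi_\alpha = (1-\alpha)\delta_{f^*} + \alpha\delta_f$, a monotone-convergence passage to the directional derivative at $\alpha = 0$ for Part~2, and a quadratic Taylor remainder bound with the $\alpha \asymp \sqrt{\tau}$ balancing for Part~1. The paper phrases Part~2 in terms of the scaled mix loss $\chi(\lambda) = \eta\,\E[m^\eta_{\Pi_\lambda}]$ and its non-decreasing secant slopes, but that is exactly your difference quotient $\tfrac{1}{\alpha}\log\bigl(1+\alpha(e^{\eta D}-1)\bigr)$ up to sign, with the same one-sided uniform bound justifying the interchange of limit and expectation.
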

The proof of Theorem~\ref{thm:convexfacetocomparator} is deferred to
Appendix~\ref{app:convex-faceproofs}. It generalizes a result for log
loss from the PhD thesis of \citet[Theorem~4.3]{li1999estimation} and
\cite{Barronpersonal}.\footnote{Under Assumption~\ref{ass:simple}, the
proof of Theorem~\ref{thm:convexfacetocomparator}
shows that it is actually sufficient if the weak pseudoprobability
convexity condition only holds for distributions $\Pi$ on $f^*$ and single
 $f \in \model$. Via
Proposition~\ref{prop:comparatortoconvexface} we then see that this
actually implies weak pseudoprobability convexity for all distributions
$\Pi$.}
Theorem~\ref{thm:convexfacetocomparator} leads to the following useful
consequence.
\begin{corollary}\label{cor:teeth}
  Consider a decision problem $\dpthree$ and suppose that
  Assumption~\ref{ass:simple} holds. Then the following are
  equivalent:
  \begin{enumerate}
    \item The weak $\eta$-central condition is satisfied. \item The
    strong $\eta$-central condition is satisfied with comparator function
    $\comp$ as given by Theorem~\ref{thm:convexfacetocomparator}.
    \item The weak $\eta$-pseudoprobability convexity condition is satisfied.
    \item The strong $\eta$-pseudoprobability convexity condition is satisfied.
  \end{enumerate}
If any of these statements hold, then for all $P \in \cP$,
the corresponding optimal $f^*$ is essentially unique in the
sense of Proposition~\ref{prop:unique}. 
\end{corollary}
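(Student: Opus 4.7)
The plan is to chain together the already-established results --- Proposition~\ref{prop:comparatortoconvexface}, Theorem~\ref{thm:convexfacetocomparator}, and Proposition~\ref{prop:unique} --- in a short cycle rather than proving any new estimate. The trivial implications are ``strong $\Rightarrow$ weak'' for both conditions, \textit{i.e.}\ (2) $\Rightarrow$ (1) and (4) $\Rightarrow$ (3): the strong version holds up to $\epsilon=0$, which by Fact~\ref{fact:ccppcc} implies the version up to every $\epsilon > 0$, and that is exactly the weak version.

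Next I would handle (1) $\Rightarrow$ (3). By definition, the weak $\eta$-central condition says that for every $\epsilon > 0$ the $\eta$-central condition holds up to $\epsilon$. Proposition~\ref{prop:comparatortoconvexface} then gives the $\eta$-PPC condition up to $\epsilon$ for the same $\epsilon$, so taking the conjunction over all $\epsilon > 0$ yields the weak $\eta$-PPC condition. For (3) $\Rightarrow$ (4) I would just invoke Proposition~\ref{prop:unique}: under Assumption~\ref{ass:simple}, the weak $\eta$-PPC condition already implies the strong $\eta$-PPC condition, where one may take $\comp(P)$ to be any $\cF$-risk minimizer $f^*$.

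To close the cycle I would use (3) $\Rightarrow$ (2), which is exactly part~2 of Theorem~\ref{thm:convexfacetocomparator}: under Assumption~\ref{ass:simple}, the weak $\eta$-pseudoprobability convexity condition implies the strong $\eta$-central condition with comparator function $\comp(P) := f^*$ for any $\cF$-optimal $f^*$, and this $f^*$ is essentially unique, hence $\comp$ is well-defined up to the version chosen. Together with the trivial (2) $\Rightarrow$ (1), this closes the loop (1) $\Rightarrow$ (3) $\Rightarrow$ (2) $\Rightarrow$ (1), and separately (3) $\Leftrightarrow$ (4), which gives the full equivalence of the four statements.

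The final essential-uniqueness claim needs no extra work: each of (1)--(4) implies (3), the weak $\eta$-PPC condition, and Proposition~\ref{prop:unique} already asserts the essential uniqueness of the $\cF$-optimal $f^*$ under that hypothesis. There is no real obstacle to carrying this out --- the only care needed is to check that ``up to $\epsilon$'' propagates through Proposition~\ref{prop:comparatortoconvexface} with the same $\epsilon$ (which it does, inspecting its proof), so that the weak-to-weak transfer in step~(1) $\Rightarrow$ (3) is correctly quantified.
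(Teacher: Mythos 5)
Your proof is correct and follows essentially the same route as the paper's: you close the cycle (1)~$\Rightarrow$~(3)~$\Rightarrow$~(2)~$\Rightarrow$~(1) using Proposition~\ref{prop:comparatortoconvexface} and Theorem~\ref{thm:convexfacetocomparator}, and handle (3)~$\Leftrightarrow$~(4) and the uniqueness claim via Proposition~\ref{prop:unique}. The only cosmetic difference is that the paper re-derives (3)~$\Rightarrow$~(4) directly by noting the infimum in \eqref{eqn:supsupinf-convexface} is achieved, rather than citing Proposition~\ref{prop:unique} for that step as you do; both are valid.
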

\begin{proof} Suppose that the $\eta$-(weak) pseudoprobability
  convexity condition holds and that
  Assumption~\ref{ass:simple} holds. This implies that the
  infimum in \eqref{eqn:supsupinf-convexface} is always achieved, from
  which it follows that the strong $\eta$-pseudoprobability convexity
  condition holds. The assumption also lets us apply
  Theorem~\ref{thm:convexfacetocomparator} which implies that the
  strong $\eta$-central condition holds with $\comp$ as
  described. This immediately implies the weak $\eta$-central
  condition which, via Proposition~\ref{prop:comparatortoconvexface},
  implies the weak $\eta$-pseudoprobability convexity condition.
\end{proof}
The corollary establishes the equivalence of the weak and
strong central and pseudoprobability convexity conditions which we
assumed in Section~\ref{sec:overview}. The result prompts the question
whether {\em non-\/}uniqueness of the optimal $f^*$ might imply
that the four conditions do {\em not\/} hold. While this is not true
in general, at least for bounded losses it is `almost' true if we
replace the $\eta$-fast rate conditions by the weaker notion of
$\gComparator$--fast rate conditions of
Section~\ref{sec:comp-mix-margin} (see
Proposition~\ref{prop:nonunique}).

\subsection{Interpretation as Convexity of the Set of
Pseudoprobabilities and a Bayesian Interpretation}
\label{sec:convexityinterpretation}
As we will now explain both the pseudoprobability convexity condition and, by the
equivalence from the previous section, the central condition
may be interpreted as a partial convexity requirement. For simplicity, we 
restrict ourselves to the setting of Assumption~\ref{ass:simple} from
Section~\ref{sec:overview}.
We first present this interpretation for the logarithmic loss from
Example~\ref{ex:log-loss} on page~\pageref{ex:log-loss}, for which it is
most natural and can also be given a Bayesian interpretation.
\begin{myexample}[Example~\ref{ex:log-loss} continued: convexity
interpretation for log loss]\label{ex:log-loss-continued}
  Let $P \in \cP$ be arbitrary. Under Assumption~\ref{ass:simple} the
  strong $1$-pseudoprobability convexity condition for log loss says that
  \begin{align}
    \E_{Z \sim
    P}\left[-\log f^*(Z)\right]
      &\leq \min_{\Pi \in \Delta(\model)} \E_{Z \sim P}\left[-\log \E_{f \sim
    \Pi}[f(Z)]\right], \ \ \text{\ie,}\ \ \ \notag\\
    \min_{f \in \model} \E_{Z \sim P}\left[-\log f(Z)\right]
      &= \min_{f \in \convhull(\model)} \E_{Z \sim P}\left[-\log f(Z)\right],
	  \label{eqn:loglossconvexity}
  \end{align}
  where $f^* = \phi(P)$ and $\convhull(\model)$ denotes the convex
  hull of $\model$ (\ie, the set of all mixtures of densities
  in $\cF$). This may be interpreted as the requirement that a convex
  combination of elements of the model $\model$ is never better than
  the best element in the model. This means that the model is
  essentially convex with respect to $P$ (\ie,
  `in the direction facing' $P$ --- see Figure~\ref{fig:convexitycartoon}).
  
  In particular, in the context of Bayesian inference, the
  \emph{Bayesian predictive distribution\/} after observing data $Z_1,
  \ldots, Z_n$ is a mixture of elements of the model according to the
  posterior distribution, and therefore must be an element of
  $\convhull(\model)$. The pseudoprobability convexity condition thus
  rules out the possibility that the predictive distribution is
  strictly better (in terms of expected log loss or, equivalently,
  KL-divergence) than the best single element in the model. This
  might otherwise be possible if the posterior was spread out
  over different parts of the model. This interpretation is explained
  at length by \cite{GrunwaldVanOmmen14} who provide a simple
  regression example in which (\ref{eqn:loglossconvexity}) does not hold and
  the Bayes predictive distribution is, with substantial
  probability, better than the best single element $f^*$ in the model,
  and the Bayesian posterior does not concentrate around this optimal
  $f^*$ at all.
\end{myexample}
For log loss,  the
convexity requirement \eqref{eqn:loglossconvexity} is, by
Corollary~\ref{cor:teeth}, equivalent to the strong $1$-central condition  and can thus be written as
  \begin{equation}\label{eq:bayesmdl}
    \E_{Z \sim P} \left[\frac{f(Z)}{f^*(Z)}\right] \leq
    1 
  \end{equation}
for all $f \in \cF$. Recognizing
  \eqref{eqn:affinity} we therefore also recover the result by
  \citet{li1999estimation} mentioned in Example~\ref{ex:log-loss}.
\begin{myexample}[Bayes-MDL Condition]
  The $1$-central condition (\ref{eq:bayesmdl}) for log loss plays a
  fundamental role in establishing consistency and fast rates for
  Bayesian and related methods. Due to its use in a large number of
  papers on convergence of MDL-based methods
  \citep{grunwald2007minimum} and Bayesian methods and lack of a standard
  name, we will henceforth call it the {\em Bayes-MDL condition}. 
  Most of the papers using this condition make the traditional assumption 
  that the model is well-specified, \ie, for every $P \in \cP$, $\cF$ contains
  the density of $P$. As already mentioned in
  Example~\ref{ex:log-loss}, the condition then holds automatically,
  so one does not see (\ref{eq:bayesmdl}) stated in those papers as an explicit
  condition. Yet, if one tries to generalize the
  results of such papers to the misspecified case, one invariably sees
  that the only step in the proofs needing adjustment is the step
  where (\ref{eq:bayesmdl}) is implicitly employed. If the model is
  incorrect yet (\ref{eq:bayesmdl}) holds, then the proofs invariably still
  go through, establishing convergence towards the $f^*$ that
  minimizes KL divergence to the true $P$.  This happens, for
  example, in the MDL convergence proofs of
  \cite{barron1991minimum,zhang2006,grunwald2007minimum} as well as in
  the pioneering paper by \cite{doob1949application} on Bayesian
  consistency.  The dependence on (\ref{eq:bayesmdl}) becomes more
  explicit in works explicitly dealing with misspecification such as
  those by \cite{li1999estimation,kleijn2006misspecification,grunwald2011safe}. 
  For example, in order to guarantee convergence of the posterior around
  the best element $f^*$ of misspecified models,
  \citet{kleijn2006misspecification} impose a highly technical
  condition on $\dpthree$. If, however, (\ref{eq:bayesmdl}) holds then
  this complicated condition simplifies to the standard, much simpler
  condition from \citep{GhosalGV00} which is sufficient for
  convergence in the well-specified case. The same phenomenon is seen
  in results by \cite{ramamoorthi2013posterior,de2013bayesian}.
  \cite{GrunwaldL04}
  and \cite{GrunwaldVanOmmen14} give examples in which the condition
  does not hold, and Bayes and MDL estimators fail to converge.
\end{myexample}

\begin{figure}
  \centering
  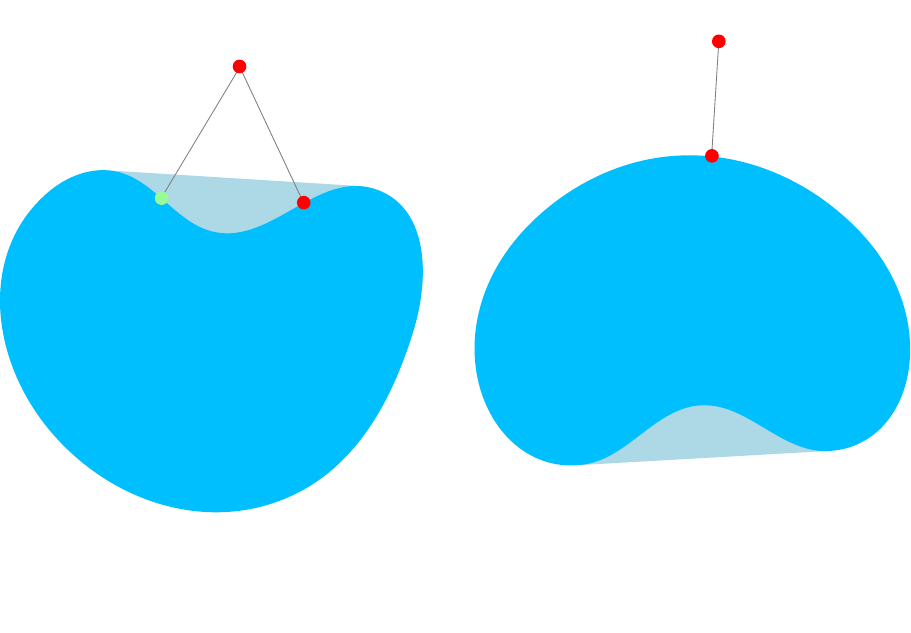   
  \caption{The pseudoprobability convexity condition interpreted as convexity of the set
  of pseudoprobabilities with respect to $P$.}
  \label{fig:convexitycartoon}
\end{figure}

The convexity interpretation for log loss may be generalized to other
loss functions via loss dependent `pseudoprobabilities'. These play a
crucial role both in online learning \citep{vovk2001competitive} and the
PAC-Bayesian analysis of the Bayes posterior and the MDL estimator by
\citet{zhang2006}. For log loss, we may express the ordinary
densities in terms of the loss as $f(z) = e^{-\loss_f(z)}$. This
generalizes to other loss functions by letting $\eta \loss_f(z)$ play
the role of the log loss, where $\eta > 0$ is the scale factor that
appears in all our definitions. We thus obtain the set of
\emph{pseudoprobabilities}
\begin{equation*}
  \pseudoset = \left\{z \mapsto e^{-\eta \loss_f(z)} : f \in \model\right\},
\end{equation*}
which are non-negative, but do not necessarily integrate to $1$. The
only feature we need of these pseudoprobabilities is that their log loss
is equal to $\eta$ times the original loss, because, analogously to
\eqref{eqn:loglossconvexity}, this allows us to write the strong
$\eta$-pseudoprobability convexity condition as
\begin{equation*}
  \min_{f \in \pseudoset} \E_{Z \sim P}\left[-\log f(Z)\right]
    \leq \min_{f \in \convhull(\pseudoset)} \E_{Z \sim P}\left[-\log
    f(Z)\right].
\end{equation*}
Figure~\ref{fig:convexitycartoon} provides a graphical illustration of this 
condition. Thus, for
any loss function we can interpret the pseudoprobability convexity
condition as the requirement that the set of pseudoprobabilities is
essentially convex with respect to $P$. As suggested by
\citet{vovk2001competitive,zhang2006}, one can also run Bayes on such
pseudoprobabilities, and then the pseudoprobability convexity condition
again implies that the resulting pseudo-Bayesian predictive
distribution cannot be strictly better than the single best element of
the model. The log loss achieved with such pseudoprobabilities, 
and hence $\eta$ times the original loss,
can be given a code length interpretation, essentially allowing
arbitrary loss functions to be recast as versions of logarithmic loss \citep{Grunwald08b}.

\section{Online Learning}
\label{sec:four-conditions}

In this section, we discuss conditions for fast rates that are related
to online learning. Our key concept is introduced in
Section~\ref{sec:stochmix}, where we define \emph{stochastic
mixability}, the natural stochastic generalization of Vovk's notion of
mixability, and show (in Section~\ref{sec:relations}) how it unifies
existing conditions in the literature.
Section~\ref{sec:OnlineStatisticalRelations} contains the main results
for this section, which connect stochastic mixability to the central
condition and to pseudoprobability convexity. As an intermediate step,
these results use a fourth condition called the \emph{predictor
condition}, which is related to the central condition via a minimax
identity. We show that, under appropriate assumptions, all four
conditions are equivalent. This equivalence is important because it relates the
generic condition for fast rates in online learning (stochastic
mixability) to the generic condition that enables fast rates for proper
in-model estimators in statistical learning (the central condition).

\subsection{Stochastic Mixability in General}\label{sec:stochmix}

Stochastic mixability generalizes from (\ref{eq:stochmixpre}) similarly to the way we have
generalized the central condition and pseudoprobability convexity. Let
$m^\eta_\Pi(z)$ be the mix loss, as defined in \eqref{eqn:mixloss}.

\begin{definition}[The Stochastic Mixability Condition]
\label{def:stochastic-mixability} 
Let $\eta > 0$ and $\epsilon \geq 0$. We say that
\dpfour\/ is \emph{$\eta$-stochastically mixable}
up to $\epsilon$  if there exists a \emph{substitution function}
$\pred\colon \Delta(\cF) \rightarrow \decisionset$ such that
\begin{align}\label{eqn:mix}
  \E_{Z \sim P}
  \left[\ell_{\pred(\Pi)}(Z) \right] \leq \E_{Z \sim P}
  \left[m^\eta_\Pi(Z)\right] + \epsilon
  \qquad\text{for all $P \in \cP$ and $\Pi \in \Delta(\cF)$.}
\end{align}
If it is $\eta$-stochastically mixable up to $0$, we say that it is {\em
  strongly $\eta$-stochastically mixable\/} or simply {\em $\eta$-stochastically mixable}. If it is
$\eta$-stochastically mixable up to $\epsilon$ for all $\epsilon > 0$, we say
that it is {\em weakly $\eta$-stochastically mixable}; this is equivalent to 
\begin{align}\label{eqn:supinfsup-protomix}
\sup_{\Pi \in \Delta(\cF)}  \lowinf_{f \in \decisionset}  \sup_{P \in \cP} 
\ \E_{Z \sim P} \left[ \ell_{f}(Z)- m^{\eta}_{\Pi}(Z) \right] \leq 0.
\end{align}
\end{definition}
Unlike for the central and pseudoprobability convexity conditions (see
Corollary~\ref{cor:teeth}), for stochastic mixability it is not clear
whether the weak and strong versions become equivalent under the
simplifying Assumption~\ref{ass:simple}. We do have a trivial yet important extension of Fact~\ref{fact:ccppcc}:
\begin{fact}\label{fact:smpc} Fix $\eta > 0, \epsilon \geq 0$ and let $\dpfour$ be an arbitrary decision problem that
  is $\eta$-stochastically mixable up to $\epsilon$. Then for
  any $0 < \eta' \leq \eta$, any $\epsilon' \geq \epsilon$  and for any $\cP' \subseteq \cP$, $\cF'
  \subseteq \cF$ and $\decisionset' \supseteq \decisionset$,
  $(\ell,\cP', \cF',\decisionset')$ is $\eta'$-stochastically mixable up to $\epsilon'$. 
\end{fact}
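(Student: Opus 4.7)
The plan is to construct a substitution function for the weakened tuple $(\ell,\cP',\cF',\decisionset')$ by essentially re-using the one we are given and then verifying the defining inequality by a short chain of monotone weakenings --- one for each of the four relaxations (in $\eta$, in $\epsilon$, in $\cP$, and in $\cF$ / $\decisionset$). Let $\pred : \Delta(\cF) \to \decisionset$ be a substitution function witnessing $\eta$-stochastic mixability up to $\epsilon$, so that (\ref{eqn:mix}) holds. Since $\cF' \subseteq \cF$, every $\Pi \in \Delta(\cF')$ can be canonically viewed as an element of $\Delta(\cF)$ (formally by push-forward under the inclusion), so we may define $\pred'(\Pi) := \pred(\Pi)$, and since $\pred(\Pi) \in \decisionset \subseteq \decisionset'$, this gives a valid map $\pred' : \Delta(\cF') \to \decisionset'$.

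The only non-routine ingredient is monotonicity of the mix loss in the learning rate: for each fixed $z$ and $\Pi$, the map $\eta \mapsto m^\eta_\Pi(z) = -\frac{1}{\eta}\log \E_{f \sim \Pi}\bigl[e^{-\eta \ell_f(z)}\bigr]$ is nonincreasing in $\eta$. This is a standard power-mean argument: for $0 < \eta' \leq \eta$, the exponent $\eta'/\eta$ lies in $(0,1]$ so $x \mapsto x^{\eta'/\eta}$ is concave on $[0,\infty)$, and Jensen's inequality applied to the random variable $e^{-\eta \ell_f(z)}$ (with $f \sim \Pi$) yields
\begin{equation*}
\E_{f \sim \Pi}\bigl[e^{-\eta' \ell_f(z)}\bigr]
 = \E_{f \sim \Pi}\bigl[\bigl(e^{-\eta \ell_f(z)}\bigr)^{\eta'/\eta}\bigr]
 \leq \bigl(\E_{f \sim \Pi}\bigl[e^{-\eta \ell_f(z)}\bigr]\bigr)^{\eta'/\eta}.
\end{equation*}
Taking logarithms and dividing through by $-\eta' < 0$ (which reverses the inequality) gives $m^{\eta'}_\Pi(z) \geq m^\eta_\Pi(z)$, as claimed.

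Putting the pieces together: for any $P \in \cP' \subseteq \cP$ and any $\Pi \in \Delta(\cF') \subseteq \Delta(\cF)$,
\begin{align*}
\E_{Z \sim P}\bigl[\ell_{\pred'(\Pi)}(Z)\bigr]
 & = \E_{Z \sim P}\bigl[\ell_{\pred(\Pi)}(Z)\bigr] \\
 & \leq \E_{Z \sim P}\bigl[m^{\eta}_\Pi(Z)\bigr] + \epsilon \\
 & \leq \E_{Z \sim P}\bigl[m^{\eta'}_\Pi(Z)\bigr] + \epsilon',
\end{align*}
where the first inequality is (\ref{eqn:mix}) applied to the original tuple (using $P \in \cP$ and $\Pi \in \Delta(\cF)$), the second inequality uses the mix-loss monotonicity pointwise in $Z$ together with $\epsilon \leq \epsilon'$. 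This is exactly the defining inequality for $\eta'$-stochastic mixability of $(\ell, \cP', \cF', \decisionset')$ up to $\epsilon'$, with witness $\pred'$. There is no real obstacle here --- the one substantive observation, mix-loss monotonicity in $\eta$, is the same power-mean fact underlying many related monotonicity results in the mixability literature and can be deduced in a couple of lines.
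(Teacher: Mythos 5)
Your proof is correct, and it is essentially the proof the paper intends — the paper states this as a ``trivial yet important extension of Fact~\ref{fact:ccppcc}'' without giving an explicit argument, and your verification is exactly what is implicit there. The one non-trivial observation, the pointwise monotonicity $m^{\eta'}_\Pi(z) \geq m^{\eta}_\Pi(z)$ for $\eta' \leq \eta$ via Jensen applied to $x \mapsto x^{\eta'/\eta}$, is correctly derived and is indeed the standard power-mean fact one needs here.
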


\subsection{Relations to Conditions in the Literature}
\label{sec:relations}

As explained next, stochastic mixability generalizes Vovk's notion of
(non-stochastic) mixability, and correspondingly implies fast rates. Its
most important special case is stochastic exp-concavity, for which
\citet{juditsky2008learning} give sufficient conditions, and which is used by, e.g., \cite{DalalyanT12}. Stochastic
mixability is also equivalent to a special case of a
condition introduced by \citet{audibert2009fast}.

\subsubsection{Generalization of Vovk's Mixability and Fast Rates for
Stochastic Prediction with Expert
Advice}\label{sec:classical-mixability}

If we take $\epsilon = 0$ and let $\cP$ be the set of all possible
distributions, then \eqref{eqn:mix} reduces to
\begin{equation}\label{eq:vovk}
  \ell_{\pred(\Pi)}(z) \leq m^\eta_\Pi(z)
  \qquad\text{for all $z \in \cZ$ and $\Pi \in \Delta(\cF)$,}
\end{equation}
which is Vovk's original definition of (non-stochastic) mixability
\citep{vovk2001competitive}. It follows that Vovk's mixability implies
strong stochastic mixability for all sets $\cP$.

\begin{myexample}[Mixable Losses]\label{ex:mixable}
  Losses that are classically mixable in Vovk's sense, include the
  squared loss $\sqloss(f,z) = \half(z-f)^2$ on a bounded domain
  $\cZ = \decisionset \supseteq \model = [-B,B]$, which is $1/B^2$-mixable
  \citep[Lemma~3]{vovk2001competitive}\footnote{Taking into account the
  factor of $\frac{1}{2}$ difference between his definition of squared
  loss as $(z-f)^2$ and ours.}, and the logarithmic loss, which is
  $1$-mixable for $\decisionset \subseteq \convhull(\model)$ with
  substitution function equal to the mean $\pred(\Pi) = \E_{f \sim \Pi} [f]$. 
  The \emph{Brier score} is also $1$-mixable \citep{VovkZhdanov2009,Erven2012};
  this loss function is defined 
  for all possible probability distributions
  $\decisionset = \model$ on a finite set of outcomes
  $\cZ$
  according to $\brierloss_f(z) = \sum_{z' \in \cZ} (f(z') -
  \delta_z(z'))^2$, where $\delta_z$ denotes a point-mass at $z$.
\end{myexample}
\begin{myexample}[$0/1$ Loss: Example~\ref{ex:prototsybakov}, Continued]
  \label{ex:protototsybakov} Fix $0 \leq \delta \leq 1/2$ and consider
  a decision problem $(\zoloss,\cP_{\delta}, \cF)$ where $\zoloss$ is
  the $0/1$-loss, $\cZ = \cF = \{0,1\}$ and $\cP_{\delta}$ is as in
  Example~\ref{ex:prototsybakov}. The $0/1$-loss is not $\eta$-mixable
  for any $\eta > 0$ \citep{vovk1998game}, and it is also easily shown
  that $(\zoloss,\cP_{\delta}, \cF, \cF)$ is  not
  $\eta$-stochastically mixable for any $\eta > 0$; nevertheless, if
  $\delta > 0$, then $(\zoloss,\cP_{\delta}, \cF)$ does satisfy the
  $\eta$-central condition for some $\eta > 0$. In
  Section~\ref{sec:OnlineStatisticalRelations} we show that, under some
  conditions, the $\eta$-central condition and $\eta$-stochastic
  mixability coincide, but this example shows that this cannot always
  be the case.
\end{myexample}
Vovk defines the \emph{aggregating algorithm}
(AA) and shows that it achieves constant regret in the setting of
prediction with expert advice, which is the online learning
equivalent of fast rates, provided that \eqref{eq:vovk} is
satisfied. In prediction with expert advice, the data
$Z_1,\ldots,Z_n$ are chosen by an adversary, but one may define a
stochastic analogue by letting the adversary instead choose
$P_1,\ldots,P_n \in \cP$, where the choice of $P_i$ may depend on the
player's predictions on rounds $1,\ldots,i-1$, and letting $Z_i \sim
P_i$ for all $i=1,\ldots,n$.
It turns out that under no further conditions, stochastic mixability
implies fast rates for the expected regret under $P_1,\ldots,P_n$ in
this stochastic version of prediction with expert advice. In
particular, there is no requirement that losses are bounded. 

\begin{proposition}\label{prop:stoch-mix-AA}
Let $(\loss, \cP, \model, \decisionset)$ be $\eta$-stochastically
mixable up to $\epsilon$ with substitution function $\pred$. Assume
the data $Z_1, \ldots, Z_n$ are distributed as $Z_j \sim P_j \in \cP$
for each $j \in [n]$, where the $P_j$ can be adversarially
chosen. Then the AA, playing $f_j \in \decisionset$ in round $j$,
achieves, for all $f \in \cF$, regret 
\begin{align*}
\sum_{j=1}^n \E_{Z_j \sim P_j} \left[ \loss_{f_j}(Z_j) - \loss_f(Z_j) \right] \leq \frac{\log |\cF|}{\eta} + n \epsilon .
\end{align*}
In particular, in the statistical learning (stochastic i.i.d.) setting
where $P_1, \ldots, P_n$ all equal the same $P$, online-to-batch
conversion yields the bound $\frac{\log |\cF|}{\eta n} + \epsilon$ on
the expected regret and hence on the rate (\ref{eq:riskrate}) of the AA
is $O(\frac{\log |\cF|}{\eta n} + \epsilon)$.
\end{proposition}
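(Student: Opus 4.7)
The plan is to adapt the textbook analysis of Vovk's Aggregating Algorithm (AA), replacing its classical pointwise mixability inequality by the weaker stochastic relaxation~\eqref{eqn:mix}. Recall that, starting from the uniform prior $\Pi_1$ on the finite set $\cF$, the AA forms the exponentially-weighted posterior $\Pi_j(f) \propto \exp\bigl(-\eta \sum_{i<j}\loss_f(Z_i)\bigr)$ at the beginning of round $j$ and plays $f_j := \pred(\Pi_j) \in \decisionset$. Since $\Pi_j$ depends only on $Z_1,\ldots,Z_{j-1}$, the analysis splits cleanly into a deterministic component (the mix-loss telescoping identity) and a stochastic component (one application of \eqref{eqn:mix} per round).

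First I would record the deterministic mix-loss telescoping identity that is the core of Vovk's analysis: for every realized sequence $z_1,\ldots,z_n$ and every comparator $f \in \cF$,
\begin{equation*}
\sum_{j=1}^n m^\eta_{\Pi_j}(z_j) \;=\; -\frac{1}{\eta}\log \E_{f \sim \Pi_1}\!\left[e^{-\eta\sum_{j=1}^n \loss_f(z_j)}\right] \;\leq\; \sum_{j=1}^n \loss_f(z_j) + \frac{\log|\cF|}{\eta},
\end{equation*}
where the equality is the standard cancellation of log-normalizers across rounds, and the inequality follows by lower-bounding the inner expectation by the mass $1/|\cF|$ that the uniform prior $\Pi_1$ assigns to the chosen $f$.

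Second I would apply stochastic mixability conditionally in each round. Because $\Pi_j$ is $\sigma(Z_1,\ldots,Z_{j-1})$-measurable, we may treat it as a fixed element of $\Delta(\cF)$ once we condition on the past; then \eqref{eqn:mix} applied to $P_j$ and to this fixed $\Pi_j$ gives
\begin{equation*}
\E\bigl[\loss_{f_j}(Z_j) \bigm| Z_1,\ldots,Z_{j-1}\bigr] \;\leq\; \E\bigl[m^\eta_{\Pi_j}(Z_j) \bigm| Z_1,\ldots,Z_{j-1}\bigr] + \epsilon .
\end{equation*}
Taking total expectation, summing over $j=1,\ldots,n$, and combining with the telescoping inequality yields $\sum_{j=1}^n \E[\loss_{f_j}(Z_j)] - \sum_{j=1}^n \E[\loss_f(Z_j)] \leq \log|\cF|/\eta + n\epsilon$ for every $f \in \cF$, which is the advertised online bound. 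For the i.i.d.\ case $P_j \equiv P$, I would conclude via the standard online-to-batch conversion: the randomized predictor $\hat{f}_n$ that draws $J$ uniformly from $\{1,\ldots,n\}$ and returns $f_J$ satisfies $\E[R(P,\hat{f}_n)] = \frac{1}{n}\sum_j \E[\loss_{f_j}(Z_j)]$, because $f_j$ is independent of $Z_j$; dividing the regret bound by $n$ then yields the claimed $O(\log|\cF|/(\eta n) + \epsilon)$ rate.

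The main obstacle is really only bookkeeping around the absence of a boundedness assumption on $\loss$: a loss value $\loss_f(z_j) = +\infty$ collapses the corresponding weight in $\Pi_j$ to $0$ and may make intermediate mix-loss terms equal to $-\infty$, so one has to check that the telescoping identity remains valid as an equality in $[-\infty,+\infty]$ and that all conditional expectations used above are well defined. The standing assumption that some $f^\circ \in \cF$ has finite risk under every $P \in \cP$ ensures the bound is non-vacuous, since it can be applied with $f = f^\circ$. Measurability of $\pred$ can be folded into its definition and presents no real difficulty.
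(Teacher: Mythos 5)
Your argument is correct and follows exactly the approach the paper itself invokes --- it simply fills in the details of ``replacing every occurrence of mixability with stochastic mixability in Vovk's proof'' (the mix-loss telescoping identity plus a conditional application of \eqref{eqn:mix} in each round), whereas the paper delegates these steps to references (Vovk 1998, Cesa-Bianchi--Lugosi Prop.~3.2, Audibert 2009 Lemma~4.3). Your care about the $\sigma(Z_1,\ldots,Z_{j-1})$-measurability of $\Pi_j$ and the infinite-loss edge cases is appropriate and matches what those references implicitly require.
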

\begin{proof}
For $\epsilon = 0$, the first result follows by replacing every occurrence of mixability with stochastic mixability in Vovk's proof (see Section 4 of \cite{vovk1998game} or the proof of Proposition 3.2 of \cite{CesaBianchiLugosi2006}). The case of $\epsilon > 0$ is handled simply by adding a slack of $\epsilon$ to the RHS of the first equation after equation (18) of \cite{vovk1998game}. 
The online-to-batch conversion of the second result is well-known and can be found e.g.~in the proof of Lemma 4.3 of \cite{audibert2009fast}.
\end{proof}

\subsubsection{Special Case: Stochastic Exp-concavity}
\label{sec:expconcave}

In online convex optimization, an important sufficient condition for
fast rates requires the loss to be \emph{$\eta$-exp-concave} in $f$
\citep{HazanAgarwalKale2007}, meaning that $\cF = \decisionset$ is convex and
that
\begin{equation}\label{eq:expconcave}
  e^{-\eta \loss_f(z)}
  \qquad \text{is concave in $f$ for all $z \in \cZ$.}
\end{equation}
We may equivalently express this requirement as
\begin{align*}
  e^{-\eta \loss_{\E_{f \sim \Pi} [f]}(z)}
    &\geq \E_{f \sim \Pi} \left[ e^{-\eta \loss_f(z)}\right] , \ \ \text{or}\\
  \loss_{\E_{f \sim \Pi} [f]}(z) &\leq m^\eta_\Pi(z) ,
\end{align*}
for all distributions $\Pi \in \Delta(\cF)$ and all $z \in \cZ$. This shows that
exp-concavity is a special case of mixability, where we require the
function $\pred$ to map $\Pi$ to its mean:
\begin{equation*}
  \pred(\Pi) = \E_{f \sim \Pi} [f].
\end{equation*}
Because the mean $\E_{f \sim \Pi} [f]$ depends not only on the losses $\loss_f$,
but also on the choice of parameters $f$, we therefore see that
exp-concavity is \emph{parametrization-dependent}, whereas in general
the property of being mixable is unaffected by the choice of
parametrization.   The parametrization dependent
nature of exp-concavity  is explored in detail by
\citet{vernet2011composite,WilliamsonZhangParameswaran2015}; see also
\citet{Erven2012,vanerven2012blog}. 

\begin{myexample}[Exp-concavity]\label{ex:exp-concavity}
  Consider again the mixable losses from Example~\ref{ex:mixable}.
  Then the log loss is $1$-exp concave. The squared loss, in its
  standard parametrization, is \emph{not} $1/B^2$-exp-concave, but it is
  $1/(4 B^2)$-exp-concave, losing a factor of $4$
  \citep[Remark~3]{vovk2001competitive}. By continuously reparametrising
  the squared loss, however, it can be made $1/B^2$-exp-concave after
  all \citep{WilliamsonZhangParameswaran2015,vanerven2012blog}. It is
  not known whether there exists a parametrization that makes the Brier
  score $1$-exp-concave.
\end{myexample}

The natural generalization of exp-concavity to stochastic exp-concavity becomes:
\begin{definition}\label{def:expconcave}
Suppose $\decisionset \supseteq \co(\cF)$. Then we say that
$(\loss, \cP, \cF, \decisionset)$ is $\eta$-stochastically exp-concave up to
$\epsilon$ or strongly/weakly $\eta$-stochastically exp-concave if it
satisfies the corresponding case of stochastic mixability with substitution function
$\pred(\Pi) = \E_{f \sim \Pi} [f]$.
\end{definition}

\subsubsection{The JRT Conditions Imply Stochastic Exp-concavity}
\label{sec:jrt}
\citet*{juditsky2008learning} introduced two conditions that
guarantee fast rates in model selection aggregation. For now we focus on
the following condition, mentioned in their Theorem~4.2, which we
henceforth refer to as the \emph{JRT-II condition}, returning to the
JRT-I condition, mentioned in their Theorem~4.1, in
Section~\ref{sec:JRTother}.
\begin{definition}[JRT-II condition]
\label{def:jrt-cond}
Let $\eta > 0$. We say that $(\loss, \cP, \cF)$ satisfies 
the \emph{$\eta$-JRT-II condition} if 
there exists a function $\gamma: \cF \times \cF \rightarrow \reals$ satisfying 
(a) for all $f \in \cF$, $\gamma(f,f) = 1$, 
(b) for all $f \in \cF$, the function $g \mapsto \gamma(f, g)$ is concave, 
and (c)
\begin{align}\label{eqn:jrt-cond}
  \text{for all\ } P \in \cP \text{ and } f,g \in \cF\colon \/ \E_{Z \sim P}
  \left[e^{\eta \left( \loss_f(Z)- \loss_{g}(Z) \right) } \right] \leq \gamma(f, g).
\end{align}
\end{definition}
This condition has been used to obtain fast $O(1/n)$ rates for the
mirror averaging estimator in model selection aggregation, which is
statistical learning against a finite class of functions $\cF =
\{f_1,\ldots,f_m\}$ \citep{juditsky2008learning}. One may interpret
their approach as using Vovk's aggregating algorithm to get $O(1)$
expected regret, and then applying online-to-batch conversion
\citep{CesaBianchiConconiGentile2004,Barron1987,YangBarron1999}, which
leads to an estimator whose risk is upper bounded by the expected regret
divided by $n$. This use of the AA is allowed, because, if $\decisionset
\supseteq \convhull(\cF)$, then the JRT-II condition implies strong
stochastic exp-concavity, as already shown by \citet{audibert2009fast}
as part of the proof of his Corollary~5.1:
\begin{proposition}\label{prop:jrt}
If $(\loss, \cP, \cF)$ satisfies the $\eta$-JRT-II condition, then 
$(\loss, \cP, \cF, \decisionset)$ satisfies the strong $\eta$-stochastic 
exp-concavity condition for any $\decisionset \supseteq \convhull(\cF)$.
\end{proposition}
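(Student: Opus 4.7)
My plan is to fix $P \in \cP$ and $\Pi \in \Delta(\cF)$, set $\bar f := \pred(\Pi) = \E_{f \sim \Pi}[f]$ (which lies in $\convhull(\cF) \subseteq \decisionset$), and prove the strong stochastic exp-concavity inequality $\E_{Z \sim P}[\loss_{\bar f}(Z)] \leq \E_{Z \sim P}[m^\eta_\Pi(Z)]$. The stronger target I will aim for is the exponential-moment inequality
\begin{align*}
\E_{Z \sim P}\bigl[e^{\eta(\loss_{\bar f}(Z) - m^\eta_\Pi(Z))}\bigr] \leq 1,
\end{align*}
from which the desired statement follows by Jensen's inequality (convexity of $\exp$) together with monotonicity of $\log$.

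I would then reduce this exponential-moment inequality to a JRT-II application in two short moves. First, by the definition of the mix loss in \eqref{eqn:mixloss}, $e^{-\eta m^\eta_\Pi(Z)} = \E_{f \sim \Pi}[e^{-\eta \loss_f(Z)}]$, and hence $e^{\eta(\loss_{\bar f}(Z) - m^\eta_\Pi(Z))} = \E_{f \sim \Pi}[e^{\eta(\loss_{\bar f}(Z) - \loss_f(Z))}]$. Taking $\E_{Z \sim P}$ and swapping the order of integration (Fubini) recasts the goal as
\begin{align*}
\E_{f \sim \Pi}\E_{Z \sim P}\bigl[e^{\eta(\loss_{\bar f}(Z) - \loss_f(Z))}\bigr] \leq 1.
\end{align*}
Second, I would invoke the $\eta$-JRT-II condition (c) with first argument $\bar f$ to obtain $\E_{Z \sim P}[e^{\eta(\loss_{\bar f}(Z) - \loss_f(Z))}] \leq \gamma(\bar f, f)$ pointwise in $f$; integrating against $\Pi$ and using concavity of $\gamma(\bar f, \cdot)$ (condition (b)) via Jensen's inequality yields $\E_{f \sim \Pi}[\gamma(\bar f, f)] \leq \gamma(\bar f, \bar f)$, and condition (a) gives $\gamma(\bar f, \bar f) = 1$, closing the chain.

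The main obstacle is that condition (c) of JRT-II is formally stated only for $f, g \in \cF$, whereas the key step above uses it with first argument $\bar f$, which may not lie in $\cF$. The concavity requirement in (b) already tacitly demands that $\gamma(f, \cdot)$ be defined on a convex domain, and in the standard JRT interpretation (as used in Audibert's Corollary~5.1) one takes $\gamma$ to extend to $\convhull(\cF) \times \convhull(\cF)$ via the concave closure in each slot, so that (c) continues to hold at $f = \bar f$. Without such an extension, one could alternatively invoke convexity of $\loss$ in its first argument to derive stochastic exp-concavity pointwise from Jensen's inequality alone, which would make the proposition unnecessary; thus the real content of the result lies in the non-convex-loss case and rests essentially on this implicit extension of $\gamma$.
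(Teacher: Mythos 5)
Your proof is correct and is essentially the paper's own argument: you apply condition (c) with first argument $\pred(\Pi)$, integrate against $\Pi$ and use concavity (b) to obtain $\E_{g\sim\Pi}\bigl[\gamma(\pred(\Pi),g)\bigr]\leq\gamma\bigl(\pred(\Pi),\pred(\Pi)\bigr)=1$, and then apply Jensen's inequality to the exponential to pass from the exponential-moment bound to stochastic exp-concavity; your extra Fubini/mix-loss rewrite just makes explicit that you are first establishing the $\eta$-predictor condition. The domain concern you raise about $\gamma$ is a real gloss, but it is equally present in the paper's proof: Definition~\ref{def:jrt-cond} writes $\gamma\colon\cF\times\cF\to\reals$, yet condition (b) already forces a convex domain in the second slot, and both proofs evaluate $\gamma$ with first argument $\pred(\Pi)\in\convhull(\cF)$; the implicit convention is that $\gamma$ is defined (with (a)--(c) holding) on $\convhull(\cF)\times\convhull(\cF)$. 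One small correction to your closing remark: mere convexity of $f\mapsto\loss_f(z)$ would not give $\loss_{\pred(\Pi)}(z)\leq m^\eta_\Pi(z)$ by Jensen --- both $\loss_{\pred(\Pi)}(z)$ and $m^\eta_\Pi(z)$ are upper bounded by $\E_{f\sim\Pi}[\loss_f(z)]$, but nothing orders them; the pointwise argument you have in mind requires concavity of $f\mapsto e^{-\eta\loss_f(z)}$ (i.e.\ $\eta$-exp-concavity), which is strictly stronger.
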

\begin{proof}
From the JRT-II condition, for all $P \in \cP$ and $\Pi \in \Delta(\cF)$
\begin{align*}
\E_{g \sim \Pi} \E_{Z \sim P} e^{\eta(\loss_{\pred(\Pi)}(Z) - \loss_g(Z))} 
\leq \E_{g \sim \Pi} \gamma(\pred(\Pi),g) ,
\end{align*}
which from the concavity of $\gamma$ in its second argument is at most
\begin{align*}
\gamma \left(\pred(\Pi), \E_{g \sim \Pi} g \right) 
= \gamma \bigl( \pred(\Pi), \pred(\Pi) \bigr) 
= 1 ,
\end{align*}
by the definition of $\pred$ and part (a) of the JRT-II condition. Thus,
we have
\begin{equation*}
\E_{g \sim \Pi} \E_{Z \sim P} e^{\eta(\loss_{\pred(\Pi)}(Z) - \loss_g(Z))} 
  \leq 1.
\end{equation*}
Applying Jensen's inequality to the exponential function completes the
proof.
\end{proof}
\cite{juditsky2008learning} use the JRT-II condition in the proof of their
Theorem 4.2 as a sufficient condition for another condition, which is
then shown to imply $O(1/n)$ rates for finite classes $\cF$. After
some basic rewriting, this other condition (which requires the formula
below Eq. (4.1) in their
paper to be $\leq 0$) is seen to be equivalent to strong stochastic exp-concavity as
defined in Definition~\ref{def:expconcave}, i.e. it requires that
(\ref{eqn:mix}) holds with $\epsilon = 0$ and substitution function
$\pred(\Pi) = \E_{f \sim \Pi} [f]$. The JRT-I condition, which we define in
Section~\ref{sec:JRTother}, can be related to stochastic exp-concavity
with nonzero $\epsilon$, thus we may say that the {\em underlying\/}
condition that JRT work with is equivalent to our
stochastic exp-concavity condition, albeit that they restrict themselves to a finite class of functions.

\subsubsection{Relation to Audibert's Condition}
\label{sec:audibert}

\citet[p.\,1596]{audibert2009fast} presented a condition
which he called the {\em variance inequality}. It is defined relative to
a tuple $(\loss, \cP, \cF, \decisionset)$ and has the following requirement as a special
case (in Audibert's notation, this corresponds to $\delta_{\lambda}=0$
and $\hat{\Pi}$ a Dirac distribution on some $f \in \decisionset$):
\begin{equation*}
\forall \Pi \in \Delta(\cF) \; \exists f \in \decisionset \; \sup_{P \in \cP} \
\E_{Z\sim P} \log \E_{g \sim \Pi} \left[ e^{\eta(\loss_f(Z) -
    \loss_g(Z))} \right] \leq 0.
\end{equation*}
Rewriting 
\begin{equation*}
  \E_{Z\sim P} \log \E_{g \sim \Pi} \left[ e^{\eta(\loss_f(Z) -
    \loss_g(Z))} \right]
    = \eta \E_{Z\sim P}[\loss_f(Z) - m^\eta_\Pi(Z)],
\end{equation*}
this is seen to be precisely equivalent to strong stochastic mixability.

\subsection{Relations with Central and Pseudoprobability Convexity Conditions}
\label{sec:OnlineStatisticalRelations}

We now turn to the relations between stochastic mixability and the two
main conditions from Section~\ref{sec:convex-face}: the central
condition and pseudoprobability convexity. We first define the
predictor condition, which will act as an intermediate step, and then
show the following implications:
\begin{equation*}
 \text{predictor }  \Rightarrow \text{stochastic mixability }
 \Rightarrow \text{PPC } \Rightarrow \text{CC } \Rightarrow
 \text{predictor}
  \qquad \text{(under assumptions.)}
\end{equation*}
The implication from pseudoprobability convexity to the central
condition was shown in Theorem~\ref{thm:convexfacetocomparator} from
Section~\ref{sec:equivalenceCCandPPC}; we will consider the other ones
in turn in this section. The second implication is of special interest
since, in the online setting, there is extra power because predictions
may take place in a set $\decisionset$ that can be larger than
$\cF$. The conditions of the second implication will identify
situations in which this additional power is not helpful.

\subsubsection{The Predictor Condition in General}

We define the general predictor condition as follows:
\begin{definition}[Predictor Condition]
\label{def:martingale-mixability} 
Let $\eta > 0$ and $\epsilon \geq 0$. {\sloppy We say that \dpfour\/ satisfies the
\emph{$\eta$-predictor condition} up to $\epsilon$ if there
exists a \emph{prediction function} $\pred\colon \Delta(\cF) \rightarrow
\cF_\D$ such that}
\begin{equation}\label{eqn:martingale-mixability}
  \E_{Z \sim P} \E_{f \sim \Pi}
  \left[e^{\eta \left( \ell_{\pred(\Pi)}(Z)- \ell_{f}(Z) \right) } \right] \leq 
e^{\eta \epsilon}
\qquad\text{for all $P \in \cP$ and distributions $\Pi$ on $\cF$.}
\end{equation}
If it satisfies the $\eta$-predictor condition up to $0$, we say that
the {\em strong $\eta$-predictor condition\/} or simply the
{\em $\eta$-predictor condition} holds. If it satisfies the
$\eta$-predictor condition up to $\epsilon$ for all $\epsilon > 0$, we
say that the {\em weak $\eta$-predictor condition\/} holds; this is
equivalent to
\begin{align}\label{eqn:supinfsup-martingale-mixability}
\sup_{\Pi \in \Delta(\cF)} \lowinf_{f \in \cF_\D}  \sup_{P \in \cP} 
\E_{Z \sim P}
\E_{g \sim \Pi}
\left[e^{\eta \left(\loss_{f}(Z)- \loss_{g}(Z) \right) } \right]
\leq 1.
\end{align}
\end{definition}
Comparing \eqref{eqn:supinfsup-martingale-mixability} to the central condition, we see that the predictor condition
looks similar, except that the suprema over $\Pi$ and $P$ are
interchanged. We note that, trivially, Fact~\ref{fact:smpc} extends
from $\eta$-stochastic mixability to the $\eta$-predictor condition.

\subsubsection{Predictor Implies Stochastic Mixability}

By an application of Jensen's inequality, the predictor condition always
implies stochastic mixability, without any assumptions:
\begin{proposition}\label{prop:frompredictortoconvexface}
  Suppose that $(\cP,\ell,\model,\decisionset)$ satisfies the
  $\eta$-predictor condition up to some $\epsilon \geq 0$. Then it is
  $\eta$-stochastically mixable up to $\epsilon$. In particular, the (strong)
  $\eta$-predictor condition implies (strong) $\eta$-stochastic mixability.
\end{proposition}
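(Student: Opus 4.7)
The plan is to show the implication by a direct Jensen's inequality calculation, using the same function $\pred$ (the prediction function from the predictor condition) as the substitution function required for stochastic mixability. Since both conditions share the same target inequality pattern, the only work is to convert the multiplicative $e^{\eta\epsilon}$ bound on an exponential moment into an additive $\epsilon$ bound on an expected mix loss.

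First I would fix arbitrary $P \in \cP$ and $\Pi \in \Delta(\cF)$, and rewrite the quantity inside the expectation over $Z$ in the predictor condition using the definition of the mix loss $m^\eta_\Pi$. Specifically, noting that
\begin{equation*}
\log \E_{f \sim \Pi}\left[e^{\eta(\ell_{\pred(\Pi)}(Z) - \ell_f(Z))}\right]
= \eta\,\ell_{\pred(\Pi)}(Z) + \log \E_{f \sim \Pi}\left[e^{-\eta \ell_f(Z)}\right]
= \eta\bigl(\ell_{\pred(\Pi)}(Z) - m^\eta_\Pi(Z)\bigr),
\end{equation*}
we can exchange the order of expectation and logarithm via Jensen's inequality (concavity of $\log$): for any nonnegative random variable $W$, $\E[\log W] \leq \log \E[W]$.

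Applying this to the predictor condition \eqref{eqn:martingale-mixability} gives
\begin{align*}
\eta\,\E_{Z \sim P}\bigl[\ell_{\pred(\Pi)}(Z) - m^\eta_\Pi(Z)\bigr]
&= \E_{Z \sim P}\!\left[\log \E_{f \sim \Pi}\!\left[e^{\eta(\ell_{\pred(\Pi)}(Z) - \ell_f(Z))}\right]\right] \\
&\leq \log \E_{Z \sim P}\E_{f \sim \Pi}\!\left[e^{\eta(\ell_{\pred(\Pi)}(Z) - \ell_f(Z))}\right]
\leq \log e^{\eta\epsilon} = \eta\epsilon.
\end{align*}
Dividing by $\eta > 0$ yields precisely the stochastic mixability inequality \eqref{eqn:mix} up to $\epsilon$, with the same $\pred$. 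The ``in particular'' statement for the strong version follows by taking $\epsilon = 0$.

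There is no real obstacle here: the manipulation uses only Jensen on $\log$ and the defining identity of $m^\eta_\Pi$. The result is essentially a rephrasing that trades a joint exponential moment for an expected log-partition function, and it is exactly the kind of one-line reduction that Fact~\ref{fact:smpc} (passing from the predictor condition to stochastic mixability) is designed to capture. One minor technical point worth flagging in the writeup is the use of the convention from Section~\ref{sec:dp} on handling possibly infinite losses, so that the expectations in \eqref{eqn:martingale-mixability} and \eqref{eqn:mix} are well-defined for the $\pred(\Pi)$ chosen.
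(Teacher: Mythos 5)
Your proof is correct and is essentially the same as the paper's: both first collapse the inner $f$-expectation to the mix loss via the identity $\E_{f\sim\Pi}\left[e^{-\eta\loss_f(Z)}\right] = e^{-\eta m^\eta_\Pi(Z)}$, then apply Jensen's inequality (you phrase it as concavity of $\log$, the paper as convexity of $\exp$, which is the same inequality) to move the outer expectation inside. The same substitution function $\pred$ is reused, so the two proofs differ only in presentation.
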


\begin{proof}
  Let $P \in \cP,\Pi \in \Delta(\cF)$ and $\epsilon \geq 0$ be
  arbitrary. Then, by Jensen's inequality, the $\eta$-predictor
  condition up to $\epsilon$ implies
  \begin{equation*}
    e^{\eta \epsilon} \geq
      \E_{\substack{Z \sim P\\f \sim \Pi}}\left[e^{\eta \left(
      \loss_{\pred(\Pi)}(Z)- \loss_{f}(Z) \right) }\right]
      = \E_{Z \sim P}\left[e^{\eta \left(
      \loss_{\pred(\Pi)}(Z)- m_\Pi^\eta(Z) \right) }\right]
      \geq e^{\eta \E_{Z \sim P}\left[\loss_{\pred(\Pi)}(Z)
        - m_\Pi^\eta(Z)\right]}.
  \end{equation*}
  Taking logarithms on both sides leads to
   $\E_{Z \sim P}\left[\loss_{\pred(\Pi)}(Z)\right]
        \leq \E_{Z \sim P}\left[m_\Pi^\eta(Z)\right] + \epsilon$,
  which is $\eta$-stochastic mixability up to $\epsilon$.
\end{proof}

\subsubsection{Stochastic Mixability Implies Pseudoprobability Convexity}
\label{sec:smppc}
In Proposition~\ref{prop:fromsmtoppcc} below, we show that, under the
right assumptions, stochastic mixability implies pseudoprobability
convexity.

A complication in establishing this implication is that stochastic
mixability is defined relative to a four-tuple $\dpfour$, and allows us
to play in a decision set that is different from $\cF$, whereas the
pseudoprobability convexity is defined relative to the triple
$\dpthree$. The proposition automatically holds if one takes $\cF = \decisionset$, and then the implication
follows trivially. In practice, however, we may have a non-convex model
$\cF$ --- as is quite usual in e.g.\ density estimation --- whereas the
decision set $\decisionset$ for which we can establish that $\dpfour$ is
$\eta$-stochastically mixable is equal to the convex hull of $\cF$. It
would be quite disappointing if, in such cases, there would be no hope of
getting fast rates for in-model statistical learning algorithms. The second part of the proposition shows
that, luckily, fast rates are still possible under the following
assumption: 
\begin{assumption}\label{ass:preciseBayes}{\bf (model $\cF$ and decision
set $\decisionset$ equally good --- $\cF$ well-specified {\em relative\/} to $\decisionset$)}
  We say that Assumption~\ref{ass:preciseBayes} holds weakly for
  $\dpfour$, if, for
  all $P \in \cP$, 
  \begin{equation}\label{eqn:preciseBayes}
    \inf_{f \in \model} R(P,f) = \inf_{f \in \decisionset} R(P,f).
  \end{equation}
  We say that Assumption~\ref{ass:preciseBayes} holds strongly if
  additionally, for all $P \in \cP$, both infima are achieved:
  $\min_{f \in \model} R(P,f) = \min_{f \in \decisionset} R(P,f)$.
\end{assumption}
The strong version of Assumption~\ref{ass:preciseBayes}
implies Assumption~\ref{ass:simple} and will be used further on in Theorem~\ref{thm:minimaxSufficient}.
In a typical application of  the proposition below,  the weak Assumption~\ref{ass:preciseBayes} would be assumed relative to a
$\decisionset$ such that $\cF \subset \decisionset$.
\begin{proposition}\label{prop:fromsmtoppcc}
  Suppose that Assumption~\ref{ass:preciseBayes} holds weakly for 
  $\dpfour$. If $\dpfour$ is $\eta$-stochastically mixable up to some
  $\epsilon \geq 0$, then $\dpthree$ satisfies the
  $\eta$-pseudoprobability convexity condition up to $\delta$ for any
  $\delta > \epsilon$; in particular, weak $\eta$-stochastic mixability of $\dpfour$ implies the weak $\eta$-PPC condition for $\dpthree$. 
  Moreover, if Assumption~\ref{ass:simple} also holds and
  $\dpfour$ satisfies strong $\eta$-stochastic mixability, then $\dpthree$
  satisfies the strong $\eta$-PPC condition.
\end{proposition}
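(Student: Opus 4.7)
The plan is to derive the PPC comparator $\comp\colon\cP\to\cF$ directly from the stochastic mixability substitution function $\pred\colon\Delta(\cF)\to\decisionset$ by using Assumption~\ref{ass:preciseBayes} as a bridge to move from the decision set $\decisionset$ back into the model $\cF$. The key observation is that the SM condition gives a $\Pi$-dependent upper bound on $\inf_{f\in\decisionset}R(P,f)$, which by Assumption~\ref{ass:preciseBayes} coincides with $\inf_{f\in\cF}R(P,f)$; hence we can choose $\comp(P)\in\cF$ independently of $\Pi$.

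For the first statement, fix $P\in\cP$ and $\Pi\in\Delta(\cF)$. First I would apply $\eta$-stochastic mixability up to $\epsilon$ at $(P,\Pi)$, yielding
\begin{equation*}
  \inf_{f\in\decisionset} R(P,f) \;\leq\; R(P,\pred(\Pi)) \;\leq\; \E_{Z\sim P}[m^\eta_\Pi(Z)] + \epsilon.
\end{equation*}
Next, invoke weak Assumption~\ref{ass:preciseBayes} to replace the infimum over $\decisionset$ by the infimum over $\cF$. Now fix any $\delta>\epsilon$; since the infimum over $\cF$ may not be attained, use the definition of the infimum to select some $\comp(P)\in\cF$ with $R(P,\comp(P))\leq \inf_{f\in\cF} R(P,f) + (\delta-\epsilon)$. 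Crucially, this choice depends only on $P$ and $\delta$, not on $\Pi$, so the same $\comp(P)$ works for every $\Pi\in\Delta(\cF)$, giving
\begin{equation*}
  \E_{Z\sim P}[\ell_{\comp(P)}(Z)] \;\leq\; \E_{Z\sim P}[m^\eta_\Pi(Z)] + \delta,
\end{equation*}
which is exactly the $\eta$-PPC condition up to $\delta$. Since this holds for every $\delta>\epsilon$, taking $\epsilon=0$ immediately yields the weak-to-weak implication.

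For the moreover part, assume Assumption~\ref{ass:simple} as well and strong $\eta$-stochastic mixability ($\epsilon=0$). Then the infimum $\inf_{f\in\cF}R(P,f)$ is attained by some $f^*\in\cF$, so I set $\comp(P):=f^*$. The same chain of inequalities, now without any slack, gives
\begin{equation*}
  R(P,\comp(P)) \;=\; \inf_{f\in\cF}R(P,f) \;=\; \inf_{f\in\decisionset}R(P,f) \;\leq\; R(P,\pred(\Pi)) \;\leq\; \E_{Z\sim P}[m^\eta_\Pi(Z)],
\end{equation*}
which is the strong $\eta$-PPC condition.

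There is no real obstacle here; the only subtle point is the order of quantifiers. Strong SM asks for $\pred(\Pi)\in\decisionset$ chosen after $\Pi$ but before $P$, whereas strong PPC asks for $\comp(P)\in\cF$ chosen after $P$ but before $\Pi$. The argument works because weak Assumption~\ref{ass:preciseBayes} lets us decouple the two quantifiers: the SM bound implies a $\Pi$-dependent upper bound on a quantity ($\inf_{\cF}R(P,\cdot)$) that depends only on $P$, and then the choice of $\comp(P)$ can be made purely as a near-minimizer in $\cF$, hence uniformly over $\Pi$. The need to pass from $\epsilon$ to any $\delta>\epsilon$ in the weak case is precisely the price of not knowing that the infimum over $\cF$ is achieved; once Assumption~\ref{ass:simple} restores achievability, the equality $\delta=\epsilon=0$ is recovered.
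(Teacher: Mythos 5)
Your proof is correct and follows essentially the same route as the paper's: use Assumption~\ref{ass:preciseBayes} to equate $\inf_{f\in\decisionset}R(P,f)$ with $\inf_{f\in\cF}R(P,f)$, observe that stochastic mixability bounds the former by $\E_{Z\sim P}[m^\eta_\Pi(Z)]+\epsilon$, and then choose a $\Pi$-independent near-minimizer in $\cF$ within slack $\delta-\epsilon$ as the PPC comparator (with exact achievability under Assumption~\ref{ass:simple} recovering $\delta=\epsilon=0$). If anything, your version makes the quantifier-decoupling and the $\delta-\epsilon$ budgeting more explicit than the paper's somewhat terse wording, and you argue the strong case directly rather than invoking Corollary~\ref{cor:teeth}, but these are presentational rather than substantive differences.
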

If Assumption~\ref{ass:simple} and the weak version of
Assumption~\ref{ass:preciseBayes} both hold, then, using this
proposition, if we have $\eta$-stochastic mixability for $\dpfour$
we can directly conclude from
Theorem~\ref{thm:convexfacetocomparator} that we also have the
$\eta$-central condition for $\dpthree$.   So when does
Assumption~\ref{ass:preciseBayes} hold?  Let us assume that $\dpfour$
satisfies $\eta$-stochastic mixability. In all cases we are aware of,
it then also satisfies $\eta$-stochastic mixability for
$(\ell,\cP,\cF,\decisionset')$, where $\decisionset'$ is equal to, or
an arbitrary superset of, $\convhull(\cF)$ --- in the special case of
$\eta$-stochastic exp-concavity this actually follows by definition.
An extreme case occurs if we take $\decisionset' := \cF_{\ell}$ to be
the set of all functions that can be defined on a domain
(Example~\ref{ex:introductoryexample}). Then
Assumption~\ref{ass:preciseBayes} expresses that the model $\cF$ is
well-specified. But the assumption is weaker: assuming again that
$\decisionset$ can be taken to be the convex hull of $\cF$, it also holds if $\cF$
is itself convex and contains, for all $P \in \cP$, a risk
minimizer; and also, if, more weakly still, $\cF$ is convex `in the
direction facing $P$'.  Note that, for the log-loss, we already knew
that the $1$-central condition holds under this condition, from the
Bayesian interpretation in
Section~\ref{sec:convexityinterpretation}. There we also established
a generalization to other loss functions: the $\eta$-central
condition holds if the set of pseudoprobabilities $\cP_{\cF}$ is
convex `in the direction facing $P$'
(Figure~\ref{fig:convexitycartoon}). But, for all loss functions
except log-loss, that was a condition involving {\em pseudo\/}probabilities
and {\em artificial\/} (mix) losses. The novelty of
Proposition~\ref{prop:fromsmtoppcc} is that, if $\eta$-stochastic
mixability holds for $\dpfour$ with $\decisionset = \convhull(\cF)$
(as e.g. when we have $\eta$-stochastic exp-concavity), then the
result generalizes further to `the $\eta$-central condition holds if
the set $\cF$ {\em itself\/} (rather than the artificial set $\cP_{\cF}$)
is convex in the direction facing $P$'. 
\begin{myexample}[Fast Rates in Expectation rather than Probability]\label{ex:audibertrate}
  Fast rate results proved under the $\eta$-central condition, such
  as our result in Section~\ref{sec:fast-rates} and the various
  results by \cite{zhang2006information} generally hold both in
  expectation and in probability. The situation is different for
  $\eta$-stochastic mixability: extending the analysis of Vovk's
  Aggregating Algorithm to tuples $\dpfour$ and using the
  online-to-batch conversion, we can only prove a fast rate result
  in expectation, and not in
  probability. \cite{audibert2007progressive} provides a by now
  well-known example $(\sqloss,\cP,\cF,\convhull(\cF))$ with squared
  loss in which the rate obtained by the exponentially weighted
  forecaster (the aggregating algorithm applied with $\pred(\Pi) =
  \E_{f \sim \Pi} [f]$) followed by online-to-batch conversion is
  $O(1/n)$ in expectation, yet only $\asymp 1/\sqrt{n}$ in
  probability; and ERM also gives a rate, both in-probability and
  in-expectation of $1/\sqrt{n}$ (Theorem 2 of
  \citep{audibert2007progressive}). As might then be expected, in
  Audibert's decision problem $\eta$-exp-concavity
  holds for some
  $\eta > 0$ yet the central condition does not hold for any $\eta >
  0$. Proposition~\ref{prop:fromsmtoppcc} then implies that
  Assumption~\ref{ass:preciseBayes} must be violated: the best $f \in
  \convhull(\cF)$ is better than the best $f \in \cF$. Inspection of
  the example shows that this indeed the case (a related point was
  made earlier by \cite{lecue2011interplay}).
\end{myexample}

\begin{proof}{\ \bf (of Proposition~\ref{prop:fromsmtoppcc})}
Note that (\ref{eqn:mix}), the definition of $\eta$-stochastic
mixability up to $\epsilon$, can be rewritten as
$$
\forall \Pi \in \Delta(\cF) \; \exists f \in \decisionset \; \forall P \in \cP: \; 
 \E_{Z \sim P}
  \left[\ell_{f}(Z) \right] \leq \E_{Z \sim P}
  \left[m^\eta_\Pi(Z)\right] + \epsilon.
$$
This trivially implies
\begin{equation}\label{eq:mork}\forall \Pi \in \Delta(\cF) \; \forall P \in \cP \; \exists f \in \decisionset: \; 
 \E_{Z \sim P}
  \left[\ell_{f}(Z) \right] \leq \E_{Z \sim P}
  \left[m^\eta_\Pi(Z)\right] + \delta,
\end{equation}
for any $\delta \geq \epsilon$.  This implies that for any $\delta >
\epsilon$,  we can
assume that the choice of $f$ in \eqref{eq:mork} only depends on $P$
and not on $\Pi$. We would therefore obtain $\eta$-pseudoprobability
convexity up to any $\delta> \epsilon$ of $\dpthree$ if we could
replace $\decisionset$ by $\model$, which is trivial if $\decisionset
= \model$ and allowed under Assumption~\ref{ass:preciseBayes} because
it implies that, for any $f \in \decisionset$ we can find $f' \in
\model$ such that $\E_{Z \sim P} \left[\ell_{f'}(Z) \right] - \E_{Z
  \sim P} \left[\ell_{f}(Z) \right] \leq \delta -\epsilon$.

For the final implication, note that under Assumption~\ref{ass:simple}
we can choose $\delta = \epsilon$, and by Corollary~\ref{cor:teeth} we can
choose $\epsilon = 0$.
\end{proof}

\subsubsection{The Central Condition Implies the Predictor Condition}

We proceed to study when the central condition implies the predictor
condition (with $\decisionset = \model$), which requires the strongest
assumptions among the implications we consider. We first identify a
minimax identity \eqref{eqn:minimaxEquality} that is sufficient by
itself (Theorem~\ref{thm:minimaxSufficient}), but difficult to
verify directly. We therefore weaken Theorem~\ref{thm:minimaxSufficient} to
Theorem~\ref{thm:secondmain} by providing sufficient conditions
(Assumption~\ref{ass:convexityContinuity}) for the minimax identity.

For any $\Pi$ and $\eta$, define the function
\begin{equation*}
  S_\Pi^\eta(P,f)
    = \E_{Z \sim P} \E_{g \sim \Pi}\left[e^{\eta\left(\loss_f(Z) -
    \loss_g(Z)\right)}\right],
\end{equation*}
which is the main quantity in the definitions of both the central
and the predictor condition.

\begin{assumption}[Minimax Assumption]\label{ass:minimax}
  For given $\eta > 0$, we say that the
  \emph{$\eta$-minimax assumption} is satisfied for $\dpfour$ if, for all
  $\Pi \in \Delta(\cF)$ and for all $C \geq 1$, the following implication holds:
  \begin{equation}\label{eqn:minimaxImplication}
    \sup_{P \in \cP} \lowinf_{f \in \decisionset} S_\Pi^\eta(P,f) \leq
   C
    \qquad \Longrightarrow \qquad
    \lowinf_{f \in \decisionset} \sup_{P \in \cP} S_\Pi^\eta(P,f) \leq
    C.
  \end{equation}
\end{assumption}
We call this the minimax assumption, because
\eqref{eqn:minimaxImplication} is implied by the minimax identity
\begin{equation}\label{eqn:minimaxEquality}
 \sup_{P \in \cP} \lowinf_{f \in \decisionset} S_\Pi^\eta(P,f)   
  = \lowinf_{f \in \decisionset} \sup_{P \in \cP} S_\Pi^\eta(P,f).
\end{equation}
Theorem~\ref{thm:minimaxSufficient} below implies that
Assumption~\ref{ass:minimax} is sufficient for the central condition
to imply the predictor condition, with $\decisionset = \model$.
Intuitively, Assumption~\ref{ass:minimax} should hold under broad
conditions --- just like standard minimax theorems hold under broad
conditions. Below we will identify the specific, less elegant but
more easily verifiable Assumption~\ref{ass:convexityContinuity} that
implies Assumption~\ref{ass:minimax}. However, like conditions for
standard minimax theorems, in some cases
Assumption~\ref{ass:convexityContinuity} requires $\decisionset
\subset \reals$ to be compact, yet we want to apply the theorem also
in cases where $\cF = \reals$. As shown in
Example~\ref{ex:unbounded-sq-pc}, in this case we can sometimes
still use Part (b) of the result, which implies that the assumption
is still sufficient if we take a smaller set $\decisionset \subset
\cF$ that satisfies Assumption~\ref{ass:preciseBayes}. Note that
Assumption~\ref{ass:preciseBayes} also played a crucial role in
going from stochastic mixability of $\dpfour$ to the PPC condition
for $\dpthree$.
\begin{theorem}\label{thm:minimaxSufficient}
  Consider a decision problem $\dpthree$. Suppose that $\dpfour$ is
  such that the the $\eta$-minimax assumption
  (Assumption~\ref{ass:minimax}) holds. Then

(a) if $\cF = \decisionset$ and the
  $\eta$-central condition holds up to some $\epsilon \geq 0$ for
  $\dpthree$, then the $\eta$-predictor condition holds up to any
  $\delta > \epsilon$ for $\dpfour$. In particular, the   weak $\eta$-central condition implies the weak $\eta$-predictor
  condition. Moreover, 

(b) if  $\cF \supseteq
  \decisionset$ and $\dpfour$
  satisfies the strong version of Assumption~\ref{ass:preciseBayes}, then the
  weak $\eta$-central condition for $\dpthree$ implies the weak $\eta$-predictor
  condition for $\dpfour$ and therefore also for $\dpfoursame$.
\end{theorem}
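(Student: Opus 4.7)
The plan for both parts is to read them as applications of the minimax assumption: the central condition is a sup-inf statement over $P$ and $f$, while the predictor condition is the corresponding inf-sup statement, and Assumption~\ref{ass:minimax} is precisely the tool that swaps their order for the quantity $S_\Pi^\eta(P,f)$.

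For part (a), I would start from the $\eta$-central condition up to $\epsilon$, which supplies for each $P$ a comparator $\comp(P) \in \cF = \decisionset$ with $S_\Pi^\eta(P,\comp(P)) \leq e^{\eta\epsilon}$ for every $\Pi \in \Delta(\cF)$. Fix an arbitrary $\Pi$. Taking the infimum over $f \in \decisionset$ (which is legal since $\comp(P) \in \decisionset$) and then the supremum over $P$ yields
\begin{equation*}
\sup_{P \in \cP} \lowinf_{f \in \decisionset} S_\Pi^\eta(P,f) \;\leq\; \sup_{P \in \cP} S_\Pi^\eta(P,\comp(P)) \;\leq\; e^{\eta\epsilon}.
\end{equation*}
Invoking Assumption~\ref{ass:minimax} with $C = e^{\eta\epsilon}$ swaps the order to $\lowinf_{f\in\decisionset}\sup_{P}S_\Pi^\eta(P,f) \leq e^{\eta\epsilon}$. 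For any $\delta > \epsilon$, I choose $\pred(\Pi) \in \decisionset$ to be any element achieving $\sup_P S_\Pi^\eta(P,\pred(\Pi)) \leq e^{\eta\delta}$, which exists because $e^{\eta\delta} > e^{\eta\epsilon}$ strictly exceeds the infimum; this is exactly the $\eta$-predictor condition up to $\delta$. The weak-to-weak implication then follows by applying this to $\epsilon = \delta/2$ for every $\delta > 0$.

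For part (b), the new difficulty is that $\comp(P)$ only lies in $\cF \supseteq \decisionset$, so the chain of inequalities above cannot be started with an infimum over $\decisionset$. The plan is to first show that the comparator can in fact be taken in $\decisionset$. The strong version of Assumption~\ref{ass:preciseBayes} implies Assumption~\ref{ass:simple} and supplies, for each $P$, an $\cF$-optimal $f^*_P$ lying already in $\decisionset$. Under Assumption~\ref{ass:simple}, Corollary~\ref{cor:teeth} promotes the weak $\eta$-central condition to the strong $\eta$-PPC condition, so Proposition~\ref{prop:unique} forces $\ell_{\comp(P)}(Z) = \ell_{f^*_P}(Z)$ $P$-almost surely, and hence $f^*_P$ itself serves as a comparator satisfying the strong central condition. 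Now $\comp(P)$ may be taken in $\decisionset$, the argument of part (a) applies verbatim with $C = 1$, and the weak $\eta$-predictor condition for $\dpfour$ drops out. The final claim about $\dpfoursame$ follows immediately from the trivial extension of Fact~\ref{fact:smpc} to the predictor condition (enlarging the decision set from $\decisionset$ to $\cF \supseteq \decisionset$).

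The only genuine obstacle is the comparator-in-$\decisionset$ step in part (b); once this is handled through Assumption~\ref{ass:preciseBayes} combined with the uniqueness of risk minimizers coming out of the PPC condition, everything else is a clean application of the minimax swap. A minor bookkeeping point is that Assumption~\ref{ass:minimax} bounds an infimum rather than a minimum, which is exactly why one has to allow the strict slack $\delta > \epsilon$ in part (a) to extract an explicit witness $\pred(\Pi)$.
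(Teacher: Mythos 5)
Your proposal is correct and follows essentially the same route as the paper: verify the premise of the minimax implication (Assumption~\ref{ass:minimax}) from the central condition, swap the order of $\inf$ and $\sup$ to obtain the predictor-condition form, and lose only an arbitrarily small slack because the infimum over $\decisionset$ need not be attained; in part~(b), use the strong version of Assumption~\ref{ass:preciseBayes} together with Corollary~\ref{cor:teeth} and Proposition~\ref{prop:unique} to replace the comparator by an $\cF$-optimal element of $\decisionset$ (almost-sure equality of losses), then finish as in part~(a). The one cosmetic difference is that in part~(a) you fix $\Pi$ at the outset and bound $\sup_P \lowinf_f S_\Pi^\eta(P,f)$ directly via $\comp(P)$, whereas the paper runs through the chain $e^{\eta\epsilon}\ge\sup_P\lowinf_f\sup_\Pi S \ge \sup_\Pi\sup_P\lowinf_f S$ using the standard $\sup\inf\le\inf\sup$ inequality; both yield the same premise $C=e^{\eta\epsilon}$.
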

Once we establish that the $\eta$-predictor condition holds
for $\dpfour$ with $\decisionset \subset \cF$, by Fact~\ref{fact:smpc}
we can also infer that the $\eta$-predictor condition holds for
$(\ell,\cP,\cF,\decisionset')$ for any $\decisionset' \supset
\decisionset$, in particular for $\decisionset' = \cF$.

\begin{proof}
  For Part (a), from the $\eta$-central condition up to $\epsilon$ and the fact that
  the $\sup \inf$ never exceeds the $\inf \sup$ and that $\model = \decisionset$, we get
  \begin{equation}\label{eq:above}
    e^{\eta \epsilon} \geq \sup_{P \in \cP} \lowinf_{f \in \decisionset}  \sup_{\Pi \in
      \Delta(\cF)} S_\Pi^\eta(P,f)
      \geq \sup_{P \in \cP} \sup_{\Pi \in
      \Delta(\cF)} \lowinf_{f \in \decisionset} S_\Pi^\eta(P,f)
      = \sup_{\Pi \in \Delta(\cF)} \sup_{P \in \cP} \lowinf_{f \in \decisionset}
      S_\Pi^\eta(P,f).
  \end{equation}
This establishes that the
  premise of \eqref{eqn:minimaxImplication} holds with $C =
  e^{\eta\epsilon}$ for all $\Pi \in \Delta(\cF)$. Hence
  Assumption~\ref{ass:minimax} tells us that the conclusion of
  \eqref{eqn:minimaxImplication} must also hold for all $\Pi \in
  \Delta(\cF)$, and therefore
  \begin{equation*}
    \sup_{\Pi \in \Delta(\cF)} \lowinf_{f \in \model}\sup_{P \in \cP}
    S_\Pi^\eta(P,f) \leq e^{\eta \epsilon}.
  \end{equation*}
  Since we are not guaranteed that the infimum over $f$ is achieved,
  this implies the $\eta$-predictor condition up to any $\delta >
  \epsilon$, but not necessarily for $\delta = \epsilon$. We thus obtain
  the first part of the theorem. 

  For Part (b), we note that, by the premise,
  Assumption~\ref{ass:simple} must hold and we can apply
  Corollary~\ref{cor:teeth} which tells us that for all $P \in \cP$,
  the $f^*_P \in \cF$ minimizing $R(P,f)$ is essentially unique and that the strong $\eta$-central condition holds, i.e. for all $P \in \cP$, (\ref{eqn:basiccomparatorconditionpre}) holds. As explained below (\ref{eqn:basiccomparatorconditionpre}), this implies that $f'_P = \comp(P)$ is $\cF$-optimal for $P$, hence it follows that $f'_P = f^*_P$, $P$-almost surely.  The
  strong version of Assumption~\ref{ass:preciseBayes} then implies
  that $\decisionset$ contains a $g^*_P$ with $P(\ell_{f^*_P} =
  \ell_{g^*_P}) = 1$. We now have, by the strong $\eta$-central condition, that for all $\Pi \in \Delta(\cF)$,
\begin{align*}
1 & \geq \sup_{P \in \cP} \lowinf_{f \in \model}  \sup_{\Pi \in
      \Delta(\cF)} S_\Pi^\eta(P,f) = \sup_{P \in \cP}  \sup_{\Pi \in
      \Delta(\cF)}  S_\Pi^\eta(P,f'_P) = 
\sup_{P \in \cP}  \sup_{\Pi \in
      \Delta(\cF)}  S_\Pi^\eta(P,g^*_P) \\ & \geq \sup_{P \in \cP} \lowinf_{f \in \decisionset}  \sup_{\Pi \in
      \Delta(\cF)} S_\Pi^\eta(P,f).
\end{align*}
We have thus established the first inequality of (\ref{eq:above}) with
$\epsilon = 0$; we can  now proceed as in the first part. 
\end{proof}

We proceed to identify more concrete conditions that are sufficient
for Assumption~\ref{ass:minimax}. To this end, we will endow the set of
finite measures (including all probability measures) on $\cZ$ with the
\emph{weak topology} \citep{Billingsley1968,vandervaart1996weak}, for which
convergence of a sequence of measures $P_1,P_2,\ldots$ to $P$ means that
\begin{equation}\label{eqn:weakconvergence}
  \E_{Z \sim P_n}[h(Z)] \to \E_{Z \sim P}[h(Z)]
\end{equation}
for any bounded, continuous function $h \colon \cZ \to \reals$. To make
continuity of $h$ well-defined, we then also need to assume a topology
on $\cZ$. It is standard to assume that $\cZ$ is a \emph{Polish space}
(i.e.\ that it is a complete separable metric space), because then, from \citet{Prokhorov1956},
there exists a metric for which the set of finite measures on $\cZ$ is a Polish space as well 
and for which convergence in this metric is equivalent to \eqref{eqn:weakconvergence}. 
The weak topology is the topology induced by this metric.

We shall also assume that $\cP$ is \emph{tight}, which means that,
for any $\epsilon > 0$, there must exist a compact event $A \subseteq
\cZ$ such that $P(A) \geq 1-\epsilon$ for all $P \in \cP$. This is a
weaker condition than assuming that the whole space $\cZ$ is compact
because it allows some probability mass outside of the compact event
$A$.

\begin{assumption}\label{ass:convexityContinuity}
  Suppose the set of possible outcomes $\cZ$ is a Polish space. Let
  $\dpfour$, 
  $\Pi \in \Delta(\model)$ and $\eta > 0$ be given. Then assume 
  that all of the following are satisfied:
  \begin{enumerate}
    \item \label{it:continuity} For all $f \in \cF \union \decisionset$, $\loss_f(z)$ is continuous in $z$ and
    $\loss_f(z) \geq 0$.
    \item \label{it:fconvexity} The set $\decisionset$ is convex and, for any $z \in \cZ$,
    $e^{\eta \loss_f(z)}$ is convex in $f$ on $\decisionset$.
    \item The set $\cP$ is convex and tight.
    \item \label{it:equicont} Either a) $\cP$ is closed in the weak topology; or b) $\decisionset$ is a totally
      bounded metric space, and, for every compact subset $\cZ'$ of $\cZ$, the family of functions $\{
      f \mapsto \loss_f(z) :  z \in \cZ'\}$ is uniformly
      equicontinuous on $\decisionset$.
    \item \label{it:uniformintegrability}
    The random variables
    $\xi_{Z,f} = \E_{g \sim \Pi}\left[e^{\eta\left(\loss_f(Z) -
    \loss_g(Z)\right)}\right]$ are \emph{uniformly integrable} over $f \in
    \decisionset, P \in \cP$ in the sense that
    \begin{equation}\label{eqn:uniformintegrability}
      \lim_{b \to \infty}  \sup_{f \in \decisionset, P \in \cP} \E_{Z
      \sim P} \left[\xi_{Z,f} \ind{\xi_{Z,f} \geq b} \right] = 0.
    \end{equation}
  \end{enumerate}
\end{assumption}
While these assumptions may look daunting, they actually hold in many
situations even with unbounded losses, as our examples below
illustrate. In \ref{ass:convexityContinuity}.\ref{it:continuity},
continuity is automatic for finite and countable $\cZ$ as long as we
take the discrete topology.  In
\ref{ass:convexityContinuity}.\ref{it:fconvexity}, convexity of
$e^{\eta \loss_f(z)}$ in $f$ is implied by convexity of $\loss_f(z)$
in $f$. Regarding the fourth requirement,
\ref{ass:convexityContinuity}.\ref{it:equicont}: the condition that
$\cP$ is weakly closed is easily stated but hard to verify for
general $\cZ$ and $\cP$; the alternative condition is hard to state
but often straightforward to verify. And finally,
\ref{ass:convexityContinuity}.\ref{it:uniformintegrability} will
automatically hold for all bounded loss functions and for many
unbounded losses as well; for a discussion of uniform integrability
as used in
\ref{ass:convexityContinuity}.\ref{it:uniformintegrability}, 
see \citet[pp.\,188--190]{shiryaev1996probability}. In particular,
Lemma~3 on p.\,190, specialised to our context, implies the
following sufficient condition:
\begin{lemma}[Sufficient Condition for
\ref{ass:convexityContinuity}.\ref{it:uniformintegrability}]
  \label{lem:sufficientUniformIntegrability}
  For a fixed choice of $\Pi \in \Delta(\model)$, let $\xi_{Z,f}$ be as
  in
  Assumption~\ref{ass:convexityContinuity}.\ref{it:uniformintegrability}.
  Then \eqref{eqn:uniformintegrability} is satisfied if 
  \begin{equation*}
      \sup_{f \in \decisionset} \sup_{P \in \cP} \ \E_{Z \sim P}
      \left[G(\xi_{Z,f})\right]  <
      \infty
  \end{equation*}
  for any function $G \colon \nonnegreals \to \reals$ that is bounded below
  and is such that
  \begin{equation}\label{eqn:Gconditions}
    \text{$\frac{G(t)}{t}$ is increasing, \quad and \quad
      $\frac{G(t)}{t} \to \infty$.}
  \end{equation}
\end{lemma}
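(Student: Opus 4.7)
The plan is to recognize this as the classical de la Vallée-Poussin criterion for uniform integrability, and prove it by the standard comparison argument that exploits the increasing ratio $G(t)/t$.

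First I would record the key pointwise inequality. Since $G(t)/t$ is increasing and $\xi_{Z,f} \geq 0$, on the event $\{\xi_{Z,f} \geq b\}$ we have $G(\xi_{Z,f})/\xi_{Z,f} \geq G(b)/b$, and hence (for $b$ large enough that $G(b) > 0$, which is possible since $G(t)/t \to \infty$)
\begin{equation*}
\xi_{Z,f}\, \ind{\xi_{Z,f} \geq b} \;\leq\; \frac{b}{G(b)}\, G(\xi_{Z,f})\, \ind{\xi_{Z,f} \geq b}.
\end{equation*}
Taking expectations under $Z \sim P$ then yields
\begin{equation*}
\E_{Z \sim P}\bigl[\xi_{Z,f} \ind{\xi_{Z,f} \geq b}\bigr] \;\leq\; \frac{b}{G(b)}\, \E_{Z \sim P}\bigl[G(\xi_{Z,f}) \ind{\xi_{Z,f} \geq b}\bigr].
\end{equation*}

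Next I would control the right-hand expectation uniformly. Writing $G(\xi)\ind{\xi \geq b} = G(\xi) - G(\xi)\ind{\xi < b}$ and using the lower bound $G \geq -K$ for some $K \geq 0$, we get $\E[G(\xi_{Z,f})\ind{\xi_{Z,f} \geq b}] \leq \E[G(\xi_{Z,f})] + K$. Setting $M := \sup_{f \in \decisionset, P \in \cP} \E_{Z \sim P}[G(\xi_{Z,f})]$, which is finite by hypothesis, gives
\begin{equation*}
\sup_{f \in \decisionset, P \in \cP} \E_{Z \sim P}\bigl[\xi_{Z,f} \ind{\xi_{Z,f} \geq b}\bigr] \;\leq\; \frac{b}{G(b)}\, (M + K).
\end{equation*}

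Finally, letting $b \to \infty$ and using the second property in \eqref{eqn:Gconditions}, namely $G(b)/b \to \infty$, i.e.\ $b/G(b) \to 0$, the right-hand side vanishes, which is precisely the uniform integrability statement \eqref{eqn:uniformintegrability}. The only mildly delicate step is handling the fact that $G$ is merely bounded below rather than nonnegative, which is dispatched by the $+K$ correction above; everything else is routine.
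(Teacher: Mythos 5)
Your proof is correct and follows essentially the same route as the paper's: both are the classical de la Vall\'ee-Poussin comparison argument exploiting the monotonicity of $G(t)/t$ to get $\xi\,\ind{\xi\geq b}\leq \tfrac{b}{G(b)}G(\xi)\,\ind{\xi\geq b}$ and then sending $b\to\infty$. The only cosmetic difference is in handling the lower bound on $G$ --- the paper replaces $G$ by $\max\{G,0\}$ at the outset (checking this preserves \eqref{eqn:Gconditions}), whereas you carry an explicit $+K$ correction; both are equivalent and fine.
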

We may, for instance, take $G(t) = t^2$ or $G(t) = t \log t$.

\begin{proof}
  Without loss of generality, we may assume that $G$ is non-negative.
  Otherwise replace $G(t)$ by $\max\{G(t),0\}$, which preserves
  \eqref{eqn:Gconditions} and adds at most $-\inf_t G(t) < \infty$ to
  $\sup_{f \in \decisionset}  \sup_{P \in \cP} \; \E_{Z \sim P}
  \left[G(\xi_{Z,f})\right]$.
  
  Now let $M = \sup_{f \in \decisionset} \sup_{P \in \cP}\; \E_{Z \sim P}
  \left[G(\xi_{Z,f})\right]$ and, for any $\epsilon > 0$, take $b > 0$
  large enough that $G(t)/t \geq M/\epsilon$ for all $t \geq b$. Then
  \begin{align*}
    0 \leq \sup_{f \in \decisionset} \sup_{P \in \cP}\/ \E_{Z
      \sim P} \left[\xi_{Z,f}\ind{\xi_{Z,f} \geq b}\right]
    &\leq \frac{\epsilon}{M} \sup_{f \in \decisionset} \sup_{P \in \cP}\/ \E_{Z
      \sim P} \left[G(\xi_{Z,f})\ind{\xi_{Z,f} \geq b}\right]\\
    &\leq \frac{\epsilon}{M} \sup_{f \in \decisionset} \sup_{P \in \cP}\/ \E_{Z
      \sim P} \left[G(\xi_{Z,f})\right]
    \leq \epsilon,
  \end{align*}
  from which \eqref{eqn:uniformintegrability} follows by letting
  $\epsilon$ tend to $0$.
\end{proof}

Assumption~\ref{ass:convexityContinuity} is sufficient for the minimax
assumption, as our main technical result of this section (proof
deferred to \cref{app:minimaxproof}) shows:
\begin{lemma}\label{lem:convexityContinuityToMinimax}
  Fix $\dpfour$ and $\eta > 0$.
  If Assumption~\ref{ass:convexityContinuity} is satisfied for a given
  $\Pi \in \Delta(\cF)$, then \eqref{eqn:minimaxEquality} also holds.
  Consequently, if Assumption~\ref{ass:convexityContinuity} is satisfied
  for all $\Pi \in \Delta(\cF)$, then that implies
  Assumption~\ref{ass:minimax}.
\end{lemma}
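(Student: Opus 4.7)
The plan is to reduce the minimax identity \eqref{eqn:minimaxEquality} to a standard application of Sion's minimax theorem, by verifying the convexity/compactness/semicontinuity hypotheses under Assumption~\ref{ass:convexityContinuity}. First note the two structural facts that are essentially free: $P \mapsto S_\Pi^\eta(P,f) = \E_{Z\sim P}[\xi_{Z,f}]$ is affine (hence quasi-concave) in $P$, and $f \mapsto S_\Pi^\eta(P,f)$ is convex (hence quasi-convex) in $f$ because $f \mapsto e^{\eta \loss_f(z)}$ is convex on $\decisionset$ by Assumption~\ref{ass:convexityContinuity}.\ref{it:fconvexity} and $e^{-\eta \loss_g(z)} \geq 0$, so convexity is preserved under the average over $g \sim \Pi$ and the expectation over $Z$. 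The sets $\decisionset$ and $\cP$ are convex by Assumption~\ref{ass:convexityContinuity}.\ref{it:fconvexity} and \ref{ass:convexityContinuity}.3.

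The main work is semicontinuity and compactness, and the two alternatives in Assumption~\ref{ass:convexityContinuity}.\ref{it:equicont} correspond to two different applications of Sion's theorem. In case (a), tightness of $\cP$ together with Prokhorov's theorem yields relative compactness in the weak topology, and combining this with weak closedness gives that $\cP$ is weakly compact. To apply Sion's theorem with $\cP$ playing the role of the compact side, I will show that for each fixed $f$, $P \mapsto S_\Pi^\eta(P,f)$ is weakly upper semicontinuous (in fact continuous). The function $z \mapsto \xi_{z,f}$ is continuous on $\cZ$ because $\loss_f$ and $\loss_g$ are continuous in $z$ and $\Pi$-integration preserves continuity (by dominated convergence on compact subsets, together with uniform integrability). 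Then the uniform integrability condition \eqref{eqn:uniformintegrability} combined with weak convergence $P_n \to P$ gives $\E_{P_n}[\xi_{\cdot,f}] \to \E_P[\xi_{\cdot,f}]$ via the standard Portmanteau/uniform-integrability argument.

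In case (b), one cannot apply Sion's theorem to $\cP$ directly since $\cP$ need not be closed; instead I will use $\decisionset$ as the compact side. Since $\decisionset$ is totally bounded, its completion $\overline{\decisionset}$ in the ambient metric is compact, and the uniform equicontinuity of $\{f \mapsto \loss_f(z) : z \in \cZ'\}$ on each compact $\cZ' \subseteq \cZ$, together with tightness of $\cP$ and uniform integrability, will allow me to extend $S_\Pi^\eta(P, \cdot)$ to a continuous function on $\overline{\decisionset}$ for each $P \in \cP$, and to verify that this extension is jointly well-behaved. Applying Sion's theorem on $\overline{\decisionset} \times \cP$ yields the minimax equality there, and then by uniform equicontinuity the infimum over the closure agrees with the infimum over $\decisionset$ (and symmetric arguments for the supremum over $\cP$), so the identity descends to $\decisionset \times \cP$.

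The main obstacle, and the place where Assumption~\ref{ass:convexityContinuity}.\ref{it:uniformintegrability} earns its keep, is the continuity of $P \mapsto S_\Pi^\eta(P,f)$ in the weak topology when $\xi_{Z,f}$ may be unbounded; without uniform integrability, weak convergence of $P_n$ to $P$ would not guarantee convergence of the expectations, and both cases of Assumption~\ref{ass:convexityContinuity}.\ref{it:equicont} would fail. Once both cases produce \eqref{eqn:minimaxEquality} for the given $\Pi$, the final ``consequently'' statement is immediate: if \eqref{eqn:minimaxEquality} holds for every $\Pi \in \Delta(\cF)$, then Assumption~\ref{ass:minimax} is satisfied because the identity trivially implies the one-sided implication \eqref{eqn:minimaxImplication}.
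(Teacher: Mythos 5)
Your high-level idea (apply a minimax theorem directly, using affineness in $P$ and convexity in $f$) is in the right spirit, but the specific choice of Sion's theorem creates a concrete gap in case (a). Sion's theorem requires, besides quasi-concavity and upper semicontinuity in $P$ on the compact side, \emph{lower semicontinuity} of $f \mapsto S_\Pi^\eta(P,f)$ on $\decisionset$. You verify the former but never the latter, and it is not available: in case (a) of Assumption~\ref{ass:convexityContinuity}, $\decisionset$ carries no topology at all (the metric on $\decisionset$ is only postulated in case (b)), so lower semicontinuity in $f$ is not even well-posed, and convexity of $e^{\eta\ell_f(z)}$ in $f$ does not imply any semicontinuity on an infinite-dimensional or boundary-heavy convex set. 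The paper avoids this by invoking a Ferguson-type minimax theorem (via Gr\"unwald and Dawid) that optimizes over \emph{randomized} decisions $\rho \in \Delta(\decisionset)$ and only needs $L(z,f)$ to be bounded above and upper semicontinuous in $z$ — no regularity in $f$ is demanded. It then de-randomizes using the Jensen step $\xi_{Z,\E_{f\sim\rho}[f]} \leq \E_{f\sim\rho}[\xi_{Z,f}]$ enabled by Assumption~\ref{ass:convexityContinuity}.\ref{it:fconvexity}, which is precisely why that convexity hypothesis is there. If you insist on a deterministic minimax theorem you could replace Sion by Kneser's or Ky Fan's theorem, which drop the semicontinuity requirement on the non-compact side (you only need usc and concavity on the compact $\cP$ side, both of which you have since $S$ is affine in $P$), but as written the proposal rests on a theorem whose hypotheses are not met.

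Two further points. First, you do not address the possibility that $S_\Pi^\eta(P,f)$ is $+\infty$; the paper truncates to $L(z,f) = \xi_{z,f} \wedge b$ before applying the minimax theorem (so that the boundedness-above hypothesis is met) and only then removes the truncation via Assumption~\ref{ass:convexityContinuity}.\ref{it:uniformintegrability}. You identify uniform integrability as doing work but deploy it only for weak continuity in $P$, not for the boundedness issue. Second, in case (b) your plan to complete $\decisionset$ to a compact $\overline{\decisionset}$ and extend $S_\Pi^\eta(P,\cdot)$ there is workable in principle but is a significantly heavier construction than what the paper does: the paper takes a finite $\delta$-net $\discdecisionset \subset \decisionset$, observes that $\min_{f\in\discdecisionset} \E_{Z\sim P}[L(Z,f)]$ is continuous in $P$ (so the supremum over $\closedcP$ equals that over $\cP$), and controls the approximation error to $\decisionset$ using equicontinuity — this stays entirely inside $\decisionset$ and avoids having to extend $\loss$ beyond $\cF_\ell$.
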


Together, Theorem~\ref{thm:minimaxSufficient} and
Lemma~\ref{lem:convexityContinuityToMinimax} prove the following
theorem. 
\begin{theorem}\textnormal{({\bfseries Central to Predictor})}\label{thm:secondmain}
  Let $\eta > 0$ and suppose Assumption~\ref{ass:convexityContinuity}
  holds for $\dpfour$ for all $\Pi \in \Delta(\model)$. If either $\cF
  = \decisionset$ or the strong version
  of Assumption~\ref{ass:preciseBayes} holds and $\cF \supset \decisionset$, then the weak
  $\eta$-central condition implies the weak $\eta$-predictor
  condition.
\end{theorem}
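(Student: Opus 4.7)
The plan is to observe that this theorem is essentially a corollary obtained by chaining the two results that immediately precede it in the excerpt, namely Lemma~\ref{lem:convexityContinuityToMinimax} and Theorem~\ref{thm:minimaxSufficient}. All of the heavy lifting has already been packaged: the lemma converts the concrete analytic hypotheses in Assumption~\ref{ass:convexityContinuity} into the abstract minimax implication of Assumption~\ref{ass:minimax}, and the theorem then uses that implication to pass from the central condition to the predictor condition.

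Concretely, I would first invoke Lemma~\ref{lem:convexityContinuityToMinimax}: since by hypothesis Assumption~\ref{ass:convexityContinuity} holds for $\dpfour$ for every $\Pi \in \Delta(\cF)$, the lemma yields the minimax identity \eqref{eqn:minimaxEquality} for every such $\Pi$, and hence Assumption~\ref{ass:minimax} is satisfied. With that assumption in hand, I would split into the two cases allowed by the statement. In the case $\cF = \decisionset$, I apply Part~(a) of Theorem~\ref{thm:minimaxSufficient}: assuming the weak $\eta$-central condition, it holds up to every $\epsilon > 0$, so the theorem gives the $\eta$-predictor condition up to every $\delta > \epsilon$, i.e.\ up to every $\delta > 0$, which is exactly the weak $\eta$-predictor condition. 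In the case $\cF \supset \decisionset$ with the strong version of Assumption~\ref{ass:preciseBayes}, I apply Part~(b) of Theorem~\ref{thm:minimaxSufficient}, which directly delivers the weak $\eta$-predictor condition for $\dpfour$ (and, via Fact~\ref{fact:smpc}, for $\dpfoursame$ if desired).

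Because the argument is modular, there is really no hard step at the level of this theorem --- the genuine work happens inside Lemma~\ref{lem:convexityContinuityToMinimax}, whose proof is deferred to the appendix and relies on a Sion-type minimax argument, for which the continuity, convexity, tightness and uniform-integrability clauses of Assumption~\ref{ass:convexityContinuity} are tailored (tightness plus either closedness or the equicontinuity alternative yields compactness of $\cP$ in a suitable topology, convexity of $e^{\eta \loss_f(z)}$ in $f$ yields the convex-concave structure of $S_\Pi^\eta$, and uniform integrability ensures that $P \mapsto \E_{Z \sim P}[\xi_{Z,f}]$ is actually continuous so that the minimax value is attained). The only small point to verify carefully is the transition from ``up to $\epsilon$ for all $\epsilon>0$'' on the central side to ``up to $\delta$ for all $\delta>0$'' on the predictor side, which is exactly the $\delta > \epsilon$ slack appearing in Theorem~\ref{thm:minimaxSufficient}(a) and causes no difficulty when both conditions are stated in their weak (``for all positive slack'') form.
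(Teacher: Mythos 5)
Your proposal is correct and matches the paper exactly: the paper states immediately before Theorem~\ref{thm:secondmain} that ``Together, Theorem~\ref{thm:minimaxSufficient} and Lemma~\ref{lem:convexityContinuityToMinimax} prove the following theorem,'' which is precisely the chaining you describe (and the slack handling you flag is indeed already recorded in the ``In particular'' clause of Theorem~\ref{thm:minimaxSufficient}(a)).
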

We now provide some examples which indicate that while Assumption~\ref{ass:convexityContinuity}
covers several nontrivial cases --- including non-compact $\cF$ --- it
is probably still significantly more restrictive than needed.
\begin{myexample}[Logarithmic Loss]\label{ex:loglossassD}
  Consider a  set of distributions $\cP$ on some set $\cZ$ and
  let $\cF$ either be the densities or mass functions corresponding to $\cP$
  or an arbitrary convex set of densities on $\cZ$. By Example~\ref{ex:log-loss},
  $(\logloss,\cP,\cF)$ satisfies the $1$-central condition. If we
  further assume that $\cP$ is convex and tight and that there is a $\delta > 0$
  such that for all $z \in \cZ$, all $f \in \cF$, $f(z) \geq \delta$
  (so that the densities are bounded from below), then
  Assumption~\ref{ass:convexityContinuity} is readily verified and we
  can conclude from the theorem that the $1$-predictor condition and
  hence $1$-stochastic mixability holds for $\dpfoursame$. We know
  however, because log-loss is $1$-(Vovk-) mixable, that
  $1$-stochastic mixability must even hold if $\cP$ is neither convex
  nor tight; Assumption~\ref{ass:convexityContinuity}
  is not weak enough to handle this case, so the example suggests
  that a further weakening might be possible. Also, we know that
  $1$-stochastic mixability continues to hold if $\delta = 0$;
  verification of Assumption~\ref{ass:convexityContinuity} is not
  straightforward in this case, which suggests that a simplification
  of the assumption is desirable. 
\end{myexample}
\begin{myexample}[$0/1$-Loss ,
  Example~\ref{ex:prototsybakov}, Continued.] \label{ex:protototo}
  Consider the setting of Example~\ref{ex:prototsybakov} and
  Example~\ref{ex:protototsybakov} with  decision problem
  $(\zoloss,\cP_{\delta}, \cF)$ and $\delta > 0$. We established in
  Example~\ref{ex:prototsybakov} that the $\eta$-central
  condition then holds for some $\eta > 0$, but also, in
  Example~\ref{ex:protototsybakov}, that $(\zoloss,\cP_{\delta},
  \cF,\cF)$ is not $\eta$-stochastically mixable. We would thus expect
  Assumption~\ref{ass:convexityContinuity} to fail here, which it does, 
  since $\cF = \decisionset$ is not convex. 
\end{myexample}
\begin{myexample}[Squared Loss, Restricted Domain]\label{ex:bounded-sq-pc}
  Let  $\loss$  be the squared loss $\sqloss_f(z) :=
  \frac{1}{2}(z-f)^2$ on the restricted spaces $\cZ = \model =
  \decisionset = [-B,B]$ as in Example~\ref{ex:mixable},  and  take $\cP$ to be the set of all
  possible distributions on $\cZ$.
  Then the first three requirements of
  Assumption~\ref{ass:convexityContinuity} may be verified by observing
  that $\sqloss_f(z)$ (and therefore also $e^{\eta \sqloss_f(z)}$) is
  convex in $f$, and that $\cP$ is trivially tight by taking $A = \cZ$.
  Now $\cP$ is actually closed in the weak topology, but, in order to
  satisfy the fourth condition, we might also use that the mappings $\{
  f \mapsto \sqloss_f(z) : z \in \cZ \}$ are all Lipschitz with the
  same Lipschitz constant ($2B$), which implies that they are also
  uniformly equicontinuous. Finally, to see that the fifth requirement is 
  satisfied for any $\Pi \in \Delta(\model)$, we may appeal to
  Lemma~\ref{lem:sufficientUniformIntegrability} with $G(t) = t^2$ and
  use that $\sqloss$ is uniformly bounded.
  
  Then all parts of Assumption~\ref{ass:convexityContinuity} are
  satisfied for all $\Pi \in \Delta(\model)$. We know from
  Example~\ref{ex:mixable} that in this case classical
  $\eta$-mixability holds for $\eta = 1/B^2$. This implies strong
  $\eta$-stochastic mixability, which implies the strong
  $\eta$-pseudoprobability convexity condition (by
  Proposition~\ref{prop:fromsmtoppcc}). Since
  Assumption~\ref{ass:simple} holds, this in turn implies the strong
  $\eta$-central condition (by
  Theorem~\ref{thm:convexfacetocomparator}), and by applying
  Theorem~\ref{thm:secondmain} one can then infer the weak
  $\eta$-predictor condition.
\end{myexample} In the example above, the set $\cP$ was convex and,
by boundedness of $\cZ$, automatically tight and thus the
$\eta$-central condition and $\eta$-stochastic mixability both hold. In
Example~\ref{ex:unbounded-sq-cc} we established the $\eta$-central
condition for a set $\cP$ that is neither convex nor tight, so
Assumption~\ref{ass:convexityContinuity} fails and we cannot apply
Theorem~\ref{thm:secondmain} to jump from the $\eta$-central to the
$\eta$-predictor condition as in
Example~\ref{ex:bounded-sq-pc}. However, as the next example shows,
if we replace $\cP$ by its convex hull for a restricted range of
$\mu$, then we can recover the predictor condition via
Theorem~\ref{thm:secondmain} after all; restriction of $\cF$, however, is not needed. 
\begin{myexample}[Squared Loss, Unrestricted Domain: Example~\ref{ex:unbounded-sq-cc}, Continued.]\label{ex:unbounded-sq-pc}
Consider the squared loss $\sqloss_f(z) = \half(z-f)^2$, and  let
$\cZ = \reals$, $\cF= [-B,B]$ (later we will consider $\cF = \reals$), and let $\cP =
\convhull(\{\normaldist(\mu,1) : \mu \in [-M,M]\})$ be the convex
hull of the set of normal distributions with unit variance and means
bounded by $M \leq B$. We may represent any $P \in \cP$ as a mixture of
$\normaldist(\mu,1)$ under some distribution $w$ on $\mu$. Let $\mu_P$
be the mean of $P$. Then, for all $P \in \cP$ with corresponding $w$ and
all $t \in \reals$, 
\begin{equation*}
  \E_{Z\sim P}\left[e^{t(Z-\mu_P)}\right]
    = \int_{-M}^M \E_{Z\sim
    \normaldist(\mu,1)}\left[e^{t(Z-\mu_P)}\right] \der w(\mu)
  = e^{t^2/2} \int_{-M}^M e^{t(\mu-\mu_P)} \der w(\mu)
  \leq e^{t^2/2} e^{t^2 M^2/2},
\end{equation*}
where the last inequality follows from Hoeffding's bound on the moment
generating function and the observation that $\mu_P = \E_{\mu \sim w}[\mu]$. Thus
the elements of $\cP$ are all subgaussian with variance $\sigma^2 = 1 +
M^2$. Hence, by the argument in Example~\ref{ex:dunno}, the strong
$\eta$-central condition is satisfied for $\eta \leq 1/(1+M^2)$ and with substitution function $\comp(P) = \mu_P$.

In order to also get the predictor condition via Theorem~\ref{thm:secondmain}, we need to
verify Assumption~\ref{ass:convexityContinuity}. The first three parts
of this assumption may be readily verified, and part b) of
\ref{ass:convexityContinuity}.\ref{it:equicont} also holds, because the
mappings $\{f \mapsto \half (z-f)^2 : z \in [-A,A]\}$ are all
$(2A)$-Lipschitz, which implies their uniform
equicontinuity, for any choice of $A$.
Finally,
Assumption~\ref{ass:convexityContinuity}.\ref{it:uniformintegrability}
follows from Lemma~\ref{lem:sufficientUniformIntegrability} with $G(t) =
t^2$ and Jensen's inequality:
\begin{align*}
\sup_{f \in \model} &\sup_{P \in \cP} 
  \E_{Z \sim P} \left[\E_{g \sim \Pi} [e^{\eta (\sqloss_f(Z) -
  \sqloss_g(Z))}]\right]^2
 \leq \sup_{f \in \model} \sup_{P \in \cP} 
  \E_{Z \sim P} \E_{g \sim \Pi} \left[e^{2\eta (\sqloss_f(Z) -
  \sqloss_g(Z))}\right]\\
 &\leq \sup_{f,g \in \model} \sup_{P \in \cP} 
  \E_{Z \sim P} \left[e^{2\eta (\sqloss_f(Z) -
  \sqloss_g(Z))}\right]
 = \sup_{f,g \in \model} \sup_{P \in \cP} 
  \E_{Z \sim P} \left[e^{2\eta (f^2 + 2Z(g-f) - g^2)}\right]\\
 &\leq e^{2\eta B^2} \sup_{f,g \in \model} \sup_{P \in \cP} 
  \E_{Z \sim P} \left[e^{4\eta Z(g-f)}\right]
 \overset{(*)}{\leq} e^{2\eta B^2}  \sup_{f,g \in \model} \sup_{P \in \cP} 
e^{8 \eta^2 (g-f)^2 (1+M^2) +
 4\eta(g-f) \mu_P} < \infty,
\end{align*}
where $(*)$ follows from $(1 + M^2)$-subgaussianity.
Thus, Theorem~\ref{thm:secondmain} can be applied to establish the weak
$\eta$-predictor condition for squared loss on an unbounded domain $\cZ
= \reals$ for the choices of $\eta$, $\decisionset = \model$ and $\cP$
described above.

Now consider the case where we set $\cF = \reals = \cZ$ and leave
everything else unchanged. Then by the argument in Example~\ref{ex:dunno}, the strong
$\eta$-central condition is still satisfied for $\eta \leq 1/(1+M^2)$,
but we cannot directly use Theorem~\ref{thm:secondmain} to establish the weak
predictor condition for $\dpfoursame$. All steps of the above
reasoning go through except part b) of
\ref{ass:convexityContinuity}.\ref{it:equicont}, since $\cF$ is no longer compact.
However, if we take $\decisionset = [-B,B]$ for $B\geq M$,
then  Assumption~\ref{ass:convexityContinuity}.\ref{it:equicont} (which only refers to
$\decisionset$, not to $\cF$) holds after all. Moreover, the strong
version of Assumption~\ref{ass:preciseBayes} also holds, because
$\argmin_{f \in \reals} \E_{Z \sim P}(Z-f)^2 = \mu_P$.
We can thus use
Theorem~\ref{thm:secondmain} to conclude that $\dpfour$ satisfies the
weak $\eta$-predictor condition. 
It then follows by Fact~\ref{fact:smpc}
that $\dpfoursame$ satisfies the weak $\eta$-predictor condition as
well. We conclude that 
the implication $\eta$-central $\Rightarrow$ weak $\eta$-predictor goes
through, even though $\cF$ is not compact.  
\end{myexample}
This final example shows how Theorem~\ref{thm:secondmain} allows us to
find assumptions on $\cP$ that are sufficient for establishing the weak
predictor condition, and therefore weak stochastic mixability, for
squared loss on the unbounded domain $\reals$. As discussed by
\citet[Section~5]{vovk2001competitive}, this is a case where the
classical mixability analysis does not apply.

\section{Intermediate Rates: The Central Condition, the Margin Condition and the Bernstein condition}
\label{sec:comp-mix-margin}
In this section, we weaken the $\eta$-central and $\eta$-PPC
conditions to the $\gComparator$-central and $\gComparator$-PPC
conditions, which allow $\eta = v(\epsilon)$ to depend on $\epsilon$
according to a function $v$ that is allowed to go to $0$ as $\epsilon$ goes
to $0$.
 In the main result of
this section, Theorem~\ref{thm:BernsteinComparator} in
Section~\ref{sec:v-conditions}, we establish that
for bounded loss functions, these weakened versions of our conditions
are essentially equivalent to a generalized {\em Bernstein
  condition\/} which has been used before to
  characterize
  fast rates. Section~\ref{sec:comparator-vs-bernstein} shows that,
  for unbounded loss functions, the one-sidedness of our conditions
  allows them to capture situations in which fast rates are attainable
  yet the Bernstein condition does not hold --- although there are
  also situations in which the Bernstein condition holds whereas the
  $v$-central condition does not for any allowed $v$ (although the
  $v$-PPC condition does). Thus, as a corollary we find that the
  equivalence between the central and PPC condition breaks for the
  weaker, $v$-versions of these conditions.
  Section~\ref{sec:JRTother} illustrates that $\eta$-stochastic
  mixability can be weakened similarly to $\gComparator$-stochastic
  mixability and relates this to a condition identified by
  \cite{juditsky2008learning}. Finally, in Section~\ref{sec:nonunique}
  we apply Theorem~\ref{thm:BernsteinComparator} to show how the
  central condition is related to (non-) existence of unique risk
  minimizers.

\subsection{The $v$-Conditions and the Bernstein Condition}
\label{sec:v-conditions}

Empirical risk minimization (ERM) achieves fast rates if the random deviations
of the empirical excess risk are small compared to the true excess risk.
As shown by \citet{tsybakov2004optimal}, this is the
case in classification if the Bayes-optimal classifier is in the model
$\model$ and the so-called \emph{margin}, which measures the difference
between the conditional probabilities of the labels given the features
and the uniform distribution, is large. Technically, the random
deviations can be controlled in this case, because the second moment of
the excess loss can be bounded in terms of the first moment. In fact, as
shown by \citet{bartlett2006empirical}, this condition, which they call
the \emph{Bernstein condition}, is sufficient for fast rates for bounded
losses in general, even if the Bayes-optimal decision is not in the
model. Precisely, the standard Bernstein condition is defined as
follows:
\begin{definition}[Bernstein Condition]\label{def:bernstein}
Let $\beta \in (0, 1]$ and $B \geq 1$. Then $(\loss, P, \cF)$ satisfies the
\emph{$(\beta, B)$-Bernstein condition} if there exists an $f^* \in
\model$ such that
\begin{align}\label{eqn:bernstein}
\E_{Z \sim P} \left[ \bigl( \loss_f(Z) - \loss_{f^*}(Z) \bigr)^2 \right]
\leq B \left( \E_{Z \sim P} \left[ \loss_f(Z) - \loss_{f^*}(Z) \right] \right)^\beta
\qquad \text{for all $f \in \model$.}
\end{align}
\end{definition}
This standard definition bounds the second moment in terms of the
polynomial function $\gBernstein(x) = Bx^\beta$ of the first moment.\footnote{The Tsybakov condition with exponent $q$
\citep{tsybakov2004optimal} is the  special case that the
$(\beta,B)$-Bernstein condition holds for $B< \infty$, $q =
\beta/(1-\beta)$, 
additionally requiring $\ell$ to be classification
loss and $\cF$ to contain the Bayes classifier for $P$.} The exponent
$\beta$ is most important, because it determines the order of the
rates, whereas the scaling factor $B$ only matters for the
constants. To draw the connection with the central condition,
however, it will be clearer to allow general functions $\gBernstein$ instead of $x \mapsto B x^\beta$.
Following \citet{koltchinskii2006local} and
\citet{ArlotBartlett2011}, we then bound the variance instead of the
second moment, which is equivalent with respect to the rates that
can be obtained:
\begin{definition}[Generalized Bernstein Condition]\label{def:gen-bernstein}
  Let $\gBernstein: [0,\infty) \to [0,\infty)$ be a non-decreasing
  function such that $\gBernstein(x) > 0$ for all $x > 0$, and
  $\gBernstein(x)/x$ is non-increasing. We say that $\dpthree$
  satisfies the $\gBernstein$-\emph{Bernstein condition} if, for all
  $P \in \cP$, there exists an $\cF$-optimal $f^* \in \cF$ (satisfying
  (\ref{eq:bayesact})) such that
\begin{align}\label{eqn:gen-bernstein}
  \Var_{Z \sim P}\big( \loss_f(Z) - \loss_{f^*}(Z)\big) \leq \gBernstein
  \left( \E_{Z \sim P} \left[ \loss_f(Z) - \loss_{f^*}(Z) \right]
  \right)\qquad
\text{for all $f \in \model$}.
\end{align}
\end{definition}
In particular $\gBernstein(x) = B x^\beta$ is allowed for $\beta \in
[0,1]$, or, more generally, it is sufficient if $\gBernstein(0) = 0$
and $\gBernstein$ is a non-decreasing concave function, because then
the slope $\gBernstein(x)/x = (\gBernstein(x) - \gBernstein(0))/x$ is
non-increasing; for a concrete example see Example~\ref{ex:tsybakov}
below.

Similar generalizations have been proposed by
\citet{koltchinskii2006local} and \citet{ArlotBartlett2011}\footnote{They require $\gBernstein$ to be of the form $w^2$
where $w$ is a concave increasing function with $w(0) = 0$.  In their
examples, $w^2$ is also concave, a case which is subsumed by our
condition, but they additionally allow concave $w$ with convex $\gBernstein =
w^2$, which is not covered by our condition. On the other hand, our
condition allows $\gBernstein$ with non-concave $\sqrt{\gBernstein}$,
which is not covered by theirs. For example, $\gBernstein(x) =
(x-1/3)^3+ 1/27$ for $x \leq 1/2$ and $\gBernstein(x) = x/12$ for $x > 1/2$
satisfies our condition, but $\sqrt{\gBernstein(x)}$ is nonconcave. So, in
general, the conditions are incomparable.}.
For bounded losses, our generalized Bernstein condition is equivalent
to a generalization of the central condition in which $\eta =
\gComparator(\epsilon)$ is allowed to depend on $\epsilon$ according
to some function $\gComparator$, which in turn is equivalent to the
analogous generalization of the pseudoprobability-convexity
condition. We first introduce these generalized concepts and then show
how they are related to the Bernstein condition. They are defined as
immediate generalizations of their corresponding definitions,
Definition~\ref{def:comparator}, Equation \eqref{eqn:comparator} and
Definition~\ref{def:convex-face}, Equation
\eqref{eqn:strong-convexface}:
\begin{definition}[$\gComparator$-Central Condition and
  $\gComparator$-PPC
  Condition]\label{def:g-stoch-mix}
Let $\gComparator \colon \nonnegreals \to \nonnegreals$ be a bounded,
non-decreasing function satisfying $\gComparator(x) > 0$ for all $x > 0$. We
say that $\dpthree$ satisfies the
\emph{$\gComparator$-central condition} if, for all $\epsilon \geq 0$,
there exists a function $\phi: \cP \rightarrow
\cF$ such that \eqref{eqn:comparator} is satisfied with $\eta =
\gComparator(\epsilon)$. We
say that $\dpthree$ satisfies the
\emph{$\gComparator$-pseudoprobability convexity (PPC) condition} if, for all $\epsilon \geq 0$,
there exists a  function $\pred: \cP \rightarrow
\cF$ such that \eqref{eqn:strong-convexface} is satisfied with $\eta =
\gComparator(\epsilon)$. 
\end{definition}
If $\gComparator(x) = \eta$ for all $x > 0$ and $\gComparator(0) = 0$, then
the $\gComparator$-central condition is equivalent to the
weak $\eta$-central condition. If $\gComparator(x) = \eta$ for all $x \geq 0$,
then it is equivalent to the strong $\eta$-central condition.

Now consider a decision problem $\dpthree$ such that
Assumption~\ref{ass:simple}
holds. Theorem~\ref{thm:BernsteinComparator} below in conjunction with Proposition~\ref{prop:comparatortoconvexface}  implies that the
generalized Bernstein condition with function $\gBernstein$, the
$\gComparator$-central condition and the $\gComparator$-PPC condition
are then all equivalent for bounded losses in the sense that one
implies the other if
\begin{equation}\label{eq:aad}
  \gComparator(x) \cdot \gBernstein(x) = c \cdot x \qquad \text{for all sufficiently small $x$,}
\end{equation}
where $c$ is a constant whose value depends on whether we are going from
Bernstein to central or the other way around. In particular, if we
ignore the unimportant difference between the second moment of
$\loss_f(Z) - \loss_{f^*}(Z)$ and its variance, we see that the
$(1,B)$-Bernstein condition and the $\eta$-central condition are
equivalent for $\eta = c/B$.

Define the function $\kappa(x) := (e^x - x - 1)/x^2$ for $x \neq 0$,
extended by continuity to $\kappa(0)= 1/2$, which is positive and
increasing \citep{Freedman1975}. 

\begin{theorem}\label{thm:BernsteinComparator}
  For given \dpthree, suppose that the losses
  $\loss_f$ take values in $[0,a]$.
\begin{enumerate} \item If the $\gBernstein$-Bernstein condition
  holds for a function $\gBernstein$ satisfying the requirements of
  Definition~\ref{def:gen-bernstein} (so that Assumption~\ref{ass:simple} holds), then
  \begin{enumerate}
  \item[(a)] The $\gComparator$-central
  condition holds for
  \begin{equation*}
    \gComparator(x) = \frac{c_1^b x}{\gBernstein(x)} \bmin b,
  \end{equation*}
  where $b > 0$ can be any finite constant and $c_1^b = 1/\kappa(2b a)$; and if $\gBernstein(0) = 0$ we
  read $0/\gBernstein(0)$ as $\liminf_{x \downarrow 0} x/\gBernstein(x)$. \\
  \item[(b)] Additionally, for each $P \in \cP$, any $\cF$-optimal $f^*$
  for $P$, and any $\delta > 0$, we have $\E_{Z \sim P} [ e^{\eta
    (\ell_{f^*}(Z)- \ell_{f}(Z))} ] \leq 1$ for all $f$ with $R(P,f) -
  R(P,f^*) \geq \delta$, where $\eta = \gComparator(\delta)$.
  \end{enumerate}
\item 
  On the other hand, suppose that Assumption~\ref{ass:simple} holds. If the $\gComparator$-pseudoprobability convexity  condition holds for a function
  $\gComparator$ satisfying the requirements of
  Definition~\ref{def:g-stoch-mix} such that $x/\gComparator(x)$ is nondecreasing,
then the $\gBernstein$-Bernstein  condition holds for
  \begin{equation*}
    \gBernstein(x) = \frac{c_2 x}{\gComparator(x)},
  \end{equation*}
  where $c_2 = 6/\kappa(-2b a)$ for $b = \sup_x \gComparator(x) <
  \infty$; and if $\gComparator(0) = 0$ we read $0/\gComparator(0)$ as $\lim_{x \downarrow 0} x/\gComparator(x)$.
\end{enumerate}
\end{theorem}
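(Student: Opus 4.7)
The engine for both directions will be the classical Bennett-style moment generating function bound for a bounded random variable. Let $Y := \ell_f - \ell_{f^*}$ and $d := \E_{Z \sim P}[Y] = R(P,f) - R(P,f^*) \geq 0$. The plan is to center $Y$ first and then apply the pointwise identity $e^x = 1 + x + x^2\kappa(x)$ together with monotonicity of $\kappa$ to $-\eta(Y - \E Y) \leq 2\eta a$, which should yield
\begin{equation*}
\log \E_{Z \sim P}\bigl[e^{\eta(\ell_{f^*}(Z) - \ell_f(Z))}\bigr] \leq -\eta d + \eta^2\,\Var_{Z \sim P}(Y)\,\kappa(2\eta a).
\end{equation*}
The appearance of $2\eta a$ (rather than $\eta a$) inside $\kappa$ is precisely what will match the $\kappa(2ba)$ appearing in the theorem, and is exactly the payoff of centering before expanding.

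\textbf{Direction 1 (Bernstein $\Rightarrow$ central).} I would insert the $u$-Bernstein bound $\Var(Y) \leq u(d)$ into the display above, fix $\epsilon \geq 0$, and take $\eta = v(\epsilon) = (c_1^b \epsilon/u(\epsilon)) \wedge b$. Since $\eta \leq b$, we have $\kappa(2\eta a) \leq \kappa(2ba) = 1/c_1^b$. I then split into two cases. In the case $d \geq \epsilon$, monotonicity of $u(x)/x$ gives $u(d)/d \leq u(\epsilon)/\epsilon$, hence $\eta u(d) \leq c_1^b d$, and the right-hand side collapses to $-\eta d + \eta d = 0$; this proves part~(b) upon setting $\epsilon = \delta$. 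In the case $d < \epsilon$, monotonicity of $u$ itself gives $u(d) \leq u(\epsilon)$, hence $\eta u(d)\kappa(2\eta a) \leq \epsilon$, and the bound becomes $-\eta d + \eta\epsilon \leq \eta\epsilon$. Combining the two cases yields exactly the $v$-central condition up to $\epsilon$, which is part~(a).

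\textbf{Direction 2 (PPC $\Rightarrow$ Bernstein).} I would fix $f$, set $d := R(P,f) - R(P,f^*)$, and apply the $\gComparator$-PPC condition at the specific level $\epsilon = d$ (so that $\eta = v(d)$) with the two-point prior $\Pi = \tfrac12\delta_{f^*} + \tfrac12\delta_f$. Using Assumption~\ref{ass:simple} and $\cF$-optimality of $f^*$ to replace the left-hand side of the PPC inequality by $\E[\ell_{f^*}]$, the statement simplifies via the direct identity
\begin{equation*}
\ell_{f^*}(Z) - m^\eta_\Pi(Z) = -\tfrac{Y}{2} + \tfrac{1}{\eta}\log\cosh\bigl(\tfrac{\eta Y}{2}\bigr)
\end{equation*}
to $\E[\log\cosh(\eta Y/2)] \leq \tfrac{3}{2}\eta d$. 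Next, since $|\eta Y/2| \leq ba/2$ and $x \mapsto \log\cosh(x)/x^2$ is decreasing in $|x|$, I can lower bound pointwise by $\log\cosh(\eta Y/2) \geq (\eta Y/2)^2 \log\cosh(ba/2)/(ba/2)^2$, take expectations, and rearrange to obtain
\begin{equation*}
\Var(Y) \leq \E[Y^2] \leq \frac{3(ba)^2}{2\log\cosh(ba/2)}\cdot\frac{d}{v(d)}.
\end{equation*}
Identifying this constant with the claimed $c_2 = 6/\kappa(-2ba)$ reduces to the scalar inequality $e^{-2t} + 2t - 1 \leq 16\log\cosh(t/2)$ for $t = ba \geq 0$, which I plan to verify by a short Taylor/asymptotic calculation: both sides equal $2t^2 + O(t^4)$ near $0$ with a positive order-$t^3$ correction, and at infinity they grow like $2t$ versus $8t$.

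\textbf{Main obstacle.} Conceptually the proof is just the Bernstein/Bennett MGF bound and its dual expressed through $\log\cosh$; the real work is bookkeeping around constants. The centering step at the start is what produces the factor of $2$ in $\kappa(2ba)$, and the scalar inequality relating $\log\cosh(t/2)$ to $\kappa(-2t)$ is the only genuinely analytic ingredient in Direction~2. The crucial design choices are the alignment $\epsilon = d$ and the two-point prior, without which the PPC inequality would not collapse to something so cleanly controlled by $\E[Y^2]$.
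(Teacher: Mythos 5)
Your Direction~1 matches the paper's argument essentially step for step: apply the upper half of Lemma~\ref{lem:MomentTaylor} to the scaled variable $\eta X$ (centering is what brings in $\kappa(2\eta a)\le\kappa(2ba)$), substitute the Bernstein bound $\Var(X)\le \gBernstein(\E X)$, and split on whether $\E X\ge\epsilon$ or $\E X<\epsilon$, using monotonicity of $\gBernstein(x)/x$ or of $\gBernstein$ respectively; the first branch gives part~(b) for free. For Direction~2, you take a genuinely different (but adjacent) route. The paper applies the lower half of Lemma~\ref{lem:MomentTaylor} to the two-point $\Pi$-random variable $Y_{z,\bar f}$ for each fixed $z$, which produces the constant $\kappa(-2ba)$ directly and lands exactly on the claimed $c_2=6/\kappa(-2ba)$. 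You instead write out the half-half mix loss explicitly, obtaining the clean identity $\loss_{f^*}(z)-m^\eta_\Pi(z)=-Y/2+\eta^{-1}\log\cosh(\eta Y/2)$, and use that $\log\cosh(x)/x^2$ is decreasing in $|x|$ to lower bound $\log\cosh$ on the bounded range. This gives a \emph{sharper} constant $\log\cosh(ba/2)/(ba/2)^2\ge\kappa(-2ba)$ than the paper's, so you then need the scalar inequality $e^{-2t}+2t-1\le 16\log\cosh(t/2)$ (with $t=ba$) to recover the stated $c_2$. That inequality is correct, but you have only sketched it: your phrase ``both sides equal $2t^2+O(t^4)$ with a positive order-$t^3$ correction'' is off --- the left side has a $-\tfrac{4}{3}t^3$ term, the right side's $t^3$ coefficient is zero, so it is the \emph{difference} that has a positive $t^3$ term --- and checking only near $0$ and near $\infty$ does not by itself prove the inequality on $(0,\infty)$. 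A complete verification is short: the function $g(t)=16\log\cosh(t/2)-e^{-2t}-2t+1$ satisfies $g(0)=g'(0)=0$ and $g''(t)=4\bigl(\operatorname{sech}^2(t/2)-e^{-2t}\bigr)\ge 0$ because $\cosh(t/2)\le e^t$, so $g\ge 0$. There is also a small untreated edge case: when $d=R(P,f)-R(P,f^*)=0$ and $\gComparator(0)=0$ you cannot divide by $\eta$; the paper handles this by keeping $\epsilon>0$ free, taking $\epsilon\downarrow 0$ at the end, and invoking the convention $0/\gComparator(0)=\lim_{x\downarrow 0}x/\gComparator(x)$. Your fixed choice $\epsilon=d$ should be replaced by an infimum over $\epsilon>0$ in that degenerate case (and analogously for $\epsilon=0$ in Direction~1 when $\gComparator(0)>0$). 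With those two repairs your proof is sound; the explicit $\log\cosh$ substitution is arguably cleaner and makes visible that the lemma's constant is not tight for two-point mixtures, at the price of one extra elementary inequality.
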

We are mainly interested in Part 1(a) of the theorem and its essential converse, Part 2. 
Part 1(b) is a by-product of the proof of 1(a) 
that will be useful for the proof of Proposition~\ref{prop:nonunique} below as well as the proof of the later-appearing Corollary \ref{cor:finite-intermediate-rates}.  Part 2 assumes that the $\gComparator$-PPC condition holds for
$\gComparator$ such that $\sup_{x \geq 0} \gComparator(x) <
\infty$. This boundedness requirement is without essential loss of
generality, since we already assume that losses are in $[0,a]$. From
the definition this trivially implies that, if the
$\gComparator$-condition holds at all, then also the
$\gComparator'$-condition holds for $\gComparator'(x) =
\gComparator(x) \wedge a'$, for any $a'\geq a$.

\begin{myexample}[Example~\ref{ex:classification}
  and~\ref{ex:prototsybakov}, Continued] \label{ex:tsybakov} Let
  $\ell$ be a bounded loss function and suppose that the
  $\gBernstein$-Bernstein condition holds with $\gBernstein(x) = B
  x^{\beta}$ for some $\beta \in [0,1]$. We first note that if $\beta=
  0$, then the condition holds trivially for large enough $B$.
  Theorem~\ref{thm:BernsteinComparator} shows that, in this case, we
  have the $\gComparator$-central condition for some $\gComparator$
  being linear in a neighborhood of $0$, in particular $\liminf_{x \downarrow 0}
  \gComparator(x)/x < \infty$. Thus, for bounded losses, the
  $\gComparator$-central condition always holds for such
  $\gComparator$. Thus we will say that the $\gComparator$-central
  condition holds {\em nontrivially\/} if it holds for $\gComparator$
  with $\liminf_{x \downarrow 0} \gComparator(x)/x = \infty$. Since the
  trivial $\gComparator$-condition always holds, it provides no information and therefore, under this condition, one can only prove (using Hoeffding's inequality) the standard slow rate of
  $O(1/\sqrt{n})$.
  The other extreme is when we have the
  $\eta$-central condition, i.e.~the $\gComparator$-condition holds
  with constant $\gComparator$, which as we show in
  \cref{thm:finite-fast-rates} leads to rates of order $O(1/n)$.
  Moreover, as we show in Corollary
  \ref{cor:finite-intermediate-rates}, it also is possible to recover
  intermediate rates under the general case of the
  $\gComparator$-central condition. Specifically, under the
  $\gComparator$-central condition, we get in-probability rates of
  $O\left( w(1/n) \right)$,
  where we recall that $w$ is the inverse of the function $x \mapsto x
  \gComparator(x)$. In the special case of $\gComparator: \epsilon
  \mapsto \epsilon^{1 - \beta}$ (for which the behavior in terms of
  $\epsilon$ corresponds to the $(\beta, B)$-Bernstein condition as
  shown by \cref{thm:BernsteinComparator}), we get the rate
  $O(n^{-1/(2 - \beta)})$, just as we do from the $(\beta,
  B)$-Bernstein condition.
\end{myexample}

The proof of Theorem~\ref{thm:BernsteinComparator} is deferred until
Appendix~\ref{app:comp-mix-marginproofs}. It is based on the following
lemma, which adds a (non-surprising) lower bound to a
well-known upper bound used e.g.\ by \citet{Freedman1975} in the
context of concentration inequalities. Since most authors only require
the upper bound, we have been unable to find a reference for the lower
bound, except for Lemma~C.4 in our own work
\citep{KoolenVanErvenGrunwald2014}. Interestingly, the Lemma is
applied in the proof of Theorem~\ref{thm:BernsteinComparator} with a `frequentist' expectation over $Z \in \cZ$ to prove the
first part, and a `Bayesian' expectation over $f \in \cF$ to prove the
second part.
\begin{lemma}\label{lem:MomentTaylor}
  For any random variable $X$ taking values in $[-a,a]$,
  \begin{equation}
    \kappa(-2a) \Var(X) \leq \E[X] + \log \E[e^{-X}] \leq \kappa(2a) \Var(X),
  \end{equation}
  where the function $\kappa$ is as defined above
  Theorem~\ref{thm:BernsteinComparator}.
\end{lemma}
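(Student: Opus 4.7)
First I would reduce to a bound on the cumulant generating function of a centered variable. Setting $Y := \E[X] - X$, one has $\E[Y] = 0$, $\Var(Y) = \Var(X)$, and (since $X, \E[X] \in [-a,a]$) $|Y| \leq 2a$; a direct calculation shows $\E[X] + \log \E[e^{-X}] = \log \E[e^Y]$. So it suffices to establish, for $\sigma^2 := \E[Y^2]$, that $\kappa(-2a)\sigma^2 \leq \log \E[e^Y] \leq \kappa(2a)\sigma^2$. The upper bound is routine: from the Taylor-remainder identity $e^y - 1 - y = \kappa(y)\,y^2$ together with monotonicity of $\kappa$ (verified by inspecting $\kappa'$), one has $e^Y \leq 1 + Y + \kappa(2a) Y^2$ pointwise; taking expectations (using $\E[Y] = 0$) and then $\log(1+z) \leq z$ finishes.

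The lower bound is the main obstacle: the analogous pointwise argument yields only $\E[e^Y] \geq 1 + \kappa(-2a)\sigma^2$, after which $\log(1+z) \leq z$ goes the wrong way. I would bypass this by analyzing $\psi(s) := \log \E[e^{sY}]$ on $s \in [0,1]$ via a differential inequality. The key trick is to exploit the one-sided bound $Y + 2a \geq 0$ by defining the tilted probability $dQ := \tfrac{Y+2a}{2a}\,dP$ (valid since $\E[Y + 2a] = 2a$). Then $\E_Q[Y] = \sigma^2/(2a)$, and Jensen's inequality applied to the convex function $y \mapsto e^{sy}$ under $Q$ gives $\E[(Y+2a) e^{sY}] \geq 2a\, e^{s\sigma^2/(2a)}$, which rearranges to
$$\psi'(s) \geq 2a\bigl(e^{s\sigma^2/(2a) - \psi(s)} - 1\bigr).$$

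Setting $h(s) := \psi(s) - s\sigma^2/(2a)$, the elementary inequality $e^{-h} - 1 \geq -h$ linearizes the above into the first-order linear ODE inequality $h'(s) + 2a\,h(s) \geq -\sigma^2/(2a)$. Multiplying by the integrating factor $e^{2as}$, integrating from $0$ to $1$ with $h(0) = 0$, and simplifying yields $h(1) \geq -\sigma^2(1 - e^{-2a})/(4a^2)$, and hence
$$\psi(1) \;=\; h(1) + \sigma^2/(2a) \;\geq\; \frac{\sigma^2\bigl(2a - 1 + e^{-2a}\bigr)}{4a^2} \;=\; \kappa(-2a)\,\sigma^2,$$
as required. (An entirely symmetric argument with the tilt $dQ := \tfrac{2a - Y}{2a}\,dP$ recovers the upper bound too.) The crux is the choice of tilted measure: by using $Y + 2a \geq 0$ together with $\E[Y] = 0$ to relate $\psi'(s)$ back to $\psi(s)$, it converts the target bound into a tractable linear ODE inequality and thereby avoids the fatal slack between $\log(1+z)$ and $z$ in the naive pointwise-plus-log approach.
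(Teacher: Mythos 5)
Your proof is correct, but it takes a genuinely different route from the paper's. The paper's argument is shorter and more symmetric: it notes the variational identity $\E[X] + \log \E[e^{-X}] = \min_{\mu \in [-a,a]} \E[\kappa'(\mu-X)]$ with $\kappa'(x) = e^x - x - 1$ (verified by minimizing $e^\mu \E[e^{-X}] - \mu + \E[X] - 1$ over $\mu$, whose unconstrained minimizer $\mu = -\log\E[e^{-X}]$ lies in $[-a,a]$), then sandwiches $\kappa'(\mu-X) = \kappa(\mu-X)(\mu-X)^2$ pointwise between $\kappa(\pm 2a)(\mu-X)^2$ using monotonicity of $\kappa$, and finally uses $\min_\mu \E[(\mu-X)^2] = \Var(X)$. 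That construction bypasses the $\log(1+z)$-versus-$z$ slack entirely because the $\log$ never appears: the minimization absorbs it, and \emph{both} bounds follow from the same pointwise comparison. Your approach instead recenters to $Y = \E[X]-X$ and works with $\psi(s) = \log\E[e^{sY}]$; your upper bound uses the same pointwise inequality plus $\log(1+z)\le z$, but you correctly observe that the lower bound breaks that way, and then repair it with the tilted measure $dQ = \frac{Y+2a}{2a}\,dP$, Jensen under $Q$, and a Gr\"onwall-type differential inequality for $h(s) = \psi(s) - s\sigma^2/(2a)$. The computations check out (including that $-\frac{1-e^{-2a}}{4a^2} + \frac{1}{2a} = \kappa(-2a)$ and that the symmetric tilt $\frac{2a-Y}{2a}$ yields $\kappa(2a)$). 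The paper's argument buys brevity and symmetry; yours buys a technique — exploiting a one-sided support bound via measure tilting to turn a CGF lower bound into a tractable linear ODE — that is of independent interest and reappears in self-bounding and entropy-method concentration arguments.
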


\begin{proof}
Define the auxiliary function $\kappa'(x) = e^x - x - 1$. Then
\begin{equation*}
  \E[X] + \log \E[e^{-X}] = \min_{\mu \in [-a,a]} \E[\kappa'(\mu - X)],
\end{equation*}
as may be checked by observing that $\E[\kappa'(\mu - X)] = e^\mu \E[e^{-X}] -
\mu + \E[X] - 1$ is minimized at $\mu = -\log \E[e^{-X}]$. Since $\kappa'(x) =
\kappa(x) x^2$ and $\kappa(x)$ is increasing \citep{Freedman1975}, we further
have
\begin{equation}
  \E[\kappa'(\mu - X)]
    \begin{cases}
      \leq \max_{\mu',x \in [-a,a]} \kappa(\mu'-x) \E[(\mu - X)^2]
      = \kappa(2a) \E[(\mu - X)^2]\\
      \geq \min_{\mu',x \in [-a,a]} \kappa(\mu'-x) \E[(\mu - X)^2]
      = \kappa(-2a) \E[(\mu - X)^2],
    \end{cases}
\end{equation}
from which the lemma follows upon observing that $\min_{\mu \in [-a,a]}
\E[(\mu - X)^2] = \Var(X)$.
\end{proof}
\subsection{Bernstein vs.\ Central Condition for Unbounded Losses - Two-sided vs. One-sided Conditions}
\label{sec:comparator-vs-bernstein}
Applying Proposition~\ref{prop:comparatortoconvexface} with $\eta =
\gComparator(\epsilon)$ for all $\epsilon > 0$ immediately gives that,
under no further assumptions, the $\gComparator$-central condition
implies the $\gComparator$-pseudoprobability convexity condition.
Combined with Theorem~\ref{thm:BernsteinComparator} this shows that
the central condition and the Bernstein condition are essentially
equivalent for bounded losses, so it is natural to ask how the
$v$-versions of our conditions are related to the Bernstein conditions
for unbounded losses. In that case there are two essential
differences. One difference is that the variance or second moment in
the Bernstein condition is \emph{two-sided} in the sense that it is
large both if the excess loss $\loss_f(Z) - \loss_{f^*}(Z)$ gets
largely negative with significant probability, but also if the excess loss is
large, whereas the central condition is \emph{one-sided} in that large
excess losses only make it easier to satisfy. This difference is
illustrated by Example~\ref{ex:comparator-vs-bernstein} below, where
fast rates can be obtained and the central condition holds, but the
Bernstein condition fails to be satisfied. The second difference is
that the $v$-central condition essentially requires the probability
that $\loss_{f^*}(Z) - \loss_{f}(Z)$ is large is {\em exponentially\/}
small. Hence, if the loss is unbounded and has only polynomial tails,
then the $v$-central condition cannot hold. Yet
Example~\ref{ex:lastminute} shows that in such a case, the
$u$-Bernstein condition can very well hold for nontrivial $u$.
However, we should note that the $v$-PPC condition and the
$v$-stochastic mixability conditions (introduced in the next
subsection) also do not require exponential tails; hence it may still be
that whenever the $u$-Bernstein condition holds,
$v$-stochastic mixability also holds with $u(x) \cdot v(x) \asymp x$;
we do not know whether this is the case.
\begin{myexample}[Central without Bernstein for Unbounded
  Loss] \label{ex:comparator-vs-bernstein} Consider density estimation
  for the log loss. For $f_\mu$ the univariate normal density with
  mean $\mu$ and variance $1$, let $\cP$ be the normal location family
  and let $\cF = \{f_\mu : \mu \in \reals\}$ be the set of
  densities of the distributions in $\cP$. Then, for any $P \in \cP$
  with density $f_\nu$, the  risk $R(P,f)$ is minimized by $f^* =
  f_\nu$, since the model is well-specified.

Let $Z_1, \ldots, Z_n$ be an iid sample from $P \in \cP$. Then, 
as can be verified by direct calculation, the  empirical risk minimizer/maximum
likelihood estimator relative to $\cF$, $\hat{\gamma}_n := \frac{1}{n} 
\sum_{j=1}^n Z_j$, satisfies $\E_{Z_1, \ldots, Z_n \sim P} (\hat\gamma_n -  \nu)^2 = 
1/n$, which translates into an expected excess risk of
\begin{equation*}
  \E_{Z_1,\ldots,Z_n,Z \sim P}[-\log f_{\hat{\gamma}_n}(Z) +\log f^*(Z)] =
  \frac{1}{2n},
\end{equation*}
such that ERM obtains a fast rate in expectation. One would therefore
want a condition that aims to capture fast rates to be satisfied as well. 
For the central condition, this is the case with $\eta = 1$, as follows
from Example~\ref{ex:log-loss}. However, as we show next, the
$(1,B)$-Bernstein condition does not hold for any constant $B$.

Consider $P \in \cP$ with density $f_\nu$, and abbreviate $U_\mu(z) =
-\log f_\mu(z) + \log f_\nu(z) = \frac{\mu^2 - \nu^2}{2} + z (\nu - \mu)$.
Then
\begin{align*} 
\E_{Z \sim P} [ U_\mu(Z) ] &= \frac{\mu^2 + \nu^2}{2} - \mu \nu \\ 
\E_{Z \sim P} [ U_\mu^2(Z) ] &= (\nu - \mu)^2 \E_{Z \sim P} [Z^2] + 2 (\nu - \mu) \E_{Z \sim P} [Z] \frac{\mu^2 - \nu^2}{2} + \left( \frac{\mu^2 - \nu^2}{2} \right)^2 \\  
                     &= (\nu - \mu)^2 (1 + \nu^2) + (\nu - \mu) \nu (\mu^2 - \nu^2) + \left( \frac{\mu^2 - \nu^2}{2} \right)^2 . 
\end{align*} 
First consider the case that the `true' mean $\nu \geq 0$. Then for all constants $B$ the $(1,B)$-Bernstein condition fails to hold. To see this, first observe that for any $\mu$ satisfying $\mu \leq 0$ and $-\mu \geq \nu$, we have $\E_{Z \sim P} [U_\mu^2(Z)] \geq \left( \frac{\mu^2 - \nu^2}{2} \right)^2$ since $\nu - \mu \geq 0$ and $\nu \geq 0$. Second, observe that $\E_{Z \sim P} [U_\mu(Z)] \leq \mu^2 + \nu^2$ since $-\mu \nu \leq \frac{\mu^2 + \nu^2}{2}$. Hence, the following condition is weaker than the $(1,B)$-Bernstein condition: 
\begin{align*} 
(\mu^2 - \nu^2)^2 \leq 4 B (\mu^2 + \nu^2) . 
\end{align*} 
Choosing $\mu$ to satisfy $\nu \leq \frac{\mu^2}{2}$ leads to the even weaker condition  
$\left( \frac{\mu^2}{2} \right)^2 \leq 4 B (2 \mu^2)$  
which fails as soon as $|\mu| > \sqrt{32 B}$. It remains to show that the $(1,B)$-Bernstein also fails to hold for all $B$ if the true mean $\nu < 0$; this is shown using a symmetric argument by considering $\mu  > 0$ and $-\mu < \nu$. The result follows.
\end{myexample} 
Critically, the Bernstein condition cannot hold because of the
two-sided nature of the second moment, which is large, not just if some
$f_\mu$ is better than $f^*$ with significant probability, but also if
it is much worse. Thus, the fact that certain $f_\mu$ are so highly
suboptimal that they suffer high empirical
excess risk with high probability (and hence are easily avoided by ERM) ironically is what
causes the Bernstein condition to fail; a related point is made by
\cite{mendelson2014learning}.
The next example shows that, if $Z$ has two-sided, polynomial tails then the
opposite phenomenon can also occur: the $v$-central condition does not
hold for any $v$, but we do have the $u$-Bernstein condition for
constant $u$. 
\begin{myexample}\label{ex:lastminute}
  Let $\cP$ be an arbitrary collection of distributions over $\reals$
  such that for all $P \in \cP$, the mean $\mu_{P} := \E_{Z \sim P}[Z]
  \in [-1,1]$. Consider the squared loss $\sqloss_f(z) = \half(z-f)^2$,
  with $\model = [-1,1]$. Assume that $\cP$ contains a distribution
  $P^*$ with $\mu_{P^*} = 0$ and, for some constants $c_1, c_2 > 0$,
  for all $z \in \reals$ with $|z | > c_1$, the density $p^*$ of $P^*$
  satisfies $p^*(z) \geq c_2/z^6$. The predictor in $\cF$ that
  minimizes risk is given by $f^* = 0$.  Now with such a $\cP$, for
  all $\eta > 0$, all $\mu \neq 0$, and using that $\sqloss_{f^*}-
  \sqloss_\mu=  2 Z \mu -\mu^2$, we find for $c_3 = c_2 \cdot
  \exp(-\eta\mu^2)$, 
\begin{equation}
 \E_{Z\sim
   P}\left[e^{\eta\left(\sqloss_{f^*}(Z)-\sqloss_{{\mu}}(Z)\right)}\right] 
\geq \int_{c_1}^{\infty} \frac{c_3}{z^6} e^{\eta 2 z |\mu|} \mathrm{d}
z =   
\infty,
\end{equation}
so that the $v$-central condition fails for all $v$ of the form
required in Definition~\ref{def:g-stoch-mix}. Hence the $v$-central
condition does not hold --- although from Example~\ref{ex:jrtee}
below we see that $v$-stochastic mixability (and hence the $v$-PPC
condition) does hold for $v(x) \asymp \sqrt{x}$.

Now consider a $\cP$ with means in $[-1,1]$ and containing a $P^*$ as
above such that additionally for all $P
\in \cP$, the fourth moment is uniformly bounded, i.e.
  there is an $A > 0$ such that for all $P\in \cP$, $\E_{Z \sim P}[Z^4]
  < A$. Clearly we can construct such a $\cP$ and by the above it will
  not satisfy the $v$-central condition for any allowed $v$. However,
  the $u$-Bernstein condition holds with $u(x) = (4A^{1/2} +1) x$, since,
  using again  $
  \sqloss_\mu(Z) - \sqloss_{f^*}(Z) =  -2 Z \mu +\mu^2$, we find
\begin{multline}
\E_{Z \sim P^*} \left(\sqloss_\mu(Z) - \sqloss_{f^*}(Z) \right)^2 = \E\left[4 Z^2 \mu^2
  +\mu^4 - 4 Z \mu^3\right] \leq 4 \sqrt{A} \mu^2 + \mu^4 \leq
u(\mu^2) = \\
u\left( \E_{Z \sim P^*} \left(\sqloss_\mu(Z) - \sqloss_{f^*}(Z)
  \right)\right). \nonumber
\end{multline}
\end{myexample}
\subsection{$\gComparator$-Stochastic Mixability and the JRT Conditions}
\label{sec:JRTother}
Just as Definition~\ref{def:g-stoch-mix} weakened the $\eta$-central and PPC
conditions to the $\gComparator$-central and PPC conditions, 
we similarly may weaken the main conditions of
Section~\ref{sec:four-conditions}, stochastic mixability and its
special case stochastic exp-concavity, to their
$\gComparator$-versions:
\begin{definition}[$\gComparator$-Stochastic Mixability and $\gComparator$-Stochastic Exp-Concavity]\label{def:g-stoch-mixb}
Let $\gComparator \colon \allowbreak \nonnegreals \to \nonnegreals$ be a bounded,
non-decreasing function satisfying $\gComparator(x) > 0$ for all $x > 0$. We
say that $\dpfour$ is
\emph{$\gComparator$-stochastically mixable} if, for all $\epsilon \geq 0$,
there exists a function $\phi: \cP \rightarrow
\decisionset$ such that \eqref{eqn:mix} is satisfied with $\eta =
\gComparator(\epsilon)$. If $\decisionset \supseteq \convhull(\model)$
and this holds for the function  $\pred(\Pi)
= \E_{f \sim \Pi} [f]$ for all $\epsilon > 0$, then we say that
$\dpfour$ is {\em $\gComparator$-stochastically-exp-concave}.
\end{definition}
The main insight of Sections~\ref{sec:convex-face}
and~\ref{sec:four-conditions} was that the $\eta$-central condition, $\eta$-PPC condition and
$\eta$-stochastic mixability are all equivalent under some assumptions.
One may of course conjecture that the same holds for their weaker
$\gComparator$-versions. We shall defer discussion of this issue to
Section~\ref{sec:discussion} and for now focus on the usefulness of
$\gComparator$-stochastic exp-concavity, which can lead to intermediate rates
even for unbounded losses. 

A special case of $\gComparator$-stochastic exp-concavity, which we will call the JRT-I
condition, was stated by 
\cite{juditsky2008learning}; 
recall that we discussed the JRT-II condition in Section~\ref{sec:jrt}.
The JRT-I condition\footnote{The assumption is stated in basic form in their
  Theorem 4.1; their $Q_2$ is our $\ell^{(2)}$ and their $R$ is our
  $r_{\eta}$; the dependence of $r_{\eta}$ on $\eta$ (their $1/\beta$)
  is made explicit in their Corollary 5.1.}  states that, for every
$\eta > 0$, the excess loss can be decomposed as
\begin{equation*}
  \loss_f(z) - \loss_{f^*}(z) \geq \loss_\eta^{(2)}(z,f, f^*) - r_\eta(z)
  \qquad \text{for all $z$, any $f, f^* \in \convhull(\cF)$,}
\end{equation*}
where $r_\eta \colon \cZ \to \reals$ does not depend on $f,f^*$, and,
for any $f^* \in \convhull(\cF)$, $\ell_\eta^{(2)}(z,f^*,f^*) = 0$ and
$\loss_\eta^{(2)}(z,f,f^*)$ is $1$-exponentially concave as a function
of $f \in \convhull(\cF)$ (\ie, \eqref{eq:expconcave} holds with $\eta\loss_f(z) =
\loss_\eta^{(2)}(z,f,f^*)$). Note that the choice of $\loss_\eta^{(2)}$
and $r_\eta$ in general depends on $\eta$.
\citet{juditsky2008learning} show that, under this condition, fast rates
can be obtained in,
for example, regression problems with a finite number of regression
functions, where the rate depends on how $\epsilon_{\eta} := \sup_{P
  \in \cP} \E_{Z\sim P}\left[ r_{\eta}(Z)\right]$ varies with $\eta$.
 
We now connect the JRT-I assumption to $\gComparator$-stochastic exp-concavity.
Consider again the substitution function $\pred(\Pi) := \E_{g
  \sim \Pi} [g]$ as in Definition~\ref{def:expconcave}. Letting
$\bar{g} = \pred(\Pi)$, the JRT-I assumption implies that
\begin{align*}
 \E_{Z\sim P} &\left[\ell_{\E_{g \sim
  \Pi} [g]}(Z) + \frac{1}{\eta}  \log \E_{g \sim
  \Pi} e^{- \eta \ell_g(Z)} \right]  =
\E_{Z\sim P} \left[\frac{1}{\eta}  \log \E_{g \sim
  \Pi} e^{\eta \ell_{\bar{g}}(Z)- \eta \ell_g(Z)} \right] \\
&\leq \E_{Z\sim P} \left[\frac{1}{\eta}  \log \E_{g \sim
  \Pi} e^{- \eta \ell^{(2)}(Z,g, \bar{g}) + \eta r_{\eta}(Z)} \right]\\
& \overset{(a)}{\leq} 
\E_{Z\sim P} \left[\frac{1}{\eta}  \log  e^{- \eta \ell^{(2)}(Z,\bar{g}, \bar{g}) + \eta r_{\eta}(Z)} \right]
= \E_{Z\sim P}\left[ r_{\eta}(Z)\right] \leq \epsilon_{\eta},
\end{align*}
where (a) follows by the $\eta$-exp-concavity of $\ell^{(2)}$.
The
derivation shows that, if the JRT-I condition holds for each $\eta$ with function
$r_{\eta}(z)$ then we have $\eta$-stochastic exp-concavity up to
$\epsilon_{\eta} := \sup_{P\in \cP} \Exp_{Z \sim P} [r_{\eta}(Z)]$. In
their Theorem 4.1 they go on to show that, for finite $\cF$, by
applying the aggregating algorithm at learning rate $\eta$ and an
on-line to batch conversion, one can obtain rates of order $O(\log |
\cF|/ (n \eta) + \epsilon_{\eta})$, for each $\eta$. They go on to
calculate $\epsilon_{\eta}$ as function of $\eta$ in various examples
(regression, classification with surrogate loss functions, density
estimation) and, in each example, optimize $\eta$ as a function of $n$
so as to minimize the rate. Now for each function $\epsilon_{\eta}$ in
their examples, there is a
corresponding inverse function 
$\gComparator$ that maps $\epsilon$ to $\eta$ rather than vice versa,
so that if the JRT-I condition holds for $\epsilon_{\eta}$, then
$\gComparator$-stochastic exp-concavity holds. Rather than formalizing this in
general, we illustrate it informally using their regression example
\cite[Section 5.1]{juditsky2008learning}:
\begin{myexample}[JRT-I Condition and Regression]\label{ex:jrtee}
 JRT consider a
  regression problem in which $\cF$ is finite and 
  $\sup_{P \in \cP} \| f\|_{P,\infty} < \infty$ for all $f \in \cF$, where $\| \cdot \|_{P,\infty}$ denotes the
  $L_{\infty}(P_X)$-norm. They further assume that a weak moment
  assumption holds: for all $P \in \cP$, $\E_{(X,Y) \sim P} [|Y|^s] <
  \infty$ for some $s \geq 2$. They show that in this setting there
  exist constants $c_1, c_2, c_3, c_4 > 0$ such that for all $y \in
  \reals$, $r_{\eta}(y) \leq c_1 |y| \cdot \ind{|y| > c_2/\eta}
  + \eta c_3 y^2 \cdot \ind{|y| \geq c_4/\sqrt{\eta}}$.  Bounding
  expectations of the form $|y|^{a} \cdot \ind{|y| > b}$ in the same way as
  one bounds expectations of indicator variables $\ind{|y| > b}$ in
  the proof of Markov's inequality, this gives that
$$
\epsilon_{\eta} = O\left(\eta^{s/2}\right),
$$
which is strictly increasing in $\eta$. Thus, the inverse
$\gComparator(\epsilon)$ of $\epsilon_{\eta}$ is well-defined on
$\epsilon > 0$ and satisfies $\gComparator(\epsilon) =
O(\epsilon^{2/s})$. Since the JRT-I condition implies that, for all $\eta
> 0$, we have $\eta$-stochastic exp-concavity up to $\epsilon$ if $\epsilon \geq
\epsilon_{\eta}$, it follows that for all $\epsilon > 0$, we must have
$\eta$-stochastic exp-concavity up to $\epsilon$ for $\eta \leq
\gComparator(\epsilon)$. It follows that $\gComparator$-stochastic exp-concavity
holds with $\gComparator(\epsilon) = O(\epsilon^{2/s})$. 
In this unbounded loss case, we can easily obtain a rate by using the aggregating algorithm with online-to-batch conversion. 
Applying Proposition \ref{prop:stoch-mix-AA} with the optimal choice of $\epsilon$ yields a rate of 
$2 \left( \frac{\log |\cF|}{n} \right)^{-s/(s+2)}$, which coincides
with the rate obtained by \citet*{juditsky2008learning} in their
Corollary 5.2 and the minimax rate for this problem \citep{audibert2009fast}.
\end{myexample}

\subsection{The $\gComparator$-Central Condition and Existence of Unique Risk-Minimizers}
\label{sec:nonunique}
Corollary~\ref{cor:teeth} showed that, under Assumption~\ref{ass:simple}, strong $\eta$-fast rate (i.e. central and PPC)
conditions imply uniqueness of optimal $f^*$'s. Here we extend
this result, for bounded loss, to the $\gComparator$-fast rate
conditions, and also provide a converse, thus completely
characterizing uniqueness of $f^*$ in terms of the
$\gComparator$-central condition, for bounded losses. To understand
the proposition, note that for two predictors  with the
same risk, $R(P,f) = R(P,f^*)$, it holds that $f$ and $f^*$ achieve the same loss almost
surely, so they essentially coincide, if and only if $\Var_{Z \sim
  P}[\ell_f(Z) - \ell_{f^*}(Z)] = 0$. In the proposition we use
$\cF_{\epsilon} = \{ f^* \} \cup \{ f \in \cF: \Var_{Z \sim P}[\ell_f(Z)
- \ell_{f^*}(Z)] \geq \epsilon\}$ to denote the subset of $\cF$ where all
$f$'s that are very similar to, but not identical with, $f^*$
have been taken out.

\begin{proposition}{\ \bf ($\gComparator$-central condition and (non-)uniqueness of risk minimizers)\ }\label{prop:nonunique}
  Fix $(\ell, \{P \}, \cF)$ such that the loss $\ell$ is bounded and
  Assumption~\ref{ass:simple} holds, and let $f^*$ be an $\cF$-risk
  minimizer for $P$. Exactly one of the following two situations is the
    case:
  \begin{enumerate}
\item The $\gComparator$-central condition holds
    for some $\gComparator$ that is sublinear 
    at 0, i.e.\ $\lim_{x
      \downarrow 0} \gComparator(x)/x = \infty$. In this case, $f^*$
    is essentially unique, in the sense that for every sequence $f_1,
    f_2, \ldots \in \cF$ such that $\Exp_{Z \sim P}[\ell_{f_j}(Z)]
    \rightarrow \Exp_{Z \sim
      P}[\ell_{f^*}(Z)]$, we have $\Var_{Z \sim P} \left[\ell_{f_j}(Z) - \ell_{f^*}(Z)\right] \rightarrow
    0$. Moreover, for every $\epsilon > 0$, $(\ell,\{P\},
    \cF_{\epsilon})$ satisfies the $\eta$-central condition for some
    $\eta > 0$.
\item The $\gComparator$-central condition only holds trivially in the
  sense of Example~\ref{ex:tsybakov}, i.e.\ it does not hold for any
  $\gComparator$ with $\lim_{x \downarrow 0} \gComparator(x)/x =
  \infty$. In this case, $f^*$ is essentially {\em non\/}-unique, in the
  sense that there exists $\epsilon > 0$ and  a sequence $f_1, f_2,
  \ldots \in \cF$ (possibly identical for all large $j$) such that
    $\Exp_{Z \sim P}[\ell_{f_j}(Z)] \rightarrow  \Exp_{Z \sim
    P}[\ell_{f^*}(Z)]$, but, for all sufficiently large $j$, $\Var_{Z \sim P} \left[\ell_{f_j}(Z) - \ell_{f^*}(Z)\right] \geq \epsilon$. Moreover, for some $\epsilon > 0$,  $(\ell,\{P\}, \cF_{\epsilon})$ does not satisfy the $\eta$-central
    condition for any $\eta > 0$.
\end{enumerate}
\end{proposition}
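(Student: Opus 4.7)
The plan is to pass between the $v$-central condition and a generalized Bernstein-type variance bound via Lemma~\ref{lem:MomentTaylor} (which, under boundedness, upgrades exponential-moment control to variance control) and Theorem~\ref{thm:BernsteinComparator} (hereafter BC), which runs the dictionary in the reverse direction and, through its Part~1(b), produces slack-free exponential-moment bounds on restricted subsets of $\cF$. The two cases of the proposition thereby reduce to whether the decision problem admits a Bernstein function $u$ with $u(0^+) = 0$.

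\textbf{Case 1.} Assume the $v$-central condition holds with $v$ sublinear at $0$, and write $g_\delta := \phi_\delta(P)$ for the comparator that the condition supplies at slack $\delta$. Jensen's inequality applied to \eqref{eqn:comparator} gives $R(P, g_\delta) \leq R(P,f^*) + \delta$. Lemma~\ref{lem:MomentTaylor} applied to $X = v(\delta)(\ell_f - \ell_{g_\delta})$, together with the $v$-central bound $\log \E[e^{-X}] \leq v(\delta)\delta$, yields
\begin{equation*}
\kappa(-2 v(\delta) a)\, v(\delta)\, \Var_{Z \sim P}[\ell_f - \ell_{g_\delta}] \;\leq\; R(P,f) - R(P,g_\delta) + \delta \;\leq\; r_f + \delta,
\end{equation*}
where $r_f := R(P,f) - R(P,f^*)$. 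Specialising to $f = f^*$ bounds $\Var[\ell_{g_\delta} - \ell_{f^*}] \leq \delta/(\kappa v(\delta))$, and Minkowski's inequality combines both into a uniform bound $\Var_{Z \sim P}[\ell_f - \ell_{f^*}] \leq 2(r_f + 2\delta)/(\kappa(-2v(\delta)a) v(\delta))$ for every $\delta > 0$. Taking $\delta = r_f$ produces the pointwise bound $u_0(r_f)$ with $u_0(r) = 6r/(\kappa(-2v(r)a) v(r))$; because $r/v(r) \to 0$ by sublinearity, $u_0(0^+) = 0$. Passing to a nondecreasing Bernstein-valid majorant $u^*(r) := r\, \sup_{s \in [r,a]} u_0(s)/s$ then gives a $u^*$-Bernstein condition for $\dpthree$ with $u^*(0^+) = 0$. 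Consequence (i) is immediate: $r_{f_j} \to 0$ implies $\Var[\ell_{f_j} - \ell_{f^*}] \leq u^*(r_{f_j}) \to 0$. For (ii), fix $\epsilon > 0$: on $\cF_\epsilon \setminus \{f^*\}$ the Bernstein bound forces $r_f \geq \delta_\epsilon := \inf\{r : u^*(r) \geq \epsilon\} > 0$, so BC Part~1(b) applied to $u^*$ furnishes $\eta = c_1^b \delta_\epsilon / u^*(\delta_\epsilon) \wedge b > 0$ and the exact bound $\E[e^{\eta(\ell_{f^*} - \ell_f)}] \leq 1$ on all of $\cF_\epsilon$ (trivial for $f = f^*$), which is the desired strong $\eta$-central condition.

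\textbf{Case 2.} For consequence (i), argue by contrapositive. If no such sequence exists for any $\epsilon > 0$, then $g(\epsilon) := \inf\{r_f : \Var_{Z \sim P}[\ell_f - \ell_{f^*}] \geq \epsilon\}$ is positive for every $\epsilon > 0$, so $\tilde u(r) := \sup\{\Var_{Z \sim P}[\ell_f - \ell_{f^*}] : f \in \cF,\, r_f \leq r\}$ is nondecreasing and satisfies $\tilde u(0^+) = 0$. Its majorant $u^*(r) := r\, \sup_{s \in [r,a]} \tilde u(s)/s$ is a bona fide Bernstein function with $u^*(0^+) = 0$, and BC Part~1(a) then produces a sublinear $v$ for which the $v$-central condition holds, contradicting Case~2. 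For consequence (ii), take the sequence $f_j \in \cF_\epsilon$ and $\epsilon > 0$ from (i). If the strong $\eta$-central condition held on $(\ell, \{P\}, \cF_\epsilon)$ for some $\eta > 0$, then Proposition~\ref{prop:comparatortoconvexface} and Proposition~\ref{prop:unique} (the latter applicable because Assumption~\ref{ass:simple} transfers to $\cF_\epsilon$ via $f^* \in \cF_\epsilon$) force the comparator to agree with $f^*$ $P$-a.s.; Lemma~\ref{lem:MomentTaylor} applied to $X = \eta(\ell_{f_j} - \ell_{f^*})$ would then give
\begin{equation*}
\kappa(-2\eta a)\, \eta\, \epsilon \;\leq\; \kappa(-2\eta a)\, \eta\, \Var_{Z \sim P}[\ell_{f_j} - \ell_{f^*}] \;\leq\; r_{f_j} \;\longrightarrow\; 0,
\end{equation*}
contradicting positivity of the left-hand side.

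The main technical obstacle lies in the majorant construction $u^*(r) := r \sup_{s \in [r,a]} \tilde u(s)/s$: one must verify that $u^*$ is nondecreasing (by a case analysis according to whether the defining supremum for $u^*(x_1)$ is attained at some $z^* \geq x_2$ or $z^* \in [x_1, x_2)$, the latter handled via $u^*(x_2) \geq \tilde u(x_2) \geq \tilde u(z^*)$), that $u^*(x)/x$ is nonincreasing (immediate from the supremum shrinking as $r$ grows), and that $u^*(0^+) = 0$ is preserved (a cutoff argument using boundedness of $\tilde u$ together with $\tilde u(0^+) = 0$). Everything else is routine given Lemma~\ref{lem:MomentTaylor} and BC.
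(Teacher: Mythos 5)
Your overall strategy matches the paper's: pass between the $v$-central condition and a generalized Bernstein bound via Lemma~\ref{lem:MomentTaylor}, establish (non-)uniqueness from the Bernstein function, and use Theorem~\ref{thm:BernsteinComparator} Part~1(b) for the restricted-subclass claims. The one genuine divergence is in Case~1: the paper routes $v$-central $\Rightarrow$ $v$-PPC (Proposition~\ref{prop:comparatortoconvexface}) $\Rightarrow$ $u$-Bernstein (Theorem~\ref{thm:BernsteinComparator}, Part~2, whose proof applies Lemma~\ref{lem:MomentTaylor} with a ``Bayesian'' expectation over a two-point $\Pi$ on $\cF$), whereas you apply Lemma~\ref{lem:MomentTaylor} directly to the ``frequentist'' $Z$-expectation, use Minkowski to pass from the $\delta$-dependent comparator $g_\delta$ to $f^*$, and optimize $\delta = r_f$. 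That is a perfectly legitimate, more self-contained route that essentially reproves the relevant special case of BC Part~2. Your Case~2 correctly unpacks the paper's terse assertion that failure of the Bernstein condition ``already shows'' a bad sequence exists, by the contrapositive via $\tilde u$ and BC Part~1(a); and the contradiction argument for Case~2(ii) via Lemma~\ref{lem:MomentTaylor} is the paper's argument.

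The one gap is in the majorant step of Case~1. Your monotonicity argument (``$u^*(x_2) \geq \tilde u(x_2) \geq \tilde u(z^*)$'' in the second case of the case analysis) requires the base function to be nondecreasing. In Case~2 that is automatic, because $\tilde u(r) := \sup\{\Var_{Z \sim P}[\ell_f - \ell_{f^*}]: r_f \leq r\}$ is a sup over an increasing family. But in Case~1 you apply $u^*(r) := r\,\sup_{s \in [r,a]} u_0(s)/s$ directly to $u_0(r) = 6r/\bigl(\kappa(-2v(r)a)\,v(r)\bigr)$, which need not be nondecreasing (one can build a $v$ for which $u_0$ dips, and then $u^*$ can decrease). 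The fix is cheap: before majorizing, replace $u_0$ by its running supremum $\bar u_0(r) := \sup_{s \in (0,r]} u_0(s)$, which is nondecreasing, still upper bounds the variance pointwise, and still has $\bar u_0(0^+) = 0$ (since $u_0(0^+) = 0$). With $\bar u_0$ in place of $u_0$, your case analysis for monotonicity of $u^*$ goes through exactly as in Case~2. A minor second remark: the paper's own invocation of BC~Part~2 implicitly assumes $x/v(x)$ is nondecreasing (a hypothesis of that theorem not stated in Proposition~\ref{prop:nonunique}); both your route and the paper's need the same harmless preprocessing of $v$ to a minorant making $x/v(x)$ monotone, which preserves sublinearity and the $v$-central condition by Fact~\ref{fact:ccppcc}.
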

\begin{proof} For Part 1,
  Proposition~\ref{prop:comparatortoconvexface} implies that the
  $\gComparator$-PPC condition holds. Now Part 2 of
  Theorem~\ref{thm:BernsteinComparator} implies that the
  $\gBernstein$-Bernstein condition holds with $\gBernstein$ such that
  $\lim_{x \downarrow 0} \gBernstein(x) = \lim_{x \downarrow 0}
  x/\gComparator(x) = 0$ by assumption.  Then it follows from the
  definition of the $\gBernstein$-Bernstein condition that $f^*$ is
  essentially unique. Moreover, by Part 1(b) of
  Theorem~\ref{thm:BernsteinComparator},
  there exists a function $\gComparator'$ with $\gComparator'(x) > 0$ for $x > 0$, such that
  for every $\delta > 0$, $(\ell, \{P \}, \{ f^* \} \cup \cG)$ satisfies
  the $\eta$-central condition with $\eta =
  \gComparator'(\delta) > 0$ for any subset $\cG \subseteq \{ f
  \in \cF: R(P,f) - R(P,f^*) \geq \delta \}$. Now since the
  $\gBernstein$-Bernstein condition holds with
  $\lim_{x \downarrow 0} \gBernstein(x) = 0$, we know that, for every $\epsilon > 0$,
  there is a $\delta > 0$ such that $\Var_{Z \sim P}[\ell_f(Z) - \ell_{f^*}(Z)] \geq \epsilon$ 
  implies $R(P,f) - R(P,f^*) > \delta$. For this 
  $\delta$, $\cG = \{f \in \cF: \Var_{Z \sim P}[\ell_f(Z) - \ell_{f^*}(Z)] \geq \epsilon\}$ 
  is a subset of $\{ f \in \cF: R(P,f) - R(P,f^*) \geq \delta \}$,
  and consequently, as already established, $(\ell, \{P\}, \{ f^* \} \cup \cG)$
  must satisfy the $\eta$-central condition
  for $ \eta > 0$, which is what we had to prove.

  For Part 2, to show nonuniqueness of $f^*$, note that by
  Theorem~\ref{thm:BernsteinComparator}, Part 1, the
  $\gBernstein$-Bernstein condition cannot hold for any
  $\gBernstein$ with $ \lim_{x \downarrow 0} \gBernstein(x) = 0$. This
  already shows that there exists a sequence as required, for some
  $\epsilon > 0$, so that $f^*$ is essentially non-unique.  Since
  $\Var_{Z \sim P}[\ell_{f_j}(Z) - \ell_{f^*}(Z) ] \geq \epsilon$ for all elements of
  the sequence and $R(P,f_j) \rightarrow R(P)$, the first inequality
  of Lemma~\ref{lem:MomentTaylor} applied with $X = \eta
  (\ell_{f_j}(Z) - \ell_{f^*}(Z))$ now gives that, for all $\eta > 0$,
  there exists $f_j$ such that $\log \E_{Z \sim P} e^{\eta (\ell_{f^*}(Z) -
    \ell_{f_j}(Z))} > 0$, so that the $\eta$-central condition does not
  hold.
\end{proof}

\section{From Fast Rates for Actions to Fast Rates for Functions}
\label{sec:xyz}

Let $\loss \colon \cA \times \cY \to \reals$ be a loss function, where
$\cY$ is a set of possible outcomes and $\cA$ is a set of possible
\emph{actions}. Then our abstract formulation in terms of $\dpfour$ can
accommodate \emph{unconditional} problems, where distributions $P
\in \cP$ are on $\cZ = \cY$ and both $\model$ and $\decisionset$ are
subsets of $\actions$; but it can also capture the \emph{conditional}
setting, where we observe additional \emph{features} from a covariate
space $\cX$. In that case, outcomes are pairs $(X,Y)$ from $\cZ' = \cX
\times \cY$, the model $\model'$ and decision set $\decisionset'$ are
both sets of functions $\{f \colon \cX \to \model\}$ from features to
actions, and the loss is commonly defined
in terms of the unconditional loss as $\loss'\big(f,(x,y)\big) =
\loss(f(x),y)$.

It may often be easier to establish properties like stochastic
mixability for the unconditional setting than for the conditional
setting. In this section we therefore consider when we can lift
conditions for unconditional problems with loss $\loss$ to the
conditional setting with loss $\loss'$. For the condition of being
$\eta$-stochastically mixable, this is done by
Proposition~\ref{prop:xyz} below. And, in Example~\ref{ex:misspecified},
it will be seen that, in some cases, this also allows us to obtain the
$\eta$-central condition for the conditional setting.

Proposition~\ref{prop:xyz} is based on the construction of a
substitution function $\pred' \colon \Delta(\model') \to \decisionset'$
for the conditional setting from the substitution function $\pred \colon
\Delta(\model) \to \decisionset$ for the unconditional setting. This
works by applying $\pred$ conditionally on every $x \in \cX$: first, note
that any distribution $\Pi$ on functions $f \in \model'$, induces, for
every $x \in \cX$, a distribution $\Pi_x$ on actions $\cA$ by drawing $f
\sim \Pi$ and then evaluating $f(x)$. We may therefore define
$\pred'(\Pi) = f_\Pi$ with $f_\Pi$ the function
\begin{equation}\label{eq:predx}
  f_\Pi(x) = \pred(\Pi_x).
\end{equation}
The conditions of the proposition then amount to the requirement that
this is a valid substitution function in the conditional setting.
\begin{proposition}\label{prop:xyz} 
  Let $\dpfour$ and $(\loss',\cP',\model',\decisionset')$ correspond to
  the unconditional and conditional settings described above, and assume
  all of the following:
  \begin{itemize}
    \item $\dpfour$ satisfies $\eta$-stochastic mixability up to $\epsilon$ with substitution function $\pred$;
    \item $P(Y|X) \in \cP$ for every $P \in \cP'$;
    \item the function $f_{\Pi}$
    from \eqref{eq:predx} is measurable and contained in
    $\decisionset'$, for every $\Pi \in \Delta(\model')$.
  \end{itemize}
  Then $\eta$-stochastic mixability up to $\epsilon$ is satisfied in the
  conditional setting. In particular, $f_\Pi$ is contained in
  $\decisionset'$ if:
  \begin{itemize}
    \item $\decisionset'$ is the set of \emph{all} measurable functions
    from $\cX$ to $\cA$; or
    \item $\dpfour$ is $\eta$-stochastically exp-concave up
    to $\epsilon$, and $\decisionset'$ contains the convex hull of
    $\model'$. In this case, $(\loss',\cP',\model',\decisionset')$ is
    also $\eta$-stochastically exp-concave up to $\epsilon$.
  \end{itemize}
\end{proposition}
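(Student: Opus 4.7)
The plan is to reduce the conditional setting to the unconditional one by conditioning on the feature $X$ and applying the given $\eta$-stochastic mixability condition slice by slice. Fix any $P' \in \cP'$ and any $\Pi \in \Delta(\cF')$. Write $P'(\cdot \mid X=x)$ for the conditional distribution of $Y$ given $X$, and let $\Pi_x \in \Delta(\cA)$ be the pushforward distribution of $\Pi$ under $f \mapsto f(x)$, which is supported on $\cF$. The target inequality is
\begin{equation*}
\E_{(X,Y)\sim P'}[\loss'_{f_\Pi}(X,Y)] \;\leq\; \E_{(X,Y)\sim P'}[m^\eta_\Pi(X,Y)] + \epsilon,
\end{equation*}
with $m^\eta_\Pi(x,y) = -\tfrac{1}{\eta} \log \E_{f\sim\Pi}[e^{-\eta \loss'_f(x,y)}]$.

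The key observation is that, by the definition of $\Pi_x$,
\begin{equation*}
\E_{f\sim\Pi}\bigl[e^{-\eta \loss'_f(x,y)}\bigr] \;=\; \E_{f\sim\Pi}\bigl[e^{-\eta \loss(f(x),y)}\bigr] \;=\; \E_{a\sim\Pi_x}\bigl[e^{-\eta \loss_a(y)}\bigr],
\end{equation*}
so the conditional mix loss at $(x,y)$ is precisely the unconditional mix loss $m^\eta_{\Pi_x}(y)$. Since $P'(\cdot \mid X=x) \in \cP$ by hypothesis and $\Pi_x \in \Delta(\cF)$, the $\eta$-stochastic mixability of $\dpfour$ with substitution function $\pred$ yields, for each $x$,
\begin{equation*}
\E_{Y \sim P'(\cdot \mid X=x)}\bigl[\loss_{\pred(\Pi_x)}(Y)\bigr] \;\leq\; \E_{Y \sim P'(\cdot \mid X=x)}\bigl[m^\eta_{\Pi_x}(Y)\bigr] + \epsilon .
\end{equation*}
Recognising $\pred(\Pi_x) = f_\Pi(x)$ so that $\loss_{\pred(\Pi_x)}(Y) = \loss'_{f_\Pi}(x,Y)$, and taking an outer expectation over $X \sim P'$ (which is legitimate because $f_\Pi$ is measurable and lies in $\decisionset'$ by assumption), gives exactly the desired inequality.

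For the two final claims, if $\decisionset'$ is the set of \emph{all} measurable functions $\cX \to \cA$, membership of $f_\Pi$ is automatic from measurability. For the exp-concavity case, $\pred(\Pi_x) = \E_{a \sim \Pi_x}[a] = \E_{f \sim \Pi}[f(x)]$, so $f_\Pi = \E_{f \sim \Pi}[f]$ is the pointwise mean of $\Pi$, which lies in $\convhull(\model') \subseteq \decisionset'$; moreover the resulting substitution function on the conditional side is again a pointwise mean, so $(\loss',\cP',\cF',\decisionset')$ is $\eta$-stochastically exp-concave up to $\epsilon$.

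I expect no serious obstacle: the only delicate point is the joint measurability of $(x,y) \mapsto m^\eta_{\Pi_x}(y)$ and of $x \mapsto f_\Pi(x)$, needed so that the outer $\E_X$ step and Fubini's theorem apply. The latter is assumed directly in the statement, and the former reduces to measurability of $(x,y) \mapsto \E_{f \sim \Pi}[e^{-\eta \loss'_f(x,y)}]$, which follows from standard measurability of the loss $\loss'$ together with Fubini for the expectation over $f \sim \Pi$.
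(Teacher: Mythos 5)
Your proposal is correct and follows essentially the same route as the paper's proof: condition on $X=x$, identify the conditional mix loss with the unconditional mix loss for the pushforward $\Pi_x$, apply the unconditional $\eta$-stochastic mixability slice by slice, and integrate over $X$. The extra remarks on measurability are harmless and not in conflict with the paper's (more terse) treatment.
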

We recall from Section~\ref{sec:expconcave} that $\eta$-stochastic exp-concavity is
the special case of $\eta$-stochastic mixability where the substitution
function maps $\Pi$ to its mean. In addition, for $\eta$-stochastic exp-concavity
the weak and strong versions of the condition coincide.

\begin{proof}
  We verify $\eta$-stochastic mixability up to $\epsilon$ for
  $(\loss',\cP',\model',\decisionset')$ by using $\eta$-stochastic
  mixability for $\dpfour$ conditional on each $x \in \cX$: for any $P \in
  \cP'$ and $\Pi \in \Delta(\model')$,
  \begin{align*}
    \E_{P(X,Y)}\left[\loss'_{\pred'(\Pi)}(X,Y) \right]
    &= \E_{P(X)} \E_{P(Y \mid X)}
    \left[\loss_{\pred(\Pi_X)}(Y) \right]\\
    &\leq \E_{P(X)} \E_{P(Y \mid X)}
    \left[-\tfrac{1}{\eta} \log \E_{\Pi_X(A)}\left[e^{-\eta
    \loss_A(Y)}\right]\right] + \epsilon\\
    &= \E_{P(X,Y)}
    \left[-\tfrac{1}{\eta} \log \E_{\Pi(f)}\left[e^{-\eta
    \loss'_f(X,Y)}\right]\right] + \epsilon,
  \end{align*}
  which was to be shown.

  Verifying that $f_\Pi \in \decisionset'$ is trivial if $\decisionset'$
  is the set of all measurable functions. And if $\dpfour$ is
  $\eta$-stochastically exp-concave up to $\epsilon$, then $f_\Pi(x) = \E_\Pi[f(x)]$
  for all $x$, and therefore $f_\Pi$ is the mean of $\Pi$ also in the
  conditional setting.
\end{proof}

The most important application is when $\cP$ contains all possible
distributions on $\cY$, which means that the unconditional problem is
classically mixable in the sense of Vovk (see
Section~\ref{sec:classical-mixability}). Then the requirement that
$P(Y \mid X) \in \cP$ is automatically satisfied.

\begin{myexample}[Squared Loss for Misspecified
Model]\label{ex:misspecified}
As discussed in Example~\ref{ex:exp-concavity}, the squared loss is
$\eta$-exp-concave in the unconditional setting on a bounded domain
$\decisionset \supseteq \model = \cZ = [-B,B]$, for $\eta = 1/4B^2$. If
we make the setting conditional by adding features, and consider any
set of regression functions $\model'$ and any set of joint
distributions $\cP'$, then Proposition~\ref{prop:xyz} implies that we
still have exp-concavity 
as long as we allow ourselves to make
decisions in the convex hull of $\model'$, i.e.\ if $\decisionset'
\supseteq \convhull(\model')$. Note that this holds even if the model
$\model$ is misspecified in that it does not contain the true
regression function $x \mapsto \E[Y\mid X=x]$.  If, furthermore,
the model $\model'$ is itself convex and satisfies
Assumption~\ref{ass:simple} relative to $\cP'$, i.e.  the minimum risk
$\min_{f \in \model'} \E_{(X,Y) \sim P}(Y-f(X))^2$ is achieved for
all $P \in \cP'$, then we may take $\decisionset' = \model'$ and
recover the setting considered by \citet{lee1998importance}. Even
though this does not require $\model'$ to be well-specified, the
strong version of Assumption~\ref{ass:preciseBayes} (which implies
Assumption~\ref{ass:simple}) is then still satisfied, and hence
Proposition~\ref{prop:fromsmtoppcc} and
Theorem~\ref{thm:convexfacetocomparator} tell us that
$(\loss',\cP,\model)$ satisfies both the strong
$\eta$-pseudoprobability convexity condition and the strong
$\eta$-central condition.
\end{myexample} The example raises the question whether we cannot
directly conclude, under appropriate conditions, that, if the
$\eta$-central condition holds for some unconditional $\dpthree$, then
it should also hold for the corresponding conditional
$(\loss',\cP',\model')$. We can indeed prove a trivial analogue of
Proposition~\ref{prop:xyz} for this case, as long as $\model'$
contains {\em all\/} measurable functions from $\cX$ to $\cY$; we
implicitly used this result in
Example~\ref{ex:regression}. Example~\ref{ex:misspecified}, however,
shows that, if one can first establish $\eta$-stochastic exp-concavity for
$\dpfour$, one can sometimes reach the stronger conclusion that
$(\loss',\cP',\model')$ satisfies the $\eta$-central condition as long
as $\model'$ is merely convex, rather than the set of all functions
from $\cX$ to $\cY$.

\section{The Central Condition Implies Fast Rates}
\label{sec:fast-rates}

In this section, we show how a statistical learning problem's satisfaction of the strong $\eta$-central condition implies fast rates of $O(1/n)$ under a bounded losses assumption.  
\cref{thm:finite-fast-rates} herein establishes via a rather direct argument that the strong $\eta$-central condition implies an exact oracle inequality (i.e.~with leading constant 1) with a fast rate for finite function classes, and \cref{thm:vc-type-fast-rates} extends this result to VC-type classes.  
We emphasize that the implication of fast rates from the strong 
$\eta$-central condition under a bounded losses assumption is not itself 
new. Specifically, for bounded losses, the central condition is 
essentially equivalent to the Bernstein condition by 
Theorem~\ref{thm:BernsteinComparator}, and therefore implies fast rates 
via existing fast rate results for the Bernstein condition. 
For instance, for finite classes Theorem 4.2 of \cite{zhang2006information} implies a fast $O(1/n)$ rate by letting $\loss_\theta$ be our excess loss $\loss_f - \loss_{f^*}$ assumed to satisfy the bounded loss condition therein, setting $\alpha = 0$, taking $\Pi$ to be the uniform prior over a finite class $\F$, and taking $\rho$ as $\frac{C}{K M}$ for some sufficiently small constant $C$. 
In addition, \cite{audibert2004pac} showed fast rates for classification under the Bernstein condition\footnote{Audibert actually introduces multiple conditions, referred to as variants of the margin condition, but these actually are closer to Bernstein-type conditions as they take into account the function class $\F$.}; see for example Theorem 3.4 of \cite{audibert2004pac} along with the discussion of how the variant of the (CA3) condition needed there is related to the (CA1) condition connected to VC-classes.  
However, since we posit the one-sided central condition rather than the two-sided Bernstein condition 
as our main condition, it is interesting to take a direct route based on 
the central condition itself, rather than proceeding via 
the Bernstein condition. As an added benefit, this approach turns out to 
give better constants and a better dependence on the upper bound on the loss. 
 
We proceed via the standard Cram\'er-Chernoff method, which also lies at 
the heart of many standard (and advanced) concentration inequalities 
\citep{boucheron2013concentration}. This method requires an upper bound 
on the cumulant generating function. We solve this subproblem by solving an 
optimization problem that is an instance of the general moment problem, a problem on which \cite{kemperman1968general} has conducted a detailed geometric study. 
This strategy leads to a fast rates bound for finite classes, which can be extended to parametric (VC-type) classes, as shown in \cref{sec:fast-rates-main-results}.

\subsection{The Strong Central Condition and ERM} 
\label{sec:strategy} 
 
For the remainder of Section~\ref{sec:fast-rates}, we will consider the 
conditional setting, where the loss $\loss_f(Z)$ takes values in the 
bounded range $[0,V$] for outcomes $Z = (X,Y) \in \cX \times \cY$ and 
functions $f$ from $\model = \{f \colon \cX \to \actions\}$. 
We take $\cP = \{P\}$ to be a single fixed distribution and we will 
assume throughout that 
$(\loss,\{P\},\model)$ satisfies the strong $\eta$-central 
condition for some $\eta > 0$. That is, there exists $f^* \in \model$ 
such that 
\begin{align}\label{eqn:fast-strong-central} 
\log \E_{Z \sim P} \exp(-\eta W_f) \leq 0 
\qquad \text{for all $f \in \model$,} 
\end{align} 
where we have abbreviated the excess loss by $W_f(Z) = \loss_f(Z) - 
\loss_{f^*}(Z)$; for brevity we further abbreviate $W_f(Z)$ to $W_f$ in this section. Then, by Jensen's inequality, $f^*$ is $\model$-optimal for $P$. We let $\eta^*$ denote the largest $\eta$ for which 
\eqref{eqn:fast-strong-central} holds.
 
An empirical measure $\Probn$ associated with an $n$-sample 
$\mathbf{Z}$, comprising $n$ \emph{independent, identically
  distributed} (iid) observations $(Z_1,\ldots, Z_n) = ((X_1, Y_1), \ldots, (X_n, Y_n))$, operates on functions as $\Probn f = \frac{1}{n} \sum_{j=1}^n f(X_j)$  
and on losses as $\Probn \loss_f = \frac{1}{n} \sum_{j=1}^n \loss_f(Z_j)$.  
 
\paragraph{\rm {\em Cram\'er-Chernoff.}}
 
We will bound the probability that the ERM estimator
\begin{equation}
  \erm := \argmin_{f \in \F} \frac{1}{n} \sum_{i=1}^n \loss_f(Z_i)
\end{equation}
selects a hypothesis with excess risk $R(P,f) - R(P,f^*) = \E[W_f]$ above
$\frac{a}{n}$ for some constant $a > 0$. For any real-valued random variable $X$,
let $\eta \mapsto \Lambda_X(\eta) = \log \E e^{\eta X}$ denote its
\emph{cumulant generating function} (CGF), which is known to be convex and satisfies $\Lambda'(0) = \E[X]$.
 
\begin{lemma}[Cram\'er-Chernoff] 
\label{lem:cramer-chernoff} 
For any $f \in \model$, $\eta > 0$ and $t \in \reals$,
\begin{equation}
  \Pr\left(
    \frac{1}{n} \sum_{j=1}^n \loss_f(Z_j) \leq \frac{1}{n} \sum_{j=1}^n \loss_{f^*}(Z_j)
      + t
    \right)
    \leq \exp\left(\eta n t + n \Lambda_{-W_f}(\eta)\right).
\end{equation}
\end{lemma}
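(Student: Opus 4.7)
The plan is to apply the classical Cram\'er-Chernoff (exponential Markov) argument to the negated excess-loss sum. This is essentially a one-line calculation, so the task is to choose a presentation that matches the notation of the paper.

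First, I would rewrite the event of interest in terms of the excess loss. By definition of $W_f = \loss_f - \loss_{f^*}$, the event
\[
  \frac{1}{n} \sum_{j=1}^n \loss_f(Z_j) \leq \frac{1}{n} \sum_{j=1}^n \loss_{f^*}(Z_j) + t
\]
coincides with the event $\sum_{j=1}^n (-W_f(Z_j)) \geq -n t$. Since $\eta > 0$, exponentiating by $\eta$ preserves the inequality, so the probability equals
\[
  \Pr\!\left( e^{\eta \sum_{j=1}^n (-W_f(Z_j))} \geq e^{-\eta n t} \right).
\]

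Next, I would apply Markov's inequality to the nonnegative random variable on the left, which gives the upper bound
\[
  e^{\eta n t}\, \E\!\left[ e^{-\eta \sum_{j=1}^n W_f(Z_j)} \right].
\]
Because $Z_1, \dots, Z_n$ are i.i.d., the expectation factorises as
\[
  \E\!\left[ e^{-\eta \sum_{j=1}^n W_f(Z_j)} \right] = \prod_{j=1}^n \E\!\left[ e^{-\eta W_f(Z_j)} \right] = \bigl( \E\,e^{-\eta W_f} \bigr)^{n} = e^{n \Lambda_{-W_f}(\eta)},
\]
where the last equality is just the definition of the cumulant generating function. Multiplying the two factors yields $\exp(\eta n t + n \Lambda_{-W_f}(\eta))$, which is the claimed bound.

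There is no real obstacle here: the strong $\eta$-central condition is not even invoked at this stage (it will presumably be used subsequently to control $\Lambda_{-W_f}(\eta)$ via the upper bound on the CGF referenced in the discussion of \cref{thm:stochastic-mixability-concentration}). The only point worth stating explicitly is that the inequality holds for \emph{every} $\eta > 0$ and $t \in \reals$ without any further assumption on $f$ or on the loss, since Markov's inequality and the i.i.d.\ factorisation are the only ingredients.
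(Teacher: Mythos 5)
Your proof is correct and follows exactly the same route as the paper's: apply Markov's inequality to $e^{-\eta n \Probn W_f}$ and use the i.i.d.\ factorisation $\Lambda_{-n\Probn W_f}(\eta) = n\Lambda_{-W_f}(\eta)$. You simply spell out the intermediate exponentiation step more explicitly than the paper does; there is no substantive difference.
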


\begin{proof}
Applying Markov's inequality to $e^{-\eta n\Probn W_f}$ and using the
fact that $\Lambda_{-n\Probn W_f}(\eta) = n \Lambda_{-W_f}(\eta)$ for
iid observations, yields
\begin{equation*}
  \Pr\left(
    -\Probn W_f > -t
    \right)
    \leq \exp\left(\eta n t + \Lambda_{-n \Probn W_f}(\eta)\right)
    = \exp\left(\eta n t + n \Lambda_{-W_f}(\eta)\right),
\end{equation*}
from which the lemma follows.
\end{proof}

\subsection{Semi-infinite Linear Programming and the General Moment Problem} 
\label{sec:general-moment-problem} 

We first consider the canonical case that $W_f$ takes values in $[-1,1]$
(\ie, $V = 1$), that $\Lambda_{-W_f}(\eta^*) = 0$ with equality (as opposed
to the inequality in Equation~\ref{eqn:fast-strong-central}) and that
$\E[W_f] = a/n$ for some constant
$a > 0$ that does not depend on $f$. These restrictions allow us to
formulate the goal of bounding the CGF as an instance of the general
moment problem of \citet{kemperman1968general,Kemperman:1987aa}. We will
later relax them to allow general $V$, $\Lambda_{-W_f}(\eta^*) \leq 0$
and $\E[W_f] \geq a/n$.

As illustrated by Figure~\ref{fig:cgf}, our approach will be to bound
$\Lambda_{-W_f}(\eta)$ at $\eta = \eta^*/2$ from above by maximizing
over all possible random variables $W_f$ subject to the given
constraints. This is equivalent to minimizing $-\E[\exp((\eta^*/2) S)]$
over $S = -W_f$ and may be formulated as an instance of the general
moment problem, which we describe next.

\begin{figure}[htb] 
  \centering 
  \includegraphics[width=80mm]{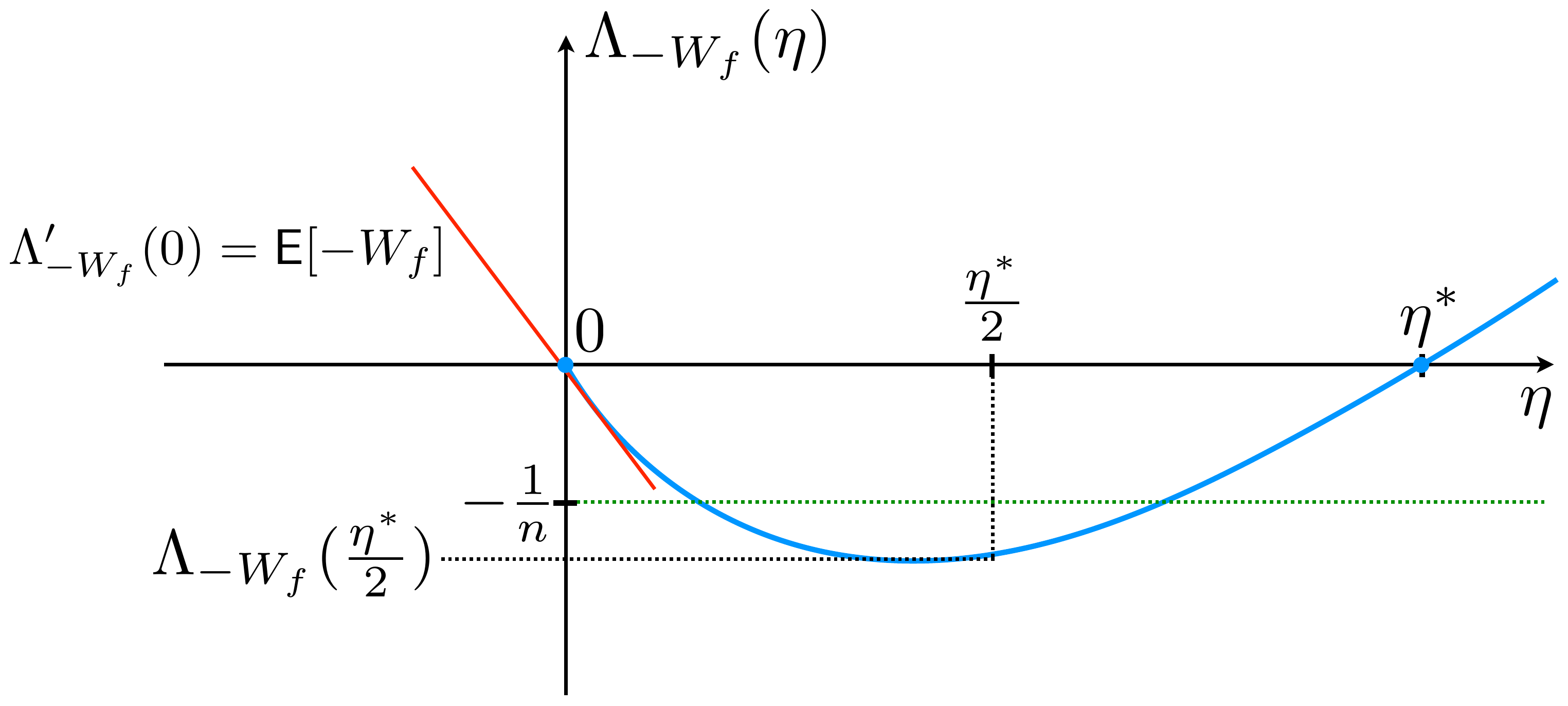} 
  \caption{\label{fig:cgf} Control of the CGF of $-W_f$ for a function
  $f$ with excess loss $\E[W_f]$ of order $\frac{1}{n}$. The derivative
  at $0$ equals $-\E[W_f]$.} 
\end{figure} 
 
\paragraph{\rm {\em The general moment problem.}}

Let $\Delta(\cS)$ be the set of all probability measures over a
measurable space $\cS$. Then for any real-valued measurable
functions $h$, $g_1,\ldots,g_m$ on $\cS$ and constants
$k_1,\ldots,k_m$, the general moment problem is the semi-infinite linear program
\begin{align} 
\begin{aligned} 
& \inf_{P \in \Delta(\cS)}  
& & \E_{S \sim P} h(S) \\ 
& \text{subject to} 
& &     \E_{S \sim P} g_j(S) = k_j, \quad j = 1, \ldots, m. 
\end{aligned} \label{eqn:the-general-moment-problem} 
\end{align} 
Define the vector-valued map $g \colon \cS \to \reals^m$ as $g(s) =
(g_1(s),\ldots,g_m(s))$ and the vector $k = (k_1,\ldots,k_m)$. Then
Theorem 3 of \citet{kemperman1968general}, which was also shown independently by \cite{richter1957parameterfreie} and \cite{karlin1966tchebycheff}, states that, if $k \in
\interior \convhull(g(\cS))$, the optimal value of problem
\eqref{eqn:the-general-moment-problem} equals 
\begin{align} 
\sup \biggl\{ d_0 + \sum_{j=1}^m d_j k_j : d^* = (d_0, d_1, \ldots, d_m) \in D^* \biggr\} , \label{eqn:the-optimal-value} 
\end{align} 
where $D^* \subseteq \reals^{m+1}$ is the set
\begin{align} 
D^* := \biggl\{ d^* = (d_0, d_1, \ldots, d_m) \in \reals^{m+1} : 
  h(s) \geq d_0 + \sum_{j=1}^m d_j g_j(s) \text{ for all } s \in \cS  
\biggr\} . \label{eqn:the-D-star} 
\end{align} 
Instantiating, we choose $\cS = [-1, 1]$ and define
\begin{align*} 
h(s) = -e^{(\eta^* / 2) s}, && 
g_1(s) = s, &&  
g_2(s) = e^{\eta^* s}, && 
k_1 = -\frac{a}{n}, && 
k_2 = 1,
\end{align*} 
which yields the following special case of problem \eqref{eqn:the-general-moment-problem}: 
\begin{subequations} 
\label{eqn:our-general-moment-problem} 
\begin{align} 
\inf_{P \in \Delta([-1, 1])} 
& \quad -\E_{S \sim P} e^{(\eta^* / 2) S} \label{eqn:our-general-moment-problem-a} \\ 
\text{subject to} 
& \quad \E_{S \sim P} S = -\frac{a}{n} \label{eqn:our-general-moment-problem-b} \\ 
& \quad \E_{S \sim P} e^{\eta^* S} = 1 \label{eqn:our-general-moment-problem-c} . 
\end{align} 
\end{subequations} 
Equation~\ref{eqn:the-optimal-value} from the general moment problem now instantiates to 
\begin{align} 
\sup \left\{ d_0 - \frac{a}{n} d_1 + d_2 : d^* = (d_0, d_1, d_2) \in D^* \right\} , \label{eqn:our-optimal-value} 
\end{align} 
with $D^*$ equal to the set 
\begin{align} 
\left\{ d^* = (d_0, d_1, d_2) \in \reals^3 :  
-e^{(\eta^*/2) s} \geq d_0 + d_1 x + d_2 e^{\eta^* s} \text{ for all } s \in [-1, 1] \right\} . \label{eqn:our-D-star} 
\end{align} 

Applying Theorem 3 of \citet{kemperman1968general} requires $k \in
\interior \convhull g([-1, 1])$. We first characterize when $k \in \convhull g([-1,1])$ holds and handle the $\interior \convhull g([-1,1])$ version after \cref{thm:stochastic-mixability-concentration}. The proof of the next result, along with all subsequent results in this section, can be found in \cref{app:fastratesproof}.
\begin{lemma}
\label{lemma:feasible-moments} 
For $a > 0$, the point $k = \left( -\frac{a}{n}, 1 \right) \in \convhull(g([-1, 1]))$ if and only if 
\begin{align} 
\frac{a}{n}  
\leq \frac{e^{\eta^*} + e^{-\eta^*} - 2}{e^{\eta^*} - e^{-\eta^*}}  
= \frac{\cosh(\eta^*) - 1}{\sinh(\eta^*)} . \label{eqn:feasibility} 
\end{align}
Moreover, $k \in \interior \convhull(g([-1, 1]))$ if and only if the inequality in \eqref{eqn:feasibility} is strict.
\end{lemma}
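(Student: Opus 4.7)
My plan is to describe $\convhull(g([-1, 1]))$ explicitly as a planar region and then read off, by substitution, when the point $k = (-a/n, 1)$ lies in it and in its interior.

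First, I would observe that $g([-1, 1]) = \{(s, e^{\eta^* s}) : s \in [-1, 1]\}$ is the graph of the strictly convex function $\varphi(s) = e^{\eta^* s}$ on $[-1, 1]$. The convex hull of such a graph is the planar region sandwiched between $\varphi$ itself (the lower boundary) and the chord joining $(-1, e^{-\eta^*})$ and $(1, e^{\eta^*})$ (the upper boundary). Concretely, for any probability measure $P$ on $[-1, 1]$ with first moment $x := \E_{S \sim P}[S]$, the number $y := \E_{S \sim P}[e^{\eta^* S}]$ must satisfy $e^{\eta^* x} \leq y$ (by Jensen's inequality applied to the convex map $s \mapsto e^{\eta^* s}$) and $y \leq \cosh(\eta^*) + x \sinh(\eta^*)$ (by integrating the inequality $e^{\eta^* s} \leq \tfrac{1 - s}{2} e^{-\eta^*} + \tfrac{1 + s}{2} e^{\eta^*}$, valid for $s \in [-1, 1]$ since $\varphi$ lies below its chord). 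Both inequalities are tight: the lower bound by a Dirac at $x$, the upper by the unique two-point mixture on $\{-1, 1\}$ with mean $x$. Thus
\begin{equation*}
  \convhull(g([-1, 1])) = \bigl\{(x, y) : x \in [-1, 1],\ e^{\eta^* x} \leq y \leq \cosh(\eta^*) + x \sinh(\eta^*)\bigr\},
\end{equation*}
with interior given by the strict versions of all three defining inequalities (as this is a compact convex body in $\reals^2$ whose upper and lower boundaries are disjoint on $(-1, 1)$).

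Second, I would substitute $(x, y) = (-a/n, 1)$. The lower-curve constraint $e^{-\eta^* a/n} \leq 1$ is automatic for $a > 0$, so the only binding condition comes from the chord constraint $1 \leq \cosh(\eta^*) - (a/n) \sinh(\eta^*)$. Dividing by $\sinh(\eta^*) > 0$ and rearranging yields precisely \eqref{eqn:feasibility}. The domain constraint $a/n \leq 1$ is subsumed via the half-angle identity $(\cosh(\eta^*) - 1)/\sinh(\eta^*) = \tanh(\eta^*/2) < 1$, so it follows automatically from \eqref{eqn:feasibility}. The interior statement follows by the same derivation with every inequality strengthened: $e^{-\eta^* a/n} < 1$ and $a/n < 1$ remain automatic, and the chord constraint becomes the strict version of \eqref{eqn:feasibility}.

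I do not foresee any real obstacle: once the convex hull is correctly identified as the region bounded by the exponential curve below and its chord above, the lemma reduces to a one-line algebraic rearrangement. The only minor care needed is to verify that the auxiliary domain constraint $|{-a/n}| \leq 1$ is indeed implied by the main bound, which follows cleanly from $\tanh(\eta^*/2) < 1$.
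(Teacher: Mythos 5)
Your proof is correct and takes essentially the same approach as the paper: both characterize $\convhull(g([-1,1]))$ explicitly as the planar region between the exponential curve and its chord, substitute the point $k = (-a/n, 1)$, note that the lower-curve constraint is automatic, and read off \eqref{eqn:feasibility} from the chord constraint. Your observation that the domain constraint $a/n \leq 1$ is subsumed by \eqref{eqn:feasibility} via $\tanh(\eta^*/2) < 1$ is a small refinement the paper does not make (it instead appeals to the boundedness of the loss), but it does not change the substance of the argument.
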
 

Note that \eqref{eqn:feasibility} is guaranteed to hold, because otherwise the semi-infinite linear program \eqref{eqn:our-general-moment-problem} is infeasible (which in turn implies that such an excess loss random variable cannot exist).

The next theorem is a key result for using the strong central condition to control the CGF. 
\begin{theorem}
\label{thm:stochastic-mixability-concentration} 
Let $f$ be an element of $\F$ with $\xslossat{Z}$ taking values in $[-1,
1]$, $n \in \nat$, $\E_{Z \sim P} \xslossat{Z} = \frac{a}{n}$ for some $a > 0$, and $\Lambda_{-\xslossat{Z}}(\eta^*) = 0$ for some $\eta^* > 0$. If 
\begin{align} 
\frac{a}{n} < \frac{\cosh(\eta^*) - 1}{\sinh(\eta^*)} , \label{eqn:interior-point} 
\end{align} 
\begin{flalign*} 
\text{then} &&\Lambda_{-\xslossat{Z}}(\eta^* / 2) \leq \frac{-0.21 (\eta^*
\opwedge 1) a}{n}. && 
\end{flalign*} 
\end{theorem}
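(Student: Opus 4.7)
My plan is to apply the Kemperman duality already set up in the subsection. Under hypothesis \eqref{eqn:interior-point} together with Lemma~\ref{lemma:feasible-moments}, the prescribed moment vector $(-a/n,1)$ lies in the interior of $\convhull g([-1,1])$, so Theorem~3 of \cite{kemperman1968general} applies: the quantity I want to bound, $\E[e^{(\eta^*/2) S}]$ with $S = -W_f$, equals
\begin{equation*}
  \min \bigl\{\, c_0 + c_2 - (a/n)\, c_1 \;:\; c_0 + c_1 s + c_2 e^{\eta^* s} \geq e^{(\eta^*/2) s}\ \text{for all } s \in [-1,1]\, \bigr\}
\end{equation*}
(obtained by writing our \eqref{eqn:our-optimal-value}--\eqref{eqn:our-D-star} with the sign flip $c_j = -d_j$). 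It therefore suffices to exhibit a single dual-feasible triple whose value is at most $1 - 0.21(\eta^*\wedge 1)\, a/n$; the bound on $\Lambda_{-W_f}(\eta^*/2)$ then follows from $\log(1-x) \leq -x$.

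To construct such a triple I would exploit complementary slackness. The primal-maximizing law is extremal in the moment polytope and hence supported on at most two points; since $\E[S] = -a/n < 0$ pulls mass leftward while the objective $\E[e^{(\eta^*/2) S}]$ rewards mass on the right, one of these points must be the left boundary $s=-1$ (support $\{s,1\}$ for $s \in (-1,-a/n)$ can be ruled out by a straightforward perturbation that increases the objective while preserving both moments). Denote the other point by $s^\star \in (-1,1)$, pinned down implicitly by the two moment equations $\E[S] = -a/n$, $\E[e^{\eta^* S}] = 1$. The corresponding dual triple is then determined by three linear equations -- equality of the dual inequality at $s=-1$ and tangency (equality plus vanishing derivative) at $s=s^\star$ -- giving $(c_0,c_1,c_2)$ explicitly in terms of $s^\star$ and $\eta^*$. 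Global dual feasibility on $[-1,1]$ follows from an ``S-shape'' argument: the slack $\psi(s) := c_0 + c_1 s + c_2 e^{\eta^* s} - e^{(\eta^*/2)s}$ has $\psi''(s) = (\eta^*)^2 \bigl[c_2 e^{\eta^* s} - \tfrac{1}{4} e^{(\eta^*/2)s}\bigr]$, which changes sign at most once because $e^{\eta^* s}$ is ``more convex'' than $e^{(\eta^*/2)s}$. Hence $\psi$ is concave-then-convex, and its prescribed simple root at $-1$ together with the double root at $s^\star$ already exhaust the allowed sign changes, forcing $\psi \geq 0$ throughout $[-1,1]$.

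The main obstacle is extracting the clean numerical bound $-0.21(\eta^* \wedge 1)\, a/n$ from the implicit dual formula; the factor $\eta^* \wedge 1$ signals that a case split is required. For $\eta^* \leq 1$, I would Taylor-expand the hyperbolic and exponential expressions for $s^\star$ and for $c_0 + c_2 - (a/n)c_1$ jointly in $\eta^*$ and $a/n$, using the Kemperman-feasibility bound $a/n \leq \tanh(\eta^*/2) \leq \eta^*/2$ to control remainders; the leading term is of order $\eta^* \cdot a/n$ with a coefficient strictly greater than $0.21$, and the slack in $0.21$ absorbs the quadratic remainder. For $\eta^* > 1$, I would show that the improvement $1 - \E[e^{(\eta^*/2)S}]$ (viewed as a function of $\eta^*$ with $a/n$ fixed in the feasible range) is monotone non-decreasing on $\eta^* \geq 1$, so evaluating at $\eta^*=1$ gives a uniform bound of the form $-0.21 \cdot a/n$, matching $\eta^* \wedge 1 = 1$. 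Putting the two regimes together and taking logs produces the stated bound. Compared to constructing and verifying the dual witness, which is largely mechanical given the S-shape argument, this is the step where all the genuine effort lies; the specific value $0.21$ is not the sharp constant but whatever numerical value makes the two regime-wise inequalities hold simultaneously.
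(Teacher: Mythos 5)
The high-level framework you use --- Kemperman duality followed by exhibiting a dual-feasible triple that witnesses the bound --- is the same as the paper's. The divergence is in which tightness conditions you impose on the dual slack $\psi(s) = c_0 + c_1 s + c_2 e^{\eta^* s} - e^{(\eta^*/2)s}$. The paper fixes the double root of $\psi$ at $s=0$, which is the natural anchor because the constraint $\E[e^{\eta^* S}]=1$ is satisfied with equality by the point mass at $0$. That choice pins down $c_0 = 1 - c_2$ and $c_1 = \eta^*(\tfrac{1}{2} - c_2)$, reduces the witness to a one-parameter family in $c_2$ with the only remaining feasibility check being $\psi(-1) \geq 0$, and --- crucially --- makes the dual objective $c_0 + c_2 - (a/n)c_1 = 1 - \tfrac{a\eta^*}{n}(\tfrac{1}{2} - c_2)$ explicitly \emph{affine} in $a/n$ with slope depending only on $\eta^*$. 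Extracting a bound of the form $1 - c\,(\eta^* \wedge 1)\,a/n$ then collapses to a one-dimensional calculus problem in the feasible range $c_2 \geq \tfrac{1}{4}\kappa(-\eta^*/2)/\kappa(-\eta^*)$, handled by a case split at $\eta^* = 1$.

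Your construction instead anchors a simple root at $s = -1$ and a double root at an implicitly defined interior point $s^\star$, where $s^\star$ is determined by the primal moment equations and hence depends on $a/n$. This couples the dual coefficients to $a/n$ nonlinearly and destroys the affine structure that makes the paper's numerics tractable. The step you rightly identify as ``where all the genuine effort lies'' --- producing the constant $0.21(\eta^* \wedge 1)$ --- is left as a plan (Taylor expansion for $\eta^* \leq 1$, an unproved monotonicity claim for $\eta^* > 1$), but that plan is the entire content of the theorem, and your parametrization makes carrying it out strictly harder than the paper's. Two further soft spots: the structural claim that the primal optimizer is two-point supported with $-1$ in its support is both unnecessary (weak duality against any feasible triple suffices, so you need not identify the exact optimizer) and not established --- a moment set over $\Delta([-1,1])$ with two moment constraints has extreme points supported on up to three atoms, and the perturbation ruling out $\{s,1\}$ support is asserted rather than argued; and the S-shape feasibility argument requires $c_2 > 0$ and the inflection of $\psi$ to sit to the left of $s^\star$, neither of which follows automatically from your implicit defining equations.
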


\begin{corollary}
\label{cor:stochastic-mixability-concentration}
The result of \cref{thm:stochastic-mixability-concentration} also holds when the strict inequality in \eqref{eqn:interior-point} is replaced with inequality, i.e.~$\frac{a}{n} \leq \frac{\cosh(\eta^*) - 1}{\sinh(\eta^*)}$.
\end{corollary}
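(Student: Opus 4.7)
My plan is to reduce the boundary case to the strict-inequality case treated by Theorem~\ref{thm:stochastic-mixability-concentration} via a simple mixture perturbation. If $\frac{a}{n}$ already satisfies the strict inequality \eqref{eqn:interior-point}, there is nothing to prove: the conclusion is exactly the theorem. So the task is to handle the single remaining case $\frac{a}{n} = \frac{\cosh(\eta^*)-1}{\sinh(\eta^*)}$.

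For $\epsilon \in (0,1)$, let $B_\epsilon$ be a Bernoulli random variable independent of $Z$ with $\Pr(B_\epsilon=1)=1-\epsilon$, and define the mixed excess-loss random variable $\tilde W^{(\epsilon)} := B_\epsilon \cdot (\loss_f(Z)-\loss_{f^*}(Z))$. The three properties I need are immediate: (i) $\tilde W^{(\epsilon)} \in [-1,1]$ since the excess loss lies in $[-1,1]$ and $0\in[-1,1]$; (ii) $\E \tilde W^{(\epsilon)} = (1-\epsilon)\, a/n < a/n = \frac{\cosh(\eta^*)-1}{\sinh(\eta^*)}$, so the strict inequality \eqref{eqn:interior-point} now holds with $a$ replaced by $(1-\epsilon)a$; and (iii) $\E e^{-\eta^* \tilde W^{(\epsilon)}} = (1-\epsilon)\E e^{-\eta^*(\loss_f(Z)-\loss_{f^*}(Z))} + \epsilon = (1-\epsilon)\cdot 1 + \epsilon = 1$, so the hypothesis $\Lambda_{-\tilde W^{(\epsilon)}}(\eta^*)=0$ is preserved exactly (not merely $\le 0$).

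Hence Theorem~\ref{thm:stochastic-mixability-concentration} applies to $\tilde W^{(\epsilon)}$ and yields
\begin{equation*}
  \log\!\left((1-\epsilon)\,e^{\Lambda_{-(\loss_f-\loss_{f^*})}(\eta^*/2)} + \epsilon\right)
  \;=\; \Lambda_{-\tilde W^{(\epsilon)}}(\eta^*/2)
  \;\le\; -0.21\,(\eta^*\wedge 1)\,\frac{(1-\epsilon)a}{n}.
\end{equation*}
Letting $\epsilon\downarrow 0$ and using continuity of the logarithm and of the right-hand side in $\epsilon$ gives $\Lambda_{-(\loss_f-\loss_{f^*})}(\eta^*/2) \le -0.21\,(\eta^*\wedge 1)\,a/n$, which is the desired bound.

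There is no real obstacle: the construction of $\tilde W^{(\epsilon)}$ is chosen precisely so that mixing with the point mass at $0$ (a) leaves the second constraint $\E e^{-\eta^* S}=1$ invariant, because $e^{-\eta^*\cdot 0}=1$, while (b) strictly shrinks the mean toward $0$, moving the moment vector $k$ off the boundary of $\convhull(g([-1,1]))$ into its interior. The continuity step is trivial because all quantities are smooth in $\epsilon$ at $\epsilon=0$. In effect, this shows that the infimum value of the general moment problem \eqref{eqn:our-general-moment-problem} is upper semi-continuous at the boundary, which is all that is needed to extend the CGF bound from the interior of $\convhull(g([-1,1]))$ to its closure.
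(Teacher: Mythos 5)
Your proposal is correct, and it takes a genuinely different and cleaner route than the paper. The paper's own proof perturbs the negative excess-loss $S = -W_f$ by redistributing mass from a negative-valued set $A$ to positive points ($\tfrac34$ in one variant, $\{\tfrac12,1\}$ in another), then uses a strict-convexity argument comparing the two perturbations to deduce that the resulting moment vector must lie strictly inside $\convhull(g([-1,1]))$; it then applies the theorem to the perturbed variable (whose $\eta_\varepsilon$ differs from $\eta^*$) and invokes dominated convergence. Your mixture with a point mass at $0$ is a sharper choice: because $e^{-\eta^*\cdot 0}=1$, the second moment constraint $\E e^{-\eta^* \tilde W^{(\epsilon)}}=1$ is preserved \emph{exactly at the same} $\eta^*$, while the first moment shrinks to $(1-\epsilon)a/n$, so the moment vector slides off the boundary into the interior along the line $\{\E e^{\eta^* S}=1\}$ without changing the $\eta$ parameter. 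This eliminates the two-perturbation comparison, the argument for why the perturbed point is interior, and the need to control how $\eta_\varepsilon$ depends on $\varepsilon$; the limit $\epsilon\downarrow 0$ is then a trivial continuity statement rather than a dominated-convergence argument. What the paper's approach buys is essentially nothing extra here; your version is strictly simpler and would be a worthwhile replacement.
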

 
We now present an extension of this result for losses with range $[0, \bound]$.
\begin{corollary}
\label{cor:bounded-losses}
Let $g_1(x) = x$ and $y_2 = 1$ be common settings for the following two problems. 
The instantiation of problem \eqref{eqn:the-general-moment-problem} with $\cS = [-\bound, \bound]$, $h(x) = -e^{(\eta / 2) x}$, $g_2(x) = e^{\eta x}$, and $y_1 = -\frac{a}{n}$ has the same optimal value as the instantiation of problem \eqref{eqn:the-general-moment-problem} with $\cS = [-1, 1]$, $h(x) = -e^{(\bound \eta / 2) x}$, $g_2(x) = e^{ (\bound \eta) x}$, and $y_1 = -\frac{a / \bound}{n}$. 
\end{corollary}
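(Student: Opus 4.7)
The plan is to establish the equivalence of the two moment problems by a simple change of variables, namely the rescaling $S' = S/\bound$. First, I would note that the map $T\colon [-\bound,\bound] \to [-1,1]$ defined by $T(x) = x/\bound$ is a bijection, and pushforward under $T$ gives a bijection between $\Delta([-\bound,\bound])$ and $\Delta([-1,1])$: every $P \in \Delta([-\bound,\bound])$ corresponds to $P' = T_\#P \in \Delta([-1,1])$ via $\E_{S' \sim P'}[\varphi(S')] = \E_{S \sim P}[\varphi(S/\bound)]$ for all bounded measurable $\varphi$.

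Second, I would verify that this bijection matches the constraints. The first constraint of the Instance-$\bound$ problem, $\E_{S \sim P}[S] = -a/n$, becomes $\bound \cdot \E_{S' \sim P'}[S'] = -a/n$, i.e., $\E_{S' \sim P'}[S'] = -\frac{a/\bound}{n}$, which is precisely the first constraint of the Instance-$1$ problem. The second constraint, $\E_{S \sim P}[e^{\eta S}] = 1$, becomes $\E_{S' \sim P'}[e^{\eta \bound S'}] = 1$, which is the second constraint of the Instance-$1$ problem. Thus $P$ is feasible for Instance-$\bound$ iff $P'$ is feasible for Instance-$1$.

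Third, the objective values coincide:
\begin{equation*}
-\E_{S \sim P}\bigl[e^{(\eta/2) S}\bigr] = -\E_{S' \sim P'}\bigl[e^{(\eta/2)\bound S'}\bigr] = -\E_{S' \sim P'}\bigl[e^{(\bound \eta/2) S'}\bigr],
\end{equation*}
which is exactly the objective of the Instance-$1$ problem. Since the bijection preserves both feasibility and objective value, the infima of the two problems coincide.

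There is no real obstacle here; the argument is simply a rescaling. One small point worth being explicit about is that the bijection $P \mapsto P'$ does not require any regularity beyond measurability, since $T$ is continuous and invertible, so no measure-theoretic subtleties arise. Consequently the corollary follows directly, and this equivalence will allow \cref{thm:stochastic-mixability-concentration}, proved for $\bound = 1$, to be applied at scale $\eta' = \bound\eta$ and mean $-\frac{a/\bound}{n}$ to recover bounds for general loss range $[0,\bound]$.
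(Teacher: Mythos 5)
Your proof is correct and takes essentially the same approach as the paper: rescaling by $1/V$ to move between the two problems. The paper phrases it as a two-sided inequality $b_X \ge b_Y$ and $b_Y \ge b_X$ obtained by passing a maximizer back and forth; your pushforward-bijection framing is slightly cleaner in that it avoids tacitly assuming the optimum is attained, but the underlying idea is identical.
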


\subsection{Fast Rates} 
\label{sec:fast-rates-main-results} 
 
We now show how the above results can be used to obtain an exact oracle inequality with a fast rate. We first present a result for finite classes and then present a result for VC-type classes (classes with logarithmic universal metric entropy). 
 
\begin{theorem}
\label{thm:finite-fast-rates}  
Let $(\loss, \Prob, \F)$ satisfy the strong $\eta^*$-central condition, where $|\F|  = N$, $\loss$ is a nonnegative loss, and $\sup_{f \in \F} \loss_f(Z) \leq \bound$ a.s.~for a constant $\bound$. Then for all $n \geq 1$, with probability at least $1 - \delta$ 
\begin{align*} 
\E_{Z \sim P} [ \loss_{\erm}(Z) ]  
\leq \E_{Z \sim P} [ \loss_{f^*}(Z) ]  
       + \frac{5 \max\left\{ \bound, \frac{1}{\eta^*} \right\} \left( \log \frac{1}{\delta} + \log N  
\right)}{n} . 
\end{align*} 
\end{theorem}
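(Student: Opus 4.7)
The plan is to bound, for each fixed $f \in \F$, the probability that ERM selects $f$ when its excess risk is large, and then union-bound over the $N$ hypotheses. The central condition feeds into a Cram\'er-Chernoff argument whose exponent is controlled via the CGF bound developed in Section~\ref{sec:general-moment-problem}; the resulting tail converts one unit of excess risk into roughly $\min(\eta^*, 1/\bound)$ units of exponential decay, which is exactly the constant appearing in the theorem.

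First I would fix $f \in \F$ with excess risk $\E[W_f] = \epsilon_f > 0$. Since $\erm$ minimizes the empirical loss and $f^* \in \F$, the event $\{\erm = f\}$ is contained in $\{\hat W_f \leq 0\}$, where $\hat W_f = \tfrac{1}{n}\sum_{j=1}^n W_f(Z_j)$. Applying Lemma~\ref{lem:cramer-chernoff} at $\eta = \eta^*/2$ and $t = 0$ gives $\Pr(\hat W_f \leq 0) \leq \exp\bigl(n \Lambda_{-W_f}(\eta^*/2)\bigr)$. The strong $\eta^*$-central condition provides $\Lambda_{-W_f}(\eta^*) \leq 0$. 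To turn this into a quantitative bound on $\Lambda_{-W_f}(\eta^*/2)$, rescale $W_f/\bound \in [-1,1]$, whose CGF at $\bound \eta^*$ equals $\Lambda_{-W_f}(\eta^*) \leq 0$; invoking Theorem~\ref{thm:stochastic-mixability-concentration} and Corollary~\ref{cor:stochastic-mixability-concentration} (via Corollary~\ref{cor:bounded-losses} to move between the two scales) then yields
\begin{equation*}
\Lambda_{-W_f}(\eta^*/2) \;\leq\; -\,0.21\,\min(\eta^*,\,1/\bound)\,\epsilon_f.
\end{equation*}

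Combining the two displays gives $\Pr(\erm = f) \leq \exp\bigl(-0.21\,\min(\eta^*,1/\bound)\,n\epsilon_f\bigr)$. Union-bounding over the at most $N$ hypotheses with $\epsilon_f > \epsilon$, the probability that $\erm$ has excess risk exceeding $\epsilon$ is at most $N\exp\bigl(-0.21\,\min(\eta^*,1/\bound)\,n\epsilon\bigr)$. Setting this equal to $\delta$ and solving yields
\begin{equation*}
\epsilon \;=\; \frac{\log N + \log(1/\delta)}{0.21\, n\,\min(\eta^*, 1/\bound)} \;\leq\; \frac{5\,\max\{\bound, 1/\eta^*\}\bigl(\log N + \log(1/\delta)\bigr)}{n},
\end{equation*}
which matches the theorem's statement (using $1/0.21 < 5$).

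The main obstacle is ensuring that Theorem~\ref{thm:stochastic-mixability-concentration} can be applied in the present setting, since that theorem is stated for losses in $[-1,1]$ satisfying $\Lambda_{-W_f}(\eta^*) = 0$ with equality and the strict interiority condition \eqref{eqn:interior-point}, whereas here the central condition only gives $\leq 0$ and the loss range is $[-\bound,\bound]$. The range issue is handled uniformly by Corollary~\ref{cor:bounded-losses}; Corollary~\ref{cor:stochastic-mixability-concentration} relaxes strict interiority to $\leq$; and the gap between equality and inequality in the central condition is benign, because relaxing the constraint $\E[e^{\eta^* S}] = 1$ to $\leq 1$ in the linear program \eqref{eqn:our-general-moment-problem} only enlarges the feasible set, so the optimal value (and hence the upper bound on $\Lambda_{-W_f}(\eta^*/2)$) can only become more negative. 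Finally, the degenerate case in which $\epsilon_f$ exceeds the feasibility threshold of Lemma~\ref{lemma:feasible-moments} is unproblematic: either no such $W_f$ exists, or a crude Hoeffding bound on $\Pr(\hat W_f \leq 0)$ is already sharp enough to give the same exponential decay.
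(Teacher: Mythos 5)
The high-level architecture of your argument matches the paper's: fix $f$, bound $\Pr(\hat W_f \le 0)$ via Cram\'er--Chernoff with an exponent controlled by the moment-problem CGF bound of Section~\ref{sec:general-moment-problem}, then union-bound over the $N$ hypotheses and invert. However, the middle step as you have written it has a genuine gap, and the claim you use to patch it is in fact backwards.

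The issue is how you handle the mismatch between the central condition, which only gives $\Lambda_{-W_f}(\eta^*)\le 0$, and Theorem~\ref{thm:stochastic-mixability-concentration} (via Corollary~\ref{cor:stochastic-mixability-concentration}), which controls the CGF at $\eta^*/2$ only under $\Lambda_{-W_f}(\eta^*)=0$. You argue that replacing the equality constraint $\E[e^{\eta^* S}]=1$ in the LP \eqref{eqn:our-general-moment-problem} by $\E[e^{\eta^* S}]\le 1$ ``only enlarges the feasible set, so the optimal value \ldots can only become more negative.'' But the LP is an \emph{infimum} of $-\E[e^{(\eta^*/2)S}]$; enlarging the feasible set makes that infimum smaller, which means $\sup_P \E[e^{(\eta^*/2)S}]$ becomes \emph{larger}, so the resulting upper bound on $\Lambda_{-W_f}(\eta^*/2)$ becomes \emph{less} negative, i.e.\ weaker, not stronger. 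The relaxation therefore does not transfer the bound to the case $\Lambda_{-W_f}(\eta^*)<0$. Moreover, there is a second, separate failure mode: if $\mu=\E[W_f]$ is large enough that the equality-constrained moment point $(-\mu,1)$ lies outside $\convhull(g([-1,1]))$ (the feasibility bound of Lemma~\ref{lemma:feasible-moments}), the equality-constrained LP has no feasible point at all even though the actual $W_f$ (satisfying the $\le$ version) exists; your sentence about a ``crude Hoeffding bound'' waves this away rather than supplying an argument.

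The paper avoids both problems by a different decomposition: instead of fixing the exponent at $\eta^*/2$, it partitions $\F$ according to the function-specific critical value $\eta_f\ge\eta^*$ for which $\Lambda_{-W_f}(\eta_f)=0$, applies Cram\'er--Chernoff at $\eta_f/2$ (not $\eta^*/2$), obtains the bound $\exp\bigl(-0.21\,\eta_f a/(\bound\eta_f\vee 1)\bigr)$, and then uses that $\eta\mapsto\eta/(\bound\eta\vee 1)$ is nondecreasing to replace $\eta_f$ by $\eta^*$ uniformly. This sidesteps having to compare $\Lambda_{-W_f}(\eta^*/2)$ and $\Lambda_{-W_f}(\eta_f/2)$, which do not compare in the direction your argument would require (the CGF is decreasing-then-increasing on $[0,\eta_f]$, so $\Lambda_{-W_f}(\eta^*/2)\ge\Lambda_{-W_f}(\eta_f/2)$ when both lie left of the minimizer). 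The case where $\eta_f$ does not exist (i.e., $\lim_{\eta\to\infty}\Lambda_{-W_f}(\eta)<0$) is handled by a genuine argument, Lemma~\ref{lemma:hyper-concentrated}, which constructs a dominated modification $W_f'\le W_f$ with a finite critical value and arbitrarily close mean; this is also missing from your proposal. Your approach would need either to correctly prove the (possibly true, but unproven) monotonicity of the sup of $\E[e^{(\eta^*/2)S}]$ in the right-hand side of the constraint $\E[e^{\eta^* S}]=c$ as $c$ ranges over $(0,1]$, together with a separate treatment of the infeasible and hyper-concentrated cases, or to switch to the paper's $\eta_f$-partition.
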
 
 
Before presenting the result for VC-type classes, we require some definitions.  
For a pseudometric space $(\G, d)$, for any $\varepsilon > 0$, let
$\N(\varepsilon, \G, d)$ be the $\varepsilon$-covering number of $(\G,
d)$; that is, $\N(\varepsilon, \G, d)$ is the minimal number of balls
of radius $\varepsilon$ needed to cover $\G$. We will further
constrain the cover (the set of centers of the balls) to be a subset
of $\G$ (i.e.~to be proper), thus ensuring that the strong central
condition assumption transfers to any (proper) cover of $\F$. Note
that the `proper' requirement at most doubles the constant $K$ below,
as shown in Lemma 2.1 of  \cite{vidyasagar2002learning}. 
 
We now present the fast rates result for VC-type classes. The proof, which can be found as the proof of Theorem 7 of 
\cite{mehta2014stochastic}, uses Theorem 6 of \cite{mehta2014stochastic} and the proof of \cref{thm:finite-fast-rates}.  
Below, we denote the loss-composed version of a function class $\F$ as $\lossclass := \{\loss_f : f \in \F\}$. 
 
\begin{theorem}
\label{thm:vc-type-fast-rates} 
Let $(\loss, \Prob, \F)$ satisfy the strong $\eta^*$-central condition with $\lossclass$ separable, where,  
for a constant $K \geq 1$, for each $\varepsilon \in (0, K]$ we have $\N(\lossclass, L_2(\Prob), \varepsilon) \leq \left( \frac{K}{\varepsilon} \right)^\capacity$, and $\sup_{f \in \F} \lossatof{Y}{f(X)} \leq \bound$ a.s.~for a constant $\bound \geq 1$. Then for all $n \geq 5$ and $\delta \leq \frac{1}{2}$, with probability at least $1 - \delta$,
{\small 
\begin{multline} 
\E_{Z \sim P} [ \loss_{\erm} (Z) ]  
\leq \\ \E_{Z \sim P} [ \loss_{f^*} (Z) ]  
+ \frac{1}{n} \max\left\{  
\begin{array}{c} 
  8 \max\left\{ \bound, \frac{1}{\eta^*} \right\}  
  \left( \capacity \log(K n) + \log \frac{2}{\delta} \right) , \\ 
  2 \bound \left(  
    1080 \capacity \log(2 K n)  
    + 90 \sqrt{\left( \log \frac{2}{\delta} \right) \capacity \log(2 K n)}  
    + \log \frac{2 e}{\delta}  
  \right) 
\end{array} 
\right\} + \frac{1}{n}. \nonumber 
\end{multline} 
} 
\end{theorem}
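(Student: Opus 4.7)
The plan is to proceed by a discretization plus localization argument that reduces the VC-type class setting to the finite class setting handled by \cref{thm:finite-fast-rates}, combined with a one-sided uniform concentration inequality to control the approximation error.

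First, I would build a proper $\varepsilon$-cover $\F_{\varepsilon}$ of $\lossclass$ in $L_2(P)$ at scale $\varepsilon \asymp 1/n$, of cardinality $N_{\varepsilon} \leq (Kn)^{\capacity}$. Since the cover is proper (a subset of $\F$), it inherits the strong $\eta^*$-central condition from $(\loss,P,\F)$. For each $f \in \F$ let $\tilde f \in \F_{\varepsilon}$ be a closest element in $L_2(P)$, so $\E[(\loss_f(Z)-\loss_{\tilde f}(Z))^2] \leq \varepsilon^2$. Decompose
\begin{align*}
R(P,\erm) - R(P,f^*)
  &= \bigl[R(P,\erm) - R(P,\widetilde{\erm})\bigr]
    + \bigl[R(P,\widetilde{\erm}) - R(P,f^*)\bigr].
\end{align*}

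For the second bracket, I would apply \cref{thm:finite-fast-rates} directly to $(\loss,P,\F_{\varepsilon})$, which yields a bound of the form $\frac{5\max\{V,1/\eta^*\}(\log N_{\varepsilon}+\log(2/\delta))}{n}$ holding with probability $1-\delta/2$. Substituting $N_\varepsilon \leq (Kn)^\capacity$ produces the first term inside the maximum in the theorem statement. The subtle point is that $\widetilde{\erm}$ need not be the ERM over $\F_\varepsilon$, but it is within $\varepsilon/\sqrt{n}$ in empirical risk of the ERM over $\F_\varepsilon$ (by a separate concentration step for the single random pair $(\erm,\widetilde{\erm})$), so the same bound applies up to the additional $1/n$ slack.

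For the first bracket, I would invoke the one-sided uniform concentration inequality for the localized differences $\{\loss_f - \loss_{\tilde f} : f \in \F\}$ --- this is the role of Theorem~6 of \cite{mehta2014stochastic}, a Talagrand-style inequality that bounds $\sup_f\{(P-P_n)(\loss_f - \loss_{\tilde f})\}$ in terms of the Rademacher complexity of the difference class plus a $\sqrt{(\log 1/\delta)/n}$ deviation term. Because the class has $L_2(P)$-radius $\varepsilon \asymp 1/n$ and VC-type entropy, Dudley's entropy integral gives Rademacher complexity of order $\sqrt{\capacity\log(Kn)/n} \cdot \varepsilon + V\capacity\log(Kn)/n$, which accounts for the more complicated second term inside the maximum (including the cross term $\sqrt{\capacity \log(2Kn)\log(2/\delta)}/n$). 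Since ERM chooses $\erm$ to minimize empirical risk, the one-sided deviation is what we need: $R(P,\erm) - R(P,\widetilde{\erm}) \leq P_n(\loss_{\erm}-\loss_{\widetilde{\erm}}) + \sup$-term, and the empirical gap is at most $\varepsilon$.

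The main obstacle is the localization step: one must ensure that the uniform deviation result is applied only to the \emph{localized} class of small $L_2(P)$-radius around $f^*$, rather than to the full $\F$, since otherwise only slow rates ensue. This is where the Bernstein-type implications of the central condition (\cref{thm:BernsteinComparator}) are used implicitly --- they guarantee that the variance of the excess loss scales with its expectation, so that the localized class captures all relevant $f$ with high probability. A union bound over $\delta/2$ for each of the two concentration events, combined with the choice $\varepsilon = 1/n$, gives the claimed bound; the $+1/n$ at the end absorbs residual approximation slacks.
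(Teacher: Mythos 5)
Your toolkit is exactly the one the paper uses: a proper $L_2(P)$-cover at scale $\asymp 1/n$ that inherits the strong central condition, Theorem~\ref{thm:finite-fast-rates} on that cover, and the one-sided localized uniform deviation bound (Theorem~6 of \cite{mehta2014stochastic}). That matches what the paper says the proof of Theorem~7 of \cite{mehta2014stochastic} does, so the high-level route is the right one.

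There is, however, a genuine logical error in how you glue the pieces together. You claim that $\widetilde{\erm}$ is within $\varepsilon/\sqrt{n}$ of the empirical risk of the ERM over $\F_\varepsilon$ \emph{``by a separate concentration step for the single random pair $(\erm,\widetilde{\erm})$.''} This step does not exist: $\erm$ (and hence the pair $(\erm,\widetilde{\erm})$) is data-dependent, so you cannot invoke a pointwise Bernstein/Hoeffding bound as if $\loss_{\erm}-\loss_{\widetilde{\erm}}$ were a fixed function with $L_2(P)$-norm $\leq\varepsilon$. Controlling $P_n(\loss_{\widetilde{\erm}}-\loss_{\erm})$ requires a supremum over the entire localized difference class $\{\loss_f - \loss_{\tilde f}:f\in\F\}$, which is precisely the role of the localized uniform deviation bound and which produces the Rademacher/Talagrand term (the second, more complicated entry inside the maximum). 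Relatedly, your ``first bracket'' $R(P,\erm)-R(P,\widetilde{\erm})$ needs no concentration at all: since $\|\loss_{\erm}-\loss_{\widetilde{\erm}}\|_{L_2(P)}\leq\varepsilon$, Cauchy--Schwarz gives $|R(P,\erm)-R(P,\widetilde{\erm})|\leq\varepsilon=1/n$ deterministically, and that is what the trailing $+1/n$ absorbs. So the uniform deviation bound you correctly cite belongs with the empirical-risk gap between $\widetilde{\erm}$ and $\erm$ (the step you tried to handle ``pointwise''), not with the true-risk gap in the first bracket. Re-route Theorem~6 to that step and the argument closes.
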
 
We have shown the fast rate of $O(1/n)$ under the best case of the $\gComparator$-central condition, i.e.~when $\gComparator$ is constant; however, it also is possible to recover intermediate rates for the case of general $\gComparator$. 
\begin{corollary}\label{cor:finite-intermediate-rates} 
Let $(\loss, \Prob, \F)$ satisfy the $\gComparator$-central condition hold for a finite class $\F$. 
Then, for some constant $c$, for all $n$ satisfying  
$\gComparator\left( w^{-1} \left( \frac{5 (\log \frac{1}{\delta} +
      \log N)}{c n} \right) \right) \leq \frac{1}{c V}$,  
we get an intermediate rate of 
$w\left( \frac{5 (\log \frac{1}{\delta} + \log N)}{c n} \right)$, where $w$ is the inverse of the function $x \mapsto x \gComparator(x)$. 
\end{corollary}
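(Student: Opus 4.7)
The plan is to combine the localization provided by Theorem~\ref{thm:BernsteinComparator} with the Cram\'er--Chernoff proof strategy of Theorem~\ref{thm:finite-fast-rates}. The $\gComparator$-central condition trivially implies the $\gComparator$-PPC condition by Proposition~\ref{prop:comparatortoconvexface}; Part 2 of Theorem~\ref{thm:BernsteinComparator} then gives a $\gBernstein$-Bernstein condition with $\gBernstein(x) \asymp x/\gComparator(x)$, and applying Part 1 of that theorem (with $b$ chosen as any finite upper bound on $\gComparator$, which exists by Definition~\ref{def:g-stoch-mix}) produces a function $\gComparator'$ satisfying $\gComparator'(x) \asymp \gComparator(x)$ near $0$. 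Crucially, Part 1(b) furnishes the localization I need: for every excess-risk threshold $\tau > 0$ and every $f \in \F$ with $R(P,f) - R(P,f^*) \geq \tau$,
\begin{equation*}
  \E_{Z \sim P}\!\left[ e^{\gComparator'(\tau)\,(\ell_{f^*}(Z) - \ell_f(Z))} \right] \leq 1,
\end{equation*}
i.e.\ the strong $\gComparator'(\tau)$-central condition holds on the subclass $\F_\tau := \{f^*\} \cup \{f \in \F : R(P,f) - R(P,f^*) \geq \tau\}$, whose cardinality is at most $N$. (Assumption~\ref{ass:simple} is automatic since $\F$ is finite and losses are bounded.)

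Having reduced to a strong central condition on a subclass of size $\leq N$, I replay the proof of Theorem~\ref{thm:finite-fast-rates}: Corollary~\ref{cor:bounded-losses} together with Corollary~\ref{cor:stochastic-mixability-concentration} bounds the CGF $\Lambda_{-W_f}(\gComparator'(\tau)/2)$ by $-0.21\,(\gComparator'(\tau) \wedge 1)\, \mu_f$ where $\mu_f := R(P,f) - R(P,f^*)$, and Cram\'er--Chernoff (Lemma~\ref{lem:cramer-chernoff} at $t = 0$) then yields
\begin{equation*}
  \Pr\!\left(\Probn \ell_f \leq \Probn \ell_{f^*}\right)
    \leq \exp\!\bigl(-0.21\, n\,(\gComparator'(\tau) \wedge 1)\, \mu_f\bigr)
    \qquad (f \in \F_\tau).
\end{equation*}
A union bound over $\F_\tau$ gives: with probability at least $1-\delta$, every $f \in \F_\tau$ with $\Probn \ell_f \leq \Probn \ell_{f^*}$ satisfies
\begin{equation*}
  \mu_f \leq \frac{5 \max\{V,\,1/\gComparator'(\tau)\}\,(\log(1/\delta) + \log N)}{n}.
\end{equation*}

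For the empirical risk minimizer $\hat f$, the inequality $\Probn \ell_{\hat f} \leq \Probn \ell_{f^*}$ holds by definition. A case split on whether $\hat f \in \F_\tau$ or not gives
\begin{equation*}
  R(P,\hat f) - R(P,f^*) \leq \max\!\left\{\tau,\ \frac{5 \max\{V,\,1/\gComparator'(\tau)\}\,(\log(1/\delta) + \log N)}{n}\right\},
\end{equation*}
because $\hat f \notin \F_\tau$ forces excess risk strictly below $\tau$. Under the corollary's regime hypothesis the term $1/\gComparator'(\tau)$ dominates $V$ at the optimal $\tau$, so the balance equation reduces to $\tau\,\gComparator'(\tau) = 5(\log(1/\delta) + \log N)/n$; inverting $x \mapsto x\gComparator'(x)$ and using $\gComparator' \asymp \gComparator$ to replace this inverse by $w$ (absorbing the multiplicative factor into $c$) yields the claimed rate.

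The main obstacle is pure bookkeeping: tracking the constants through the round trip $\gComparator \mapsto \gBernstein \mapsto \gComparator'$, absorbing them into the single constant $c$ that appears in both the rate and the regime hypothesis, and verifying that said hypothesis indeed places us in the branch $\max\{V,\,1/\gComparator'(\tau)\} = 1/\gComparator'(\tau)$ at the balanced value of $\tau$. No essentially new ideas beyond those of the proofs of Theorems~\ref{thm:BernsteinComparator} and~\ref{thm:finite-fast-rates} are needed.
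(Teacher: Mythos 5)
Your proposal follows essentially the same path as the paper's own proof: central $\Rightarrow$ PPC (via Proposition~\ref{prop:comparatortoconvexface}) $\Rightarrow$ Bernstein (Theorem~\ref{thm:BernsteinComparator}, Part~2) $\Rightarrow$ localized $\eta$-central condition on the high-excess-risk subclass (Part~1(b)), then a replay of the Cram\'er--Chernoff/union-bound argument from Theorem~\ref{thm:finite-fast-rates} on that subclass, and finally inversion of $x \mapsto x\gComparator(x)$. Your version is more explicit than the paper's (which absorbs the $\gComparator \mapsto \gBernstein \mapsto \gComparator'$ round trip and the $\hat f \in \F_\tau$ case split into ``a simple modification''), but the conceptual steps and the key lemmas used are identical.
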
 
\begin{proof} 
From part (2) of \cref{thm:BernsteinComparator}, the $\gComparator$-central condition implies the $\gBernstein$-Bernstein condition for $\gBernstein(x) \asymp x/\gComparator(x)$, and from part (1b) of \cref{thm:BernsteinComparator}, we then have the $\eta$-central condition for $\eta = c \gComparator(\delta)$ for the subclass of functions with excess risk above $\delta$, for some constant $c$. From here, a simple modification of the proof of \cref{thm:finite-fast-rates} yields the desired result as follows. Let $\epsilon$ correspond to the excess risk threshold above which ERM should reject all functions with high probability. Then, similar to the proof of \cref{thm:finite-fast-rates}, we upper bound the probability of ERM picking a function with excess risk $\epsilon$ or higher: 
\begin{align*} 
N \exp(n \Lambda_{-W_f}(c \gComparator(\epsilon))  
&= N \exp(n \Lambda_{-W_f/V}(c V \gComparator(\epsilon)) \\ 
&\leq N \exp\left( -0.21 n \bigl( c V \gComparator(\epsilon) \opwedge 1 \bigr) \frac{\epsilon}{V} \right) . 
\end{align*} 
For $\epsilon$ satisfying $\gComparator(\epsilon) \leq \frac{1}{c V}$, the failure probability $\delta$ is at most 
  $N \exp(-0.21 c n \epsilon \gComparator(\epsilon))$,  
and hence by inversion we get the rate  
$w\left( \frac{5 (\log \frac{1}{\delta} + \log N)}{c n} \right)$. 
\end{proof}

\section{Discussion, Open Problems and Concluding Remarks}
\label{sec:discussion}
In this paper we identified four general conditions for fast and
intermediate learning rates. The two main ones, which subsumed many previously identified conditions, where the central condition and stochastic
mixability. We provided sufficient assumptions under which the four conditions  become equivalent via the implications
\begin{equation}\label{eq:almost}
\eta \text{-central} \Rightarrow \eta \text{-predictor} \Rightarrow
\eta \text{-stochastic mixability} \Rightarrow \eta \text{-PPC}
\Rightarrow \eta \text{-central}.
\end{equation}
In Section~\ref{sec:convex-face} and~\ref{sec:four-conditions} we
considered the versions of these conditions for fixed $\eta > 0$, as
given by Theorem~\ref{thm:secondmain},
Proposition~\ref{prop:frompredictortoconvexface}, 
Proposition~\ref{prop:fromsmtoppcc} and
Theorem~\ref{thm:convexfacetocomparator}, respectively. For this fixed
$\eta > 0$ case,
all implications except one hold under surprisingly weak conditions,
in particular allowing for unbounded loss functions. The exception is
`central $\Rightarrow$ predictor'
(Theorem~\ref{thm:secondmain}). Although even this result was
applicable to some noncompact decision sets $\cF$ with unbounded
losses (Example~\ref{ex:unbounded-sq-pc}), it requires tightness and
convexity of the set $\cP$, although Example~\ref{ex:loglossassD}
shows that sometimes the implication holds even though $\cP$ is
neither tight nor convex. An important open question is whether
Theorem~\ref{thm:secondmain} still holds under weaker versions of
Assumption~\ref{ass:minimax} or
Assumption~\ref{ass:convexityContinuity}. 

Another restriction of Theorem~\ref{thm:secondmain} is that, via
Assumption~\ref{ass:convexityContinuity}, it requires convexity of the
decision set $\decisionset$, which fails for the $0/1$-loss $\zoloss$
and its conditional version, the classification loss $\classloss$.
However, we may extend the definition of $\zoloss$ to $\cF = [0,1]$
and define the resulting {\em randomized $0/1$\/} or {\em absolute\/}
loss as $\absloss_{f}(z) := |y - f|$. This can be interpreted as the
$0/1$-loss a decision maker expects to make if she is allowed to
randomize her decision by flipping a coin with bias $f$ --- a standard
concept in PAC-Bayesian approaches
\citep{audibert2004pac,catoni2007pac}. For the absolute loss, we can
consider $\eta$-stochastic mixability for $\decisionset =
\convhull(\cF) = [0,1]$, which is convex; hence, the requirement of
convex $\decisionset$ in Theorem~\ref{thm:secondmain} is not such a
concern.

In Section~\ref{sec:comp-mix-margin} we discussed weakenings of the
four conditions to their $\gComparator$-versions. Now for {\em
  bounded\/} losses, the four implications above still hold under
similar conditions as for the fixed $\eta$-case. Since the first three
implications in (\ref{eq:almost}) were proven in an `up to $\epsilon$'
form for all $\epsilon > 0$, it immediately follows that for arbitrary
functions $\gComparator$, the implications continue to hold under the
same assumptions if the $\eta$-conditions are replaced by the
corresponding $\gComparator$-conditions. This does not work for the
fourth implication, since Theorem~\ref{thm:convexfacetocomparator} is
not given in an `up to $\epsilon$' form (indeed, we conjecture that it
does not hold in this form). However, we can work around this issue by
using instead a detour via the Bernstein condition: by using first
part 2 and then part 1 in Theorem~\ref{thm:BernsteinComparator}, it
follows that the $\gComparator$-PPC condition implies the
$\gComparator'$-central condition for $\gComparator'(\epsilon) \asymp
\gComparator(\epsilon)$, so the four $\gComparator$-conditions still
imply each other, under the same assumptions as before, up to constant
factors. However, the Bernstein-detour works only for bounded losses,
and Example~\ref{ex:comparator-vs-bernstein},~\ref{ex:lastminute}
and~\ref{ex:jrtee} together indicate that in general
it cannot be made to work and indeed the analogue of (\ref{eq:almost})
for the $v$-conditions does not hold for unbounded losses: for decision problems
with polynomial rather than exponential tails on the losses,
$v$-stochastic mixability and  the $v$-PPC condition may hold whereas
the $v$-central condition does not. Thus there is the
question whether the central condition can be weakened such that the 
four implications for the $\gComparator$-versions
continue to hold, under weak conditions, for unbounded losses --- and
we regard this as the main open question posed by this work. Another
issue here is that, if in a decision problem $\dpthree$ that satisfies
a $\gComparator$-condition, we replace $\cP$ by its convex closure,
then the $\gComparator$-condition may very well be broken, so, once
again, a weakening of Assumption~\ref{ass:convexityContinuity} to
nonconvex $\cP$ seems required. Finally, it would be of considerable
interest if one could show an analogue for unbounded losses of
Proposition~\ref{prop:nonunique}, which connects --- for bounded
losses --- the central condition to the existence of a unique risk
minimizer. Relatedly, it would be desirable to link this proposition
to the results by \cite{mendelson2008lower} who also connects slow
rates with nonunique risk minimizers, and to
\cite{koltchinskii2006local} who gives a version of the Bernstein
condition that does hold if nonunique minimizers exist, indicating
that our $\eta$-central condition (which via
Proposition~\ref{prop:unique} implies unique minimizers) might
sometimes be too strong.

Apart from these implications in the `main quadrangle' of Figure \ref{fig:map-of-paper}
on page~\pageref{fig:map-of-paper}, it would be good to strengthen
some of the other connections shown in that figure, such as the
precise relation between $\eta$-mixability and $\eta$-exp-concavity.
It would also be desirable to establish connections to results in {\em
  defensive forecasting\/} \citep{Chernov2010} in which conditions
similar to both the central condition and mixability play a role;
their Theorem 9 is reminiscent of the special case of our
Theorem~\ref{thm:secondmain} for the case that $\cZ$ is finite and
$\cP$ consists of all distributions on $\cZ$.

We focused on showing {\em
  equivalence\/} of fast rate conditions and not on showing that one
can actually always {\em obtain\/} fast rates under these conditions.
For stochastic mixability, this immediately follows, under no
further conditions, from Proposition~\ref{prop:stoch-mix-AA}. For the central condition, the
situation is more complicated: in this paper we only showed that it
implies fast rates for bounded loss functions.  We
know that, for the unbounded log-loss, fast rates can be obtained
under the central condition (and no additional conditions) in a weaker
sense, involving R\'enyi and squared Hellinger distance
(Section~\ref{sec:overview}); in work in progress, we aim at showing
that the central condition implies fast rates in the standard sense
even for unbounded loss functions. This does appear possible, up to
log-factors, however it seems that here one does need weak additional
conditions such as existence of certain moments different from the
exponential moment in (\ref{eqn:basiccomparatorconditionpre}).

Second, by `fast' rates we merely meant rates of order $1/n$; it would
of course be highly desirable to characterize when the rates that are achieved
under our conditions by appropriate algorithms (ERM, Bayes MAP-style and
MDL methods for the central condition, the aggregating algorithm for
stochastic mixability) are indeed minimax optimal. Similarly, one
would need examples showing that if a condition fails, then the
corresponding fast or intermediate rates {\em cannot\/} be obtained in
general. While several such results are  available, they either focus
on showing that, in the worst-case over all $P \in \cP$, {\em no\/}
learning algorithm, proper or improper, can achieve a certain rate (in particular
\cite{audibert2009fast} gives very general results), or that a
particular proper learning algorithm such as ERM cannot achieve a
certain rate \citep{mendelson2008lower}. Currently unexplored, it
seems,  are minimax results where one looks at the optimal (not just
ERM) algorithm, but  within the restricted class of all proper learning
algorithms. 

In the spirit of Vapnik and Chervonenkis, who discovered under what conditions one can learn from a finite amount of data at all, we continue our quest for conditions under which one can learn from data using not too many examples.


\acks{\refstepcounter{dummy}\label{sec:acks}
We thank Olivier Catoni for raising the issue of unbounded losses discussed in Example~\ref{ex:comparator-vs-bernstein}, 
Wouter Koolen for suggesting the connection to minimax theorems, 
and Andrew Barron for various in-depth discussions over the last 16 years. 
Most of the results in Section~\ref{sec:fast-rates} were published
before in the conference paper by \cite{mehta2014stochastic}; 
very preliminary versions of Theorem~\ref{thm:convexfacetocomparator}, Theorem~\ref{thm:secondmain}
 (only for $\cP$ the set of all distributions on $\cZ$) and Theorem~\ref{thm:BernsteinComparator} were published before by \cite{vanerven2012mixability}, in which we used the phrase `$\dpthree$ is stochastically mixable' to denote what we now refer to as `$\dpthree$ satisfies the central condition'. 
We thank both the referees of the present paper and the referees of these earlier conference papers for their useful feedback. }  This work was supported in part by the Australian Research Council, and by
NICTA which is funded by the Australian Government, as well as by by
the Netherlands Organiszation for Scientific Research (NWO)
Project  639.073.904.

\vskip 0.2in

\appendix

\section{Additional Proofs}
\subsection{Proof of Theorem~\ref{thm:convexfacetocomparator} in Section~\ref{sec:convex-face}}
\label{app:convex-faceproofs}
\begin{proof}
We first consider the case that Assumption~\ref{ass:simple} holds, and
then the case of bounded loss. 
\paragraph{\rm {\em Under Assumption~\ref{ass:simple}}.}
Under our Assumption~\ref{ass:simple}, we can, for each $P
\in \cP$, define $\phi(P) := f^* \in \model$ to
be optimal in the sense of \eqref{eq:bayesact}. Note that $f^*$
depends on $P$, but not on any $\Pi$. Since we also assume the weak
$\eta$-pseudoprobability convexity condition, we must have that for every $\epsilon >
0$, the $\eta$-pseudoprobability convexity condition holds up to $\epsilon$ for some
function $\phi_{\epsilon}$. It follows that 
for all $\epsilon > 0$, $\E_{Z \sim P}[\loss_{f^*}(Z)] \leq 
\E_{Z \sim P}[\loss_{\phi_{\epsilon}(P)}(Z)] \leq  
\E_{Z \sim
  P}[m_{\Pi}^\eta(Z)] + \epsilon$, so that also
\begin{equation}\label{eqn:universalminimizer}
  \E_{Z \sim P}[\loss_{f^*}(Z)] \leq \E_{Z \sim P}[m_{\Pi}^\eta(Z)]
\end{equation}
for all $\Pi \in \Delta(\cF)$. Now fix arbitrary $P \in \cP$, let
$f^*= \phi(P)$ and let
$f \in \model$ be arbitrary and consider the special case that $\Pi =
(1-\lambda) \delta_{f^*} + \lambda \delta_f$ for $\lambda \in
[0,\thalf]$, where $\delta_f$ is a point-mass on $f$. Let
\begin{equation*}
  \chi(\lambda,z) = \eta m_\Pi^\eta(z) = -\log \left((1-\lambda) e^{-\eta \loss_{f^*}(z)} +
  \lambda e^{-\eta \loss_f(z)}\right)
\end{equation*}
be the corresponding mix loss multiplied by $\eta$, and let
\begin{equation*}
  \chi(\lambda)
    = \E_{Z \sim P}[\chi(\lambda,Z)]
    = \eta \E_{Z \sim P}[m_\Pi^\eta(Z)]
\end{equation*}
be its expected value. Then from \eqref{eqn:universalminimizer} it
follows that $\chi(\lambda)$ is minimized at $\lambda = 0$, which
implies that the right-derivative $\chi'(0)$ at $0$ is nonnegative:
\begin{equation}\label{eqn:derivativeAtZero}
  \chi'(0) \geq 0.
\end{equation}
In order to compute $\chi'(0)$, we first observe that, for any $z$,
$\chi(\lambda,z)$ is convex in $\lambda$, because it is the composition
of the negative logarithm with a linear function.
Convexity of $\chi(\lambda,z)$ in $\lambda$ implies that the slope
$s(d,z) = \frac{\chi(0+d,z)-\chi(0,z)}{d}$ is non-decreasing in $d \in
(0,\thalf]$ and achieves its maximum value at $d = 1/2$, where it never
exceeds $2 \log 2$:
\begin{equation*}
 s(1/2,z)
  = 2 \log \frac{e^{-\eta \loss_{f^*}(z)}}{\half e^{-\eta\loss_{f^*}(z)}
 + \half e^{-\eta \loss_f(z)}}
  \leq 2 \log \frac{e^{-\eta \loss_{f^*}(z)}}{\half e^{-\eta\loss_{f^*}(z)}}
  = 2 \log 2.
\end{equation*}
Hence $\E_{Z \sim P}[s(\half,Z)] \leq 2 \log 2 < \infty$ and by the
monotone convergence theorem \citep{shiryaev1996probability}
\begin{equation}\label{eqn:computedchiprime}
  \chi'(0)
    = \lim_{d \downarrow 0} \E_{Z \sim P} \left[s(d,Z)\right]   
    = \E_{Z \sim P} \left[\lim_{d \downarrow 0} s(d,Z)\right]   
    = \E_{Z \sim P} \left[\frac{\der}{\der \lambda} \chi(\lambda,Z)|_{\lambda =
    0}\right]
    = 1-\E_{Z \sim P} \left[\frac{e^{-\eta \loss_f(Z)}}{e^{-\eta
    \loss_{f^*}(Z)}}\right].
\end{equation}
Together with \eqref{eqn:derivativeAtZero} and the fact that $\phi(P)
= f^*$ and that $P$ was chosen arbitrarily, this implies the strong
$\eta$-central condition as required.

\paragraph{\rm {\em When the Loss is Bounded}.}
Let $P \in \cP$ be arbitrary. The $\eta$-pseudoprobability convexity condition implies that for any
$\gamma > 0$ we can find $f^* \in \model$ such that
\begin{equation*}
  \E_{Z\sim P}\left[\loss_{f^*}(Z)\right]
 \leq \E_{Z \sim P}\left[m_{\Pi}^\eta(Z)\right] + \gamma
\end{equation*}
for all distributions $\Pi \in \Delta(\model)$. Choose any $f \in \model$
and consider again the special case $\Pi = (1-\lambda) \delta_{f^*} +
\lambda \delta_f$ for $\lambda \in [0,\thalf]$, which gives
\begin{equation}\label{eqn:inexactMinimizer}
  \chi(0) \leq \chi(\lambda) +  \eta \gamma
\end{equation}
for $\chi(\lambda)$ as above. This time $\chi(0)$ is not necessarily the
exact minimum of $\chi(\lambda)$, but \eqref{eqn:inexactMinimizer}
expresses that it is close. To control $\chi'(0)$, we use that
\begin{equation*}
  \chi(\lambda,z) = \chi(0,z) + \lambda \tfrac{\der}{\der \lambda}
  \chi(0,z) + \thalf \lambda^2 \tfrac{\der^2}{\der \lambda^2} \chi(\xi,z)
  \qquad \text{for some $\xi \in [0,\lambda]$}
\end{equation*}
by a second-order Taylor expansion in $\lambda$, which implies that
\begin{equation*}
  \chi(\lambda) - \chi(0) - \lambda \chi'(0)
  \leq \frac{\lambda^2}{2} \max_{z,\lambda'} \left(\frac{e^{-\eta \loss_{f^*}(z)} -
  e^{-\eta \loss_f(z)}}{(1-\lambda') e^{-\eta \loss_{f^*}(z)} +
  \lambda' e^{-\eta \loss_f(z)}}\right)^2
  \leq \frac{\lambda^2}{2} \left(e^{\eta 2 B} - 1\right)^2.
\end{equation*}
Together with \eqref{eqn:inexactMinimizer} the choice $\lambda =
\sqrt{\gamma}$ (which requires $\gamma \leq 1/4$) then allows us to
conclude that
\begin{align*}
  -\eta \gamma
    &\leq \chi(\sqrt{\gamma}) - \chi(0)
    \leq \sqrt{\gamma} \chi'(0) + \frac{\gamma}{2} \left(e^{\eta 2
    B} - 1\right)^2\\
  \chi'(0) &\geq -c \sqrt{\gamma}
\end{align*}
for $c = \eta + \half(e^{\eta 2 B} -1)^2$. Since
\eqref{eqn:computedchiprime} still holds, taking $\gamma$ small enough
that $1 + c \sqrt{\gamma} \leq e^{\eta \epsilon}$ gives us the central
condition \eqref{eqn:comparator} for any $\epsilon > 0$.
\end{proof}

\subsection{Proof of Lemma~\ref{lem:convexityContinuityToMinimax} in
  Section~\ref{sec:four-conditions}}
\label{app:minimaxproof}

\begin{proof}
Theorem~6.1 of \citet{GrunwaldD04}, itself a direct consequence of a
  minimax theorem due to \citet{Ferguson67}, states the following: if
  a set of distributions $\closedcP$ is convex, tight and closed in the weak topology, and $L \colon
  \cZ \times \decisionset \to \reals$ is a function such that, for all
  $f$, $L(z,f)$ is bounded from above and upper semi-continuous in $z$,
  then 
  \begin{equation}\label{eqn:PetersMinimax}
    \sup_{P \in \closedcP} \lowinf_{f \in \decisionset} \E_{Z \sim P}[L(Z,f)]
      = \lowinf_{\rho \in \Delta(\decisionset)} \sup_{P \in \closedcP} \E_{Z
      \sim P} \E_{f \sim \rho}[L(Z,f)].
  \end{equation}
  Let $\Pi \in \Delta(\decisionset)$ be arbitrary, and observe that
  $S_\Pi^\eta(P,f)$ is related to $\xi_{Z,f}$ via 
  \begin{equation*}
    S_\Pi^\eta(P,f) = \E_{Z \sim P}[\xi_{Z,f}],
  \end{equation*}
  so we will aim to apply \eqref{eqn:PetersMinimax} with $L(z,f)$
  approximately equal to $\xi_{z,f}$. Although $\xi_{z,f}$ is not
  necessarily bounded above, rewriting
  \begin{equation*}
    \xi_{z,f}
      = e^{\eta\loss_f(z)}\E_{g \sim \Pi}\left[e^{-\eta
      \loss_g(z)}\right],
  \end{equation*}
  we find that it is continuous in $z$, because $\loss_f(z)$
  is continuous in $z$ and $\E_{g \sim \Pi}\left[e^{-\eta
  \loss_g(z)}\right]$ is also continuous in $z$ by continuity of
  $\loss_g(z)$ and the dominated convergence theorem
  \citep{shiryaev1996probability}, which applies because $|e^{-\eta
  \loss_g(z)}| \leq 1$. Letting $a \bmin b$ denote the minimum of $a$
  and $b$, it follows that $\xi_{z,f} \bmin b$ is also continuous in $z$
  for any number $b$.

  Thus we can apply \eqref{eqn:PetersMinimax} to the function $L(z,f) =
  \xi_{z,f} \bmin b$, with $\closedcP$ the closure of $\cP$ in the weak
  topology, to obtain
  \begin{equation}\label{eqn:applyMinimax}
    \lowinf_{\rho \in \Delta(\decisionset)}\sup_{P \in {\cP}} \E_{Z \sim
    P}\E_{f \sim \rho}[\xi_{Z,f} \bmin b]
    \leq \lowinf_{\rho \in \Delta(\decisionset)}\sup_{P \in {\closedcP}} \E_{Z \sim
    P}\E_{f \sim \rho}[\xi_{Z,f} \bmin b]
    = \sup_{P \in {\closedcP}} \lowinf_{f \in \decisionset} \E_{Z \sim
    P}[\xi_{Z,f} \bmin b].
  \end{equation}
  We will show that 
  \begin{equation}\label{eq:closure}
    \sup_{P \in {\closedcP}} \lowinf_{f \in \decisionset} \E_{Z \sim
    P}[\xi_{Z,f} \bmin b]
      \leq \sup_{P \in \cP} \lowinf_{f \in \decisionset} \E_{Z \sim
      P}[\xi_{Z,f} \bmin b].
  \end{equation}
  If $\cP$ is closed itself (first possibility in
  \ref{ass:convexityContinuity}.\ref{it:equicont}), then $\closedcP =
  \cP$ and this is immediate. The second possibility will be covered at
  the end of the proof.

  Together, \eqref{eqn:applyMinimax} and \eqref{eq:closure} imply that
  \begin{equation*}
      \lowinf_{\rho \in \Delta(\decisionset)}\sup_{P \in {\cP}} \E_{Z \sim
    P}\E_{f \sim \rho}[\xi_{Z,f} \bmin b]
      \leq \sup_{P \in \cP} \lowinf_{f \in \decisionset} \E_{Z \sim
      P}[\xi_{Z,f} \bmin b]
      \leq \sup_{P \in \cP} \lowinf_{f \in \decisionset} \E_{Z \sim
      P}[\xi_{Z,f}]
  \end{equation*}
  for any finite $b$. We will show that, for every $\epsilon > 0$, there
  exists a $b$ such that
  \begin{equation}\label{eqn:epsilonb}
    \E_{Z \sim P} \E_{f \sim \rho}[\xi_{Z,f} \bmin b]
      \geq \E_{Z \sim P} \E_{f \sim \rho}[\xi_{Z,f}] - \epsilon
    \qquad \text{for all $\rho \in \Delta(\decisionset)$ and $P \in {\cP}$.}
  \end{equation}
  By letting $\epsilon$ tend to $0$, we can therefore conclude that
  \begin{equation}\label{eqn:halfminimax}
    \sup_{P \in {\cP}} \lowinf_{f \in \decisionset} \E_{Z \sim
    P}[\xi_{Z,f}]
    \geq \lowinf_{\rho \in \Delta(\decisionset)}\sup_{P \in {\cP}} 
      \E_{Z \sim P} \E_{f \sim \rho}[\xi_{Z,f}]
    = \lowinf_{f \in \decisionset}\sup_{P \in {\cP}} 
      \E_{Z \sim P} [\xi_{Z,f}],
  \end{equation}
  where the identity follows from the requirement that $e^{\eta
  \loss_f(z)}$ is convex in $f$, which implies that $\xi_{Z,f}$ is also
  convex in $f$, and hence the mean of $\rho$ is always at least as good
  as $\rho$ itself: $\xi_{Z,\E_{f \sim \rho}[f]} \leq \E_{f \sim
  \rho}[\xi_{Z,f}]$. Since the $\sup \inf$ never exceeds the $\inf
  \sup$, \eqref{eqn:halfminimax} implies \eqref{eqn:minimaxEquality},
  which was to be shown.

  To prove \eqref{eqn:epsilonb}, we observe that
  \begin{equation*}
    \E_{Z \sim P} \E_{f \sim \rho}[\xi_{Z,f} \bmin b]
    \geq \E_{Z \sim P} \E_{f \sim \rho}[\xi_{Z,f} \ind{\xi_{Z,f} < b}]\\
    = \E_{Z \sim P} \E_{f \sim \rho}[\xi_{Z,f}]
        - \E_{Z \sim P} \E_{f \sim \rho}[\xi_{Z,f} \ind{\xi_{Z,f} \geq b}],
  \end{equation*}
  and, by uniform integrability, we can take $b$ large enough that
  $\E_{Z \sim P} \E_{f \sim \rho}[\xi_{Z,f} \ind{\xi_{Z,f} \geq b}]
  \leq \epsilon$ for all $\rho$ and $P$, as required.

  Finally, it remains to establish \eqref{eq:closure} for the second
  possibility in
  Assumption~\ref{ass:convexityContinuity}.\ref{it:equicont}. To this
  end, let $\epsilon > 0$ be arbitrary and let $\cZ' \subseteq \cZ$ be a
  compact set such that $P(\cZ') \geq 1-\epsilon$ for all $P \in \cP$. In
  addition, let $\delta > 0$ be small enough that
  \begin{equation*}
    \sup_{z \in \cZ'} |\loss_f(z)-\loss_g(z)| < \epsilon
    \qquad \text{for all $f,g \in \decisionset$ such that $d(f,g) <
    \delta$,}
  \end{equation*}
  which is possible by the assumption of uniform equicontinuity. Since
  $\decisionset$ is totally bounded, it can be covered by a finite
  number of balls of radius $\delta$.  Let $\discdecisionset \subseteq
  \decisionset$ be the (finite) set of centers of those balls. Then we
  can bound the left-hand side of \eqref{eq:closure} as follows:
  \begin{equation*}
    \sup_{P \in {\closedcP}} \lowinf_{f \in \decisionset} \E_{Z \sim
    P}[L(Z,f)]
    \leq \sup_{P \in {\closedcP}} \min_{f \in \discdecisionset} \E_{Z \sim
    P}[L(Z,f)]
    = \sup_{P \in {\cP}} \min_{f \in \discdecisionset} \E_{Z \sim
    P}[L(Z,f)] ,
  \end{equation*}
  where the equality holds by continuity of $\E_{Z \sim P}[L(Z,f)]$ and
  hence $\min_{f \in \discdecisionset} \E_{Z \sim P}[L(Z,f)]$ in $P$. We
  now need to relate $\discdecisionset$ back to $\decisionset$, which is
  possible because, for every $f \in \decisionset$, there exists $\discf
  \in \discdecisionset$ such that $d(f,\discf) < \delta$ and hence
  $|\loss_{\discf}(z)- \loss_f(z)| < \epsilon$ for all $z \in \cZ'$. It
  follows that $L(z,\discf) \leq e^{\eta \epsilon} L(z,f)$ and therefore
  \begin{multline*}
    \sup_{P \in {\cP}} \min_{f \in \discdecisionset} \E_{Z \sim
    P}[L(Z,f)]
    \leq \sup_{P \in {\cP}} \min_{f \in \discdecisionset} \E_{Z \sim
    P}[\ind{Z \in \cZ'} L(Z,f)] + \epsilon b\\
    \leq e^{\eta \epsilon} \sup_{P \in {\cP}} \lowinf_{f \in
    \decisionset} \E_{Z \sim P}[\ind{Z \in \cZ'} L(Z,f)] + \epsilon b
    \leq e^{\eta \epsilon} \sup_{P \in {\cP}} \lowinf_{f \in
    \decisionset} \E_{Z \sim P}[L(Z,f)] + \epsilon b,
  \end{multline*}
  and letting $\epsilon$ tend to $0$ we obtain \eqref{eq:closure}, which
  completes the proof.
\end{proof}
\subsection{Proof of Theorem~\ref{thm:BernsteinComparator} in
  Section~\ref{sec:comp-mix-margin}}
\label{app:comp-mix-marginproofs}

\begin{proof}
  We prove the two cases in turn.

  \paragraph{\rm {\em Bernstein $\Rightarrow$ Central.}}

  Fix arbitrary $P \in \cP$, and let $f^*$ be $\cF$-optimal, i.e. satisfying (\ref{eq:bayesact}). In this part of the proof, all
  expectations $\E$ are taken over $Z \sim P$.  

  Suppose that the $\gBernstein$-Bernstein condition holds.  Fix
  arbitrary $f \in \cF$ and let $X = \loss_f(Z) - \loss_{f^*}(Z)$.
  Let $\epsilon \geq 0$ and set $\eta = \gComparator(\epsilon)\leq
  c_1^b \epsilon/\gBernstein(\epsilon)$. We deal with $\epsilon = 0$
  later and for now focus on the case $\epsilon > 0$, which implies
  $\eta > 0$. Then  Lemma~\ref{lem:MomentTaylor}, applied to the random
  variable $\eta X$, gives
  \begin{equation*}
    \E[X] +\frac{1}{\eta} \log \E[e^{-\eta X}]
      \leq \kappa(2b a)\eta \Var(X)
      \leq \kappa(2b a)\eta \gBernstein(\E[X])
      \leq \frac{\epsilon}{\gBernstein(\epsilon)} \gBernstein(\E[X]).
  \end{equation*}
  If $\epsilon \leq \E[X]$, then the assumption that
  $\frac{\gBernstein(\epsilon)}{\epsilon}$ is non-increasing in $\epsilon$
  implies that
  \begin{equation}
    \frac{\epsilon}{\gBernstein(\epsilon)} \gBernstein(\E[X])
    \leq \frac{\E[X]}{\gBernstein(\E[X])} \gBernstein(\E[X])
    = \E[X],
  \end{equation}
  and we can conclude that $\frac{1}{\eta} \log \E[e^{-\eta X}] \leq 0
  \leq \epsilon$. This inequality establishes (b), and it establishes
  (a) for the case $0 < \epsilon \leq \E[X]$.  If $\epsilon > \E[X]$,
  then the assumption that $\gBernstein$ is non-decreasing implies
  that
  \begin{equation}
    \frac{\epsilon}{\gBernstein(\epsilon)} \gBernstein(\E[X])
    \leq \frac{\epsilon}{\gBernstein(\E[X])} \gBernstein(\E[X])
    = \epsilon,
  \end{equation}
  and, using that $\E[X] \geq 0$, we again find that $\frac{1}{\eta}
  \log \E[e^{-\eta X}] \leq \epsilon$, as required for (a). To finish the proof of (a) we now consider $\epsilon = 0$. If we also have $\gComparator(0)= 0$ then the
  central condition \eqref{eqn:comparator} holds trivially for
  $\epsilon = 0$, so we may assume without loss of generality that
  $\gComparator(0) > 0$. Then we must have $\eta = \gComparator(0) =
  \liminf_{x \downarrow 0} x/\gBernstein(x) > 0$. Now fix a decreasing
  sequence $\{ \epsilon_j\}_{j = 1, 2, \ldots }$ tending to $0$, where
  the $\epsilon_j$ are all positive and let $\eta_j =
  \gComparator(\epsilon_j)$. By the argument above, the
  $\eta_j$-central condition holds up to $\epsilon_j$. This implies (Fact~\ref{fact:ccppcc})
that for all $j$, all $\eta \leq \eta_j$, in particular for $\eta=
  \gComparator(0)$, the $\eta$-central condition also holds up to
  $\epsilon_j$. Thus, the $\eta$-central condition holds up to
  $\epsilon$ for all $\epsilon > 0$. By Proposition~\ref{cor:teeth} it
  then follows that the strong $\eta$-central condition holds, i.e. it
  also holds for $\epsilon = 0$.

  \paragraph{\rm {\em Pseudoprobability  $\Rightarrow$ Bernstein}.}
  
Suppose that the $\gComparator$-PPC condition holds. Fix some $\epsilon\geq  0$ and let
$\eta = \gComparator(\epsilon)$. Fix arbitrary $P \in \cP$ and let
$f^*$ be $\cF$-optimal for $P$, achieving (\ref{eq:bayesact}). Fix arbitrary  $f \in \cF$ and let
$\Pi$ be the distribution on $\cF$ assigning mass $1/2$ to $f^*$ and
mass $1/2$ to $f$, and let $\bar{f} \in \{ f, f^*\}$ be the
corresponding random variable.  For $z \in \cZ$, let $Y_{z,\bar{f}} = \eta(
\loss_{\bar{f}}(z) - \loss_{f^*}(z))$ and let $\epsilon_z = \eta^{-1}
\log \E_{\bar{f} \sim \Pi} \left[ e^{- Y_{z,\bar{f}}}\right]$.  Note that $Y_{z,\bar{f}}$
is a random variable under distribution $\Pi$ (not $P$, since $z$ is
fixed), and that
\begin{equation}\label{eq:expy}
\E_{\bar{f} \sim \Pi} [Y_{z,\bar{f}}] = \frac{1}{2} \eta \left( \loss_{f}(z) - \loss_{f^*}(z)\right).
\end{equation}
Lemma~\ref{lem:MomentTaylor} then gives, for each $z \in \cZ$,
\begin{equation}\label{eq:var}
  \kappa(-2ab) \Var_{\bar{f} \sim \Pi} [Y_{z,\bar{f}}] \leq \E_{\bar{f} \sim \Pi} [Y_{z,\bar{f}}] + \log \E_{\bar{f} \sim \Pi}\left[e^{-Y_{z,\bar{f}} }\right] 
  = \frac{1}{2} \eta \left( \loss_{f}(z) - \loss_{f^*}(z)\right)+ \eta \epsilon_z,\end{equation}
where we used the definition of $\Pi$ and $\epsilon_z$. We may assume from the definition of 
the $\gComparator$-pseudoprobability convexity condition 
that 
\eqref{eqn:strong-convexface} holds for the given $\epsilon$ and $\eta$ and $\Pi$; rearranging this equation it is seen to be equivalent to $
\Exp_{Z \sim P} [\epsilon_Z] \leq \epsilon. 
$
By taking expectations over $Z$ on both sides of  (\ref{eq:var}) this gives
\begin{equation}\label{eq:varc}
\kappa(-2ab) \Exp_{Z \sim P} \Var_{\bar{f} \sim \Pi}\left[Y_Z\right] \leq \frac{1}{2} \eta \Exp_{Z \sim P} \left[ \loss_{f}(Z) - \loss_{f^*}(Z)\right] + \eta \epsilon.
\end{equation}
The $\Pi$-variance on the left can be rewritten, using (\ref{eq:expy}), as
\begin{align*}
\Var_{\bar{f} \sim \Pi}\left[Y_{z,\bar{f}}\right] & =   \frac{1}{2} 
\left(  \eta (\loss_{{f}}(z) - \loss_{f^*}(z))
- \Exp_{\bar{f} \sim \Pi}\left[Y_{z,\bar{f}}\right]\right)^2 + \frac{1}{2} \left( 
\eta \cdot 0
- \Exp_{\bar{f} \sim \Pi}\left[Y_{z,\bar{f}}\right]\right)^2  \\ & = 
\frac{1}{2} \left(\frac{1}{2} \eta (\loss_{{f}}(z) - \loss_{f^*}(z)) \right)^2 
+ \frac{1}{2} \left(- \frac{1}{2} \eta (\loss_{{f}}(z) - \loss_{f^*}(z)) \right)^2
=  \frac{1}{4} \eta^2 (\loss_{{f}}(z) - \loss_{f^*}(z))^2.
\end{align*}
Plugging this into (\ref{eq:varc}) and dividing both sides by $\eta^2 / (4 \kappa(-2 ab))$ gives 
\begin{equation}\label{eq:vard}
\Exp_{Z \sim P} (\loss_{{f}}(Z) - \loss_{f^*}(Z))^2 \leq 
\frac{2}{\kappa(-2ab) \cdot \eta} \left( \Exp_{Z \sim P} \left[ \loss_{f}(Z) - \loss_{f^*}(Z)\right] + 2 \epsilon \right).
\end{equation}
This holds for all $\epsilon \geq 0$ and $\eta =
\gComparator(\epsilon)$, as long as $\eta = \gBernstein(\epsilon) > 0$
(if $\eta = 0$ we cannot divide by $\eta^2$ to go from (\ref{eq:varc})
to (\ref{eq:vard})).  Thus, we may set $\epsilon = \Exp_{Z \sim P}
\left[ \loss_{f}(Z) - \loss_{f^*}(Z)\right] \geq 0$; if $\eta =
\gBernstein(\epsilon) > 0$ then (\ref{eq:vard}) must hold for
$\epsilon$. With these values the right-hand side becomes $6 \eta^{-1}
\kappa^{-1}(2ab) \epsilon = c_2 \epsilon/ \gComparator(\epsilon) =
\gBernstein(\epsilon),$ and the result follows by our choice of
$\epsilon$. It remains to deal with the case $\eta = 0$, which by
definition of $\gComparator$ can only happen if $\epsilon = \Exp_{Z
  \sim P} \left[ \loss_{f}(Z) - \loss_{f^*}(Z)\right] = 0$. In this
case, (\ref{eq:vard}) still holds for all values of $\epsilon > 0$. We
thus infer that the left-hand side of (\ref{eq:vard}) is bounded by
$\inf_{\epsilon > 0} 4 \epsilon/ (\kappa(-2ab) \gComparator (\epsilon)$,
and the result follows by our definition of $0/\gComparator(0)$.
\end{proof}

\subsection{Proofs for Section~\ref{sec:fast-rates}}
\label{app:fastratesproof}

\begin{lemma}{ \bf (Hyper-Concentrated Excess Losses)}
\label{lemma:hyper-concentrated}
Let $Z$ be a random variable with probability measure $P$ supported on $[-V, V]$. Suppose that $\lim_{\eta \rightarrow \infty} \E [ \exp(-\eta Z) ] < 1$ and $\E [ Z ] = \mu > 0$. Then there is a suitable modification $Z'$ of $Z$ for which $Z' \leq Z$ with probability 1, the mean of $Z'$ is arbitrarily close to $\mu$, and $\E [ \exp(-\eta Z') ] = 1$ for arbitrarily large $\eta$.
\end{lemma}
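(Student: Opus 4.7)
The plan is to start with a structural observation: the assumption $\lim_{\eta\to\infty}\E[\exp(-\eta Z)]<1$ is very restrictive. Indeed, if $P(Z<-\varepsilon)>0$ for some $\varepsilon>0$, then $\E[\exp(-\eta Z)]\geq e^{\eta\varepsilon}P(Z<-\varepsilon)\to\infty$, a contradiction. Hence $Z\geq 0$ almost surely, and because $\E[Z]=\mu>0$, we have $P(Z>0)>0$.

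Next I would construct $Z'$ by pushing a tiny sliver of the mass living on $\{Z>0\}$ all the way down to the lower endpoint $-V$. Concretely, enlarge the probability space to carry an independent $U\sim\mathrm{Uniform}(0,1)$, and for each $p\in[0,1]$ set $A_p=\{U\leq p,\,Z>0\}$ and
\[
Z'_p \;=\; Z\cdot\mathbb{1}_{A_p^c} \;+\; (-V)\cdot\mathbb{1}_{A_p}.
\]
Two book-keeping facts follow immediately: (i) $Z'_p\leq Z$ everywhere, since on $A_p^c$ we have $Z'_p=Z$ and on $A_p$ we have $-V\leq 0<Z$; and (ii) using $0\leq Z\leq V$ on $A_p$,
\[
|\E[Z'_p]-\mu| \;=\; \bigl|{-}\E[Z\mathbb{1}_{A_p}] - V\,P(A_p)\bigr| \;\leq\; 2V\,P(A_p)\;\leq\; 2Vp.
\]

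The core step is a one-parameter intermediate value argument to fix the exponential moment exactly at $1$. The map
\[
F(p)\;:=\;\E\bigl[\exp(-\eta Z'_p)\bigr] \;=\; \E\bigl[\exp(-\eta Z)\mathbb{1}_{A_p^c}\bigr] \;+\; e^{\eta V}\,P(A_p)
\]
is continuous and nondecreasing in $p$ (the independent uniform $U$ is what gives continuity even when $Z$ has atoms, which is the one technical subtlety here). At $p=0$, $F(0)=\E[\exp(-\eta Z)]<1$, and at $p=1$, $F(1)\geq e^{\eta V}P(Z>0)$, which exceeds $1$ once $\eta$ is large enough that $e^{\eta V}P(Z>0)>1$. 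So for every such $\eta$, the intermediate value theorem supplies $p^{*}=p^{*}(\eta)\in(0,1)$ with $F(p^{*})=1$.

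Finally, I would extract a quantitative upper bound on $p^*(\eta)$ directly from the defining equation $F(p^{*})=1$: dropping the (nonnegative) first term gives $e^{\eta V}P(A_{p^{*}})\leq 1$, hence $p^{*}(\eta)\leq e^{-\eta V}/P(Z>0)\to 0$ as $\eta\to\infty$. Combining with the mean bound, $|\E[Z'_{p^{*}}]-\mu|\leq 2Vp^{*}(\eta)\to 0$, so the modification $Z':=Z'_{p^{*}(\eta)}$ meets all three requirements, with the approximation to $\mu$ sharpened by taking $\eta$ larger. The only mildly delicate step is the continuity of $F$, which is why the auxiliary randomization $U$ is introduced; everything else is elementary intermediate-value and monotone-convergence reasoning.
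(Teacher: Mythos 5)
Your proof is correct, and it reaches the conclusion by a genuinely different construction than the paper's. You transport a sliver of mass from $\{Z>0\}$ directly to the lower endpoint $-V$, indexing by a fraction $p$ via an auxiliary uniform $U$, fix $\eta$ large, and run the intermediate value theorem \emph{over $p$} to hit $\E[e^{-\eta Z'_p}]=1$ exactly; the explicit linearity of $p\mapsto F(p)$ makes continuity immediate and yields the clean quantitative bound $p^*(\eta)\le e^{-\eta V}/P(Z>0)$, from which the mean estimate $|\E[Z'_{p^*}]-\mu|\le 2Vp^*\to 0$ follows at once. The paper instead takes $\epsilon$ of the mass on $[\mu,V]$ and reflects it to $[-V,-\mu]$, realizes this as a dominated modification via a minimal total-variation coupling, fixes $\epsilon$, and runs the intermediate value theorem \emph{over $\eta$}, using the lower bound $\epsilon e^{\mu\eta}P([\mu,V])$ to force the moment above one; the mean is then controlled by choosing $\epsilon$ small in advance. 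Both deliver exactly the package the downstream argument in Theorem~\ref{thm:finite-fast-rates} needs ($Z'\le Z$ a.s., mean near $\mu$, $\E[e^{-\eta Z'}]=1$ at arbitrarily large finite $\eta$). What your version buys is a more elementary continuity argument (affine in $p$, so no need to reason about continuity of the MGF in $\eta$) and an explicit decay rate for the mass moved; what the paper's version buys is that the deposit location $[-V,-\mu]$ is tied to $\mu$, which is cosmetic here. One small presentational note: your $F(p)$ is in fact \emph{strictly} increasing for $\eta>0$ (since $e^{\eta V}>1>\E[e^{-\eta Z}\mid Z>0]$), which makes the IVT root unique; and your opening observation that $Z\ge 0$ a.s.\ matches the paper's, with your justification (exponential blow-up on $\{Z<-\varepsilon\}$) being the cleaner of the two.
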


\begin{proof}
First, observe that $Z \geq 0$ a.s. If not, then there must be some finite $\eta > 0$ for which $\E [ \exp(-\eta Z) ] = 1$. 
Now, consider a random variable $Z'$ with probability measure $Q_\epsilon$, a modification of $Z$ (with probability measure $P$) constructed in the following way. 
Define $A := [\mu, V]$ and $A^- := [-V, -\mu]$. Then for any $\epsilon > 0$ we define $Q_\epsilon$ as
\begin{align*}
{\mathrm d} Q_\epsilon(z) = 
\begin{cases} 
  (1 - \epsilon) {\mathrm d} P(z) & \text{if } z \in A \\
  \epsilon {\mathrm d} P(-z) & \text{if } z \in A^- \\
  {\mathrm d} P(z) & \text{otherwise} .
\end{cases}
\end{align*}

Additionally, we couple $P$ and $Q_\varepsilon$ such that the couple $(Z, Z')$ is a coupling of $(P, Q_\epsilon)$ satisfying
\begin{align*}
\E_{(Z, Z') \sim (P, Q_\epsilon)} \ind{Z \neq Z'} = \min_{(P', Q_\epsilon')} \E_{(Z,Z') \sim (P', Q_\epsilon')} \ind{Z \neq Z'} ,
\end{align*}
where the $\min$ is over all couplings of $P$ and $Q_\varepsilon$. 
This coupling ensures that  $Z' \leq Z$ with probability 1; i.e. $Z'$ is dominated by $Z$.

Now,
\begin{align}
\E [ \exp(-\eta Z') ] 
&= \int_{-V}^V e^{-\eta z}  {\mathrm d} Q_\epsilon(z) \nonumber \\
&= \int_{A^-} e^{-\eta z} {\mathrm d} Q_\epsilon(z) + \int_A e^{-\eta z}  {\mathrm d} Q_\epsilon(z) + \int_{[0, V] \setminus A} e^{-\eta z} {\mathrm d} Q_\epsilon(z) \nonumber \\
&= \epsilon \int_{A^-} e^{-\eta z}  {\mathrm d}P(-z) + (1 - \epsilon) \int_A e^{-\eta z} {\mathrm d} P(z) + \int_{[0, V] \setminus A} e^{-\eta z}  {\mathrm d} P(z) \nonumber \\
&= \epsilon \int_{A} e^{\eta z}  {\mathrm d} P(z) + (1 - \epsilon) \int_A e^{-\eta z}  {\mathrm d} P(z) + \int_{[0, V] \setminus A} e^{-\eta z} {\mathrm d} P(z) \nonumber \\
&\geq\epsilon e^{\mu \eta} P(A) + (1 - \epsilon) \int_A e^{-\eta z}  {\mathrm d} P(z) + \int_{[0, V] \setminus A} e^{-\eta z} {\mathrm d} P(z) \label{eqn:last-line-hyper} .
\end{align}

Now, on the one hand, for any $\eta > 0$, the sum of the two right-most terms in \eqref{eqn:last-line-hyper} is strictly less than 1 by assumption. On the other hand, $\eta \rightarrow \epsilon P(A) e^{\mu \eta}$ is exponentially increasing since $\epsilon > 0$ and $\mu > 0$ (and hence $P(A) > 0$ as well) by assumption; thus, the first term in \eqref{eqn:last-line-hyper} can be made arbitrarily large by increasing $\eta$. Consequently, we can choose $\epsilon > 0$ as small as desired and then choose $\eta < \infty$ as large as desired such that the mean of $Z'$ is arbitrarily close to $\mu$ and $\E [ \exp(-\eta Z') ] = 1$ respectively.
\end{proof}

\begin{proof}{(of Lemma \ref{lemma:feasible-moments})} 
Let $W$ denote the convex hull of $g([-1, 1])$. We need to see if
$\left( -\frac{a}{n}, 1 \right) \in W$. Note that $W$ is the convex set formed by starting with the graph of $x \mapsto e^{{\eta^*} x}$ on the domain $[-1, 1]$, including the line segment connecting this curve's endpoints $(-1, e^{-{\eta^*}})$ to $(1, e^{{\eta^*} x})$, and including all of the points below this line segment but above the aforementioned graph. That is, $W$ is precisely the set 
\begin{align*} 
W = \left\{ (x,y) \in \reals^2 :  e^{{\eta^*} x} \leq y \leq \frac{e^{{\eta^*}} + e^{-{\eta^*}}}{2} + \frac{e^{{\eta^*}} - e^{-{\eta^*}}}{2} x , 
\, x \in [-1, 1]  \right\} . 
\end{align*} 
We therefore need to check that $-1 \leq -\frac{a}{n} \leq 1$ 
and that $1$ is sandwiched between the lower and upper bounds at $x = -\frac{a}{n}$. 
Clearly $-1 \leq -\frac{a}{n} \leq 1$ holds since the loss is in $[0, 1]$ by assumption. 
Using that 
$\cosh({\eta^*}) = \frac{e^{{\eta^*}} + e^{-{\eta^*}}}{2}$ 
and $\sinh({\eta^*}) = \frac{e^{{\eta^*}} - e^{-{\eta^*}}}{2}$, 
this means that $k \in W$ if and only if
\begin{equation*}
  e^{-{\eta^*} a/n} \leq 1 \leq \cosh({\eta^*}) + \sinh({\eta^*}) \frac{-a}{n}.
\end{equation*}
Also, since $a > 0$ the inequality $e^{-{\eta^*} a/n} \leq 1$ holds with \emph{strict} inequality. Thus, we end up with a single requirement characterizing when $k \in W$, which is equivalent to condition
\eqref{eqn:feasibility}. Moreover, $k \in \interior W$ is characterized by when \eqref{eqn:feasibility} holds strictly.
\end{proof}

\begin{proof}{(of \cref{thm:stochastic-mixability-concentration})} 
By assumption, the condition of Lemma~\ref{lemma:feasible-moments} is
satisfied, so we can apply Theorem~3 of \cite{kemperman1968general}.
This gives
\begin{equation}\label{eqn:toUse}
  -\exp\left(\Lambda_{-\xslossat{Z}}(\eta^* / 2)\right) \geq d_0 - \frac{a}{n} d_1 +
  d_2 ,
\end{equation}
for all $d^* = (d_0, d_1, d_2) \in \reals^3$ such that
\begin{equation}\label{eqn:constraint}
  d_0 + d_1 s + d_2 e^{\eta^* s} + e^{(\eta^*/2) s} \leq  0
  \qquad \text{ for all } s \in [-1, 1].
\end{equation}
To find a good choice of $d^*$, we will restrict attention to those
$d^*$ for which \eqref{eqn:constraint} holds with equality at $s = 0$,
yielding the constraint
\begin{equation}\label{eqn:d_0-constraint}
  d_0 = - d_2 - 1.
\end{equation}
Plugging this into \eqref{eqn:constraint} and changing variables to $c_1
= -d_1 / \eta$,\footnote{We scale by $\eta$ here because we are chasing
a certain $\eta$-dependent rate.} and $c_2 = -d_2$, we obtain the
constraint
\begin{align*}
u(s) := 1 + c_2 (e^{\eta s} - 1) -e^{(\eta/2) s} + \eta c_1 s \geq 0
\qquad \text{for all $s \in [-1,1]$.}
\end{align*}

\subsubsection{Constraints from the Local Minimum at $\mathbf{0}$}

Since $u(0) = 0$, we need $s = 0$ to be a local minimum of $u$, and so
we require the first and second derivative to satisfy
\begin{enumerate}[label=(\alph*)]
\item $u'(0) = 0$
\item $u''(0) \geq 0,$
\end{enumerate}
since otherwise there exists some small $\varepsilon > 0$ such that either $u(\varepsilon) < 0$ or $u(-\varepsilon) < 0$. 

For (a), we compute
\begin{align*}
u'(s) = \eta c_2 e^{\eta s} -\frac{\eta}{2} e^{(\eta/2) s} + \eta c_1 .
\end{align*}
Since we require $u'(0) = 0$, we pick up the constraint
\begin{align*}
\eta \left( c_2 -\frac{1}{2} + c_1 \right) = 0 ,
\end{align*}
and since $\eta > 0$ by assumption, we have 
\begin{align}
c_1 = \frac{1}{2} - c_2 . \label{eqn:c_1-constraint}
\end{align}
Thus, we can eliminate $c_1$ from $u(s)$:
\begin{align*}
u(s) = 1 + c_2 (e^{\eta s} - 1) -e^{(\eta/2) s} + \eta \left(
\frac{1}{2} - c_2 \right) s.
\end{align*}

For (b), observe that
\begin{align*}
u''(s) = \eta^2 c_2 e^{\eta s} - \frac{\eta^2}{4} e^{(\eta / 2) s} ,
\end{align*}
so that $u''(0) = \eta^2 \left( c_2 - \frac{1}{4} \right) \geq 0$, and
hence we require
\begin{align}
c_2 \geq \frac{1}{4} \label{eqn:c_2-one-fourth-lower-bound}.
\end{align}

\subsubsection{The Other Minima of $u$}

Thus far, we have picked up the constraints \eqref{eqn:d_0-constraint},
\eqref{eqn:c_1-constraint}, and \eqref{eqn:c_2-one-fourth-lower-bound},
and it remains to choose a value of $c_2$ such that $u(s) \geq 0$ for
all $s \in [-1,1]$. To this end, observe that $u'(s)$ has at most two
roots, because with the substitution $y = e^{(\eta / 2) s}$, we have
\begin{align*}
u'(s) = \eta c_2 y^2 - \frac{\eta}{2} y + \eta \left( \frac{1}{2} - c_2 \right) ,
\end{align*}
which is a quadratic equation in $y$ with two roots:
\begin{align*}
y \in \left\{\frac{1 - 2 c_2}{2 c_2}, 1 \right\} 
\quad \Rightarrow \quad 
s \in \left\{\frac{2}{\eta} \log \frac{1 - 2 c_2}{2 c_2}, 0 \right\} .
\end{align*}
Now, since we are taking $c_2 \geq \frac{1}{4}$, the first root is
negative, and we find that $u$ is non-decreasing on $[0, 1]$. As we
already ensured that $u(0) = 0$, this means that $u$ is non-negative on
$[0,1]$. On the remaining interval, $[-1,0]$, we know that $u$ is
increasing up to $\frac{2}{\eta} \log \frac{1 - 2 c_2}{2 c_2}$ and then
decreasing until $s = 0$. Since $u(0) = 0$, we therefore need to ensure
only that $u(-1) \geq 0$ by finding appropriate conditions on $c_2$,
where
\begin{align*}
u(-1) 
&= 1 + c_2 (e^{-\eta} - 1) -e^{-(\eta/2)} - \eta \left( \frac{1}{2} - c_2 \right) \\
&= \left( 1 - \frac{\eta}{2} \right)  - e^{-(\eta/2)}  
      + c_2 \left( e^{-\eta} - (1 - \eta) \right) \\
c_2 &\geq \frac{e^{-\eta/2} + \frac{\eta}{2} - 1}{e^{-\eta}+ \eta -1}
  = \frac{1}{4} \frac{\kappa(-\eta/2)}{\kappa(-\eta)},
\end{align*}
where $\kappa(x) = (e^x - x - 1)/x^2$ is increasing in $x$, which
implies that this condition always ensures that $c_2 \geq 1/4$.

We consider the cases $\eta \leq 1$ and $\eta > 1$ separately.

\paragraph{\rm {\em Case ${\eta \leq 1}$.}}

For $\eta \leq 1$, we will take the value of the constraint at $\eta =
1$. That is,
\begin{equation*}
  c_2 = \frac{1}{4} \frac{\kappa(-1/2)}{\kappa(-1)}
      = e^{1/2} - \frac{e}{2}.
\end{equation*}
This is allowed because $\frac{\kappa(-\eta/2)}{\kappa(-\eta)}$ is
non-decreasing, as may be verified by observing that
\begin{equation*}
  \frac{\der}{\der \eta} \frac{e^{-\eta/2} + \frac{\eta}{2} -
  1}{e^{-\eta}+ \eta -1}
    = \frac{e^{\eta/2}(e^{\eta/2}-1)(e^{\eta} -1 + e^{\eta/2}\eta)}
      {2(1 + e^{\eta} (\eta-1))^2},
\end{equation*}
which is non-negative if $g(\eta) = e^{\eta} -1 + e^{\eta/2}\eta \geq
0$. This in turn is verified by noting that $g(0) = 0$ and $g'(\eta) =
e^{\eta/2} (e^{\eta/2} - \frac{\eta}{2} -1)$ is positive.

\paragraph{\rm {\em Case ${\eta > 1}$.}}
Let $c_2 = \frac{1}{2} - \frac{\alpha}{\eta}$ for some $\alpha \geq 0$. With this substitution, we have
\begin{align*}
u(-1) 
&= 1 + c_2 (e^{-\eta} - 1) -e^{-(\eta/2)} - \eta \left( \frac{1}{2} - c_2 \right) \\
&= 1 + \left( \frac{1}{2} - \frac{\alpha}{\eta} \right) (e^{-\eta} - 1) -e^{-(\eta/2)} - \alpha \\
&= \left( \frac{1 + e^{-\eta}}{2} - e^{-\eta/2} \right) 
      + \alpha \left( -1 + \frac{1}{\eta} \left( 1 -e^{-\eta} \right) \right) .
\end{align*}
Since we want the above to be nonnegative for all $\eta > 1$, we arrive at the condition
\begin{align}
\alpha
\leq \inf_{\eta \geq 1} \left\{ 
  \frac{ \frac{1 + e^{-\eta}}{2} - e^{-\eta/2} }{ 1 - \frac{1}{\eta} \left( 1 -e^{-\eta} \right) } 
\right\} .
\label{eqn:alpha-condition}
\end{align}
Plotting suggests that the minimum is attained at $\eta = 1$, with the value $\frac{1}{2} (\sqrt{e} - 1)^2=0.2104\ldots$. We will fix $\alpha$ to this value and verify that 
\begin{align}
\left( \frac{1 + e^{-\eta}}{2} - e^{-\eta/2} \right) 
 + \left( \frac{1}{2} (\sqrt{e} - 1)^2 \right) \left( -1 + \frac{1}{\eta} \left( 1 -e^{-\eta} \right) \right) \geq 0 . \label{eqn:big-expression}
\end{align}
This is true with equality at $\eta = 0$. 
The derivative of the LHS with respect to $\eta$ is
\begin{align*}
\frac{1}{2} e^{-\eta} \left( e^{\eta/2} - 1 
- \frac{(\sqrt{e} - 1)^2 (e^\eta - \eta - 1)}{\eta^2} \right) .
\end{align*}
The derivative is positive at $\eta = 1$, so 0 is a candidate minimum. Eventually, $\frac{(\sqrt{e} - 1)^2 (e^\eta - \eta - 1)}{\eta^2}$ grows more quickly than $e^{\eta/2} - 1$ and surpasses the latter in value. 
The derivative is therefore negative for all sufficiently large $\eta$, and so we need only take the minimum of the LHS of \eqref{eqn:big-expression} evaluated at $\eta = 1$ and the limiting value as $\eta \rightarrow \infty$. 
We have 
\begin{align*}
\lim_{\eta \rightarrow \infty} \left( \frac{1 + e^{-\eta}}{2} - e^{-\eta/2} \right) 
 + \left( \frac{1}{2} (\sqrt{e} - 1)^2 \right) \left( -1 + \frac{1}{\eta} \left( 1 -e^{-\eta} \right) \right) = \sqrt{e} - \frac{e}{2} \geq 0 .
\end{align*}
Hence, \eqref{eqn:big-expression} indeed holds for $\alpha \leq 0.21 \leq  \frac{1}{2} (\sqrt{e} - 1)^2$. We conclude that $u(-1) \geq 0$ when $\alpha \leq \frac{1}{2} (\sqrt{e} - 1)^2$. 

\subsubsection{Putting it All Together}

Tracing back our substitutions, we have $d_0 + d_2 = -1$ and $d_1 = -\eta/2 +
\eta c_2$, which gives
\begin{equation*}
  d_0 - \frac{a}{n} d_1 + d_2
    = -1 + \frac{a \eta }{n}\left(\half - c_2\right)
    \geq -e^{-\frac{a \eta }{n}\left(\half - c_2\right)}.
\end{equation*}
In the regime $\eta \leq 1$, we choose $c_2 = e^{1/2} - e/2$, which
leads to 
\begin{equation}
  d_0 - \frac{a}{n} d_1 + d_2
    \geq -e^{-\frac{0.21 \eta a}{n}}.
\end{equation}
In the regime $\eta > 1$, we take $c_2 = \frac{1}{2} - \frac{1}{2 \eta}
(\sqrt{e} - 1)^2$, which gives
\begin{equation}
  d_0 - \frac{a}{n} d_1 + d_2
    \geq -e^{-\frac{a}{2 n}}.
\end{equation}
Combining with \eqref{eqn:toUse} leads to the desired result.
\end{proof}

\begin{proof}{(of Corollary \ref{cor:stochastic-mixability-concentration})} 
Define the function $\Gamma(\eta) := \frac{\cosh(\eta) - 1}{\sinh(\eta)}$. 
For any negative excess loss random variable $S'$, let $\eta_{S'}$ be the maximum $\eta$ for which $-S'$ is stochastically mixable.

Let $W$ be a stochastically mixable excess loss random variable taking values in $[-1,1]$ and satisfying $\E [ W ] = \Gamma(\eta_S) > 0$, and let $S = -W$ be the corresponding negative excess loss random variable. 

Let $k_S \in \reals^2$ be the moments vector of $S$, defined as
\begin{align*}
k_S := 
\begin{pmatrix} 
\E [ S ] \\
\E [ e^{\eta_S S} ]
\end{pmatrix} 
=
\begin{pmatrix} 
-\Gamma(\eta_s) \\
1
\end{pmatrix} .
\end{align*}
Because $-\E[ S ] = \Gamma(\eta_S)$, from Lemma
\ref{lemma:feasible-moments} the point $k_S$ is extremal with respect
to \\ $\convhull(g([-1,1]))$. Recall that the goal of this proof is to establish that \cref{thm:stochastic-mixability-concentration} holds even for the extremal random variable $S$.

Since $\E [ S ] < 0$, there exists $A \subset \{x \in \reals \colon x < 0\}$ for which we have $\Pr(S \in A) =: p > 0$. 
Now, consider the following two perturbed versions of $S$, which we call (I) and (II). In both perturbations, we deflate $\Pr(S \in A)$ by the same (multiplicative) factor $\varepsilon > 0$ uniformly over $A$ so that the overall loss in probability mass over $A$ is $\varepsilon$; this is always possible for small enough $\varepsilon$ since $p > 0$, and throughout the rest of the proof we keep implicit that $\varepsilon$ is suitably small. The perturbations differ in where they allocate the mass taken from $A$:
\begin{enumerate}[label=(\Roman*)]
\item Allocate $\varepsilon$ additional mass to $\frac{3}{4}$.
\item Allocate $\frac{\varepsilon}{2}$ additional mass to $\frac{1}{2}$ and $\frac{\varepsilon}{2}$ additional mass to $1$.
\end{enumerate}

We refer to these new random variables as $S_I$ and $S_{II}$. Observe that
\begin{align*}
\E[ S_I ] = \E[ S_{II} ] \geq \E[ S ] + \frac{3}{4} \varepsilon .
\end{align*}
Because $\E[ S_I ] = \E[ S_{II} ]$, it follows that if we can show that $\eta_{S_I} \neq \eta_{S_{II}}$, then $k_{S_I}$ and $k_{S_{II}}$ cannot both are extremal since $\Gamma$ is strictly increasing.

Now, by definition, $\E \exp \left( \eta_{S_I} S_I \right) = 1$. But observe that by strict convexity, for any $\eta > 0$, we have
\begin{align*}
e^{3 \eta / 4} < \frac{1}{2} \left( e^{\eta/2} + e^\eta \right) .
\end{align*}
Therefore, $\E [ \exp \left( \eta_{S_I} S_I \right) ] > 1$, and so $\eta_{S_{II}} < \eta_{S_I}$. Therefore, $k_{S_I}$ cannot be extremal, and \cref{thm:stochastic-mixability-concentration} can be applied to the excess loss random variable $-S_I$.

Now, for each (suitably small) $\varepsilon$, we refer to the corresponding $S_I$ more precisely via the notation $S_\varepsilon$, and we define $\eta_\varepsilon := \eta_{S_\varepsilon}$. Since for all $\varepsilon > 0$,
\begin{align*}
\left| \exp \left( \frac{\eta_\varepsilon}{2} S_\varepsilon \right) \right| \leq \exp \left( \frac{\eta_S}{2} \right) ,
\end{align*}
and since for each $S_\varepsilon$ we have
\begin{align*}
\E \left[ \exp \left( \frac{\eta_\varepsilon}{2} S_\varepsilon \right) \right] 
\leq 1 - 0.21 (\eta_\varepsilon \bmin 1) \E [ -S_\varepsilon ] ,
\end{align*}
from the dominated convergence theorem it follows that
\begin{align*}
\E \left[ \exp \left( \frac{\eta_S}{2} S \right) \right] \leq 1 - 0.21 (\eta_S \bmin 1) \E [ -S ] ,
\end{align*}
i.e.~using the familiar notation $\eta^* = \eta_S$:
\begin{align*}
\E \left[ \exp \left( -\frac{\eta^*}{2} W \right) \right] \leq 1 - 0.21 (\eta^* \bmin 1) \E [ W ] .
\end{align*}
\end{proof}

\begin{proof}{(of Corollary \ref{cor:bounded-losses})}
Let $X$ be a random variable taking values in $[-\bound, \bound]$ with mean $-\frac{a}{n}$ and $\E [ e^{\eta X} ] = 1$, and let $Y$ be a random variable taking values in $[-1, 1]$ with mean $-\frac{a / \bound}{n}$ and $\E [ e^{(\bound \eta) Y} ] = 1$. 
Consider a random variable $\tilde{X}$ that is a $\frac{1}{\bound}$-scaled independent copy of $X$; observe that $\E [ \tilde{X} ] = -\frac{a / \bound}{n}$ and $\E [ e^{(\bound \eta) \tilde{X}} ] = 1$. 
Let the maximal possible value of $\E [ e^{(\eta / 2) X} ]$ be $b_X$, and let the maximal possible value of $\E [ e^{(\bound \eta / 2) Y} ]$ be $b_Y$. We claim that $b_X = b_Y$. Let $X$ be a random variable with a distribution that maximizes $\E [ e^{(\eta / 2) X} ]$ subject to the previously stated constraints on $X$. Since $\tilde{X}$ satisfies $\E [ e^{(\bound \eta / 2) \tilde{X}} ] = b_X$, setting $Y = \tilde{X}$ shows that in fact $b_Y \geq b_X$. A symmetric argument (starting with $Y$ and passing to some $\tilde{Y} = \bound Y$) implies that $b_X \geq b_Y$.
\end{proof}

\begin{proof}{(of \cref{thm:finite-fast-rates})}
Let $\gamma_n = \frac{a}{n}$ for a constant $a$ to be fixed later. 
For each $\eta > 0$, let $\F_{\gamma_n}^{(\eta)} \subset \F_{\gamma_n}$ correspond to those functions in $\F_{\gamma_n}$ for which $\eta$ is the largest constant such that 
$\E [ \exp(-\eta W_f) ] = 1$. 
Let $\F_{\gamma_n}^{\mathrm{hyper}} \subset \F_{\gamma_n}$ correspond to functions $f$ in $\F_{\gamma_n}$ for which $\lim_{\eta \rightarrow \infty} \E [ \exp(-\eta W_f) ] < 1$. 
Clearly, $\F_{\gamma_n} = \bigl( \bigcup_{\eta \in [\eta^*, \infty)} \F_{\gamma_n}^{(\eta)} \bigr) \cup \F_{\gamma_n}^{\mathrm{hyper}}$. 
The excess loss random variables corresponding to elements $f \in \F_{\gamma_n}^{\mathrm{hyper}}$ are `hyper-concentrated' in the sense that they are infinitely stochastically mixable. However, Lemma \ref{lemma:hyper-concentrated} above shows that for each hyper-concentrated $W_f$, there exists another excess loss random variable $W'_f$ with mean arbitrarily close to that of $W_f$, with $\E [ \exp(-\eta W'_f) ] =1$ for some arbitrarily large but finite $\eta$, and with $W'_f \leq W_f$ with probability 1. The last property implies that the empirical risk of $W'_f$ is no greater than that of $W_f$; hence for each hyper-concentrated $W_f$ it is sufficient (from the perspective of ERM) to study a corresponding $W'_f$. From now on, we implicitly make this replacement in $\F_{\gamma_n}$ itself, so that we now have $\F_{\gamma_n} = \bigcup_{\eta \in [\eta^*, \infty)} \F_{\gamma_n}^{(\eta)}$.

Consider an arbitrary $a > 0$. For some fixed $\eta \in [\eta^*, \infty)$ for which $| \F_{\gamma_n}^{(\eta)} | > 0$, consider the subclass $\F_{\gamma_n}^{(\eta)}$. 
Individually for each such function, we will apply Lemma~\ref{lem:cramer-chernoff} as follows. From Lemma \ref{cor:bounded-losses}, 
we have $\Lambda_{-W_f}(\eta / 2) = \Lambda_{-\frac{1}{\bound} W_f}(\bound \eta / 2)$. From Corollary \ref{cor:stochastic-mixability-concentration}, the latter is at most
$-\frac{0.21 (\bound \eta \operatorname{\wedge} 1) (a / \bound)}{n} 
= -\frac{0.21 \eta a}{(\bound \eta \operatorname{\vee} 1) n}$ .
Hence, Lemma~\ref{lem:cramer-chernoff} with $t = 0$ and the $\eta$ from
the lemma taken to be $\eta / 2$ implies that the probability of the event $\Probn \lossof{f} \leq \Probn \lossof{f^*}$ is at most 
$\exp\left( -0.21 \frac{\eta}{\bound \eta \operatorname{\vee} 1} a \right)$. 
Applying the union bound over all of $\F_{\gamma_n}$, we conclude that
\begin{align*}
\Pr \left\{ 
\exists f \in \F_{\gamma_n} : \Probn \loss_f \leq \Probn \loss_{f^*} 
\right\} 
\leq N \exp \left( -\eta^* \left( \frac{0.21 a}{\bound \eta^* \operatorname{\vee} 1} \right) \right) .
\end{align*}

Since ERM selects hypotheses on their empirical risk, from inversion it holds that with probability at least $1 - \delta$ ERM will not select any hypothesis with excess risk at least 
$\frac{5 \max\left\{ \bound, \frac{1}{\eta^*} \right\} \left( \log \frac{1}{\delta} + \log N 
\right)}{n}$.
\end{proof}

\bibliography{stoch_mix_journal}

\end{document}